\documentclass[lettersize,journal]{IEEEtran}
\usepackage[
colorlinks=true,    
linkcolor=blue,     
citecolor=green,    
filecolor=magenta,  
urlcolor=cyan,      
pdfborder={0 0 0},  
bookmarks=true,     
pdfauthor={Your Name},
pdftitle={Your Title},
]{hyperref}

\IEEEoverridecommandlockouts
\usepackage{amsmath,amssymb,amsthm,amsfonts,subcaption,caption}
\usepackage{algorithm}
\usepackage{algorithmicx}%
\usepackage{algpseudocode}%
\usepackage{graphicx}
\usepackage{textcomp}
\usepackage{xcolor}
\usepackage{multirow}
\usepackage{multicol}
\usepackage{hyperref}
\usepackage{booktabs} 
\usepackage{threeparttable}   
\usepackage[numbers,sort&compress]{natbib}
\usepackage{cleveref}
\usepackage{float}
\usepackage{url}
\usepackage{enumitem}
\usepackage{epstopdf}
\usepackage[T1]{fontenc}
\usepackage[numbers,sort&compress]{natbib}

\newtheorem{theorem}{Theorem}

\newtheorem{definition}{Definition}
\newtheorem{assumption}{Assumption}
\newtheorem{lemma}{Lemma}
\newtheorem{remark}{Remark}
\newtheorem{example}{Example}
\newtheorem{setup}{Setup}

\newtheorem{basic-eq}{}

\def\g{\mathbf{g}}

\def\w{\mathbf{w}}
\def\h{\mathbf{h}}
\def\v{\mathbf{v}}

\def\u{\mathbf{u}}

\def\W{\mathbf{W}}

\def\B{\mathbf{B}}
\def\D{\mathbf{D}}
\def\E{\mathbb{E}}
\def\G{\mathbf{G}}
\def\H{\mathbf{H}}
\def\I{\mathbf{I}}
\def\N{{N}}
\def\P{\mathbf{P}}

\def\V{\mathbf{V}}

\def\vik{\overline{\v}_i^k}

\def\nl{\left\|}
\def\nr{\right\|}
\def\dsum{\displaystyle \sum}

\graphicspath{ {./Figures/} }
\makeatletter
\begin{document}
	\title{Decentralized Federated Learning by Partial Message Exchange}
	
	\author{Shan Sha, Shenglong Zhou, Xin Wang, Lingchen Kong, Geoffrey Ye Li, \emph{IEEE Fellow}
	\thanks{This work was supported by  the Scientific Research Innovation Capability Support Project for Young Faculty (No. SRICSPYF-ZY2025173) and the Fundamental Research Funds for the Central Universities (No. 2026JBZD001 and  No. 2023XKRC049) }%
	\thanks{Shan Sha, Shenglong Zhou, Xin Wang, and Lingchen Kong are with the School of Mathematics and Statistics, Beijing Jiaotong University, Beijing, China. E-mail: \{shansha, shlzhou, xinwang3, lchkong\}@bjtu.edu.cn. Geoffrey Ye Li are with the Department of Electrical and Electronic Engineering, Faculty of Engineering, Imperial College London, London, U.K. E-mail: geoffrey.li@imperial.ac.uk.}%
	\thanks{{\it Corresponding author: Shenglong Zhou}.}
	}

	\markboth{Journal of \LaTeX\ Class Files,~Vol.~14, No.~8, August~2021}%
	{Shell \MakeLowercase{\textit{et al.}}: A Sample Article Using IEEEtran.cls for IEEE Journals}
	
	\IEEEpubid{}
	
	\maketitle
	
	\begin{abstract}
%

	Decentralized federated learning (DFL) has emerged as a transformative server-free paradigm that enables collaborative learning over large-scale heterogeneous networks. However, it continues to face fundamental challenges, including data heterogeneity, restrictive assumptions for theoretical analysis, and degraded convergence when standard communication- or privacy-enhancing techniques are applied. To overcome these drawbacks, this paper develops a novel algorithm, PaME (DFL by Partial Message  Exchange). The central principle is to allow only randomly selected sparse coordinates to be exchanged between two neighbor nodes. 
	As a result, PaME  significantly reduces  communication costs while simultaneously limiting the exposure of data-sensitive information during transmission. The latter property is rigorously characterized by a formal reconstruction-risk theory under partial observation.
	 Moreover, the algorithm is proven to converge  in expectation to a stationary point at a linear rate, provided that the gradient is locally Lipschitz continuous and the communication matrix is doubly stochastic. These two mild assumptions not only dispenses with many restrictive conditions commonly imposed by existing DFL methods but also enables PaME to effectively address data heterogeneity. Furthermore, comprehensive numerical experiments demonstrate its superior performance compared with several representative decentralized learning algorithms.
	\end{abstract}
	
	\begin{IEEEkeywords}
	Partial message  exchange, data heterogeneity, communication efficiency-privacy-accuracy,  local Lipschitz continuity, linear convergence rate
	\end{IEEEkeywords}

		\section{Introduction}
\IEEEPARstart{D}{ecentralized} federated learning (DFL) has emerged as a popular paradigm driven by advances in communication technologies and distributed optimization. It is widely regarded as a fundamental framework for peer-to-peer, large-scale collaborative learning and has been systematically reviewed in several recent surveys \cite{beltran2023decentralized, hallaji2024decentralized}. Its applications span a wide range of scenarios, including the Internet of Things \cite{alabadi2024innovative}, edge computing \cite{wang2024decentralized}, and privacy-sensitive industrial domains such as energy \cite{nuvvula2024federated} and healthcare \cite{alsamhi2024federated}. 

A central goal of federated learning is to balance communication efficiency, privacy, and model accuracy.  Compared to centralized federated learning (CFL), DFL eliminates the need for a central server, which otherwise constitutes a communication bottleneck and a single point of failure. Moreover, DFL can operate over diverse and possibly time-varying communication topologies, enabling the modeling of heterogeneous and dynamic environments. From a graph-theoretic perspective, CFL can be viewed as a special case of DFL with a star-shaped communication network/graph. These properties generally make DFL more communication-efficient than CFL, especially in large-scale or highly heterogeneous networks. 
From a privacy perspective, the decentralized setting provides a natural advantage over CFL because it does not require all participants to trust a single central server, which represents a critical vulnerability during training.  In contrast, DFL mitigates this risk through its flexible topology: even if an individual node in the network is corrupted, it can be easily isolated (e.g., taken offline), and moderately affects  the overall training process. Finally, according to \cite{lian2017can,shi2023improving},  DFL can empirically achieve convergence performance in terms of accuracy comparable to, or in some cases superior to, CFL. 
	
	Consequently, DFL offers a more favorable communication efficiency-privacy-accuracy trade-off than CFL. However, this improved architecture comes at the cost of additional complexity in both algorithm design and performance analysis. Requiring no central server implies the absence of a unique global solution at each iteration, and as a result, necessitates the management of local parameters for each node. This, in turn, complicates both the theoretical analysis and robust privacy guarantees. In this work, we propose a DFL algorithm with a novel message exchange mechanism that improves trade-off among communication efficiency, privacy, and accuracy.
	
\subsection{Related Work}
A typical iteration of DFL algorithms alternates between local updates and parameter exchange with neighbor nodes. For instance, as a representative method, D-PSGD~\cite{lian2017can} combines local stochastic gradient descent (SGD) with neighbor averaging and achieves convergence rates comparable to centralized approaches under some ideal conditions  while avoiding server-induced communication bottlenecks. Subsequently, a number of variants have been proposed, mainly by modifying three core components: neighbor selection, parameter aggregation, and local update. We provide a structured review according to these three stages.
	
	\subsubsection{Neighbor Selection}
	DFL algorithms rely on an underlying communication topology that specifies with which nodes information is exchanged. Common choices of topology include fully connected, ring, and grid networks \cite{zhu2022topology}. Moreover, dynamic communication graphs can be incorporated to enhance flexibility and performance, such as  stochastic communication among nodes via a sampling-based neighbor selection scheme  \cite{zhang2019decentralized},  epidemic learning to accelerate model convergence \cite{de2023epidemic}, and  an adaptive method to identify helpful neighbors \cite{wang2024smart}. 
  In general, highly connected networks facilitate faster consensus but incur higher communication overhead, whereas sparse or random topologies reduce the communication overhead at the risk of slower convergence. Furthermore, the neighbor-selection strategy can affect the strength of privacy preservation \cite{zhang2025personalized}. 
	
	\subsubsection{Parameter Aggregation}
	After receiving parameters from neighboring nodes, the next step is to efficiently exploit this information to strengthen consensus and accelerate global convergence. A common strategy is to perform a weighted average over neighboring nodes \cite{nedic2009distributed}. Moreover, parameter aggregation is typically the primary step at which communication occurs; thus, various data-compression strategies can be applied to the transmitted signals (e.g., model differences, gradients, or consensus variables) to reduce communication volume. Examples include difference compression \cite{tang2018communication}, gradient sparsification \cite{alistarh2018convergence}, and one-bit compression \cite{11145909}. To deal with transmission failures, a completion strategy has been proposed in \cite{ye2022dfl} by replacing the lost packets with local parameters at each device.
	
	Additionally, parameter aggregation is a natural stage for incorporating privacy-preserving mechanisms, which can be broadly categorized into encryption-based and differential privacy (DP)-based approaches. The former leverages secure computation and cryptographic techniques, such as secure aggregation and homomorphic encryption \cite{qian2024decentralized,saidi2025securing}. The latter is favored for its formal privacy guarantees and flexibility \cite{wang2023decentralized,yue2025differentially}. However, the DP framework typically requires noise injection, which inevitably introduces an accuracy-privacy trade-off. Therefore, recent work aims to balance this trade-off and mitigate the resulting utility degradation \cite{fukami2024dp}.
	
	\subsubsection{Local Update}
	Following the communication phase, each node performs one or more local updates using its own data. One of the most common frameworks is SGD: each node computes a local stochastic gradient based on its current parameter (usually after aggregation) and updates its model accordingly. Another important class of methods is based on the alternating direction method of multiples (ADMM)-type steps \cite{zhou2023federated,zhou2025preconditioned}, which have been employed in CFL \cite{zhou2023federated}. However, in decentralized settings, the higher per-iteration complexity has limited their practice. 
	Moreover, several variants of SGD have been developed to accelerate convergence and reduce communication overhead, such as,  momentum terms integrated into local updates to improve stability under nonconvex landscapes \cite{sun2022decentralized} and multiple local updates between communication rounds to reduce the overall communication cost \cite{liu2022decentralized}.

 \begin{table}[t]
		\centering
		\caption{Assumptions for convergence: \textcircled{1}: Convexity; 
			\textcircled{2}: Strong convexity;  
			\textcircled{3}: Lipschitz continuity of the gradient; \textcircled{4}: Bounded (stochastic) gradient or the second-moment of the (stochastic) gradient; 
			\textcircled{5}: Bounded variance; 
			\textcircled{6}: Lipschitz continuity of the gradient on a bounded region; \textcircled{7}: Unbiased gradient estimation; 
			\textcircled{8}: Unbiased compression operator;
			\textcircled{9}: Lipschitz continuity of the objective function;
			\textcircled{10}: Bounded compression-error operator;
			\textcircled{11}: Doubly stochastic communication matrix and spectral gap;
			\textcircled{12}: PL condition.\\
			Techniques for communication efficiency: \textcircled{A}:  Data compression;
			\textcircled{B}: Communication round reduction.
			\label{tab:comp-assumps}}
		\renewcommand{\arraystretch}{1.25}\addtolength{\tabcolsep}{-3pt}
		\begin{tabular}{lcccr}
			\hline  
			Algorithms&  Refs.  & Communication & Convergence rate& Assumptions \\ \hline  
			\multicolumn{5}{c}{Type I convergence: $\|\nabla f(\w^T)\|^2 = B$}\\\hline  		
			D-PSGD& \cite{lian2017can}  &none& $O(1/\sqrt{T})$
			 &\textcircled{3}\textcircled{5}\textcircled{11} \\ 
			 CHOCO-SGD& \cite{koloskovadecentralized}  & \textcircled{A}& $O(1/{T^{\frac{2}{3}}})$
			 &\textcircled{3}\textcircled{4}\textcircled{5}\textcircled{10}\textcircled{11} \\ 
			SQuARM-SGD& \cite{singh2021squarm}  & \textcircled{A}& $O(1/{\sqrt{T}})$
			&\textcircled{3}\textcircled{4}\textcircled{5}\textcircled{7}\textcircled{10}\textcircled{11} \\ 
			DFedSAM & \cite{shi2023improving} & \textcircled{B}& $O(1/\sqrt{T})$ &	\textcircled{3}\textcircled{4}\textcircled{5}\textcircled{11}\\		
			EL & \cite{de2023epidemic} & \textcircled{B} & $O(1/\sqrt{T})$ &			\textcircled{3}\textcircled{4}\textcircled{5}\textcircled{7}\\
			DCD-PSGD & \cite{tang2018communication}  & \textcircled{A} & $O(1/\sqrt{T})$ &	
			\textcircled{3}\textcircled{4}\textcircled{5}\textcircled{8}\textcircled{11}\\
			BEER & \cite{zhao2022beer} & \textcircled{A} & $O(1/T)$ &
			\textcircled{3}\textcircled{5}\textcircled{7}\textcircled{10}\textcircled{11}\\
			PaME & ours & \textcircled{A}\textcircled{B}  & $O(\varrho^T)$ &
			\textcircled{6}\textcircled{11}\\
			 \hline 
			\multicolumn{5}{c}{Type II convergence: $f(\w^T)-f^\infty = B$}\\\hline 
			DFedAvgM & \cite{sun2022decentralized} & \textcircled{A}& $O(1/T)$&\textcircled{3}\textcircled{4}\textcircled{5}\textcircled{11}\textcircled{12}\\
			DMGT-SVRG & \cite{liu2024decentralized} & \textcircled{B}& $O(\rho^T)$&\textcircled{2}\textcircled{7}\textcircled{9}\textcircled{11}\\
			BEER & \cite{zhao2022beer} & \textcircled{A} & $O(\rho^T)$ &
			\textcircled{3}\textcircled{5}\textcircled{7}\textcircled{10}\textcircled{11}\textcircled{12}\\
			NIDS & \cite{li2019decentralized} & none & $O(\rho^T)$ &
			\textcircled{2}\textcircled{3}\textcircled{11}\\
			ANQ-NIDS & \cite{michelusi2022finite} & \textcircled{A} & $O(\rho^T)$ &
			\textcircled{2}\textcircled{3}\textcircled{10}\textcircled{11}\\
			PaME & ours & \textcircled{A}\textcircled{B}  & $O(\rho^T)$ &
			\textcircled{6}\textcircled{11}\\
			\hline 
		\end{tabular}
		\label{table:compare-conditions}
	\end{table}	
		
	\subsection{Contributions}
	In this work, we address the critical challenge of balancing communication efficiency, accuracy, and privacy preservation in decentralized systems by introducing a partial message exchange (\texttt{PME}, see Fig. \ref{transmission-mech} or Algorithm \ref{algorithm-PME}) mechanism within a DFL algorithm (see Algorithm \ref{algorithm-PaME}), which hence we term PaME.  As presented in Table~\ref{table:compare-conditions}, PaME differs from existing DFL algorithms by substantially relaxing theoretical assumptions while achieving a favorable trade-off among communication efficiency, accuracy, and privacy. Our main contributions are summarized as follows.
	
	\textit{a) Best convergence under the weakest assumptions:}
	 \text PaME is proven to converge under two assumptions: \textcircled{6} Lipschitz continuity of the gradients on a bounded region and \textcircled{11}  the communication matrix to be doubly stochastic, as presented in Table~\ref{table:compare-conditions}. The former is equivalent to the locally Lipschitz continuity, and thus much weaker than \textcircled{3} the (global) Lipschitz continuity of the gradient. Moreover, we do not impose additional standard stochastic-gradient assumptions, such as bounded stochastic gradients or variance, which are typically invoked to control stochasticity and client variability. Such a success lies in the establishment of the boundedness of the iterations generated by PaME from a deterministic perspective, despite the algorithm being a stochastic-gradient-based method. Consequently, PaME can effectively handle applications with heterogeneous data.
	 
	More importantly, the convergence result reflects the central design principle of PaME: improving the trade-off among communication efficiency, privacy-related protection, and data heterogeneity. Specifically, the \texttt{PME} mechanism and time-varying neighbor participation reduce the transmitted information, the induced partial-observation structure leads to a reconstruction-risk characterization, and the penalty term explicitly controls the discrepancy among heterogeneous local models. These three aspects are not treated as separate add-on mechanisms; instead, they are jointly incorporated into the PaME dynamics, under which the convergence guarantee is established without introducing additional standard stochastic-gradient or compression assumptions. 
	
Table~\ref{table:compare-conditions} summarizes the Type-I and Type-II convergence rates achieved by various existing algorithms. Type-I convergence refers to the decay of the objective function value, i.e., $f(\w^k)-f^\infty\to 0$,  while Type-II convergence reflects the vanishing of the gradient norm, i.e., $\|\nabla f(\w^T)\|\to 0$, where $T$ is the total iteration number and $f^\infty$ is the optimal value or the limiting point of sequence $\{f(\w^k)\}$. 
Under two assumptions  \textcircled{6} and \textcircled{11}, PaME attains a linear rate for both types of convergence, i.e., $O(\varrho^T)$ and $O(\rho^T)$ with ${\varrho, \rho\in(0,1)}$. This is faster than  the sub-linear rate of $O(1/T)$.
	For Type-I convergence, the best rate was achieved by BEER \cite{zhao2022beer}, but it is only sub-linear and requires several stronger assumptions,  such as \textcircled{3}, \textcircled{5} bounded variance,  \textcircled{7} unbiased gradient estimation,  \textcircled{10} bounded compression-error operator, and \textcircled{11}. 
 For Type-II convergence, to the best of our knowledge, existing linear convergence results for DFL typically rely on \textcircled{2} strong convexity  \cite{liu2024decentralized, li2019decentralized,michelusi2022finite} or \textcircled{12} Polyak-\L{}ojasiewicz (PL) conditions \cite{zhao2022beer}, together with additional stochastic-gradient or mixing assumptions, see Table~\ref{table:compare-conditions}.
 Finally, it is noted that both $\textcircled{3} \Rightarrow \textcircled{6}$ and $\textcircled{2} \Rightarrow \textcircled{6}$. Hence, PaME achieves the fastest known convergence rate but under weakest set of assumptions.

	Additionally, since the \texttt{PME} mechanism is intrinsically a form of data compression, our analysis does not rely on the standard unbiased-compressor assumption \cite{tang2018communication}; its effect can be directly tracked through the iterate dynamics.
	
	Finally, PaME naturally accommodates time-varying communication graphs, which result in dynamic communication matrices at each communication round. Despite this, convergence of PaME is guaranteed provided that only the initial communication matrix is doubly stochastic. This contrasts with many existing DFL methods, which typically require the communication matrices to remain static and doubly stochastic at each communication round \cite{sun2022decentralized, liu2024decentralized, zhao2022beer, li2019decentralized, michelusi2022finite}. This indicates that PaME achieves the convergence in a more complex scenario, where the communication matrices can be dynamic, sparse, and  non-doubly stochastic.

			\begin{figure}[t]
				\centering
				\includegraphics[width=.49\textwidth]{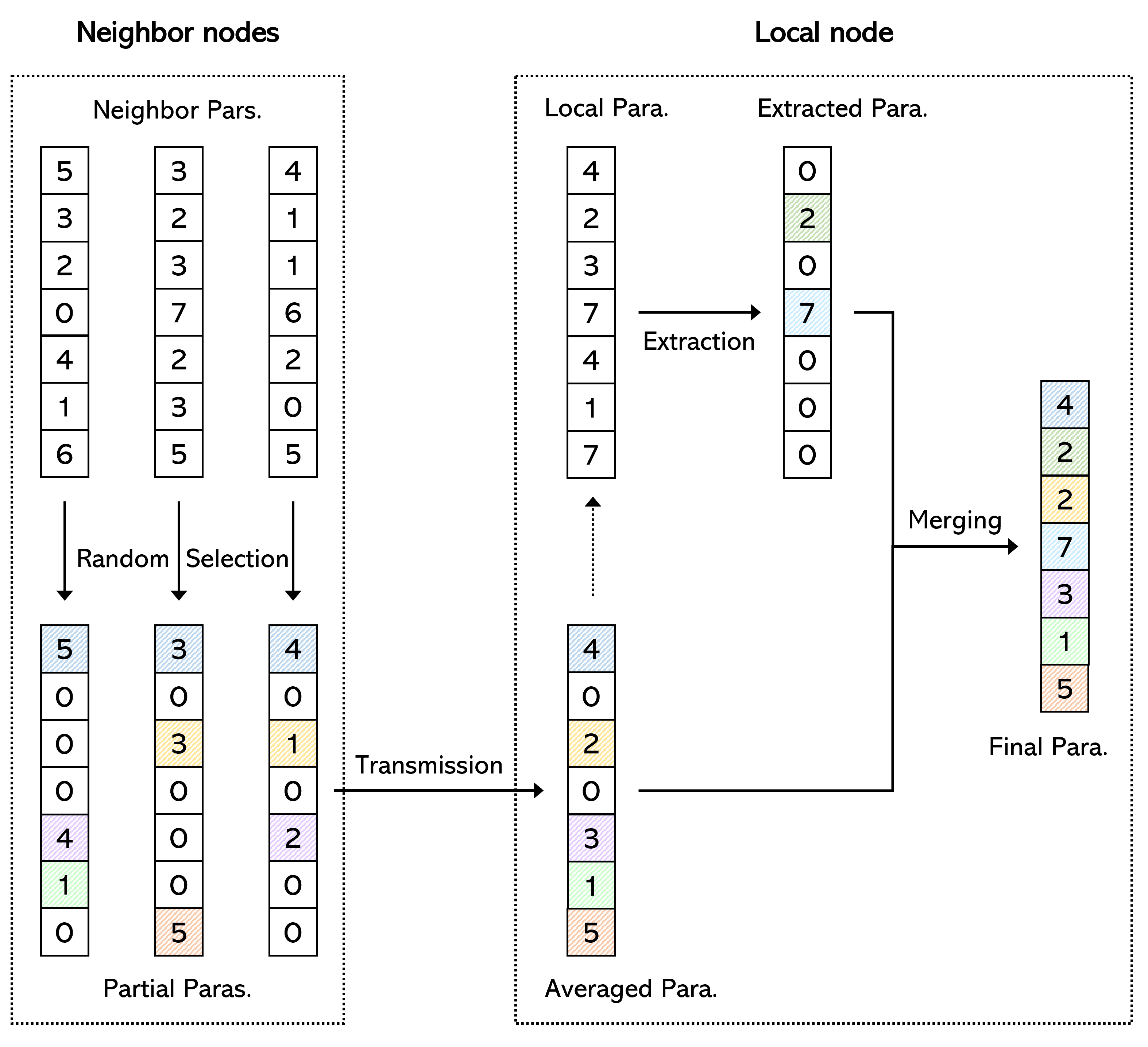}
				\caption{Partial message exchange (\texttt{PME}) : Every neighbor of a local node $i$ randomly selects partial coordinates (or messages) and transmits them to node $i$. Node $i$ averages the received incomplete parameters and fills in the missing coordinates using the coordinates in its own local parameter.}
				\label{transmission-mech} \vspace{-1mm}
			\end{figure}
			
	\textit{b) Communication efficiency:} One of the major achievements in this paper lies in the development of the \texttt{PME} mechanism, as outlined in Fig. \ref{transmission-mech}, which enables PaME to reduce the communication cost significantly. Moreover, PaME  supports asynchronous updates and multiple local steps between communication rounds, which further increases its flexibility and enables more efficient use of limited communication resources. Overall, the communication efficiency stems from two factors: reduced transmitted content per communication round and reduced communication rounds.
	
	It is worth mentioning that PaME reduces communication in a different manner in comparison with compressed DFL methods such as CHOCO-SGD\cite{koloskovadecentralized} and DCD-PSGD\cite{tang2018communication}. They are mainly built upon decentralized SGD or gossip-averaging frameworks, where compression is applied to the information exchanged during the gossip process. Specifically, CHOCO-SGD communicates compressed residuals between local variables and public copies, while DCD-PSGD communicates compressed model differences. In contrast, PaME is developed under an inexact alternating direction method (ADM)-based decentralized optimization framework, and the transmitted object is a partial model message consisting of randomly selected coordinates of the current model parameter (see Fig. \ref{transmission-mech}), rather than a stochastic gradient, a model difference, or a residual. The missing coordinates are treated as unobserved neighboring states instead of zero updates. Accordingly, PaME employs a receiver-side coordinate-wise normalization factor to construct the aggregated variable, rather than relying on an unbiased compression operator or a public-copy or error-feedback correction mechanism. Thus, while the random coordinate sampling in PME is related to existing sparsification techniques, the key distinction is its integration with the penalty-based PaME dynamics and its coordinate-wise normalized partial-state aggregation.

	\textit{c) Privacy preservation:}
The \texttt{PME} mechanism and the partial device participation strategy reduce the exposure of data-sensitive information during communication. Specifically, random neighbor selection and random coordinate sampling ensure that each node reveals only selected coordinates of its local parameter to a subset of neighboring nodes. Such randomized partial observation is consistent with the intuition that incomplete communication records make adversarial tracking or sample reconstruction more difficult \cite{zhang2020private}. To move beyond qualitative arguments, inspired by the inverse-problem perspective for analyzing data reconstruction attacks and defenses in \cite{liu2025data}, we introduce a passive partial-observation adversary model and establish a formal finite-window reconstruction-risk characterization in the DFL setting, see Section \ref{subsec:privacy-reconstruction-risk}. In particular, the information available to the adversary is represented by the partially observed data-sensitive Jacobian, and the corresponding effective observable information ratio is quantified to characterize the reconstruction difficulty induced by \texttt{PME}. Moreover, other privacy-preserving mechanisms, such as differential privacy \cite{hu2023federated} and secure aggregation \cite{biswas2024secure}, can be incorporated into our framework depending on the privacy requirements of specific applications.

	\textit{d) Robustness:} Within the PaME framework, a partial device participation strategy is employed, namely, each local node communicates only with a subset of its selected neighbors. As a result, this design mitigates the impact of stragglers because neighbors with unreliable or slow communication links (i.e., stragglers) can be contacted less frequently or omitted in specific rounds. Moreover, each local node  communicates independently according to its own interval, without being affected by other nodes' communication schedules, thereby naturally inducing a partially synchronized training regime and yielding more stable and robust learning dynamics.
	
	\textit{e) Superior numerical performance:} Extensive numerical experiments are conducted to validate the effectiveness of PaME in comparison with several established DFL algorithms on both synthetic and real-world datasets under data distributions ranging from homogeneous to highly heterogeneous settings. The experimental results demonstrate that PaME achieves a more favorable trade-off between communication efficiency and model accuracy across these scenarios, thereby confirming its robustness to data heterogeneity.
	
	\subsection{Organization}
	This paper is organized as follows. Section \ref{Preliminaries} introduces the mathematical notation and formulates the DFL model. Section \ref{DFL} details the proposed algorithm and analyzes its advantages regarding communication efficiency and privacy preservation. Section \ref{Theory} presents the theoretical analysis, including convergence guarantees and complexity analysis. Section \ref{Numerical} reports experimental results on synthetic and real datasets. Finally, Section \ref{Conclusion} provides concluding remarks.

 \section{Preliminaries}
	
	\label{Preliminaries}
	
	We begin this section by presenting the mathematical notation used throughout the paper, followed by the formulation of the optimization model for DFL.
	
	\subsection{Notation}

	Scalars, vectors, and matrices are written in lowercase, bold lowercase, and bold uppercase letters, respectively, e.g., $n$, $\sigma$,  $\sigma_i$, and $\sigma_i^k$ are scalars, $\w$, $\w_i$, $\w_i^k$, $\mathbf{g}_i$, $\v$, and $\overline{\v}_i$ are vectors, and $\W, \W^\alpha$, and $\V$ are matrices. Let ${[m] := \{1,2,\cdots,m\}}$ 
	and $\mathbb{R}^n$ be the $n$-dimensional Euclidean space equipped with inner product
	$
	{\langle \w, \v \rangle := \sum_{t=1}^n w_t v_t},
	$
	where '$:=$' means 'is defined as'. Let $\|\cdot\|$ denote the Euclidean norm for vectors and spectral norm for matrices, and $\|\cdot\|_F$ denote the Frobenius norm for matrices.
	Write $w_t, w_{it}$, and $w_{it}^k$ to represent the $t$th entry of  vectors $\w$, $\w_i$, and $\w_i^k$, respectively. 
	 The cardinality of a set $\Omega$ is denoted by $|\Omega|$. Let  $\E(\cdot)$ be the expectation operator.
	Finally, we use the compact notation
	\begin{equation*} 		
		\begin{aligned}
			\mathbf{W} &= (\mathbf{w}_1, \mathbf{w}_2,\cdots,\mathbf{w}_m), ~~~		
			&{\mathbf{V}} &= (\overline{\mathbf{v}}_1,  \overline{\mathbf{v}}_2,\cdots,\overline{\mathbf{v}}_m), \\	
			\mathbf{G} &= (\mathbf{g}_1, \mathbf{g}_2,\cdots,\mathbf{g}_m),	
			~~~  &\boldsymbol{\sigma} &= (\sigma_1,\sigma_2,\cdots,\sigma_m),	
		\end{aligned}	
	\end{equation*} 
	and apply the same convention to define $\W^\alpha$, $\V^\alpha$, $\G^\alpha$, and
	$\boldsymbol{\sigma}^\alpha$, where $\alpha$ is an integer representing the $\alpha$th iteration.

	\subsection{Decentralized Federated Learning}
	Given a network of $m$ nodes indexed by $[m]$,  each node $i$ has a private dataset $\mathcal{D}_i$ and minimizes a local loss function
	${f_i(\cdot) := f_i(\cdot;\mathcal{D}_i)}$, where ${f_i: \mathbb{R}^n \to \mathbb{R}}$ is continuously differentiable and bounded from below.
	Then the DFL optimization problem can be formulated as
	\begin{equation}\label{fl}
		\min_{\mathbf{w} \in \mathbb{R}^n}
		f(\mathbf{w}) := \sum_{i =1}^m f_i(\mathbf{w}),
	\end{equation}
	where ${\mathbf{w} \in \mathbb{R}^n}$ denotes the global parameter to be learned.
	
	Under the decentralized setting, each node maintains a local model parameter $\mathbf{w}_i$, and model consensus is achieved through peer-to-peer communication. We consider a general communication topology in which node $i$ can exchange messages only with its neighbor nodes ${j \in \N_i}$, where ${\N_i \subseteq [m]}$ denotes the neighbor set of node $i$. This communication pattern induces an undirected communication graph ${\mathrm{G} = ([m], \mathrm{E})}$, where ${\mathrm{E} := \{(i,j): j \in \N_i, i \in [m]\}}$ is the edge set. To ensure that all nodes eventually reach a common model, namely, ${\mathbf{w}_1=\cdots=\mathbf{w}_m=\mathbf{w}}$,  assume graph $\mathrm{G}$ is connected.  
	
	 Given such a communication topology, model \eqref{fl} can be equivalently reformulated as the following decentralized optimization problem,
			\begin{equation}\label{dfl}
					\min _{\mathbf{W}} ~ \sum_{i=1}^m f_i\left(\mathbf{w}_i\right), ~\text{ s.t. } ~ \mathbf{w}_i=\mathbf{w}_j, ~(i, j) \in \mathrm{E}.
			\end{equation}
	The work in this paper is carried out based on the above model.
	\section{DFL via Inexact ADM}
	\label{DFL}
	In this section, we first introduce PaME  and then highlight  several advantageous properties.

	\subsection{Algorithm Design}

			\begin{algorithm}[!t] 
 \caption{DFL by Partial Message Exchange (PaME)  \label{algorithm-PaME}}
\begin{algorithmic}[1]

	\State	Initialize {$\mathbf{w}^0_i=0$}, two integers {$ \kappa_i>0$} and {$s_i>0$}, {$\sigma_i^0>0$}, {$\gamma_i>1$} for each {$i\in[m]$}.
		
		\For{iteration $k=0,1,2,3,\cdots $}			
		\For{node $i=1,2,\cdots,m $}
		\If{$k \in\mathcal{K}_i :=\{0,\kappa_i,2\kappa_i,3\kappa_i,\cdots \}$} 
		
			\State Random neighbor selection: ${N}_i^{k}\subseteq {N}_i$. 
		\State Neighbor number update:	 ~$m_i^{k}=\left|{N}_i^k\right|$.			
		\State Partial message exchange:~ $$\vik=\texttt{PME}(\w_i^k,~\{\w_j^k: j\in{N}_i^{k}\}).$$

							
		
		\Else

		
		\State Local parameter tracking:  ~~$\vik= \w_i^k$.
		
		\State Neighbor number tracking:	 $m_i^{k}=m_i^{k-1}$.							
		
		
    \EndIf
    
    \State Random sub-batch data sampling: {$\mathcal{B}_i^{k} \subseteq \mathcal{D}_i$}.
		
		\State Local parameter updating: 
		\begin{eqnarray}\label{SRDFL-localstep}
		\mathbf{w}_{i}^{k+1}=\vik - \frac{\nabla f_i(\vik;\mathcal{B}_i^{k})}{\sigma_i^k m_i^k}.
		\end{eqnarray} 
		
		\State Hyper-parameter increasing: $\sigma_i^{k+1}=\gamma_i \sigma_i^{k}$.	
     \EndFor
 \EndFor
\end{algorithmic}
				
	\end{algorithm}
		
			Instead of solving problem (\ref{dfl}) directly, we aim to solve the following penalized formulation,
			\begin{equation*}\label{pen-model}
					\min _{\mathbf{W}} ~ \sum_{i =1}^m\Big(f_i\left(\mathbf{w}_i\right)+\frac{\sigma_i}{2} \sum_{j \in \N_i}\left\|\mathbf{w}_i-\mathbf{w}_j\right\|^2\Big),  
			\end{equation*}
			where each ${\sigma_i>0}$ is a penalty constant. According to \cite{nocedal2006numerical}, by driving ${\sigma_i}$ to $\infty$, one can penalize the constraint violations with increasing severity. Therefore, it makes sense to consider $m$ increasing sequence $\{\sigma_i^k\}$, ${i\in [m]}$ and to seek an approximate minimizer for each $k$.  Specifically, based on current point ${\mathbf{W}^k=\left(\mathbf{w}_1^k, \mathbf{w}_2^k, \cdots, \mathbf{w}_m^k\right)}$, one can find an approximate minimizer by solving the following problem,			
					\begin{equation*}\label{pen-model-1}
					\min _{\mathbf{W}} ~ \sum_{i =1}^m\Big(f_i\left(\mathbf{w}_i\right)+\frac{\sigma_i^k}{2} \sum_{j \in \N_i^k}\left\|\mathbf{w}_i-\mathbf{w}_j^k\right\|^2\Big),  
			\end{equation*}
where ${\N_i^k\subseteq\N_i}$ is a randomly selected subset. Solving the above problem is equivalent to address the following $m$ problems independently, 			 
			\begin{equation}\label{subproblem-wk1}
				\min _{\mathbf{w}_i}~ f_i\left(\mathbf{w}_i\right)+ \frac{\sigma_i^k}{2} \sum_{j \in \N_i^k}\left\|\mathbf{w}_i-\mathbf{w}_j^k\right\|^2,~~ i \in [m].
			\end{equation}
			However, it is still time-consuming to solve the above problem exactly, particularly for complex $f_i$ or high-dimensional settings. An alternative is to solve the above problem inexactly using the linearization of $f_i$ at some point. Based on this,  we consider two cases to address problem \eqref{subproblem-wk1}. 
			\begin{itemize}[leftmargin=10pt]
				\item  When ${k\in\mathcal{K}_i :=\{0,\kappa_i,2\kappa_i,3\kappa_i,\cdots \}}$, we require node ${i}$ to communicate with its selected neighbor nodes ${j\in\N_i^k}$, where   $\kappa_i$ is a positive integer and ${N}_i^{k}\subseteq {N}_i$ is a subset consisting of randomly selected neighbor nodes at the $k$th iteration.  Those neighbor nodes transmit their partial messages $\v_j^k$ defined in \eqref{sparse-generator-0}  to node $i$ based on the rule of {\tt PME} (i.e., Line $1-4$ in Algorithm \ref{algorithm-PME}). Then node $i$ averages received message ${\{\v_j^k: j\in{N}_i^{k}\}}$ and its previous parameter $\w_i^k$ to derive an aggregated parameter $\overline{\mathbf{v}}_i^k$ following the rule of \eqref{pame-0}. Then we linearize $f_i$ at $\overline{\mathbf{v}}_i^k$ and solve problem \eqref{subproblem-wk1} by  
				\begin{equation*}
					\begin{aligned}		
						&\mathbf{w}_i^{k+1}\\
						=&\operatorname*{arg\,min}_{\mathbf{w}_i }   \left\langle\nabla f_i\left(\overline{\mathbf{v}}_i^k; \mathcal{B}_i^k\right), \mathbf{w}_i\right\rangle  + \frac{\sigma_i^k}{2} \sum_{j \in N_i^k} \left\|\mathbf{w}_i-\mathbf{w}_j^k\right\|^2\\
						=&\vik-\dfrac{1}{\sigma_i^k |N_i^k|} \nabla f_i(\vik; \mathcal{B}_i^k) ,
						 					\end{aligned}
				\end{equation*}
				where $\mathcal{B}_i^k$ is a randomly selected sub-batch data from $\mathcal{D}_i$.
		
				\item  When ${k\notin\mathcal{K}_i}$ where there is no communication between node $i$ and its neighbor nodes, we linearize $f_i$ at previous point $\w_i^k$ and approximately solve problem \eqref{subproblem-wk1} by 
				\begin{equation*} 
					\begin{aligned}		
						&\w_i^{k+1}\\
						=&\operatorname*{arg\,min}_{\w_i }  \left\langle\nabla f_i\left(\w_i^k; \mathcal{B}_i^k\right), \mathbf{w}_i \right\rangle + \frac{\sigma_i^k}{2} \sum_{j \in N_i^k} \left\|\mathbf{w}_i-\mathbf{w}_i^k\right\|^2 \\
						=& \w_i^{k}-\dfrac{1}{\sigma_i^k |N_i^k|} \nabla f_i\left({\w}_i^{k}; \mathcal{B}_i^k\right).
					\end{aligned}
				\end{equation*}
In this case, we keep $N_i^k=N_i^{k-1}$.			
			\end{itemize}

			Overall, by incorporating the above two cases into the DFL framework, we develop the proposed algorithm, termed PaME (DFL \textbf{Pa}rtial \textbf{M}essage \textbf{E}xchange), as outlined in Algorithm~\ref{algorithm-PaME}. We highlight its main advantages as follows.
	
	\begin{algorithm}[!t] 
 \caption{$\vik=\texttt{PME}(\w_i^k,~\{\w_j^k: j\in{N}_i^{k}\})$  \label{algorithm-PME}}
\begin{algorithmic}[1]
 \For{every node $j\in {N}_i^{k}$} 
\State Random subset selection: ${T_j^k\subseteq[n]}$ with ${|T_j^k|= s_j}$.   

 \State Sparse vector $\v_j^k$ generation:
 \begin{equation}\label{sparse-generator-0}
v_{j\ell}^k=\begin{cases}
 {w}_{j\ell}^k, &\text{if}~\ell\in T_j^k,\\
  0, &\text{if}~\ell\in T_j^k.\\
 \end{cases}
 \end{equation}
 \State Partial message $\v_j^k$ exchange to node $i$.
 \EndFor
 
 \State  Node $i$ averages received messages by
  \begin{equation}\label{pame-0}
\overline{v}_{i\ell}^k=\begin{cases}
 \dfrac{\sum_{j\in {N}_i^{k}}v_{j\ell}^k }{\lambda_{i,\ell}^k }, &\text{if}~\lambda_{i,\ell}^k\neq0,\\[1ex]
  w_{i\ell}^k, &\text{if}~\lambda_{i,\ell}^k=0,\\
 \end{cases}
 \end{equation}
 for each $\ell\in[n]$, where 
   \begin{equation}\label{def-lambda-itk-0}
\lambda_{i,\ell}^k =\left|\left\{j\in {N}_i^{k}:~ v_{j\ell}^k \neq 0 \right\}\right|.
 \end{equation}
\end{algorithmic}
\end{algorithm}
\subsection{Partial Message Exchange (\texttt{PME})}			
			In Algorithm~\ref{algorithm-PaME}, when ${k\in\mathcal{K}_i}$,  node ${i}$ communicates with its selected neighbors via {\tt PME} outlined in Algorithm~\ref{algorithm-PME}. An illustrative example is provided in Fig.~\ref{transmission-mech}. The key features of \texttt{PME} are twofold.
\begin{itemize}[leftmargin=13pt]
\item[1)] Node ${j\in N_i^k}$  transmits a sparse vector $\v_j^k$, namely, $s_j$ randomly selected coordinates in ${\w_j^k\in\mathbb{R}^n}$ and $(n-s_j)$ zeros, to node $i$. This can lead significant  transmission volume reduction. Specifically, transmitting this sparse vector $\v_j^k$ using double-precision floating-point format requires 
\begin{equation}\label{reduced-bit}
\left(64s_j+n-s_j\right)=\left(63s_j+n\right)  \text{bits}.
\end{equation}
In contrast, transmitting a dense vector ${\w_j^k\in\mathbb{R}^n}$ with the same format generally requires $(64n)$ bits. When ${s_j\ll n}$, $${(63s_j+n ) \ll 64n}.$$ For example, $1.63\times10^4\ll 6.4\times10^5$ if ${n=100s_j=10^4}$.

We shall point out that if ${w_{j\ell}^k=0}$ for some ${\ell\in T_j^k}$ in  \eqref{sparse-generator-0} (which is unlikely to happen in practice), then to distinguish this useful $0$ and $0$ for ${\ell\notin T_k^k}$, the neighbor could use a character, such as `$\star$', to replace this useful $0$.   
			For example, at $k$th iteration,
		$$\w_5^k=\left[
3,
6,
0,
6
\right]^\top,~~T_5^k=\left\{
3,4
\right\},~~\v_5^k=\left[
0,
0,
\star,
6
\right]^\top.$$	
In $\v_5^k$, there are two useful coordinates, namely, $v_{53}^k=0$ and $v_{54}^k=6$. Using such a strategy, the transmission volume would not be affected as we only need $8$bits ($<64$bits) to transmit this character `$\star$'.

\item[2)] 	The averaging mechanism in \eqref{pame-0} is novel and  yields an unbiased estimator. It averages the $\ell$th entry using $\lambda_{i,\ell}^k$ rather than $|{N}_i^{k}|$, where $\lambda_{i,\ell}^k$ is defined in \eqref{def-lambda-itk-0} and counts the number of ${\{ \v_j^k:j\in N_i^k\}}$ with non-zero entry $v_{j\ell}^k$ (`$\star$' is treated as a  nonzero). In this way,   given ${\lambda_{i,\ell}^k>0}$, the $\ell$th entry of $\overline{\v}_i^k$  yields an unbiased estimation of the average of selected neighbors' parameters ${\{\w_{j}^k: j\in N_i^k\}}$ under some certain sampling rules, namely,
$$ \mathbb{E}\Big(\overline{v}_{i\ell}^k~|~\lambda_{i,\ell}^k>0 \Big) = \frac{1}{|N_i^k|}\sum_{j\in N_i^k}w_{j\ell}^k.$$ 
In contrast, directly averaging ${\{\v_j^k: j\in N_i^k\}}$ using $|{N}_i^{k}|$, namely $\sum_{j\in N_i^k}\v_j^k/|{N}_i^{k}|$, will yields a biased estimation, 
 as shown in the following theory whose proof can be found in Section I of the Supplemental Material.
\end{itemize}	

\begin{theorem}\label{theorem1-unbiase}
Let $\overline{\w}$ be the average of $q$ vectors $\w_1$, $\w_2$, $\cdots$, ${\w_q \in \mathbb{R}^n}$. For each ${i\in[q]}$, independently construct a sparse vector ${\v_i\in\mathbb{R}^n}$  by uniformly selecting $s$ coordinates of $\w_i$ without replacement from $[n]$. Define indicator variables by
\begin{equation*}
\delta_{i\ell}=\begin{cases}
1,&\text{if $w_{i\ell}$ is selected},\\
0,& \text{otherwise},
\end{cases}\qquad\delta_\ell:=\sum_{i=1}^q\delta_{i\ell}.
\end{equation*}
and two averages $\overline{\v}$ and $\widetilde{\v}$ by
\begin{equation*}
\overline{v}_\ell=\begin{cases}
\dfrac{1}{\delta_\ell}\displaystyle{\sum_{i=1}^q} v_{i\ell},&\text{if}~\delta_\ell>0,\\
0,& \text{otherwise},
\end{cases}\qquad\widetilde{v}_\ell=\frac{1}{q}\sum_{i=1}^q v_{i\ell}
\end{equation*}
for any $\ell\in[n]$. 
Then
\begin{equation}\label{Ev-Ew}
\mathbb{E}\left(\overline{v}_\ell~|~\delta_\ell>0\right) = \overline{w}_\ell,\quad
\mathbb{E}\left(\widetilde{v}_\ell~|~\delta_\ell>0\right)= c\overline{w}_\ell.
\end{equation}
where $c:=(s/n)/(1-(1-s/n)^q)$.
\end{theorem}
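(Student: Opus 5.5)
The plan is to reduce everything to a statement about the indicators $\delta_{1\ell},\dots,\delta_{q\ell}$ for one fixed coordinate $\ell$. By construction $v_{i\ell}=\delta_{i\ell}w_{i\ell}$, with the `$\star$' convention of \texttt{PME} so that a selected zero still counts as selected. Since each $\v_i$ is built by choosing a uniformly random $s$-subset of $[n]$, we have $\mathbb{P}(\delta_{i\ell}=1)=\binom{n-1}{s-1}/\binom{n}{s}=s/n=:p$. Because the $q$ vectors are constructed independently, $\delta_{1\ell},\dots,\delta_{q\ell}$ are i.i.d.\ Bernoulli$(p)$, and $\delta_\ell\sim\mathrm{Bin}(q,p)$. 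In particular
\[
\mathbb{P}(\delta_\ell>0)=1-(1-p)^q .
\]
Only the marginal behaviour at coordinate $\ell$ matters; the dependence between different coordinates of the same $\v_i$ plays no role.

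For the first identity in \eqref{Ev-Ew}, I would condition further on the value $\delta_\ell=d$ with $d\in\{1,\dots,q\}$. The key step is an exchangeability argument. Given $\delta_\ell=d$, the random set $\{i:\delta_{i\ell}=1\}$ is uniformly distributed over the $d$-subsets of $[q]$, because every such configuration has the same probability $p^d(1-p)^{q-d}$. Hence $\mathbb{E}(\delta_{i\ell}\mid\delta_\ell=d)=d/q$ for each $i$, and therefore
\[
\mathbb{E}\Big(\tfrac{1}{d}\textstyle\sum_{i}\delta_{i\ell}w_{i\ell}\,\Big|\,\delta_\ell=d\Big)=\tfrac{1}{d}\sum_i \tfrac{d}{q}w_{i\ell}=\overline{w}_\ell .
\]
Since this value does not depend on $d$, the tower property over the event $\{\delta_\ell>0\}=\bigcup_{d\ge1}\{\delta_\ell=d\}$ yields $\mathbb{E}(\overline{v}_\ell\mid\delta_\ell>0)=\overline{w}_\ell$.

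For the second identity, I would use that $\widetilde{v}_\ell=0$ on the event $\{\delta_\ell=0\}$. This gives
\[
\mathbb{E}(\widetilde{v}_\ell\mid\delta_\ell>0)=\frac{\mathbb{E}(\widetilde{v}_\ell\mathbf{1}_{\delta_\ell>0})}{\mathbb{P}(\delta_\ell>0)}=\frac{\mathbb{E}(\widetilde{v}_\ell)}{\mathbb{P}(\delta_\ell>0)} .
\]
By linearity, $\mathbb{E}(\widetilde{v}_\ell)=\frac1q\sum_i p\,w_{i\ell}=p\,\overline{w}_\ell$. Dividing by $1-(1-p)^q$ gives exactly $c\,\overline{w}_\ell$ with $c=(s/n)/(1-(1-s/n)^q)$.

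The only step needing care is the conditional-uniformity claim in the second paragraph, namely that conditioning on the count makes the selected index set uniform. I would verify it directly from the i.i.d.\ Bernoulli product form rather than appealing to symmetry informally. I would also note that the marginal probability $s/n$ is what links sampling without replacement within a vector to this Bernoulli model.
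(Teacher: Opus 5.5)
Your proposal is correct and follows essentially the paper's argument: the first identity comes from conditioning on $\delta_\ell=d$ and using that the selected index set is then uniform over $d$-subsets of $[q]$ (you justify this from the i.i.d.\ Bernoulli product form, where the paper appeals to symmetry informally), and the second comes from $\mathbb{P}(\delta_\ell>0)=1-(1-s/n)^q$. The only cosmetic difference is that you use $\widetilde v_\ell\mathbf{1}_{\{\delta_\ell>0\}}=\widetilde v_\ell$ once for the whole sum, whereas the paper uses $\{\delta_{i\ell}=1\}\subseteq\{\delta_\ell>0\}$ to get $\mathbb{P}(\delta_{i\ell}=1\mid\delta_\ell>0)=c$ index by index and then sums.
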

We  provide an example to illustrate the above theorem and to detail the computation of $\vik=\texttt{PME}({N}_i^{k})$ in accordance with Algorithm~\ref{algorithm-PME}.  Suppose that at $k$th iteration,	  $$ {{N}_i^{k}=\{2,4,5\}},~~{s_2=s_4=s_5=2}$$ 
and  $\w_i^k$ and $(\w_j^k, T_j^k,\v_j^k), j\in{N}_i^{k}$ are given by
$$
\begin{aligned}
\w_i^k&=\left[
2,
8,
3,
6
\right]^\top,&&&\\
\w_2^k&=\left[
2,
8,
1,
4
\right]^\top, &T_2^k=\left\{
1,4
\right\},~~~&\v_2^k=\left[
\underline{2},
0,
0,
\underline{4}
\right]^\top, \\
\w_4^k&=\left[
4,
7,
2,
5
\right]^\top,~&T_4^k=\left\{
3,4
\right\},~~~&\v_4^k=\left[
0,
0,
\underline{2},
\underline{5}
\right]^\top, \\
\w_5^k&=\left[
3,
6,
0,
6
\right]^\top,&T_5^k=\left\{
3,4
\right\},~~~&\v_5^k=\left[
0,
0,
\underline{\star},
\underline{6}
\right]^\top, 
\end{aligned} $$ 
where  $T_j^k$ is randomly selected to such that $|T_j^k|=2$  and $\v_j^k$ are obtained by \eqref{sparse-generator-0}. According to \eqref{def-lambda-itk-0},  $$\lambda_{i,1}^k=1,~~\lambda_{i,2}^k=0,~~\lambda_{i,3}^k=2,~~\lambda_{i,4}^k=3,$$ namely, in $\v_2^k, \v_4^k, \v_5^k$, the $1,2,3,$ and $4$th  coordinates contain $1,0,2,$ and $3$ effective entries, respectively, as indicated by the underlined elements. Then from \eqref{pame-0}, we have
$$ 
\overline{\v}_i^k= \left[
  \frac{2}{1},~
 (\w_i^k)_2,~
\frac{2+\star}{2},~
 \frac{4+5+6}{3} 
\right]^\top=\left[ 
2,
8,
1,
5 
\right]^\top,
$$
where `$\star$' is treated back to $0$. The mean of the selected neighbors' parameters and sparse parameters are computed by
$$ \begin{aligned}
\widetilde{\v}_i^k&= \dfrac{1}{|N_i^k|}\sum_{j\in N_i^k}\v_j^k=\left[
\frac{2}{3},
0,
\frac{2}{3},
5
\right]^\top,\\
 ~\overline{\w}_i^k&= \dfrac{1}{|N_i^k|}\sum_{j\in N_i^k}{\w}_j^k=\left[
3,
7,
1,
5
\right]^\top.
\end{aligned}$$
Obviously, $\overline{\v}_i^k$ is a closer estimator to $\overline{\w}_i^k$ than $\widetilde{\v}_i^k$.

			\subsection{Communication Efficiency}
		The communication efficiency of PaME in Algorithm \ref{algorithm-PaME} arises from three factors. 
		
		\begin{itemize}[leftmargin=11pt]
		\item Communication between node $i$ and its selected neighbors ${\N_i^k \subseteq \N_i}$ occurs only at iterations ${k \in \mathcal{K}_i}$, rather than at every step. Consequently, a larger period $\kappa_i$ permits multiple local updates between communication events, thereby reducing the communication rounds. This periodic communication strategy is well-established in  \cite{zheng2016asynchronous,yu2019parallel,wang2021cooperative,mcmahan2017communication,li2019convergence,li2020federated,zhou2023fedgia}.
	
	\item Even at ${k \in \mathcal{K}_i}$, node $i$ only communicates with a selected subset of its neighbors, rather than all neighbors. 

\item As discussed earlier, each node ${j\in [m]}$ only transmits partial messages with ${(63s_j+n)}$bits content, instead of the full model parameter with $(64n)$bits content. This substantially reduces the transmitted message size and, in turn, significantly improves communication efficiency.
\end{itemize}	

This communication saving is different from directly applying a generic compressor to decentralized SGD. Existing compressed DFL methods usually reduce communication by compressing residuals, gradients, or model differences in gossip-type updates \cite{koloskovadecentralized,tang2018communication}. In contrast, PaME transmits partial model states and aggregates the observed coordinates through the coordinate-wise normalization factor $\lambda_{i,\ell}^k$. The missing entries are therefore interpreted as unobserved neighboring states, rather than zero update directions. This design is natural for the inexact penalty-based framework, since the local update requires neighboring model states to penalize consensus violations, instead of compressed update directions or residual errors. In this sense, PME directly sparsifies the information needed by the penalty term, without introducing an additional compression-and-correction layer on top of a gossip-SGD recursion. This makes the reduced communication compatible with the inexact penalty-based update and the subsequent convergence analysis.

\subsection{Privacy Analysis}
\label{subsec:privacy-reconstruction-risk}
We provide a partial-observation reconstruction-risk analysis to characterize the privacy-related effect of the \texttt{PME} mechanism by considering a passive
honest-but-curious adversary. It follows the PaME protocol but attempts to infer
the private data of a node from the messages available during communication.
Specifically, for node \(i\), the adversary can observe the coordinate set
selected by \texttt{PME} and the corresponding transmitted entries of the
message, and it knows the public algorithmic information, such as the protocol,
model architecture, communication topology, and parameter settings. 
However, it
does not have access to the private mini-batches, the untransmitted coordinates,
or the full stochastic gradient. 
This is a conservative threat model, as the adversary is assumed to possess
extensive public information that can facilitate data inference, which is common
in reconstruction attacks in federated learning \cite{zhu2019deep,sun2021soteria,jeter2026securing}. 

Formally, the data reconstruction is deemed  as a finite-window
inverse problem. In PaME, the private data affect the communicated message only
through the local stochastic update, which is then partially observed due to the coordinate-selection mechanism in \texttt{PME}. For node \(i\), let
\(\mathcal S_{i,T}\) denote the local mini-batch window over \(T\) communication
rounds, and let \(\mathcal F_{i,T}\) be the corresponding data-to-message map
that collects the data-dependent local updates over this window. The
coordinate-selection and partial-observation effect induced by \texttt{PME} is
represented by the observation operator \(\D_{i,T}\). Thus, the information
available to the adversary can be written as
\begin{equation}
	\label{eq:pame-partial-observation}
	y_{i,T}
	=
	\D_{i,T}\mathcal F_{i,T}(\mathcal S_{i,T})
	+
	\boldsymbol{\xi}_{i,T},
\end{equation}
where \(\boldsymbol{\xi}_{i,T}\) denotes observation uncertainty, including
stochasticity and residual modeling errors. Thus, the adversary observes only
the coordinate-restricted data-dependent update \(y_{i,T}\), together with the
corresponding visible coordinate sets, rather than the full local update.
Based on this partial observation, the adversary aims to reconstruct the
private batch window \(\mathcal S_{i,T}\). Therefore,
\eqref{eq:pame-partial-observation} naturally induces an inverse problem:
recovering the unknown local data sequence from its partial, noisy, and
coordinate-restricted image under the data-dependent update mapping.
A more detailed interpretation of \eqref{eq:pame-partial-observation} is
provided in Section~II of the Supplemental Material. Inspired by the inverse-problem perspective for analyzing data
reconstruction attacks and defenses in~\cite{liu2025data}, we quantify the
difficulty of this reconstruction task through the reconstruction risk, induced
by the partial observation in PaME. Specifically, we estimate  lower bound $R_L^{\rm PaME}$ and upper bound $R_U^{\rm PaME}$ of the reconstruction risk, $$R^{\rm PaME}:=\mathbb E
\|\operatorname{vec}(\widehat{\mathcal S}_{i,T})
-
\operatorname{vec}(\mathcal S_{i,T}^{0})\|,$$ where $\mathcal S_{i,T}^{0}$ and   $\widehat{\mathcal S}_{i,T}$  are the true mini-batch window  and the recovered mini-batch window by the adversary,  and $\operatorname{vec}$ represents the vectorization operator.

\begin{theorem}[Finite-window reconstruction risk under partial observation]
	\label{the:pame-reconstruction-risk}
	Suppose that \(\mathcal F_{i,T}\) is locally differentiable around the true
	batch window \(\mathcal S_{i,T}^0\). Let
	\[
	J_{i,T}
	:=
	\nabla_{\mathcal S}\mathcal F_{i,T}(\mathcal S)
	\big|_{\mathcal S=\mathcal S_{i,T}^0}
	\]
	be the data-sensitive Jacobian. Define the effective observable information
	ratio as
	\begin{equation}
		\label{eq:pame-effective-ratio}
		\rho_{i,T}
		:=
		\frac{\|\D_{i,T}J_{i,T}\|_F^2}{\|J_{i,T}\|_F^2}.
	\end{equation}
	Let \(d_{\mathcal B}\) be the dimension of the vectorized local batch window,
	\(M_{i,T}\) be the complete-observation information size, and
	\(\sigma_{\rm obs}\) be the scale of the observation uncertainty. 
	Following the inverse-problem formulation of data reconstruction attacks and
	the reconstruction-risk scaling established for noisy-gradient reconstruction
	in prior work~\cite{liu2025data}, we adopt the following complete-observation
	scaling as the baseline for the corresponding local inverse problem:
\begin{equation}
\label{full-reconstruction-risk}
	R_L^{\rm full}
	\gtrsim
	\sigma_{\rm obs}
	\sqrt{\frac{d_{\mathcal B}}{M_{i,T}}},
	\qquad
	R_U^{\rm full}
	\lesssim
	b
	\sqrt{\frac{d_{\mathcal B}}{M_{i,T}}}.
	\end{equation}
	Then
	\begin{equation}
		\label{range-rho-iT}
		0\le \rho_{i,T}\le 1.
	\end{equation}
	Moreover, the reconstruction risk under the PaME
	partial-observation model satisfies the lower bound
	\begin{equation}
		\label{eq:pame-partial-risk-lower}
		R_L^{\rm PaME}
		\gtrsim
		\sigma_{\rm obs}
		\sqrt{
			\frac{d_{\mathcal B}}{\rho_{i,T}M_{i,T}}
		}.
	\end{equation}
	In addition, a masked reconstruction attack using only the visible
	coordinates has reconstruction error bounded by
	\begin{equation}
		\label{eq:pame-partial-risk-upper}
		R_U^{\rm PaME}
		\lesssim
		b
		\sqrt{
			\frac{d_{\mathcal B}}{\rho_{i,T}M_{i,T}}
		}
		+
		\sigma_{\rm obs},
	\end{equation}
	where \(b\) is the mini-batch size.
\end{theorem}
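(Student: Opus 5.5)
The plan is to reduce everything to the complete-observation baseline \eqref{full-reconstruction-risk} by showing that partial observation through \(\D_{i,T}\) acts, to first order, like complete observation with an effective information size \(\rho_{i,T}M_{i,T}\) instead of \(M_{i,T}\).

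\textbf{Step 1: the range \eqref{range-rho-iT}.} Model \(\D_{i,T}\) as a coordinate-selection (masking) operator. It is a diagonal matrix with entries in \(\{0,1\}\), hence an orthogonal projection with \(\|\D_{i,T}\|\le 1\). Then
\[
\|\D_{i,T}J_{i,T}\|_F^2=\sum_r \|\D_{i,T}(J_{i,T})_{:,r}\|^2\le \|J_{i,T}\|_F^2,
\]
and nonnegativity is trivial. This assumes \(J_{i,T}\neq 0\), which is needed for \(\rho_{i,T}\) to be defined.

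\textbf{Step 2: linearize the observation model.} Since \(\mathcal F_{i,T}\) is differentiable at \(\mathcal S_{i,T}^0\), write
\[
y_{i,T}\approx \D_{i,T}\mathcal F_{i,T}(\mathcal S^0_{i,T})+\D_{i,T}J_{i,T}\,\operatorname{vec}(\mathcal S-\mathcal S^0_{i,T})+\boldsymbol\xi_{i,T}.
\]
This turns the problem into a local linear inverse problem with forward operator \(A:=\D_{i,T}J_{i,T}\). In the full-observation case the operator is \(J_{i,T}\), and the baseline scaling \(\sqrt{d_{\mathcal B}/M_{i,T}}\) reflects that the Fisher information \(J^\top J/\sigma_{\rm obs}^2\) has trace proportional to \(M_{i,T}\). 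Under the same normalization as in \cite{liu2025data}, information per observed coordinate is spread comparably across coordinates. The masked Fisher information then has trace
\[
\operatorname{tr}(A^\top A)/\sigma_{\rm obs}^2=\rho_{i,T}\,\operatorname{tr}(J^\top J)/\sigma_{\rm obs}^2 .
\]
So the effective sample size is \(\rho_{i,T}M_{i,T}\).

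\textbf{Step 3: lower bound \eqref{eq:pame-partial-risk-lower}.} Apply a van Trees / Cram\'er--Rao argument, or simply invoke the baseline lower bound with \(M_{i,T}\) replaced by \(\rho_{i,T}M_{i,T}\). The key inequality is \(\operatorname{tr}((A^\top A)^{-1})\ge d_{\mathcal B}^2/\operatorname{tr}(A^\top A)\), which follows from Cauchy--Schwarz on the eigenvalues. This gives expected squared error at least \(\sigma_{\rm obs}^2 d_{\mathcal B}/(\rho_{i,T}M_{i,T})\) up to constants, and Jensen then yields the norm version.

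\textbf{Step 4: upper bound \eqref{eq:pame-partial-risk-upper}.} Analyze the masked attack that solves the least-squares or matching problem only on visible coordinates. The estimation part is controlled by the baseline upper bound with effective size \(\rho_{i,T}M_{i,T}\), giving the term \(b\sqrt{d_{\mathcal B}/(\rho_{i,T}M_{i,T})}\). The residual from \(\boldsymbol\xi_{i,T}\) together with the linearization remainder adds the \(\sigma_{\rm obs}\) term via the triangle inequality.

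\textbf{Main obstacle.} The hardest step is justifying Step 2 rigorously: that masking scales the information size exactly by \(\rho_{i,T}\) rather than by a worst-case eigenvalue of \(A^\top A\). I would handle this by stating a conditioning assumption, namely that the masked information is comparably spread across directions, since the result is an order-of-magnitude (\(\gtrsim,\lesssim\)) statement inherited from \cite{liu2025data}. With that assumption the bounds are the baseline with \(M_{i,T}\mapsto\rho_{i,T}M_{i,T}\).
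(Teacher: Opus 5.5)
Your outline follows the paper's argument closely. Step~1 is the paper's argument essentially verbatim: $\D_{i,T}$ is a coordinate selection, so $\|\D_{i,T}\|\le 1$ and $\|\D_{i,T}J_{i,T}\|_F\le\|J_{i,T}\|_F$. Your fallback for the lower bound, ``invoke the baseline with $M_{i,T}$ replaced by $\rho_{i,T}M_{i,T}$,'' is exactly what the paper does. The paper justifies it only by remarking that a $\rho_{i,T}$-fraction of the Jacobian energy remains visible. The upper bound is likewise, as in the paper, the ideal masked rate plus a perturbation term.

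Your Fisher-information route adds real content. The relation $\operatorname{tr}(A^\top A)=\|\D_{i,T}J_{i,T}\|_F^2=\rho_{i,T}\|J_{i,T}\|_F^2$ is an identity, and $\operatorname{tr}\big((A^\top A)^{-1}\big)\ge d_{\mathcal B}^2/\operatorname{tr}(A^\top A)$ runs in the right direction for a lower bound. So the lower bound needs no assumption that the masked information is evenly spread. That assumption is needed only for the upper bound, where you need the reverse inequality, i.e.\ $A^\top A$ well conditioned. Your ``main obstacle'' paragraph therefore places it in the wrong step. One normalization fix: to land on $\sigma_{\rm obs}^2 d_{\mathcal B}/(\rho_{i,T}M_{i,T})$ you need $\operatorname{tr}(J_{i,T}^\top J_{i,T})\lesssim d_{\mathcal B}M_{i,T}$. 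A trace proportional to $M_{i,T}$, as you wrote, would put $d_{\mathcal B}^2$ in the numerator.

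Two steps need repair. First, in Step~3 Jensen goes the wrong way. Since $\mathbb{E}\|e\|\le(\mathbb{E}\|e\|^2)^{1/2}$, a lower bound on the mean squared error gives no lower bound on $R^{\rm PaME}=\mathbb{E}\|\operatorname{vec}(\widehat{\mathcal S}_{i,T})-\operatorname{vec}(\mathcal S_{i,T}^{0})\|$. You must either measure risk in root-mean-square, for the baseline and PaME alike, or use an argument that controls the first moment directly (e.g.\ Le Cam/Fano, or Gaussianity of the error).

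Second, in Step~4 the triangle inequality only splits the error into the ideal masked error and $\|\widehat{\mathcal S}(y^{\rm PaME}_{i,T})-\widehat{\mathcal S}(\bar y_{i,T})\|$. Turning the latter into $O(\sigma_{\rm obs})$ requires a local stability property of the attack's solution map: $\|\widehat{\mathcal S}(y)-\widehat{\mathcal S}(y')\|\le\kappa\|y-y'\|$, with $\kappa$ independent of $d_{\mathcal B}$, $M_{i,T}$, $\rho_{i,T}$, $\sigma_{\rm obs}$. The paper posits exactly this and absorbs $\kappa$ into $\lesssim$. For an unregularized masked least-squares attack, $\kappa=\|(\D_{i,T}J_{i,T})^{\dagger}\|$, the reciprocal of the smallest singular value of $\D_{i,T}J_{i,T}$. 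That quantity is governed by the same conditioning issue you raised, so it must be stated as an explicit hypothesis rather than obtained from the triangle inequality.
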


The proof of Theorem \ref{the:pame-reconstruction-risk} is provided in Section I-B of
the Supplemental Material. In what follows, we make several comments regarding this result.

\begin{remark}
 The quantities \(R_L\) and \(R_U\) characterize the reconstruction risk of the adversary in the finite-window inverse problem. Specifically, \(R_L\) denotes a lower bound on the unavoidable reconstruction error, i.e., the intrinsic difficulty of recovering the private batch window \(\mathcal S_{i,T}\) from the available observation $y_{i,T}$, while \(R_U\) corresponds to the reconstruction error achieved by a feasible masked attack that uses only the observable coordinates. 
Since $R^{\rm PaME}_L\leq R^{\rm PaME} \leq R^{\rm PaME}_U$,   an increase in either the lower or upper reconstruction-risk bound
indicates that the adversary is forced to incur a larger reconstruction error,
which corresponds to stronger protection against data reconstruction.

Compared with the complete-observation baseline, PaME replaces the full data-sensitive Jacobian \(J_{i,T}\) with its coordinate-restricted counterpart \(\D_{i,T}J_{i,T}\). Consequently, the effective information size is reduced from \(M_{i,T}\) to \(\rho_{i,T}M_{i,T}\), where \({0\le \rho_{i,T}\le 1}\). When \({\rho_{i,T}<1}\), the lower bound in (\ref{eq:pame-partial-risk-lower}) is enlarged by a factor of \(1/\sqrt{\rho_{i,T}}\) compared with the complete-observation case (\ref{full-reconstruction-risk}), indicating that any reconstruction attack necessarily suffers a larger error under partial observation. Similarly, the upper bound in (\ref{eq:pame-partial-risk-upper}) shows that even an attack tailored to the visible coordinates can only reconstruct the batch window with an error scaling according to the reduced information size \(\rho_{i,T}M_{i,T}\), together with an additional observation-uncertainty term. Therefore, the PME mechanism improves privacy from the reconstruction-risk perspective by reducing the amount of data-sensitive information exposed to the adversary.
\end{remark}

	\subsection{Robustness}
	PaME is relatively robust due to two strategies:
	partial device participation and partial synchronization. 
	\begin{itemize}[leftmargin=11pt]
		\item When communication occurs, each node $i$ selects a subset of neighbor nodes ${\N_i^{k} \subseteq \N_i}$ to participate in training, which helps mitigate the impact of stragglers. Following the approach in~\cite{li2019convergence}, node $i$ sets a threshold ${m_i^k \in [1,|\N_i|)}$ and forms $\N_i^{k}$ from the first $m_i^k$ neighbors that respond. Once $m_i^k$ sparse parameters are received, node $i$ proceeds without waiting for the remaining neighbors, which are treated as stragglers in that iteration. In practical deployments, a neighbor $j$ with unreliable or severely delayed communication can therefore be treated as a persistent straggler and excluded from $\N_i^{k}$, i.e., ${j \notin \N_i^{k}}$. This mechanism is closely related to partial device participation strategies in CFL~\cite{li2020secure,zhou2023federated,li2022federated}.
	Moreover, Algorithm~\ref{algorithm-PaME} imposes no specific structural constraint on the communication topology during training. The neighbor sets, $\N_i^{k}$, may vary over time, and at each communication round node $i$ only needs to synchronize with the currently selected subset $\N_i^{k}$. 
	
	\item Each node $i$ communicates independently according to its own interval $\kappa_i$, without being affected by other nodes' communication schedules, thereby naturally inducing a partially synchronized training regime and yielding more stable and robust learning dynamics.
	\end{itemize}

	\section{Theoretical Analysis}
	\label{Theory}
	This section provides the theoretical results of the proposed algorithm, including the convergence and complexity.

	\subsection{Assumptions and Setup}
	\label{hyper-setting} 
	To establish the convergence of PaME, we introduce two assumptions, before which, let the communication matrix, ${\B\in\mathbb{R}^{m\times m}}$, be defined  by\begin{equation*}
		B_{ji}:=
		\left\{
		\begin{array}{ll}
			\dfrac{1}{m_i}, & \text{if } j\in N_i,\\[1ex]
			0, & \text{otherwise},
		\end{array}
		\right.
	\end{equation*}
	\begin{assumption} 
		\label{assump-double-stochasitic}
		 $\B$ is doubly stochastic and satisfies
	\begin{equation}\label{max-lambda-0}
	\zeta :=\max\{|\lambda_2(\B)|, |\lambda_m(\B)|\}<1.
\end{equation}
where $\lambda_i(\B)$	 is the $i$th largest eigenvalue of $\B$.
	\end{assumption}
	Regarding Assumption~\ref{assump-double-stochasitic}, it has been extensively adopted to ensure the convergence of DFL algorithms, see those in Table~\ref{tab:comp-assumps}. We further clarify its practical meaning as follows. 
	\begin{itemize}[leftmargin=11pt]
	\item The doubly stochasticity of the initial communication matrix \(\B\) does not necessarily require global knowledge of the entire network for common decentralized overlay graphs, such as rings, grids, complete graphs, and random regular graphs \cite{wang2019matcha,santucci2023doubly,li2025decentralized}.
	\item This condition is imposed only on the initial matrix for convergence analysis. During the execution of PaME, the neighbor set \(N_i^k\) may vary at each communication round \(k\in\mathcal K\), inducing dynamic matrices \(\B^k\). Hence, the actual communication matrices can be sparse, time-varying, and non-doubly stochastic, which is more flexible than many existing DFL methods requiring static or per-round doubly stochastic matrices~\cite{sun2022decentralized, liu2024decentralized, zhao2022beer, li2019decentralized, michelusi2022finite}. 
	\item In the subsequent numerical experiments, the communication graphs are constructed only under the connectivity requirement, rather than by enforcing exact doubly stochasticity at every communication round. The observed convergence behavior further suggests that Assumption~\ref{assump-double-stochasitic} mainly serves as a sufficient condition for clean theoretical analysis, while PaME can operate effectively under more flexible decentralized communication patterns.
	\end{itemize}
	
	\begin{assumption}
		For each ${i \in [m]}$,  $\nabla f_i$ is Lipschitz continuous with ${\alpha_i > 0}$ on $\mathbb{N}(2\delta)$ for a given $\delta\in(0,\infty)$, where 
		$$\mathbb{N}(2\delta):= \{\w \in \mathbb{R}^n : \|\w\|_\infty \le 2\delta\}$$
		and $\|\w\|_\infty$ denotes the infinity norm of $\w$.
		\label{assump-gradientlip}
	\end{assumption}

 Regarding Assumption \ref{assump-gradientlip}, the Lipschitz continuity of the gradient (often referred to as $L$-smoothness) on a bounded region is equivalent to local Lipschitz continuity. Thus, it is a local version of $L$-smoothness. This constitutes a substantially weaker condition compared to the global $L$-smoothness, convexity, or bounded gradient assumptions commonly imposed in standard DFL frameworks (see Table~\ref{table:compare-conditions}). As a result, our theoretical findings are established under the mildest conditions among existing DFL convergence guarantees.
 
For the purpose of simplifying the convergence analysis, we adopt the following parameter settings.
\begin{setup}\label{setup}
Parameters in Algorithm~\ref{algorithm-PaME} are chosen as follows.
	\begin{itemize}[leftmargin=14pt]
		\item[1)] Set ${\W^0 = \mathbf{0}}$ and ${\kappa_i = k_0}$ for all ${i\in [m]}$. This is adopted for analytical convenience without loss of generality. In fact, one can always let $k_0$ be the least common multiple of $\{\kappa_1,\kappa_2,\cdots,\kappa_m\}$, then the subsequent analysis remains similar to the case of ${\kappa_i = k_0}$.  
		\item[2)] Set ${m_i^k = t_i=\lfloor \nu_i |N_i|\rfloor }$ for all ${k \ge 0}$ and ${i \in [m]}$, where ${\nu_i \in (0, 1]}$ is the participation rate and $\lfloor a\rfloor$ is the floor of $a$. That is, at every iteration, node $i$ selects the same number of neighbor nodes to join in the training.    
\item[3)] Set ${\sigma_i^0 = \sigma^0}$ and ${\gamma_i=\gamma}$, ${s_i=s}$ for all ${i\in [m]}$. Again, this is adopted for analytical convenience without loss of generality. In fact, for different ${\sigma_i^0}$ and ${\gamma_i}$,  we can conduct similar analysis by considering 
$$\begin{array}{c}
\sigma^0=\min_i\sigma_i^0,~~
 \gamma=\min_i\gamma_i,~~
 s=\min_is_i.
\end{array}$$
Moreover, in Algorithm \ref{algorithm-PME}, each node ${j\in N_i^k}$ independently constructs a set ${T_j^k}$ by uniformly selecting $s$ entries of $[n]$ without replacement. 	
		\item[4)] In the sequel, given $\zeta$ defined in (\ref{max-lambda-0}) and integers $k_0$ and $(t_1,\cdots,t_m)$,  initialize  the following parameters
		\begin{equation}\label{initial-p-gamma-nu-0}
\gamma\in\left(1,\zeta^{-2/k_0}\right),~	p:=\frac{s}{n}\in(0,1)	,~\nu_i \in (0, 1],
		\end{equation}	
such that
	\begin{equation}\label{cond-p-zeta-gamma-0}
		(1-p)^{t_i} (1+ {\zeta})^2 +2p \sum_{j\in N_i}\nu_j
		<
		\left( {\gamma^{-k_0/2} }- {\zeta}\right)^2,
	\end{equation}
for any $i\in[m]$.	Moreover, choose 
			\begin{equation*}\label{def-sigma-gamma}
  \sigma^0\geq \sigma ,
		\end{equation*}	
		where  ${\sigma\in(0,\infty)}$ is a given constant relying on $\alpha_i$, $t_i$, $\delta$, and $\gamma$. Its explicit form are given in Supplemental material.
	\end{itemize}  
	\end{setup}	
We would like to point out that since $s$ (i.e., $p$) and $\nu_i$ can be chosen flexibly, there are always many choices of these parameters, for instance, taking $p$ close to $1$ and $\nu_i$ close to $0$, that satisfy condition \eqref{cond-p-zeta-gamma-0}.
	\subsection{Sequence Convergence}
	For notational simplicity, we define two gaps by
	\begin{equation*}
	\begin{aligned}
		 \Delta \w_i^k := \w_i^k - \w_i^{k-1},\qquad \Delta \overline{\v}_i^k := \overline{\v}_i^k - \overline{\v}_i^{k-1},
	\end{aligned}
	\end{equation*}
and  a merit function by
	\begin{equation*}
	\begin{aligned}
		{H}^k &:= \mathbb{E}\sum_{i=1}^{m}\left( f_i(\w_i^k) + \dfrac{\sigma_i^k t_i}{2}\|\w_i^k - \overline{\v}_i^k\|^2\right),\\
		\widetilde{H}^{{k}}&: = H^{k}+ C\gamma^{-{k}}+ D\eta^{-k},\\
		\boldsymbol{\varpi}^{k}  &:=\frac{1}{m}\sum_{i=1}^m\mathbf{w}_i^{k},
	\end{aligned}\end{equation*}	
where $\gamma$ is given in (\ref{initial-p-gamma-nu-0}), ${\eta>1}$, $C>0$, and $D>0$. The explicit forms of  $(\eta,C,D)$ are given in Supplemental material and do not rely on the generated sequence.
The first following result establishes the boundedness of the sequence generated by PaME, as well as the monotonic decreasing property of a sequence associated with ${H}^k$. 	
	\begin{theorem}\label{the:desent}
		Let $\{(\W^{k}, {\mathbf{V}}^{k})\}$ be the sequence generated by Algorithm~\ref{algorithm-PaME} with Setup \ref{setup}. Then the following statements hold under Assumptions \ref{assump-gradientlip} and \ref{assump-double-stochasitic}.
	\begin{itemize}[leftmargin=12pt]
		\item[1)] 
		For any ${{k} \ge 0}$ and ${i\in[m]}$,  $ \w_i^{k}\in\mathbb{N}(2\delta)$ and $ \overline{\v}_i^{k}\in \mathbb{N}(2\delta)$.
		\item[2)] 
		For any ${k} \ge 0$,
		\begin{eqnarray*} \label{descent-L}
		\widetilde{H}^{{k}} - \widetilde{H}^{{k}+1} \ge \E \sum_{i=1}^{m}\frac{\sigma_i^k t_i}{8}\Big(\|\Delta \w_i^{{k}+1} \|^2 
		+ \|\Delta \overline{\v}_i^{{k}+1} \|^2 \Big), 
		\end{eqnarray*}
	\end{itemize}
	\end{theorem}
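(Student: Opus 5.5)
The plan is a joint induction on $k$ that proves part 1) (boundedness in $\mathbb{N}(2\delta)$) and part 2) (the one-step descent of $\widetilde H^k$) together. Part 1) supplies the local Lipschitz constants $\alpha_i$, and the descent in part 2) is what keeps the iterates inside the box at the next step.

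\textbf{Base case.} At $k=0$ we have $\W^0=\mathbf 0$. Every $\overline{\v}_i^0$ is a coordinatewise average of zeros (or a local zero), so it also vanishes. Both therefore lie in $\mathbb{N}(2\delta)$.

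\textbf{Boundedness.} Assume $\w_i^\tau,\overline{\v}_i^\tau\in\mathbb{N}(2\delta)$ for all $\tau\le k$. The first step is to bound the stochastic gradient. For any sub-batch, $\nabla f_i(\overline{\v}_i^k;\mathcal{B}_i^k)$ is bounded on the compact box by some constant $g_\delta$. This uses only continuity of $\nabla f_i$ and finiteness of $\mathcal D_i$, so no variance assumption is needed.

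The second step controls how far the iterates can move:
\[
\|\w_i^{k+1}-\overline{\v}_i^k\|\le \frac{g_\delta}{\sigma^0\gamma^k t_i}.
\]
Summing this geometric series over $k$ shows that the total deviation generated by gradient steps is at most $g_\delta\gamma/(\sigma^0 t_i(\gamma-1))$.

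The third step tracks the averaging. Every coordinate of $\overline{\v}_i^{k+1}$ is either an average of neighbors' coordinates or the node's own coordinate. Hence $\max_i\|\overline{\v}_i^{k+1}\|_\infty\le\max_j\|\w_j^{k+1}\|_\infty$, because a convex combination does not enlarge the $\infty$-norm. Combining these, $\|\w_i^{k}\|_\infty\le \sum_\tau g_\delta/(\sigma^0\gamma^\tau t_i)\le\delta$ once $\sigma^0\ge\sigma$, where $\sigma$ is chosen accordingly. This gives membership in $\mathbb{N}(\delta)\subset\mathbb{N}(2\delta)$, and the factor $2$ leaves slack for the segment points at which Taylor bounds are applied.

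\textbf{Descent.} The next step uses the descent lemma on $\mathbb{N}(2\delta)$:
\[
f_i(\w_i^{k+1})\le f_i(\overline{\v}_i^k)+\langle\nabla f_i(\overline{\v}_i^k),\w_i^{k+1}-\overline{\v}_i^k\rangle+\tfrac{\alpha_i}{2}\|\cdot\|^2.
\]
Here the inner product is substituted from the update \eqref{SRDFL-localstep}, which gives a $-\sigma_i^kt_i\|\w_i^{k+1}-\overline{\v}_i^k\|^2$ term. The remaining difference $f_i(\overline{\v}_i^k)-f_i(\w_i^k)$ is handled by a second descent-lemma bound.

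\textbf{Consensus term.} For $\sigma_i^{k+1}t_i\|\w_i^{k+1}-\overline{\v}_i^{k+1}\|^2$, I split into two cases. When $k+1\notin\mathcal K$ the term is zero. When $k+1\in\mathcal K$, I take the expectation over the coordinate sets $T_j$ and the neighbor sampling. The $\ell$th coordinate equals the own value with probability $(1-p)^{t_i}$; otherwise, by Theorem~\ref{theorem1-unbiase}, it is an unbiased neighbor average, whose variance is of order $p\sum_{j}\nu_j$ times the neighbor dispersions. Taking expectation over neighbor selection turns the neighbor average into $\B$ applied to $\W$. Projecting onto the orthogonal complement of $\mathbf 1$ and using Assumption~\ref{assump-double-stochasitic} contracts the disagreement by $\zeta$ per communication round. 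Over one period $k_0$ the penalty grows by $\gamma^{k_0}$. Condition \eqref{cond-p-zeta-gamma-0} is precisely what makes
\[
\big((1-p)^{t_i}(1+\zeta)^2+2p\sum\nu_j\big)\gamma^{k_0}<(1-\zeta\gamma^{k_0/2})^2 ,
\]
so the consensus energy contracts geometrically after absorbing $\|\Delta\w\|^2$ and $\|\Delta\overline{\v}\|^2$ through Young's inequality with weights $\tfrac14$.

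\textbf{Residual terms.} Cross terms such as $\alpha_i/\sigma_i^k$ and $g_\delta^2/\sigma_i^k$ decay like $\gamma^{-k}$. The residual disagreement propagated from earlier rounds decays like $\eta^{-k}$. These are absorbed by the telescoping additions $C\gamma^{-k}$ and $D\eta^{-k}$ in $\widetilde H^k$, with $C$ and $D$ chosen as constant multiples of $g_\delta^2$, $\alpha_i$ and $\delta^2$. Collecting the leftover negative quadratic terms yields the factor $\sigma_i^kt_i/8$.

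\textbf{Main obstacle.} I expect the hardest step to be the expectation-level contraction of $\|\w_i-\overline{\v}_i\|^2$ under random partial exchange. There the random normalization by $\lambda_{i,\ell}$ must be linked to the fixed matrix $\B$ while the neighbor sets and sparsity patterns vary. My first attempt would be to condition on the event $\lambda_{i,\ell}>0$ coordinatewise, apply Theorem~\ref{theorem1-unbiase}, and bound the conditional variance crudely via Jensen by $\tfrac{1}{t_i}\sum_j\|\w_j-\w_i\|^2$ on the selected coordinates.
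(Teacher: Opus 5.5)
Most of your architecture matches the paper's. Boundedness comes from a geometric series of gradient steps plus the fact that the $\lambda$-normalized average cannot enlarge the $\infty$-norm. That argument is standalone rather than a joint induction with the descent, and the sum needs $t_{\min}$, because the bound propagates through the maximum over nodes. The residuals of order $\gamma^{-k}$ and $\eta^{-k}$ are telescoped into $C\gamma^{-k}+D\eta^{-k}$, and you correctly read \eqref{cond-p-zeta-gamma-0} as $\zeta+\sqrt{C_1}<\gamma^{-k_0/2}$.

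Your local descent uses the descent lemma at $\overline{\v}_i^k$, which is exact only up to an $O(1/\sigma^k)$ error because the update uses the stochastic gradient. The paper instead splits $H^{k+1}-H^k$ into its $\boldsymbol{\sigma}$-, $\V$- and $\W$-changes $\Delta H_\sigma+\Delta H_V+\Delta H_W$. Its $f$-part never uses the gradient direction; it only bounds $\langle\nabla f_i(\w_i^k),\Delta\w_i^{k+1}\rangle$ by Young plus boundedness. Either route can be made to work.

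The genuine gap is the consensus step you flag as the main obstacle. You aim to contract $\|\w_i-\overline{\v}_i\|^2$, but this quantity has no recursion of its own. It vanishes for $k\notin\mathcal K_0$, and at communication rounds it merely reflects the current disagreement. Your proposed estimate also cannot yield contraction. After averaging over neighbor selection, $\frac{1}{t_i}\sum_j\|\w_j-\w_i\|^2$ dominates $\|\w_i^k-\widehat{\w}_i^k\|^2$, where $\widehat{\w}_i^k:=\frac{1}{|N_i|}\sum_{j\in N_i}\w_j^k$. Together with the event $\lambda_{i,\ell}^k=0$, it therefore puts total weight $(1-p)^{t_i}+\bigl(1-(1-p)^{t_i}\bigr)=1$ on $\|\W^k(\I-\B)\|_F^2\ge(1-\zeta)^2\|\W^k-\boldsymbol{\Pi}^k\|_F^2$. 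That is no better than not averaging at all. The constant you would get is at least $(1-\zeta)^2>(\gamma^{-k_0/2}-\zeta)^2$ for every choice of $p$ and $\nu_i$.

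The paper instead decouples the consensus analysis from $\widetilde H^k$, in three steps.

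First, it measures the PME error against $\W^k\B$, i.e.\ $\P^k=\V^k-\W^k\B$. Conditioning on $\lambda_{i,\ell}^k$ as you suggest, it centers the coordinate-sampling variance at the selected-neighbor mean via the exact SRSWOR formula (constant $c_i$). It adds the neighbor-selection variance about $\widehat{\w}_i^k$ (constant $\xi_i$). Only the event $\lambda_{i,\ell}^k=0$ carries $(1-p)^{t_i}\|\w_i^k-\widehat{\w}_i^k\|^2$. This gives $\E\|\P^k\|_F^2\le C_1\E\|\W^k-\boldsymbol{\Pi}^k\|_F^2$.

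Second, it runs the recursion $\W^{k+1}-\boldsymbol{\Pi}^{k+1}=(\W^k-\boldsymbol{\Pi}^k)(\B-\mathbf{J})+\P^k(\I-\mathbf{J})-\H^k(\I-\mathbf{J})$ on $\mathcal K_0$, and $(\W^k-\boldsymbol{\Pi}^k)-\H^k(\I-\mathbf{J})$ otherwise. The forcing term is the gradient step, with $\E\|\H^k\|_F^2=O(\gamma^{-2k})$ by part 1). This yields the standalone estimate $\E\|\W^k-\boldsymbol{\Pi}^k\|_F^2\le C_2\beta^{-k}$ with $\beta>\gamma$. It is the squared $1/\sigma^k$ that lets $\beta$ exceed $\gamma$.

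Third, only then is a Jensen bound of your type used, in the domination direction $\sum_it_i\E\|\w_i^k-\overline{\v}_i^k\|^2\le 4m\E\|\W^k-\boldsymbol{\Pi}^k\|_F^2$. This gives $\sigma^kt_i\E\|\w_i^{k+1}-\overline{\v}_i^{k+1}\|^2=O(\eta^{-k})$ with $\eta=\beta/\gamma$.

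The terms $\|\Delta\w_i^{k+1}\|^2$ and $\|\Delta\overline{\v}_i^{k+1}\|^2$ play no role in this contraction. In the paper they only absorb the Young cross terms in $\Delta H_V$ and $\Delta H_W$. Spending them to pay for the penalty's growth, as your sketch suggests, would consume exactly what must survive with weight $\sigma_i^kt_i/8$ on the left-hand side.
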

The proof of Theorem \ref{the:desent} is provided in Section III-D of the Supplemental Material. It is worth noting that the boundedness of the generated sequence is a crucial property, as it allows us to relax several commonly imposed boundedness assumptions, such as bounded (stochastic) gradients, bounded second moments of the (stochastic) gradients, and bounded variance, as summarized in Table~\ref{tab:comp-assumps}. This observation partially explains why the subsequent convergence results can be established under mild assumptions.	We provide the main convergence result.
	\begin{theorem}\label{the:convergence}
	 Let $\{(\W^{k}, {\mathbf{V}}^{k})\}$ be the sequence generated by Algorithm~\ref{algorithm-PaME} with Setup \ref{setup}. Then the following statements hold under Assumptions \ref{assump-double-stochasitic} and \ref{assump-gradientlip}.
	\begin{itemize}[leftmargin=14pt]
			\item[1)] 
			For any $i \in [m]$,
			\begin{equation*}\label{gap-vanishing}
			 \lim\limits_{{k} \rightarrow \infty}\E\nl \Delta \w_i^{k}\nr 
			= \lim\limits_{{k} \rightarrow \infty}\E\nl \Delta \overline{\v}_i^{k}\nr 
			= \lim\limits_{{k} \rightarrow \infty}\E\|\w_i^{k} - \overline{\v}_i^{k}\|=0.
			\end{equation*}
			\item[2)] Sequence $\{\boldsymbol{\varpi}^{{k}}\}$ converges to $\boldsymbol{\varpi}^{\infty}$ in the sense of $L^2$ convergence and expectation, namely,
			 \begin{equation*} \label{E-w-k-L2}
 \lim_{k\to\infty} \mathbb{E}\|\boldsymbol{\varpi}^{{k}}-\boldsymbol{\varpi}^\infty\|^2=0,\quad  \lim_{k\to\infty} \E\boldsymbol{\varpi}^{{k}}=\E\boldsymbol{\varpi}^{\infty}.
\end{equation*}   Sequence  $\{(\E\W^{k}, \E{\mathbf{V}}^{k})\}$ converges and satisfies
				\begin{equation*}
			 \lim\limits_{{k} \rightarrow \infty} \E\W^{k} 
			= \lim\limits_{{k} \rightarrow \infty} \E\V^{k}
			= (\E\boldsymbol{\varpi}^{\infty},\cdots,\E\boldsymbol{\varpi}^{\infty})=:\W^\infty.
			\end{equation*}	 
			\item[3)] 
			Sequence  $\{(H^{k},\widetilde{H}^{k},\E f(\boldsymbol{\varpi}^{{k}}))\}$ converges  and satisfies
			\begin{equation*}
			  \lim\limits_{{k} \rightarrow \infty} \widetilde{H}^{k} =\lim\limits_{{k} \rightarrow \infty} H^{k}  = \lim\limits_{{k} \rightarrow \infty} \E f(\boldsymbol{\varpi}^{{k}})=  \E f(\boldsymbol{\varpi}^{\infty}).
			\end{equation*} 			 
		\end{itemize}
	\end{theorem}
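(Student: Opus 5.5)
Theorem~\ref{the:desent} is the main tool. Item 1 of that theorem gives boundedness of all iterates in $\mathbb{N}(2\delta)$. Item 2 gives monotonic descent of $\widetilde H^k$. Since each $f_i$ is bounded below and the penalty term in $H^k$ is nonnegative, $H^k$ is bounded below. The extra terms $C\gamma^{-k}+D\eta^{-k}$ tend to $0$. Hence $\{\widetilde H^k\}$ is nonincreasing and bounded below, so it converges. Summing the descent inequality over $k$ gives
\begin{equation*}
\sum_{k\ge0}\E\sum_{i=1}^m \frac{\sigma_i^k t_i}{8}\Big(\|\Delta\w_i^{k+1}\|^2+\|\Delta\overline{\v}_i^{k+1}\|^2\Big)<\infty .
\end{equation*}
Because $\sigma_i^k=\sigma^0\gamma^k\ge\sigma^0$, this yields $\E\|\Delta\w_i^{k}\|^2\to0$ and $\E\|\Delta\overline{\v}_i^{k}\|^2\to0$. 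Jensen's inequality then gives the first two limits in item 1. More strongly, $\E\|\Delta\w_i^{k+1}\|^2\le c\,\gamma^{-k}$ for a constant $c$.

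For the third limit in item 1, I would use the update \eqref{SRDFL-localstep}. It gives $\w_i^{k+1}-\overline{\v}_i^k=-\nabla f_i(\overline{\v}_i^k;\mathcal{B}_i^k)/(\sigma_i^k t_i)$. The stochastic gradient is evaluated at a point of $\mathbb{N}(2\delta)$ (item 1 of Theorem~\ref{the:desent}). Each sub-batch gradient is continuous on this compact set, and there are finitely many sub-batches, so the gradient is uniformly bounded by some $G$. Thus $\|\w_i^{k+1}-\overline{\v}_i^k\|\le G/(\sigma^0\gamma^k t_i)\to0$ deterministically. Combining this with $\E\|\Delta\w_i^{k+1}\|\to0$ and the triangle inequality gives $\E\|\w_i^{k}-\overline{\v}_i^{k}\|\to0$.

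For item 2, the plan is to show that $\{\boldsymbol{\varpi}^k\}$ is Cauchy in $L^2$. Write $\boldsymbol{\varpi}^{k+1}-\boldsymbol{\varpi}^k=\frac1m\sum_i\Delta\w_i^{k+1}$. The bound above gives $\E\|\boldsymbol{\varpi}^{k+1}-\boldsymbol{\varpi}^k\|^2\le c'\gamma^{-k}$. Minkowski's inequality in $L^2$ then gives $\big(\E\|\boldsymbol{\varpi}^{k+p}-\boldsymbol{\varpi}^k\|^2\big)^{1/2}\le \sum_{j\ge k}\sqrt{c'}\gamma^{-j/2}\to0$. Completeness of $L^2$ gives a limit $\boldsymbol{\varpi}^\infty$, and convergence in $L^2$ implies convergence of expectations.

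Consensus is the main obstacle. It requires $\E\|\w_i^k-\boldsymbol{\varpi}^k\|\to0$. I would derive this from the consensus-type estimate in the supplemental analysis behind Theorem~\ref{the:desent}, where the constants $D,\eta$ and condition \eqref{cond-p-zeta-gamma-0} enter. In the expected \texttt{PME} recursion, $\E\overline{\v}_i^k$ mixes neighbours through $\B$ with contraction $\zeta$ on the disagreement subspace, up to an error $(1-p)^{t_i}$ from unobserved coordinates. Together with the $O(\gamma^{-k})$ gradient steps, this should give $\E\|\W^k-\boldsymbol{\varpi}^k\mathbf{1}^\top\|\lesssim(\gamma^{-1/2}+\text{contraction})^{k}$, which tends to $0$. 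If that estimate is not directly available, I would redo it using $\gamma<\zeta^{-2/k_0}$ over each block of $k_0$ steps. Then $\E\W^k\to(\E\boldsymbol{\varpi}^\infty,\dots,\E\boldsymbol{\varpi}^\infty)=\W^\infty$, and the same holds for $\E\V^k$ by item 1.

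For item 3, $\widetilde H^k$ converges, and $\widetilde H^k-H^k\to0$, so $H^k$ converges to the same limit. Next I would write
\begin{equation*}
H^k-\E f(\boldsymbol{\varpi}^k)=\E\sum_i\big(f_i(\w_i^k)-f_i(\boldsymbol{\varpi}^k)\big)+\E\sum_i\frac{\sigma_i^kt_i}{2}\|\w_i^k-\overline{\v}_i^k\|^2 .
\end{equation*}
The first sum is bounded by a local Lipschitz constant of $f_i$ on $\mathbb{N}(2\delta)$ (all points stay there, by convexity of the box) times $\E\|\w_i^k-\boldsymbol{\varpi}^k\|\to0$. For the penalty sum, I would use $\|\w_i^k-\overline{\v}_i^k\|\le\|\Delta\w_i^k\|+G/(\sigma_i^{k-1}t_i)$. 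This gives $\sigma_i^k\E\|\w_i^k-\overline{\v}_i^k\|^2\lesssim\sigma_i^k\E\|\Delta\w_i^k\|^2+\gamma^{-k}$. The first term is the summand of a convergent series, so it tends to $0$. Finally, $\E f(\boldsymbol{\varpi}^k)\to\E f(\boldsymbol{\varpi}^\infty)$ because $f$ is Lipschitz on the box and $\boldsymbol{\varpi}^k\to\boldsymbol{\varpi}^\infty$ in $L^2$.
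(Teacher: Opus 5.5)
Your argument is correct and shares the paper's skeleton. You get convergence of $\widetilde H^k$ and summability of $\sigma^k\E\|\Delta\w_i^{k+1}\|^2$ and $\sigma^k\E\|\Delta\overline{\v}_i^{k+1}\|^2$; you use an $L^2$-Cauchy/Minkowski argument for $\boldsymbol{\varpi}^k$; and you import consensus from the supplement. The estimate you anticipate there is exactly Lemma~\ref{consensus-error-p2}, $\E\|\W^k-\boldsymbol{\Pi}^k\|_F^2\le C_2\beta^{-k}$ with $\beta>\gamma$, which is what defines $\eta=\beta/\gamma$ and $D$.

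The real difference is how you handle $\w_i^k-\overline{\v}_i^k$. The paper bounds $\sum_i t_i\E\|\w_i^k-\overline{\v}_i^k\|^2\le 4m\E\|\W^k-\boldsymbol{\Pi}^k\|_F^2$ (Lemma~\ref{consensus-error-p1}). It then uses the consensus rate both for the third limit in item 1 and to kill the penalty part of $H^k$ in item 3. You instead use the explicit step $\w_i^k-\overline{\v}_i^k=-\Delta\w_i^{k+1}-\nabla f_i(\overline{\v}_i^k;\mathcal{B}_i^k)/(\sigma^k t_i)$ with a uniform gradient bound $G$ on $\mathbb{N}(2\delta)$, the same bound the paper uses implicitly through $\varepsilon_i(2\delta)$. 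This is more elementary, and it confines consensus to the term $f_i(\w_i^k)-f_i(\boldsymbol{\varpi}^k)$ and to identifying $\lim\E\W^k$. Likewise, your bound $H^k\ge f^*$ from nonnegativity of the penalty is simpler and sharper than Lemma~\ref{bound-of-para}.

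There is one index slip. In item 3 the estimate should be $\|\w_i^k-\overline{\v}_i^k\|\le\|\Delta\w_i^{k+1}\|+G/(\sigma^k t_i)$, as in your item 1, or alternatively $\|\Delta\overline{\v}_i^{k}\|+G/(\sigma^{k-1}t_i)$. Pairing $\|\Delta\w_i^{k}\|$ with $\sigma^{k-1}$ is not justified. Either correct version works, since $\sigma^k\E\|\Delta\w_i^{k+1}\|^2$ and $\sigma^{k-1}\E\|\Delta\overline{\v}_i^{k}\|^2$ are summands of your convergent series.

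Your fallback sketch for consensus would not work as written, for two reasons.
\begin{itemize}
\item A recursion for $\E\overline{\v}_i^k$ only controls $\E\w_i^k-\E\boldsymbol{\varpi}^k$. That is enough for $\lim\E\W^k$ but not for item 3, where you need $\E\|\w_i^k-\boldsymbol{\varpi}^k\|\to0$.
\item A mean-square recursion must also absorb the sampling variance of the random coordinate and neighbour subsets, not only the $(1-p)^{t_i}$ bias. This is the $2p\sum_{j\in N_i}\nu_j$ part of \eqref{cond-p-zeta-gamma-0}, entering through $\E\|\P^k\|_F^2\le C_1\E\|\W^k-\boldsymbol{\Pi}^k\|_F^2$ and the requirement $\zeta+\sqrt{C_1}<\gamma^{-k_0/2}$.
\end{itemize}

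If you prefer not to rely on the supplement, consensus follows from the statement of Theorem~\ref{the:desent} alone.
\begin{itemize}
\item From $\widetilde H^k\le\widetilde H^0$ and $f_i\ge f_i^{\min}$ you get $\sum_i\E\|\w_i^k-\overline{\v}_i^k\|^2\le 2(\widetilde H^0-f^*)/(\sigma^0 t_{\min}\gamma^k)$.
\item At $k\in\mathcal{K}_0$, each $j\in N_i$ is, with conditional probability $q_i=\frac{t_i}{|N_i|}p(1-p)^{t_i-1}>0$, the sole transmitter of a given coordinate $\ell$. In that case $\overline{v}_{i\ell}^k=w_{j\ell}^k$, so $\E\|\w_i^k-\overline{\v}_i^k\|^2\ge q_i\sum_{j\in N_i}\E\|\w_i^k-\w_j^k\|^2$.
\item Connectivity of $\mathrm{G}$ (positive algebraic connectivity of its Laplacian) then gives $\E\|\W^k-\boldsymbol{\Pi}^k\|_F^2=O(\gamma^{-k})$ at communication rounds.
\item This propagates to the intermediate local steps because $\W^{k+1}(\I-\mathbf{J})=(\V^k-\H^k)(\I-\mathbf{J})$, and each local step moves every $\w_i$ by at most $G/(\sigma^k t_i)$.
\end{itemize}
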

	The proof of Theorem \ref{the:convergence} is provided in Section III-E of the Supplemental Material. We now establish the convergence rate under the same assumptions and parameter setup.
	\subsection{Convergence Rate}
	\begin{theorem}\label{the:complexity}
		Let $\{(\W^{k}, {\mathbf{V}}^{k})\}$ be the sequence generated by Algorithm~\ref{algorithm-PaME} with Setup \ref{setup}.  Under  Assumptions \ref{assump-double-stochasitic} and \ref{assump-gradientlip}
		\begin{equation*}\begin{aligned}
		&\E\|\W^{k} - \W^\infty\|_F^2= O(\gamma^{-{k}}),\\
		& \E\|\V^{k} - \W^\infty\|_F^2 = O(\gamma^{-{k}}),\\
		& \E| f(\boldsymbol{\varpi}^{k}) -  f(\boldsymbol{\varpi}^\infty) | =O(\gamma^{-{k/2}}).
		\end{aligned}
		\end{equation*}		
	\end{theorem}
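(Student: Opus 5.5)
The plan is to exploit the geometric growth $\sigma_i^k=\sigma^0\gamma^k$. It makes every step size $1/(\sigma_i^k t_i)$ decay like $\gamma^{-k}$, so successive differences are summable at a geometric rate. Throughout we rely on the boundedness in Theorem~\ref{the:desent} 1). Since $\w_i^k,\overline{\v}_i^k\in\mathbb{N}(2\delta)$, Assumption~\ref{assump-gradientlip} gives a uniform bound $\|\nabla f_i(\overline{\v}_i^k;\mathcal{B}_i^k)\|\le G$ for some deterministic constant $G$ depending only on $\alpha_i,\delta$ and the gradient at $0$. This holds because sub-batch gradients are finite sums of Lipschitz gradients on a bounded set. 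From the update \eqref{SRDFL-localstep} we then get the deterministic bound $\|\w_i^{k+1}-\overline{\v}_i^k\|\le G/(\sigma^0\gamma^k t_i)$.

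The first step is to control the consensus error. We write $\overline{\v}_i^k$ as a coordinate-wise random mixture of $\{\w_j^k\}$ and compare it with $\B$ in expectation. When $k\notin\mathcal{K}$ the mixture is the identity. Over one communication period of length $k_0$, the deviation from the mean $\boldsymbol{\varpi}^k$ should then contract in expected squared norm by the factor on the left of \eqref{cond-p-zeta-gamma-0}. That factor combines $(1-p)^{t_i}$ from the uncovered coordinates (filled with $w_{i\ell}^k$), the sampling variance term $2p\sum_j\nu_j$, and $\zeta$ from Assumption~\ref{assump-double-stochasitic}. The contraction is perturbed by the gradient steps of size $O(\gamma^{-k})$. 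Condition \eqref{cond-p-zeta-gamma-0} is designed exactly so that this contraction factor, raised to the period, beats $\gamma^{-k_0}$. An induction in blocks of $k_0$ iterations should therefore give $\E\|\W^k-\boldsymbol{\varpi}^k\mathbf{1}^\top\|_F^2=O(\gamma^{-k})$, and hence the same bound for $\V^k$. I expect this step to be the main obstacle: the PME mixing matrix is neither doubly stochastic nor symmetric, and the $\lambda_{i,\ell}^k$ normalization correlates entries. I would first condition on the coverage events $\{\lambda_{i,\ell}^k>0\}$, use Theorem~\ref{theorem1-unbiase} for the conditional mean, and bound the residual by Young's inequality. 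This is essentially the estimate already used inside the proof of Theorem~\ref{the:desent}, so I would reuse its intermediate inequality directly.

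Second, we bound the drift of the mean. Averaging the update over $i$ gives $\boldsymbol{\varpi}^{k+1}-\boldsymbol{\varpi}^k=\frac1m\sum_i(\overline{\v}_i^k-\w_i^k)-\frac1m\sum_i\nabla f_i(\overline{\v}_i^k;\mathcal{B}_i^k)/(\sigma_i^kt_i)$. The second term is $O(\gamma^{-k})$ deterministically. The first term is bounded in $L^2$ by the consensus error, which is $O(\gamma^{-k/2})$ in norm. In fact, by the unbiasedness of Theorem~\ref{theorem1-unbiase} and double stochasticity, its expectation nearly cancels. Either way $\sqrt{\E\|\boldsymbol{\varpi}^{k+1}-\boldsymbol{\varpi}^k\|^2}=O(\gamma^{-k/2})$. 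Minkowski's inequality summed over the tail then gives $\E\|\boldsymbol{\varpi}^k-\boldsymbol{\varpi}^\infty\|^2=O(\gamma^{-k})$, consistent with Theorem~\ref{the:convergence} 2).

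Finally, we assemble the bounds. Using $\W^\infty=\E\boldsymbol{\varpi}^\infty\mathbf{1}^\top$, we split $\W^k-\W^\infty$ into the consensus error, $(\boldsymbol{\varpi}^k-\boldsymbol{\varpi}^\infty)\mathbf{1}^\top$, and $(\boldsymbol{\varpi}^\infty-\E\boldsymbol{\varpi}^\infty)\mathbf{1}^\top$. The last piece needs care, because it is zero only if the limit is deterministic. I would argue that it is controlled by the cumulative randomness injected after step $k$, which is again $O(\gamma^{-k})$. Failing that, I would read the statement with $\E$ inside, via Jensen's inequality. The same decomposition applies to $\V^k$. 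For the function value, the mean value theorem on $\mathbb{N}(2\delta)$ together with the gradient bound $G$ gives $|f(\boldsymbol{\varpi}^k)-f(\boldsymbol{\varpi}^\infty)|\le mG\|\boldsymbol{\varpi}^k-\boldsymbol{\varpi}^\infty\|$. Taking expectations and applying Cauchy--Schwarz yields $O(\gamma^{-k/2})$.
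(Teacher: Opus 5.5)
Your main line is correct, but it obtains the increment bound by a different route from the paper. The paper never estimates $\boldsymbol{\varpi}^{k+1}-\boldsymbol{\varpi}^k$ from the update formula. It takes the descent inequality of Theorem~\ref{the:desent}~2) and uses convergence of $\widetilde{H}^k$ to get $\sigma^k\E\|\Delta\w_i^{k+1}\|^2\to 0$, hence $\E\|\Delta\w_i^{k+1}\|^2\le c_i^2\gamma^{-k}$. It then applies the Minkowski tail sum of Lemma~\ref{lemma:mean_convergence}~3) to each $\w_i^k$, whose $L^2$ limit is $\boldsymbol{\varpi}^\infty$ by \eqref{E-wi-w-infty}. $\V^k$ is handled with Lemma~\ref{consensus-error-p1}, and the function value as in your last step.

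You instead split $\Delta\boldsymbol{\varpi}^{k+1}$ into the mixing defect $\frac1m\sum_i(\overline{\v}_i^k-\w_i^k)$ and the gradient steps, and bound both directly. Since each $\overline{v}_{i\ell}^k$ is an average of entries $w_{j\ell}^k$, one even has $\|\V^k-\boldsymbol{\Pi}^k\|_F^2\le m\|\W^k-\boldsymbol{\Pi}^k\|_F^2$ pathwise, so only the consensus estimate of Lemma~\ref{consensus-error-p2} is needed. This bypasses the merit function and the constants $C,D,\eta$ altogether. If you feed in the rate that lemma actually gives, $O(\beta^{-k})$ with $\gamma<\beta\le\gamma^2$, your argument sharpens all three rates with $\gamma$ replaced by $\beta$. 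The descent-based route only yields $o(\gamma^{-k})$ increments, because the descent inequality weights them by $\sigma^k$; in exchange, it reuses machinery the paper needs anyway for Theorem~\ref{the:convergence}.

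Two minor corrections. First, the consensus contraction is $(\zeta+\sqrt{C_1})^2<\gamma^{-k_0}$, incurred once per period at the communication step, while local steps inflate by $1+\iota$; it is not the left-hand side of \eqref{cond-p-zeta-gamma-0} raised to the period. Second, Assumption~\ref{assump-gradientlip} concerns $\nabla f_i$, not sub-batch gradients, so your $G$ should come from $\varepsilon_i(2\delta)$ as in the paper.

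What does not work is your first option for the term $(\boldsymbol{\varpi}^\infty-\E\boldsymbol{\varpi}^\infty)\mathbf{1}^\top$. This term does not depend on $k$, so it contributes the constant $m\,\E\|\boldsymbol{\varpi}^\infty-\E\boldsymbol{\varpi}^\infty\|^2$. That constant is generically positive, since the limit depends on the neighbor sets, coordinate sets and mini-batches sampled from the first iteration on. ``Randomness injected after step $k$'' controls $\boldsymbol{\varpi}^\infty-\E[\boldsymbol{\varpi}^\infty\mid\mathcal{F}_k]$ (conditioning on the history up to step $k$), not $\boldsymbol{\varpi}^\infty-\E\boldsymbol{\varpi}^\infty$. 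Hence, with the deterministic $\W^\infty=(\E\boldsymbol{\varpi}^\infty,\dots,\E\boldsymbol{\varpi}^\infty)$ read literally, the first two bounds cannot decay unless the limit is deterministic.

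The paper's own argument, through \eqref{E-wi-w-infty}, actually proves the bounds with the random limit $\boldsymbol{\varpi}^\infty\mathbf{1}^\top$ in place of $\W^\infty$, which is exactly what your decomposition gives. Your Jensen fallback, $\|\E\W^k-\W^\infty\|_F^2\le\E\|\W^k-\boldsymbol{\varpi}^\infty\mathbf{1}^\top\|_F^2=O(\gamma^{-k})$, is the correct statement for the deterministic $\W^\infty$. Drop the first option and state the result in one of these two forms.
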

The proof of Theorem \ref{the:complexity} is provided in Section III-F of the Supplemental Material.  Theorem  \ref{the:complexity} establishes that both  sequences $\{(\W^{k}, {\mathbf{V}}^{k})\}$ and  $\{f(\boldsymbol{\varpi}^{k}) \}$ converge to their respective limits at a linear rate in the sense of $L^2$ convergence.  

To see the optimality of limiting point $\boldsymbol{\varpi}^{\infty}$ for problem \eqref{fl}, we need to specify a particular $\gamma$ such that
\begin{equation}\label{condition-gamma-K-0}
\lim_{T\to\infty} \frac{\gamma-1}{1-\gamma^{-T}} = 0.
\end{equation}
This is a mild condition because given $T$, there are many choices of $\gamma$ satisfying this condition, such as
\begin{align*}
\gamma=1+ T^{-a},\quad \forall~ a>0,\\
\gamma=1+ c^{-T},\quad \forall~ c>1.
\end{align*}
\begin{theorem}\label{the:pame-gradient-stationarity}
	Let $\{(\W^{k}, {\mathbf{V}}^{k})\}$ be the sequence generated by Algorithm~\ref{algorithm-PaME} with Setup \ref{setup} and $T$ be the total number of iterations. Choose $\gamma$ to satisfy  (\ref{condition-gamma-K-0}). Then under  Assumptions \ref{assump-double-stochasitic} and \ref{assump-gradientlip},  it holds
	\begin{equation*}\begin{aligned}
			&\E\nl\nabla f(\boldsymbol{\varpi}^{\infty})\nr^2= 0,\\
			& \mathbb E
			\nl
			\nabla f(\boldsymbol{\varpi}^{T})
			\nr^2
			=
			O(\gamma^{-T}).
		\end{aligned}
	\end{equation*}		
\end{theorem}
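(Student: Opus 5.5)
The idea is to use the local update \eqref{SRDFL-localstep} as an identity for the stochastic gradient, sum it over nodes, and then pass to the limit with Theorems~\ref{the:desent}--\ref{the:complexity}. Rearranging \eqref{SRDFL-localstep} with $m_i^k=t_i$ gives
\[
\nabla f_i(\overline{\v}_i^k;\mathcal{B}_i^k)=\sigma_i^k t_i\,(\overline{\v}_i^k-\w_i^{k+1}),
\]
which links the local (stochastic) gradient at $\overline{\v}_i^k$ to the gap $\overline{\v}_i^k-\w_i^{k+1}$. I would write $\nabla f(\boldsymbol{\varpi}^k)=\sum_i\nabla f_i(\boldsymbol{\varpi}^k)$ and split each summand as
\[
\nabla f_i(\boldsymbol{\varpi}^k)=\big[\nabla f_i(\boldsymbol{\varpi}^k)-\nabla f_i(\overline{\v}_i^k)\big]+\big[\nabla f_i(\overline{\v}_i^k)-\nabla f_i(\overline{\v}_i^k;\mathcal{B}_i^k)\big]+\sigma_i^k t_i(\overline{\v}_i^k-\w_i^{k+1}).
\]
The first bracket is controlled by Assumption~\ref{assump-gradientlip}, since Theorem~\ref{the:desent}~1) keeps $\overline{\v}_i^k$ and $\boldsymbol{\varpi}^k$ (an average of points in the box) inside $\mathbb{N}(2\delta)$. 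This gives a bound $\alpha_i\|\boldsymbol{\varpi}^k-\overline{\v}_i^k\|$, and its second moment is $O(\gamma^{-k})$ by Theorem~\ref{the:complexity}, because both $\boldsymbol{\varpi}^k$ and $\overline{\v}_i^k$ are $O(\gamma^{-k/2})$-close in $L^2$ to $\W^\infty$'s columns.

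The delicate term is $\sigma_i^k t_i(\overline{\v}_i^k-\w_i^{k+1})$, because $\sigma_i^k=\sigma^0\gamma^k$ blows up. I would not estimate it term by term. Instead I would sum the descent inequality of Theorem~\ref{the:desent}~2) over $k=0,\dots,T-1$ and weight it, which yields
\[
\sum_{k}\sigma^0\gamma^k t_i\,\E\big(\|\Delta\w_i^{k+1}\|^2+\|\Delta\overline{\v}_i^{k+1}\|^2\big)\le 8\big(\widetilde H^0-\widetilde H^\infty\big)<\infty.
\]
Since $\w_i^{k+1}-\overline{\v}_i^k$ can be written through these increments and through $\w_i^k-\overline{\v}_i^k$, the quantity $(\sigma_i^k)^2\E\|\overline{\v}_i^k-\w_i^{k+1}\|^2$ is $\sigma_i^k$ times a summable sequence. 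Averaging over the horizon then produces the factor $\frac{\gamma-1}{1-\gamma^{-T}}$, which is where condition~\eqref{condition-gamma-K-0} is used to force this term to zero.

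The stochastic-gradient bracket is the step I expect to be the main obstacle, because there is no variance assumption. I would first try to absorb it by the same identity at the limit. On the box the sampled gradients are uniformly bounded by continuity, so the bracket is bounded, and its contribution can be folded into the decrease of $\widetilde H^k$, whose $C\gamma^{-k}$ and $D\eta^{-k}$ corrections presumably already account for it. As a fallback, I would reinterpret the claim for the full-batch (deterministic) gradient, where the bracket vanishes.

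Finally I would pass to the limit. Continuity of $\nabla f$ on the box and the $L^2$ convergence $\boldsymbol{\varpi}^k\to\boldsymbol{\varpi}^\infty$ from Theorem~\ref{the:convergence}~2) give $\E\|\nabla f(\boldsymbol{\varpi}^\infty)\|^2=0$. For the rate, I would use
\[
\|\nabla f(\boldsymbol{\varpi}^T)\|\le\|\nabla f(\boldsymbol{\varpi}^T)-\nabla f(\boldsymbol{\varpi}^\infty)\|\le\Big(\sum_i\alpha_i\Big)\|\boldsymbol{\varpi}^T-\boldsymbol{\varpi}^\infty\|,
\]
where the first inequality uses $\nabla f(\boldsymbol{\varpi}^\infty)=0$ almost surely. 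The $O(\gamma^{-T})$ bound on $\E\|\boldsymbol{\varpi}^T-\boldsymbol{\varpi}^\infty\|^2$ should follow from the proof of Theorem~\ref{the:complexity}, which bounds the tail sum of the increments. That yields $\E\|\nabla f(\boldsymbol{\varpi}^T)\|^2=O(\gamma^{-T})$.
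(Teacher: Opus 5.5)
Your skeleton matches the paper's proof: the identity $\nabla f_i(\overline{\v}_i^k;\mathcal{B}_i^k)=\sigma^k t_i(\overline{\v}_i^k-\w_i^{k+1})$, the split $\overline{\v}_i^k-\w_i^{k+1}=(\overline{\v}_i^k-\w_i^k)+(\w_i^k-\w_i^{k+1})$, a $1/\sigma^k$-weighted average over $k\le T$, passage to the limit, and the final Lipschitz estimate for the rate. The genuine gap is the stochastic bracket $\nabla f_i(\overline{\v}_i^k)-\nabla f_i(\overline{\v}_i^k;\mathcal{B}_i^k)$.

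It is bounded on the box (by $\sqrt{n}\,\varepsilon_i(2\delta)$), but bounded is not decaying. It enters your bound on $\E\|\nabla f(\boldsymbol{\varpi}^k)\|^2$ as a term of fixed size, so your bound on the weighted average contains a non-vanishing term. It cannot be folded into the decrease of $\widetilde H^k$. The corrections $C\gamma^{-k}$ and $D\eta^{-k}$ only compensate the terms $4mn\varepsilon^2(2\delta)/(\sigma^k t_{\min})$ and $\sigma^k t_i\E\|\w_i^{k+1}-\overline{\v}_i^{k+1}\|^2$ from the descent proof, and carry no information about this bracket. The full-batch fallback proves a different statement.

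The paper never creates this term. It conditions on the history $\mathcal F_k$ before the sampling at step $k$ and writes $\nabla f_i(\overline{\v}_i^k)=\E[\sigma^k t_i(\overline{\v}_i^k-\w_i^{k+1})\mid\mathcal F_k]$. This tacitly uses conditional unbiasedness of the mini-batch gradient, which is not among the stated assumptions. It then applies conditional Jensen, $\|\E[X\mid\mathcal F_k]\|^2\le\E[\|X\|^2\mid\mathcal F_k]$, giving $\E\|\sum_i\nabla f_i(\overline{\v}_i^k)\|^2\le 2m^3(\sigma^k)^2\sum_i(\E\|\overline{\v}_i^k-\w_i^k\|^2+\E\|\Delta\w_i^{k+1}\|^2)$. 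Note also that summability of $\sigma^k\E\|\w_i^k-\overline{\v}_i^k\|^2$ does not come from the descent inequality. It comes from the consensus estimate $\E\|\W^k-\boldsymbol{\Pi}^k\|_F^2\le C_2\beta^{-k}$ with $\beta>\gamma$, which also controls your first bracket directly.

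Be aware, though, that conditioning only relocates the noise. The right-hand side above dominates $\E\|\sum_i\nabla f_i(\overline{\v}_i^k;\mathcal{B}_i^k)\|^2$, which contains the mini-batch variance. So the resulting $1/\sigma^k$-weighted bound is bounded below by that variance whenever the variance does not vanish.

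Consistently, the averaging step you share with the paper drops a factor. Since $S_T=(1-\gamma^{-T})/(\sigma^0(\gamma-1))$, the normalized bound carries $\sigma^0(\gamma-1)/(1-\gamma^{-T})$, not $(\gamma-1)/(1-\gamma^{-T})$. The setup requires $\sigma^0\ge\varepsilon(2\delta)\gamma/((\gamma-1)\delta t_{\min})$, so $\sigma^0(\gamma-1)\ge\varepsilon(2\delta)\gamma/(\delta t_{\min})$ stays away from zero under genuine sampling. Hence condition \eqref{condition-gamma-K-0} does not drive the bound to zero.

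Your instinct that the noise is the crux is right. The conditioning device is the step you are missing relative to the paper. But neither your argument nor the paper's, as written, yields a vanishing bound from which $\E\|\nabla f(\boldsymbol{\varpi}^\infty)\|^2=0$ follows in the stochastic case.
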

The proof of Theorem \ref{the:pame-gradient-stationarity} is provided in Section III-G of the Supplemental Material. It complements Theorems~\ref{the:convergence} and~\ref{the:complexity} by providing a first-order stationarity characterization.
Specifically, the first equality shows that the limit point \(\boldsymbol{\varpi}^{\infty}\) is a stationary point in the \(L^2\)-sense for problem \eqref{fl}, while the second equality further shows that the global gradient norm at \(\boldsymbol{\varpi}^{T}\) decays linearly at the rate \(O(\gamma^{-T})\).
Thus, the convergence of PaME is not only to a consensus limit, but also to a first-order stationary point of the global objective.

We emphasize that the theoretical guarantees, including Theorems~\ref{the:convergence}, \ref{the:complexity} and \ref{the:pame-gradient-stationarity}, are derived under the randomness induced by the {\tt PME} mechanism in Algorithm~\ref{algorithm-PaME}. If a deterministic transmission scheme is employed instead, all convergence results hold with probability one, thereby recovering a strong deterministic convergence theory for the standard DFL setting.

	\section{Numerical Experiments}
	\label{Numerical}
	In this section, we present numerical experiments
	to evaluate the performance of PaME.  All experiments are implemented using Python 3.7.
	
	\subsection{Testing Example}
	\begin{example}{(Linear Regression)}\label{eg5.1} The loss function of each node ${i\in[m]}$ with a local dataset $\mathcal{D}_i$ takes the following form,
		$$f_i(\mathbf{w})=
 		\frac{1}{2m_i} \sum_{(\mathbf{a},b)\in\mathcal{D}_i} \Big(  \langle \mathbf{a}, \mathbf{w}  \rangle   - b  \Big)^2,  
		$$
		where feature ${\mathbf{a}\in \mathbb{R}^n}$, observation ${b \in \mathbb{R}}$, ${m_i=|\mathcal{D}_i|}$. 
	To assess the effectiveness of PaME, we assume the existence of a `ground truth' solution ${\w^* \in \mathbb{R}^n}$ with $1\%$ non-zeros entries. Then ${b=\langle \mathbf{a}, \mathbf{w}^*\rangle+0.5 e}$, where $e$ is the noise. All entries of $\mathbf{a}$ and $e$ are identically and independently distributed from a standard normal distribution  while the non-zero entries of $\w^*$ are uniformly generated from $[0.5,2] \cup[-2,-0.5]$.
		
	\end{example}
	\begin{example}{(Logistic Regression)}\label{eg5.2} The loss function of each node ${i\in[m]}$ with a local dataset $\mathcal{D}_i$  takes the following form,
		$$
		f_i(\mathbf{w}) = 
		\frac{1}{m_i} \sum_{(\mathbf{a},b)\in\mathcal{D}_i} \left(\ln \left(1 + \mathrm{e}^{\left\langle \mathbf{a}, \mathbf{w} \right\rangle}\right) - b \left\langle \mathbf{a}, \mathbf{w} \right\rangle \right) + \frac{\lambda}{2}\|\mathbf{w}\|^2,
		$$
		where feature ${\mathbf{a}\in \mathbb{R}^n}$, label ${b \in \{0,1\}}$, ${m_i=|\mathcal{D}_i|}$, and ${\lambda > 0}$ (e.g., ${\lambda = 0.001}$ in our numerical experiments). We assume a ground-truth parameter vector ${\w^* \in \mathbb{R}^n}$ with $50\%$ nonzero entries. 
		Let $\mathbf{a}$  be generated in the same way as in Example~\ref{eg5.1}, and $b$ is obtained by applying the sigmoid function to $\langle \mathbf{a}, \mathbf{w}^*\rangle$ to produce a probability in $[0,1]$.
		
	\end{example}

		\begin{example}{(Convolutional Neural Network, CNN)}\label{eg5.3}
			Consider a 10-class image classification problem trained using a CNN. Each node $i \in [m]$ possesses a local dataset $\mathcal{D}_i$ and minimizes the empirical cross-entropy loss:
			$$
			f_i(\mathbf{w}) = -\frac{1}{m_i} \sum_{(\mathbf{a}, b)\in \mathcal{D}_i}  \sum_{c\in[10]}  \mathbb{I}(b=c) \ln(h(\mathbf{w}; \mathbf{a})_c),
			$$
			where $(\mathbf{a}, b)$ denotes a training sample (e.g., an image and its corresponding label), ${m_i=|\mathcal{D}_i|}$, $\mathbb{I}(\cdot)$ is the indicator function, and $h(\mathbf{w}; \mathbf{a})_c$ is the predicted probability of class $c$ given input $\mathbf{a}$ parametrized by $\w$.
			
			We utilize the Fashion-MNIST dataset \cite{xiao2017fashion}, which consists of 70,000 grayscale images (${28 \times 28}$ pixels) depicting 10 categories of fashion items, split into 60,000 training images and 10,000 testing images.
		To verify the generalizability of PaME under data heterogeneity, we employ data partitioning strategies wherein each node is allocated samples from a varying number of classes, thereby simulating diverse heterogeneous data distributions.
			
		\end{example}
		
	\begin{example}{(ResNet-20)}\label{eg5.4}
			Consider extending the multi-class classification problem and objective function in Example \ref{eg5.3}
			by replacing the CNN backbone with ResNet-20 \cite{he2016deep} and evaluate on CIFAR-10 \cite{krizhevsky2009learning}, a more challenging natural-image benchmark.
			CIFAR-10 consists of 60,000 $32\times 32$ color images in 10 classes (e.g., airplane, automobile, bird),
			with 50,000 training images and 10,000 test images.
			Compared with Example \ref{eg5.3} (Fashion-MNIST), CIFAR-10 typically poses a harder learning task due to the complexity of color natural images and higher intra-class variability.
			To simulate heterogeneous (non-IID) data across nodes,
			we adopt a Dirichlet partitioning strategy \cite{shi2023improving}, where the class distribution for each client is sampled from a Dirichlet distribution $\mathrm{Dir}(\beta)$. 
			
			To further examine the scalability and robustness of PaME, we also evaluate the ResNet-20 backbone on Tiny-ImageNet~\cite{le2015tiny}, which is a larger and more complex natural-image benchmark than CIFAR-10. Tiny-ImageNet dataset contains 200 object classes, with images resized to \(64\times64\), and includes substantially richer inter-class diversity and visual patterns. Compared with CIFAR-10, the larger number of classes and higher image complexity make Tiny-ImageNet a more challenging testbed for decentralized training. Therefore, this experiment is designed to further verify the performance of PaME on larger-scale and more complex data under different levels of data heterogeneity. Specifically, we again adopt the Dirichlet partitioning strategy to generate non-IID data distributions across nodes, where a smaller concentration parameter indicates stronger heterogeneity.

		\end{example}

	\subsection{Implementation}
	The basic setup for PaME in Algorithm \ref{algorithm-PaME} is given as follows. Given a randomly generated graph ${G=([m],E)}$, extract neighbor sets ${\left\{\N_i: i \in [m]\right\}}$. For each node ${i \in [m]}$,   subset $\N_i^{k}$ is randomly selected from $\N_i$ such that ${\left|\N_i^{k}\right|=\lceil \nu_i\left|\N_i\right|\rceil}$. The parameters are initialized as follows: $\nu_i=\nu$, $s_i=s$, $\gamma_i=\gamma$, and $\sigma_i^0=\sigma^0$ for all ${i\in[m]}$, where $\nu$, $s$, $\gamma$, and $\sigma^0$ are specified in Table~\ref{tab:numerical} unless stated otherwise.  When analyzing the effect of a particular parameter, its value is adjusted and described explicitly.  Moreover, for each node ${i\in[m]}$, integer $\kappa_i$ is randomly selected from a predefined interval that varies across examples, as reported in Table \ref{tab:numerical}. Therefore, all nodes operate with distinct communication periods, resulting in  a partially synchronized training regime, a more realistic training deployment. Finally, we terminate the algorithm if $$
	\operatorname{std}\left\{f(\boldsymbol{\varpi}^{k-2}), f(\boldsymbol{\varpi}^{k-1}), f(\boldsymbol{\varpi}^{k})\right\}<10^{-3},
	$$ where `std' represents the standard deviation.	
	\begin{table}[th]
	\centering
	\caption{Choices of parameters. 
		\label{tab:numerical}}
	\renewcommand{\arraystretch}{1.15}\addtolength{\tabcolsep}{6pt}
	\begin{tabular}{lccccr}
		\hline 
		&  $\nu$ & $s/n$   & $\gamma$ & $\sigma^0$ & $\kappa_i$ \\\hline   		
		Example \ref{eg5.1} &0.2& 0.2   & 1.005 & 1.0
		& [3,7]\\ 
		Example \ref{eg5.2} &0.2& 0.2  & 1.005 & 1.0
		& [3,7] \\
		Example \ref{eg5.3} &0.5& 0.1 & 1.001 & 5.0
		& [5,10] \\
		Example \ref{eg5.4} &0.5& 0.1  & 1.001 & 5.0
		& [5,10] \\
		\hline 
	\end{tabular}
\end{table}

	\begin{figure}[th]
	\centering
	\begin{subfigure}{0.24\textwidth}
		\centering	
		\includegraphics[width=\textwidth]{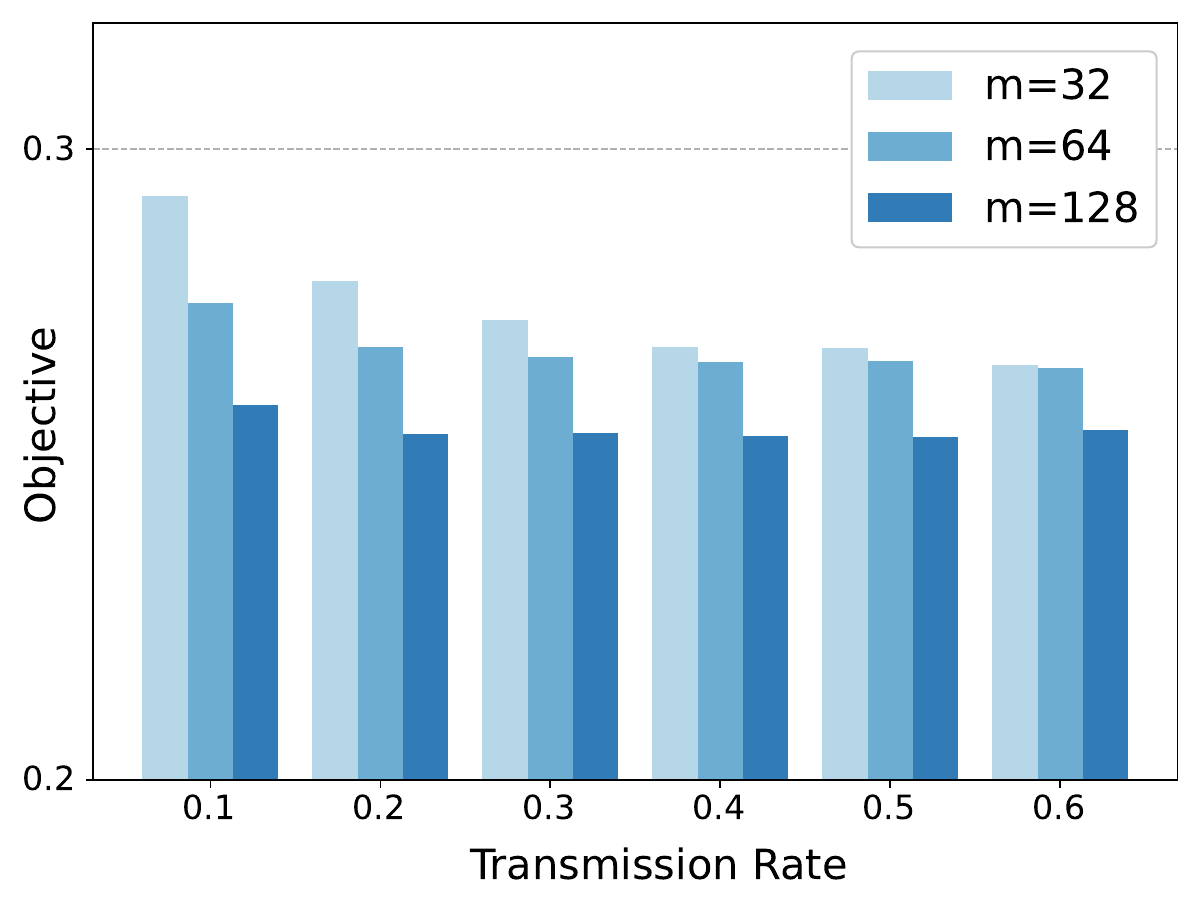}
		\caption{Effect of transmission rate.}
		\label{fig:transmission}
	\end{subfigure}
	\begin{subfigure}{0.24\textwidth}
		\centering
		\includegraphics[width=\textwidth]{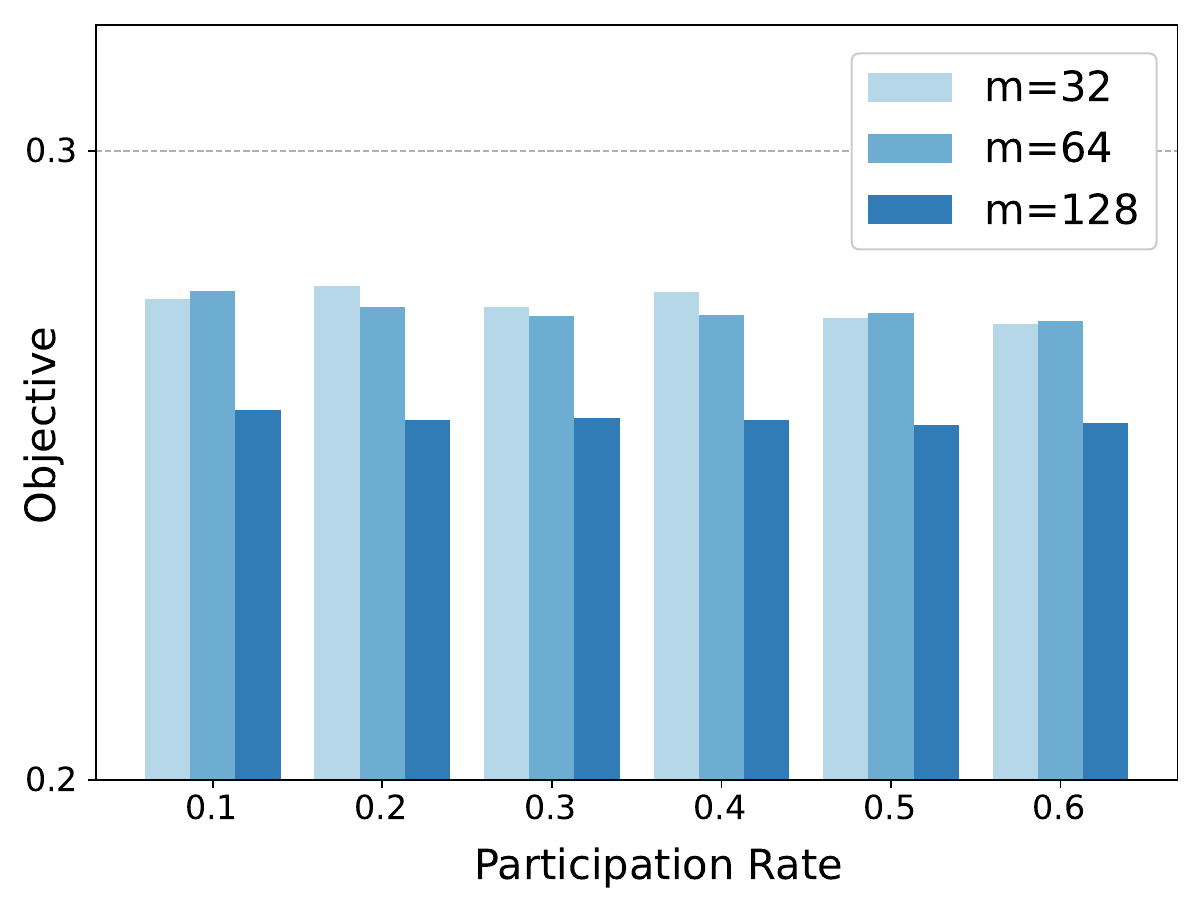}
		\caption{Effect of participation rate.}
		\label{fig:participation}
	\end{subfigure}\\[2ex]
	
	\begin{subfigure}{0.24\textwidth}
		\centering	
		\includegraphics[width=\textwidth]{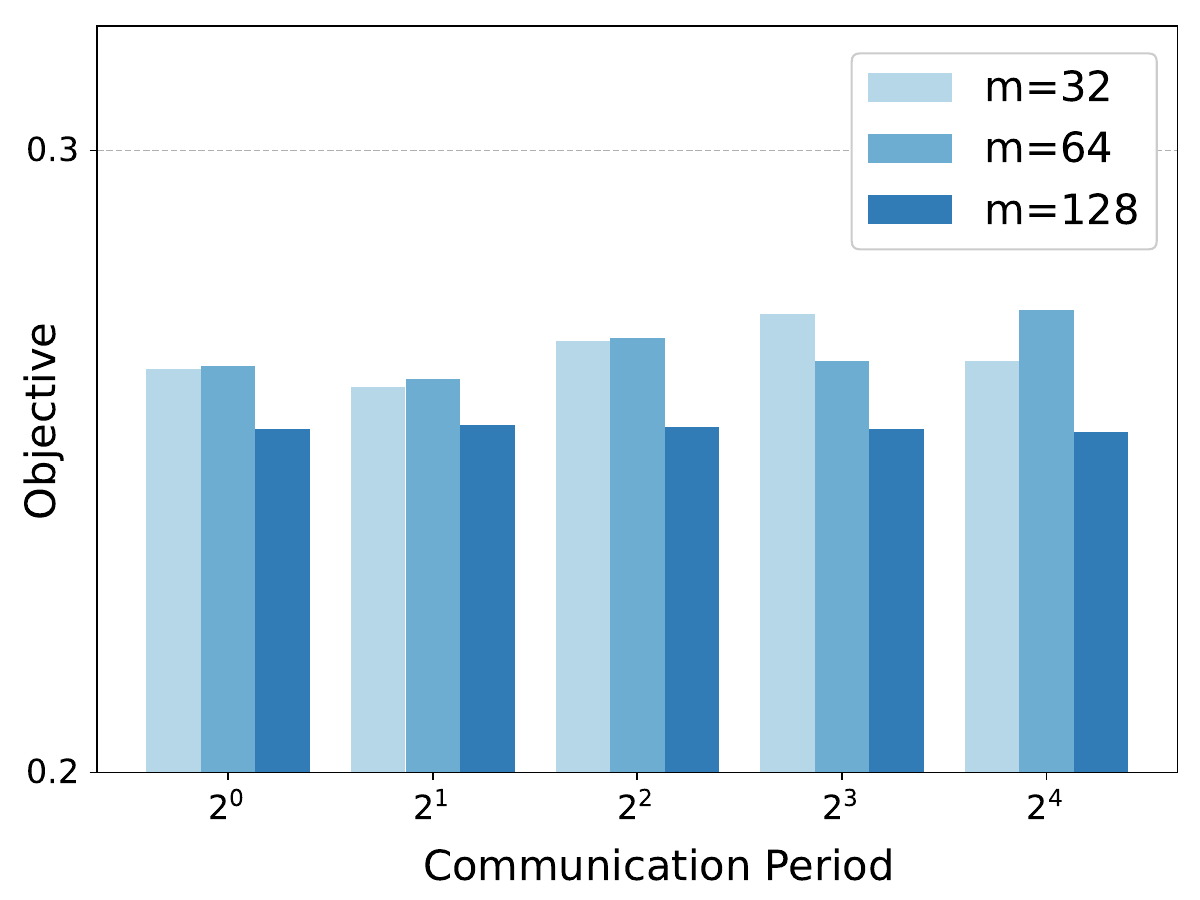}
		\caption{Effect of CP (homogeneous).}
		\label{fig:ex1crhomo}
	\end{subfigure}
	\begin{subfigure}{0.24\textwidth}
		\centering
		\includegraphics[width=\textwidth]{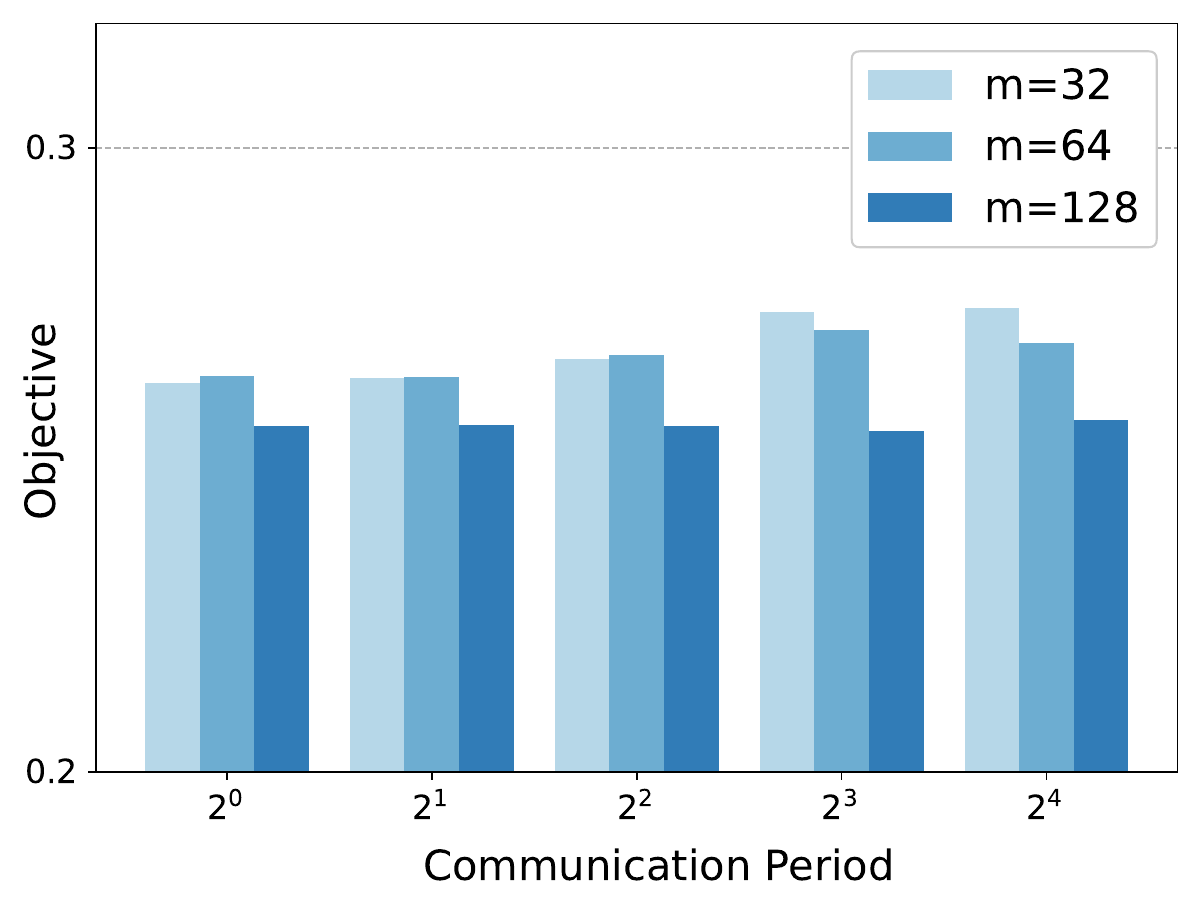}
		\caption{Effect of CP (heterogeneous).}
		\label{fig:ex1crheter}
	\end{subfigure}
	\caption{Self-comparison of PaME for Example \ref{eg5.1}.}\vspace{-3mm}
	\label{ex1-srateandprate}
\end{figure}

		\subsection{Self-Comparison of PaME}
	To assess the performance of the proposed algorithm PaME under different settings in Example \ref{eg5.1}, we conduct a comprehensive comparison in four dimensions: transmission rate $s/n$, participation rate $\nu$, communication period $\kappa_i$, and graph connectivity $\max_i|N_i|$.

	 {\it 1) Effect of transmission rate:}
		Fig.~\ref{fig:transmission} illustrates the effect of varying the transmission rate ${{s}/{n} \in \{0.1,0.2,\cdots,0.6\}}$ across different numbers of nodes ${m \in \{32,64,128\}}$, with the participation rate fixed at ${\nu = 0.2}$. The results indicate that while higher transmission rates generally yield lower final objective values (i.e., improved accuracy), a high transmission rate is not strictly necessary. Notably, with a transmission rate of only $0.1$, the final objective already approximates that obtained under full transmission. Furthermore, a larger number of nodes $m$ tends to improve performance under low transmission rates. This is likely due to the fact that a larger $m$ involves more participating nodes, thereby facilitating better information mixing. Based on the corresponding convergence curves in Fig.~\ref{ex1-sratecurve}, the marginal gain becomes negligible once the rate exceeds approximately $0.2$, although higher transmission rates accelerate convergence. This demonstrates that the proposed mechanism achieves satisfactory convergence while reducing communication costs by at least $80\%$.
		
			\begin{figure*}[t]
		\centering
		\includegraphics[width=.328\textwidth]{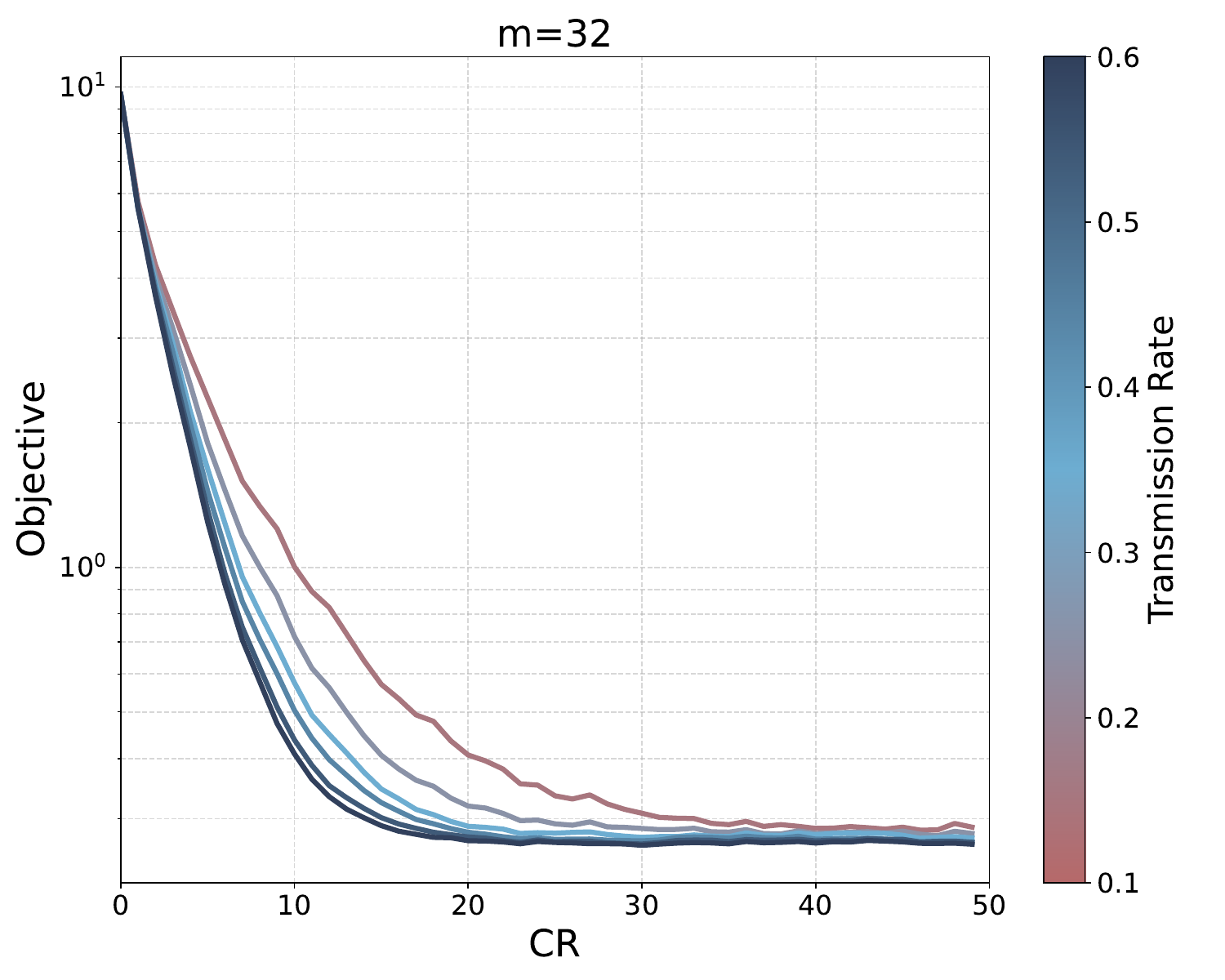}~
		\includegraphics[width=.328\textwidth]{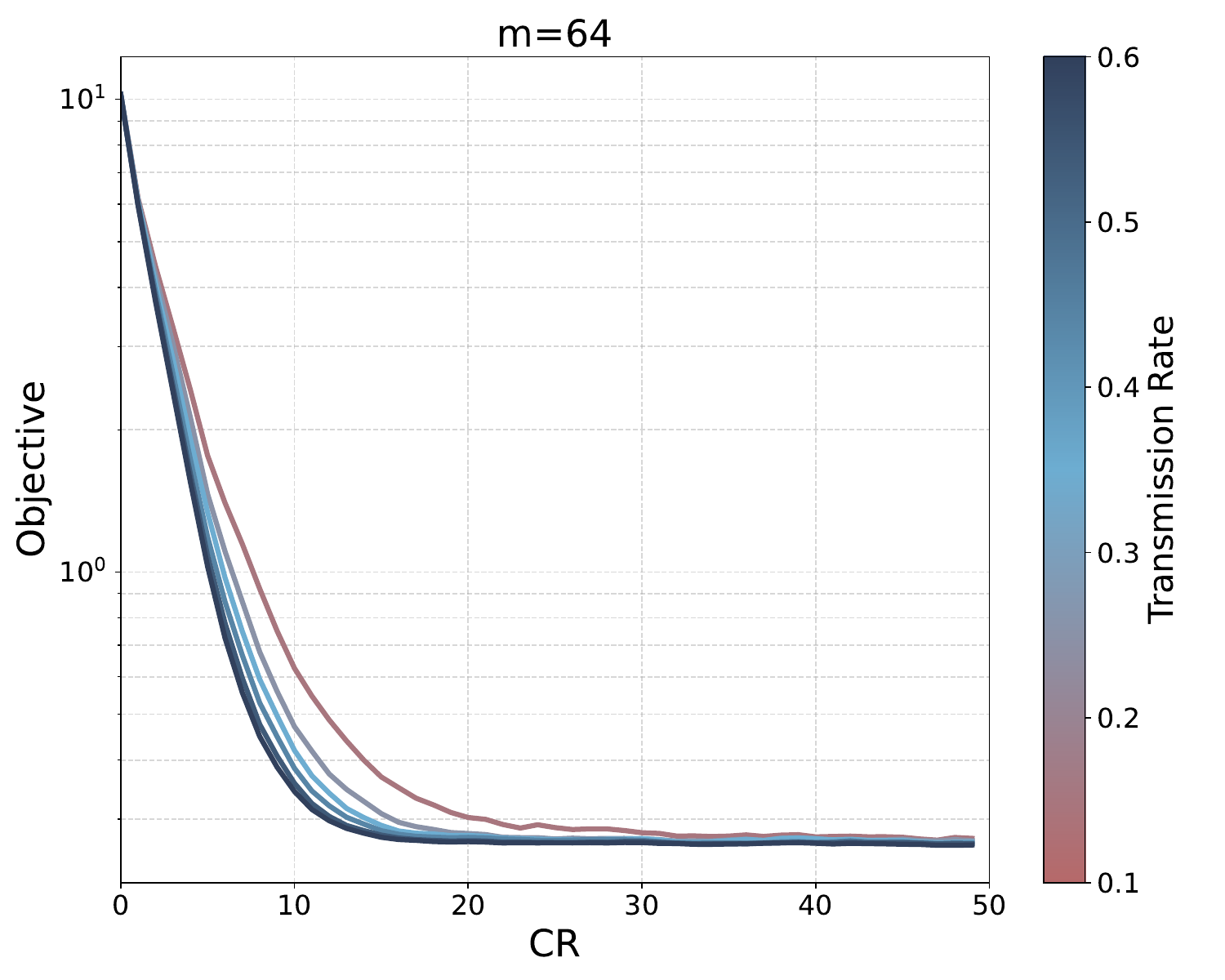}~
		\includegraphics[width=.328\textwidth]{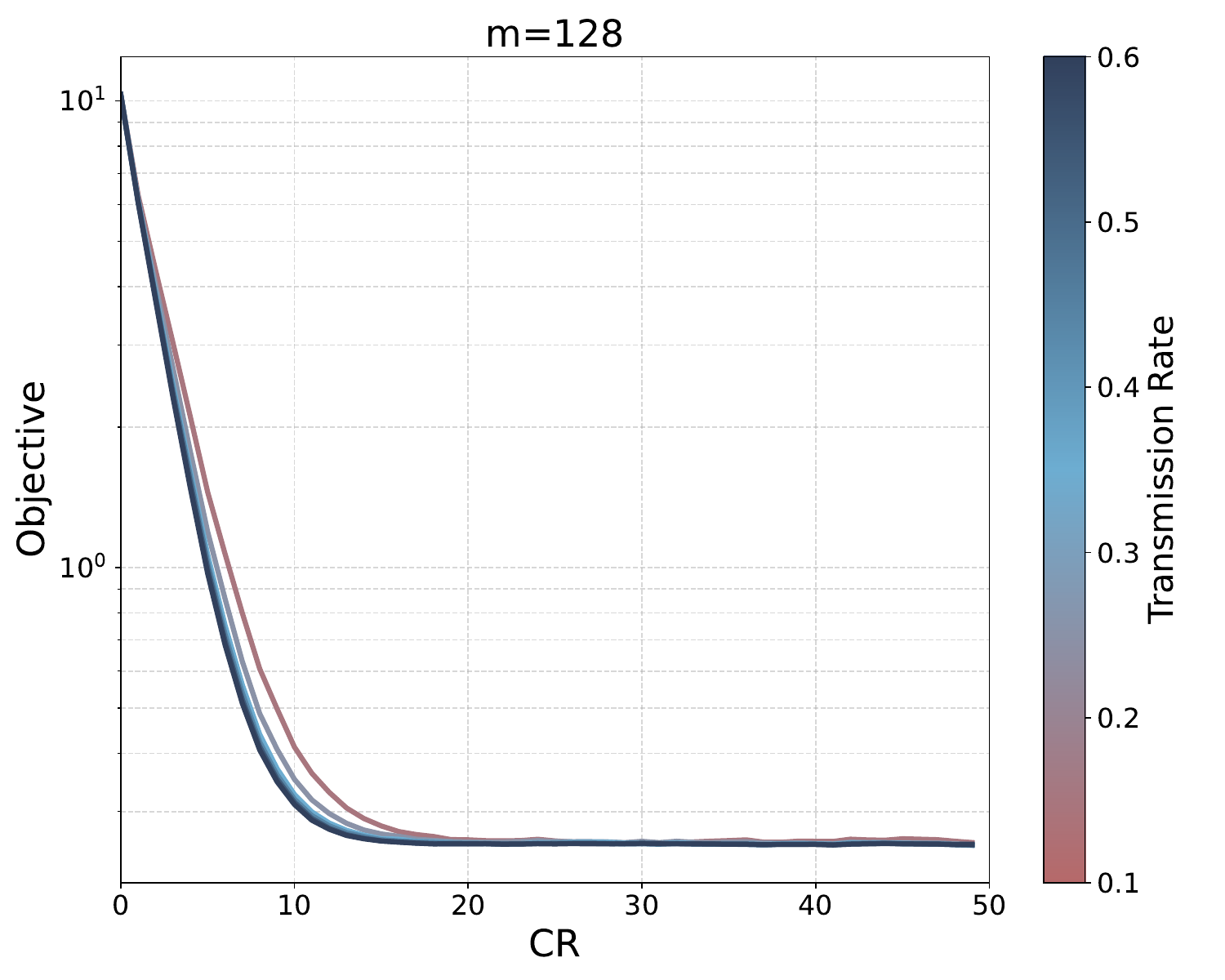}
		\caption{Convergence curves of different transmission rate for PaME solving Example \ref{eg5.1}.}\vspace{-2mm}
		\label{ex1-sratecurve}
	\end{figure*}
	 
	\begin{figure*}[t]
		\centering
		\includegraphics[width=.328\textwidth]{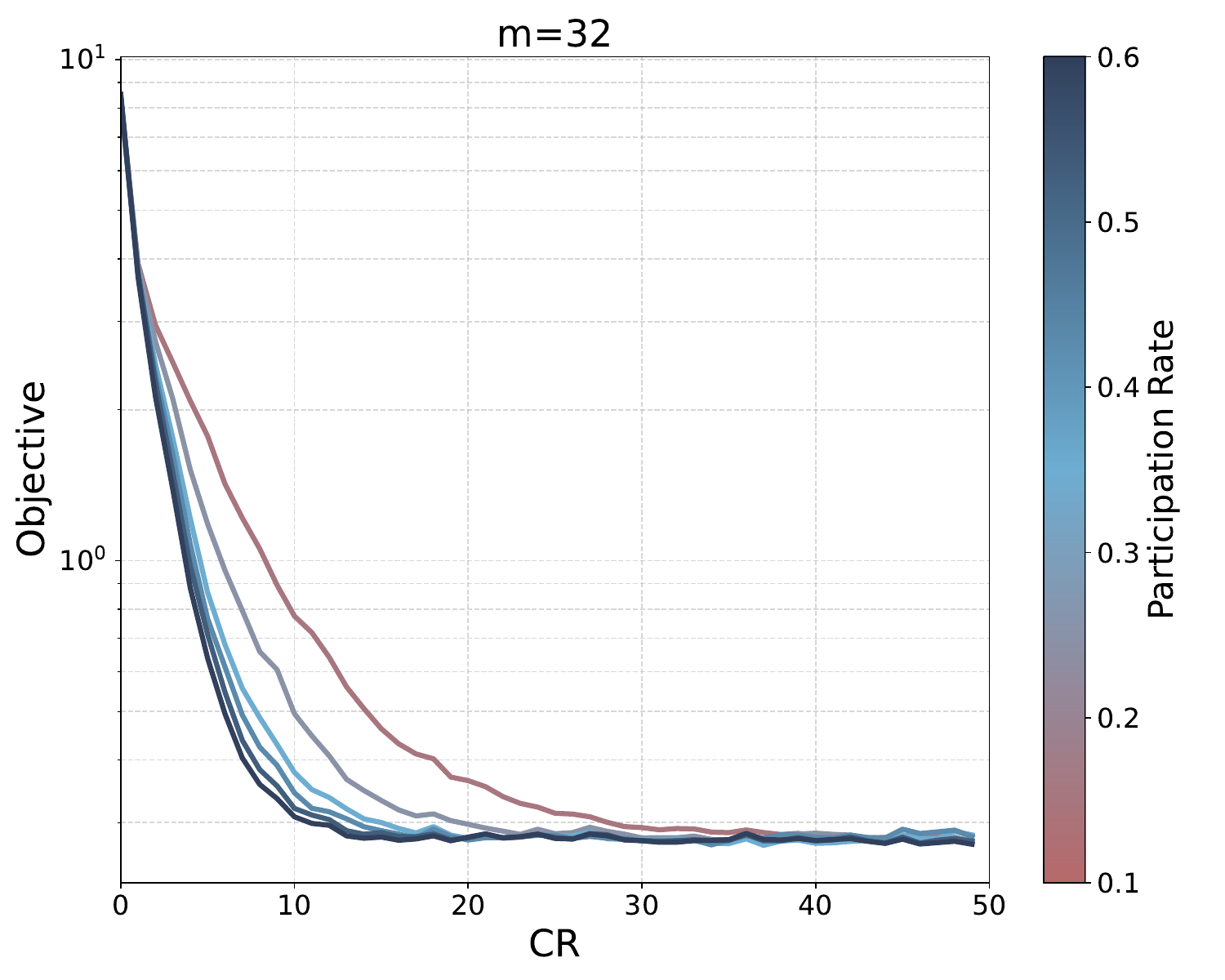}~
		\includegraphics[width=.328\textwidth]{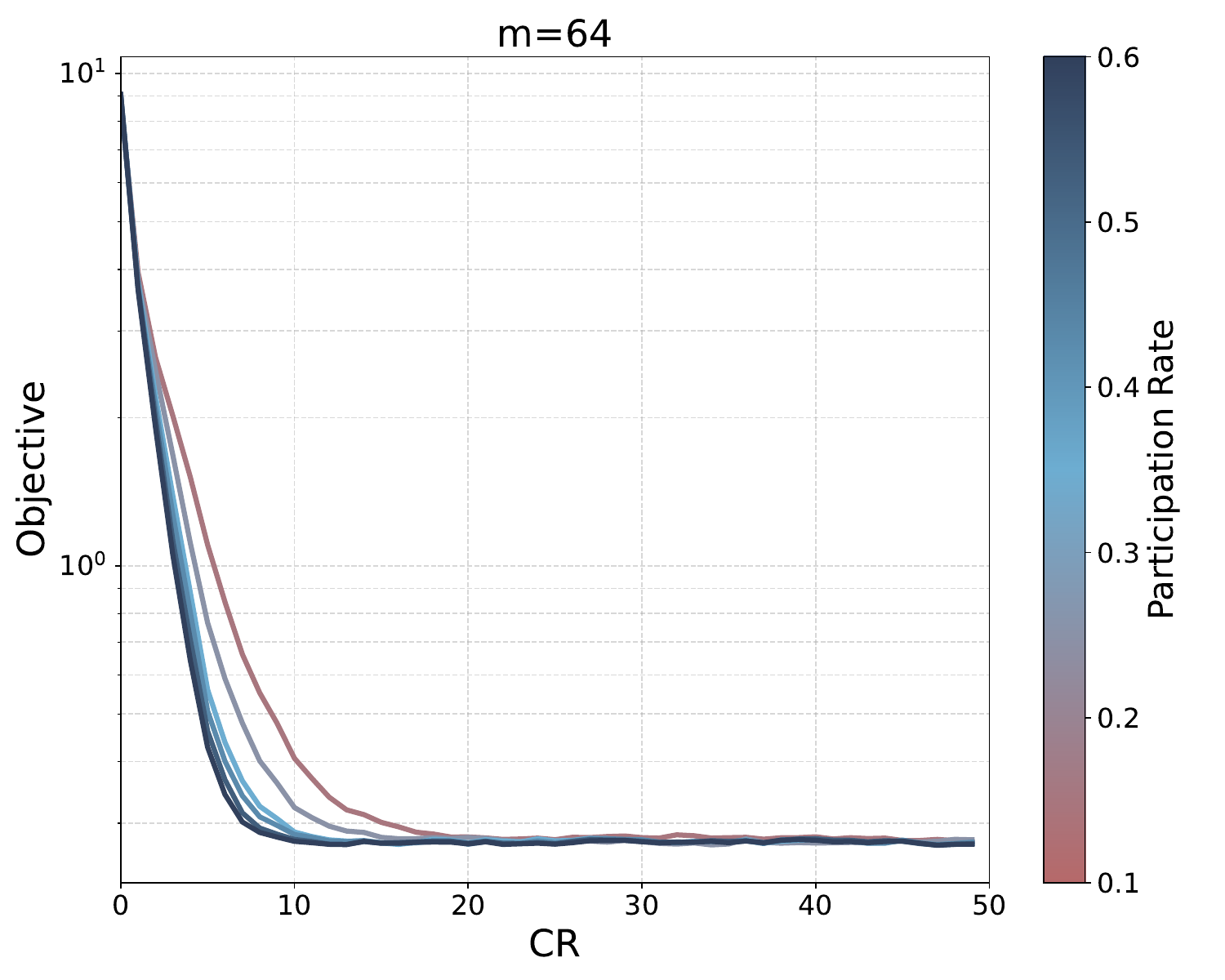}~
		\includegraphics[width=.328\textwidth]{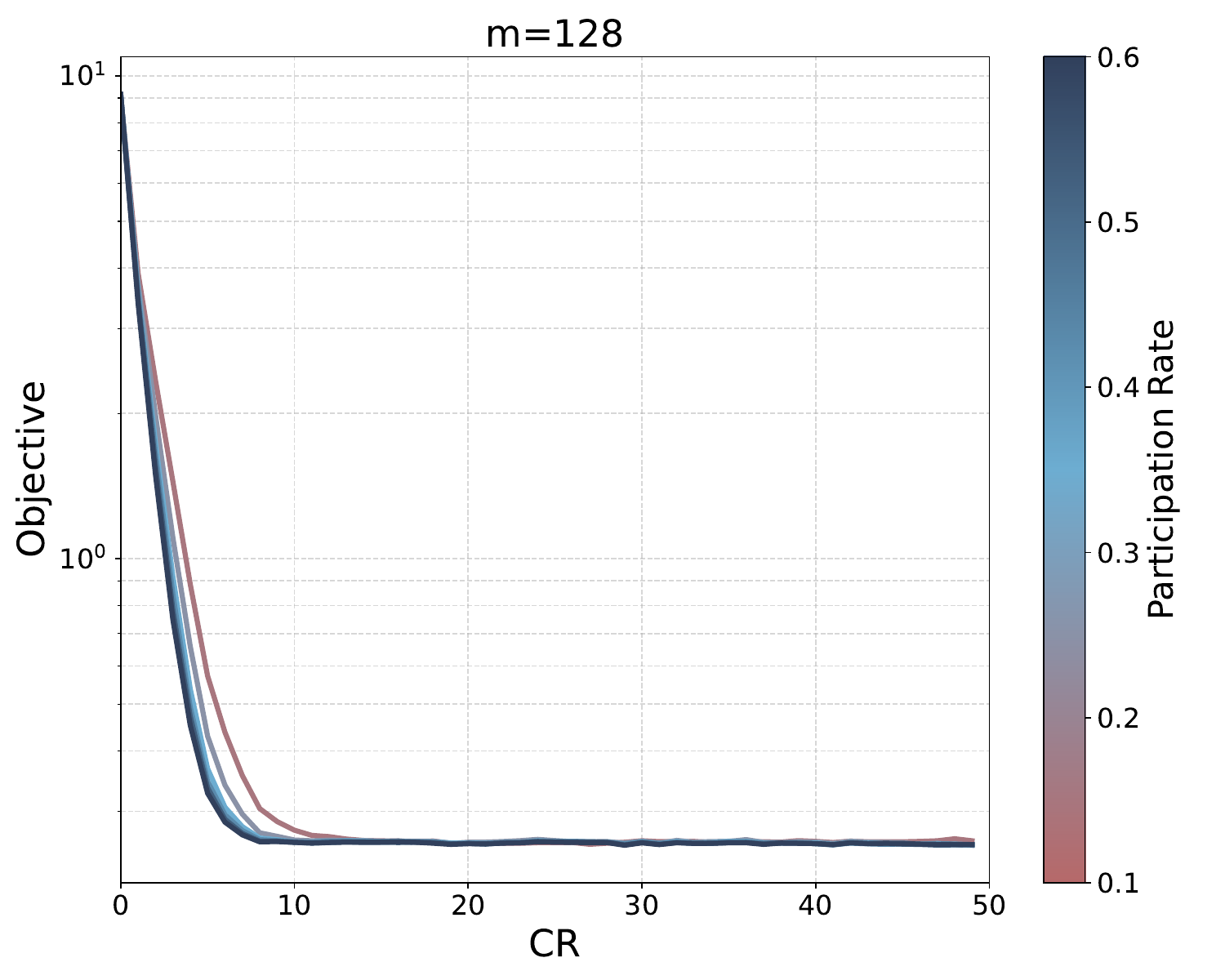}
		\caption{Convergence curves of different participation rate for PaME solving Example \ref{eg5.1}.}\vspace{-2mm}
		\label{ex1-pratecurve}
	\end{figure*} 
	
	{\it 2) Effect of participation rate:} Fig.~\ref{fig:participation} illustrates the impact of varying participation rate ${\nu \in \{0.1,0.2,\cdots,0.6\}}$ across different numbers of nodes ${m \in \{32,64,128\}}$, with the transmission rate fixed at ${s/n = 0.2}$. From the figure, while higher participation rates generally yield better model accuracy, the impact on the final objective value is relatively modest provided the rate is not extremely low; notably, all configurations eventually achieve convergence. The convergence trajectories in Fig.~\ref{ex1-pratecurve} further confirm that higher participation rates accelerate convergence. However, this improvement exhibits clear diminishing returns: increasing $r$ from $0.1$ to $0.2$ significantly boosts convergence speed, whereas further increases yield only marginal gains. Given that higher participation implies a heavier communication load, these findings suggest the existence of a favorable trade-off point where a relatively low participation rate suffices for reasonably fast convergence.

	{\it 3) Effect of communication period (CP):} Fig.~\ref{fig:ex1crhomo} and Fig.~\ref{fig:ex1crheter} illustrate the impact of the  CP under homogeneous and heterogeneous settings. In the homogeneous setting, all nodes adopt a uniform period $\kappa_i = k_0 \in \{2^0, 2^1, \cdots, 2^4\}$, whereas in the heterogeneous setting, nodes operate with distinct communication intervals whose median values correspond to those in the homogeneous case. The results indicate that the CP has minimal influence on the final objective value  although larger periods in the heterogeneous setting may induce slight instability. The convergence trajectories in Fig.~\ref{ex1-CR-homo} and Fig.~\ref{ex1-CR-heter} further reveal that while shorter CPs generally accelerate convergence, this effect becomes less pronounced as the number of nodes $m$ increases. Furthermore,  heterogeneous CPs consistently yield slightly slower convergence than the homogeneous setting across all configurations.

	{\it 4) Effect of graph connectivity:} Fig.~\ref{ex1-degree_vs_s} examines the impact of graph connectivity by fixing the number of nodes to ${m=64}$ and the participation rate to ${\nu=0.4}$  while varying the transmission rate and the normalized graph degree (i.e., the maximum number of neighbors per node, normalized to $[0,1]$). The three heatmaps report the final objective value, the number of iterations required for convergence, and the corresponding wall-clock time. The first heatmap indicates that increasing either $s$ or the graph degree generally lowers the final objective, corresponding to improved model performance. Notably, unless the graph is extremely sparse or the transmission rate is minimal, the final objective quickly saturates and becomes nearly insensitive to further increases in either parameter. This suggests that PaME can achieve near-optimal performance even with a relatively low transmission rate, provided that the communication graph maintains moderate connectivity, thereby reinforcing its communication efficiency. The iteration and time heatmaps further corroborate this trend: except for the most challenging regime characterized by very sparse connectivity and small $s$, PaME demonstrates consistent convergence behavior (requiring approximately $200$ iterations) and comparable runtime across a broad range of settings.
	
{\it 5) Effect of larger $m$:} Table~\ref{tab:pame-self-comparison-msv}
	further evaluates PaME under larger network sizes, namely $m=1000$ and
	$m=2000$, to complement the graph-based experiments with $m\leq 128$.
	Here, `MSE'' denotes the mean squared error obtained after $100$ training iterations, and `Iter.'' denotes the number of iterations required to reach the prescribed MSE level of $0.5$.
	The left part of the table reports the result with different
	transmission ratio $s$, while the right part reports the results with different neighbor participation rate $v$. The results show that PaME remains stable and effective when the number of nodes is increased to the thousand scale. In particular, increasing $s$ consistently reduces both the final MSE and the number of required iterations. These results indicate that the proposed partial message exchange mechanism can still maintain favorable convergence behavior under substantially larger decentralized networks.

	\begin{figure*}[t]
		\centering
		\includegraphics[width=.325\textwidth]{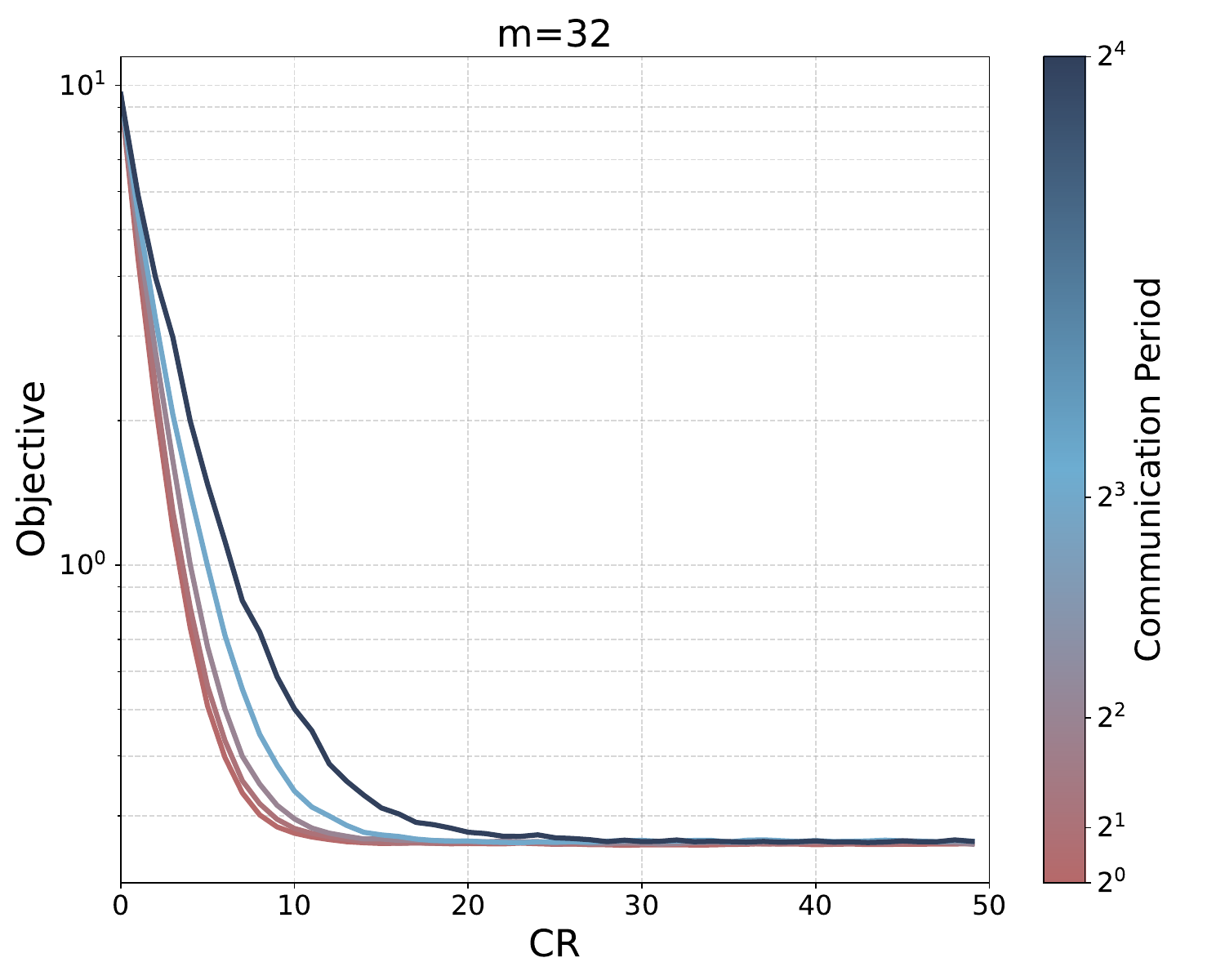}~
		\includegraphics[width=.325\textwidth]{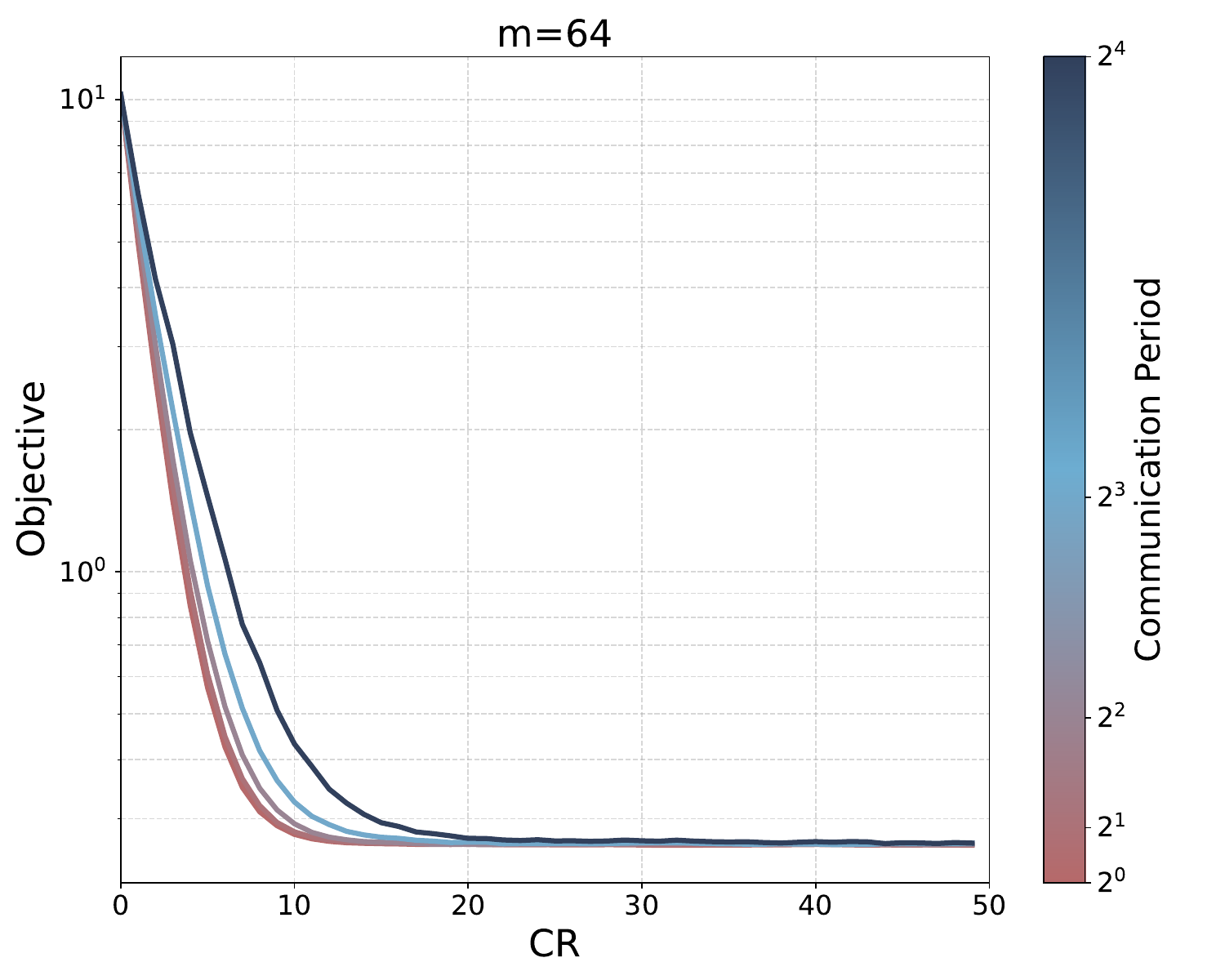}~
		\includegraphics[width=.325\textwidth]{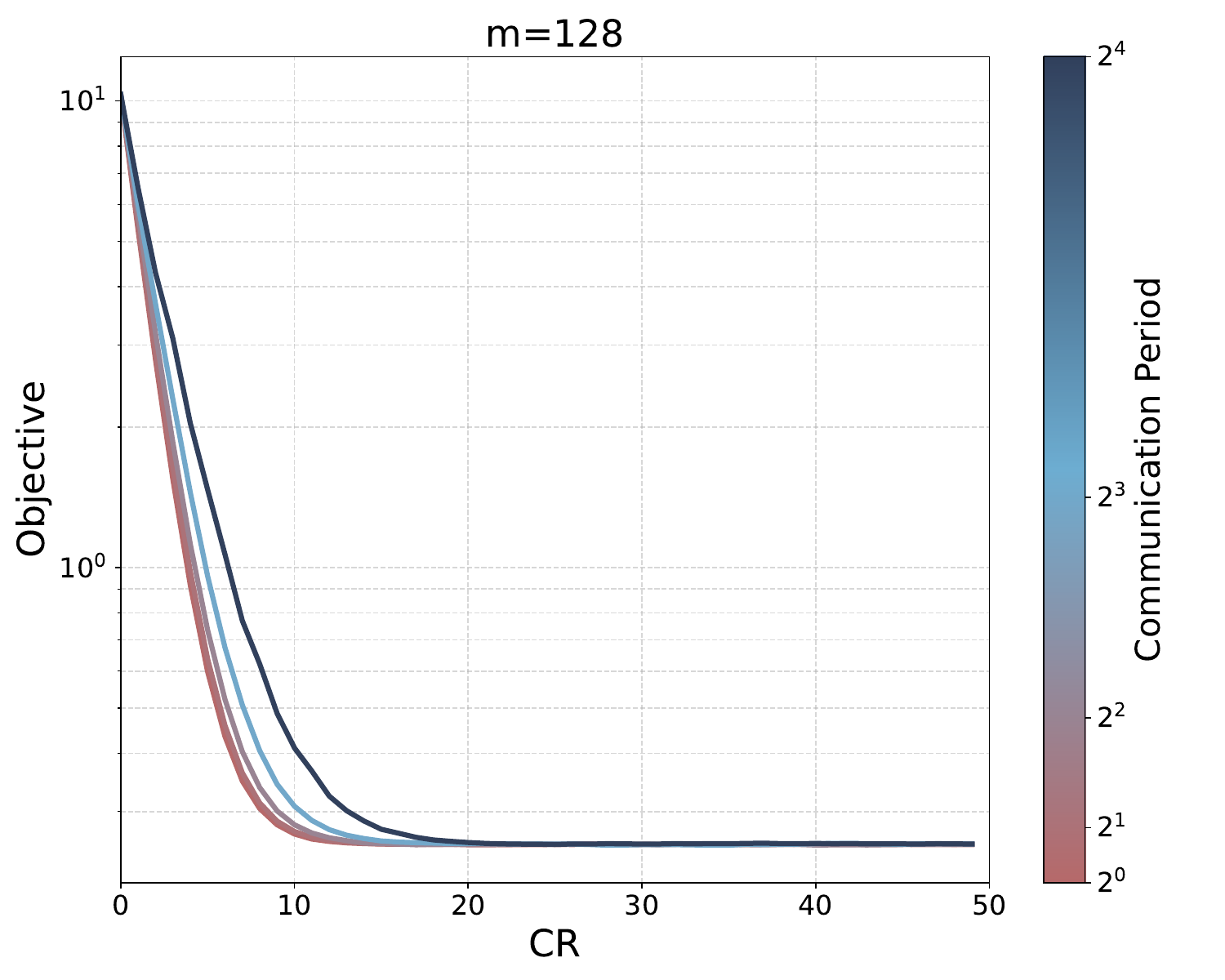}
		\caption{Convergence curves of different CPs (homogeneous) for PaME solving Example \ref{eg5.1}.}\vspace{-2mm}
		\label{ex1-CR-homo}
	\end{figure*} 
	
		\begin{figure*}[t]
		\centering
		\includegraphics[width=.325\textwidth]{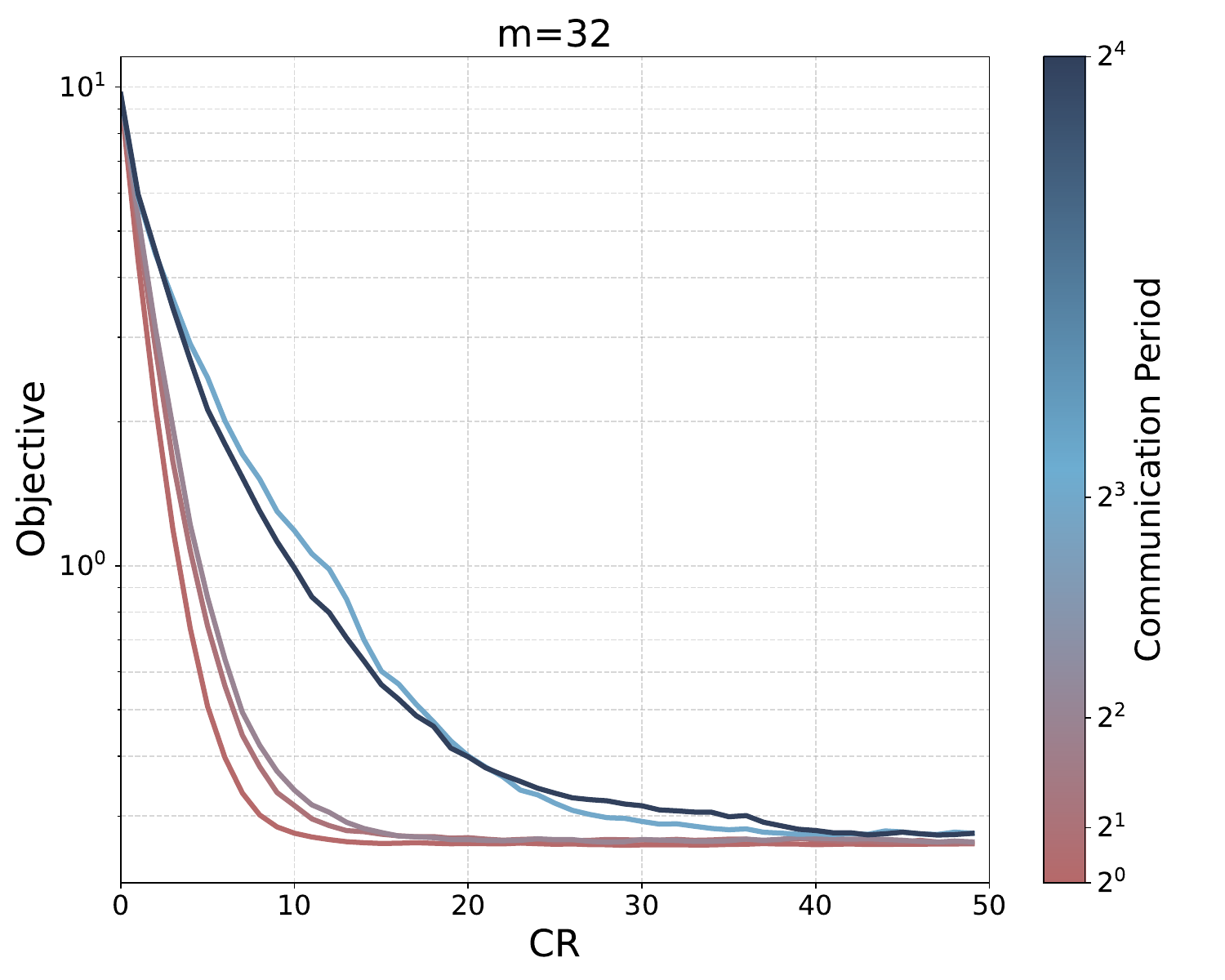}~
		\includegraphics[width=.325\textwidth]{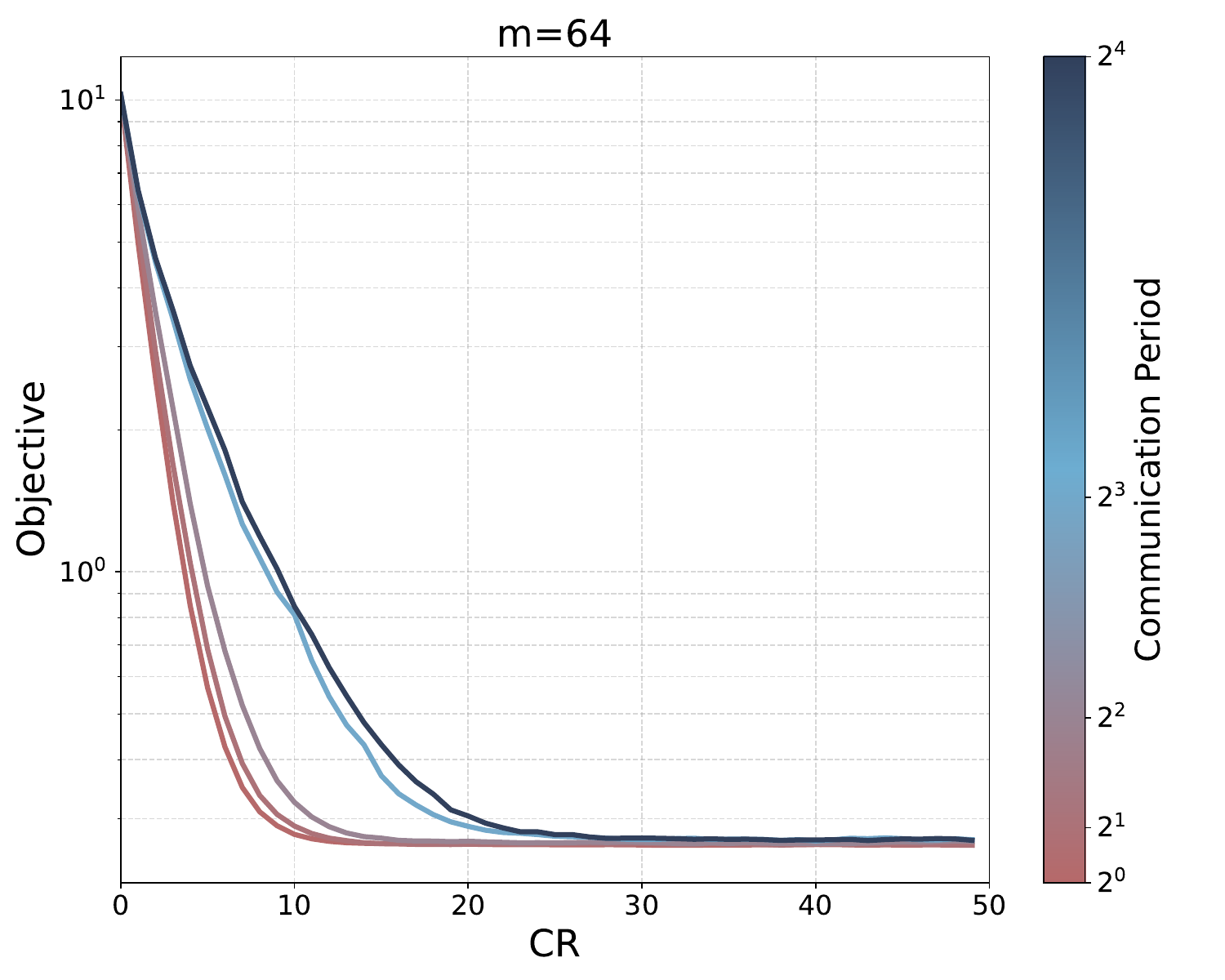}~
		\includegraphics[width=.325\textwidth]{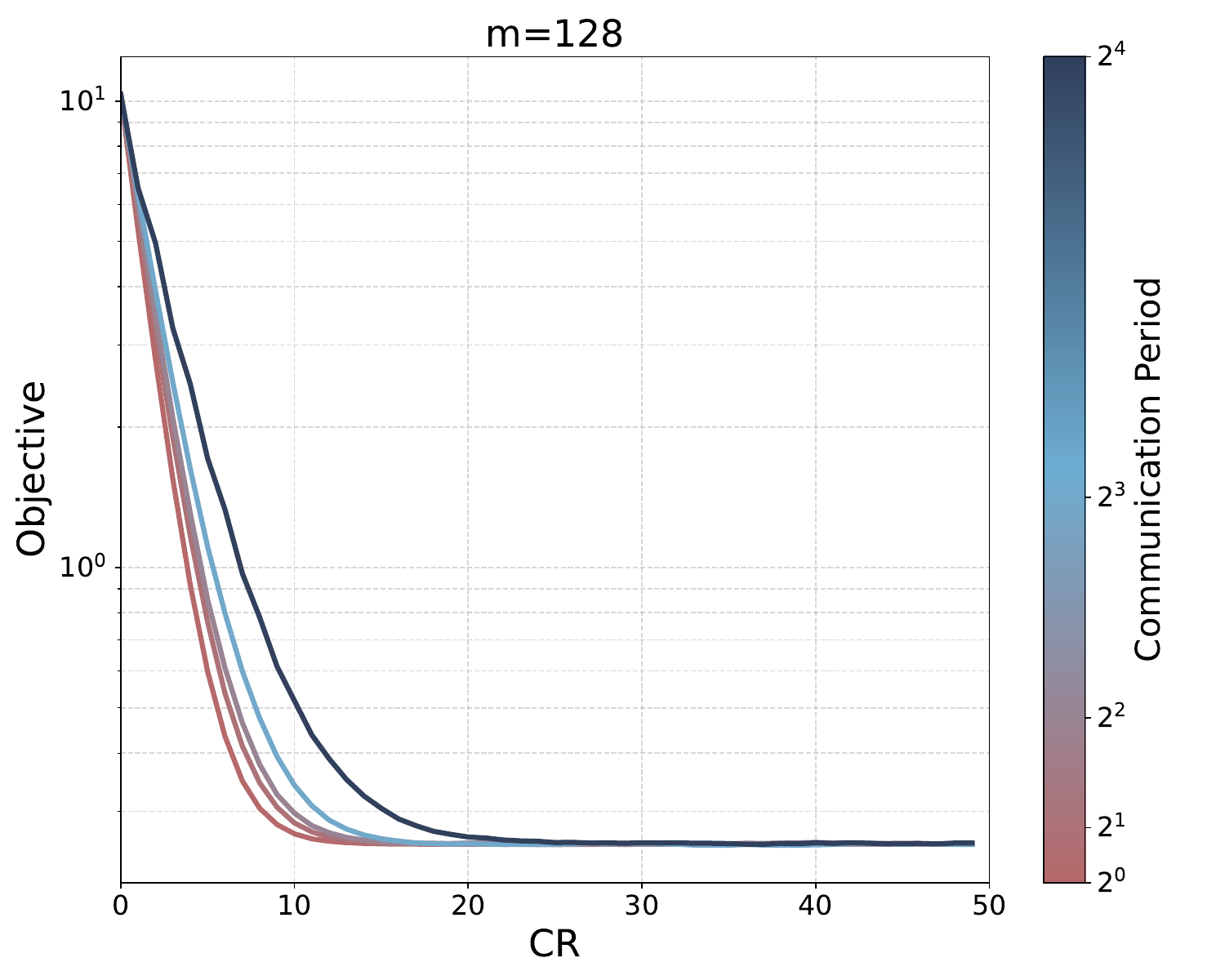}
		\caption{Convergence curves of different CPs (heterogeneous) for PaME solving Example \ref{eg5.1}.}\vspace{-2mm}
		\label{ex1-CR-heter}
	\end{figure*} 
	
		\begin{figure*}[t]
		\centering
		\includegraphics[width=.325\textwidth]{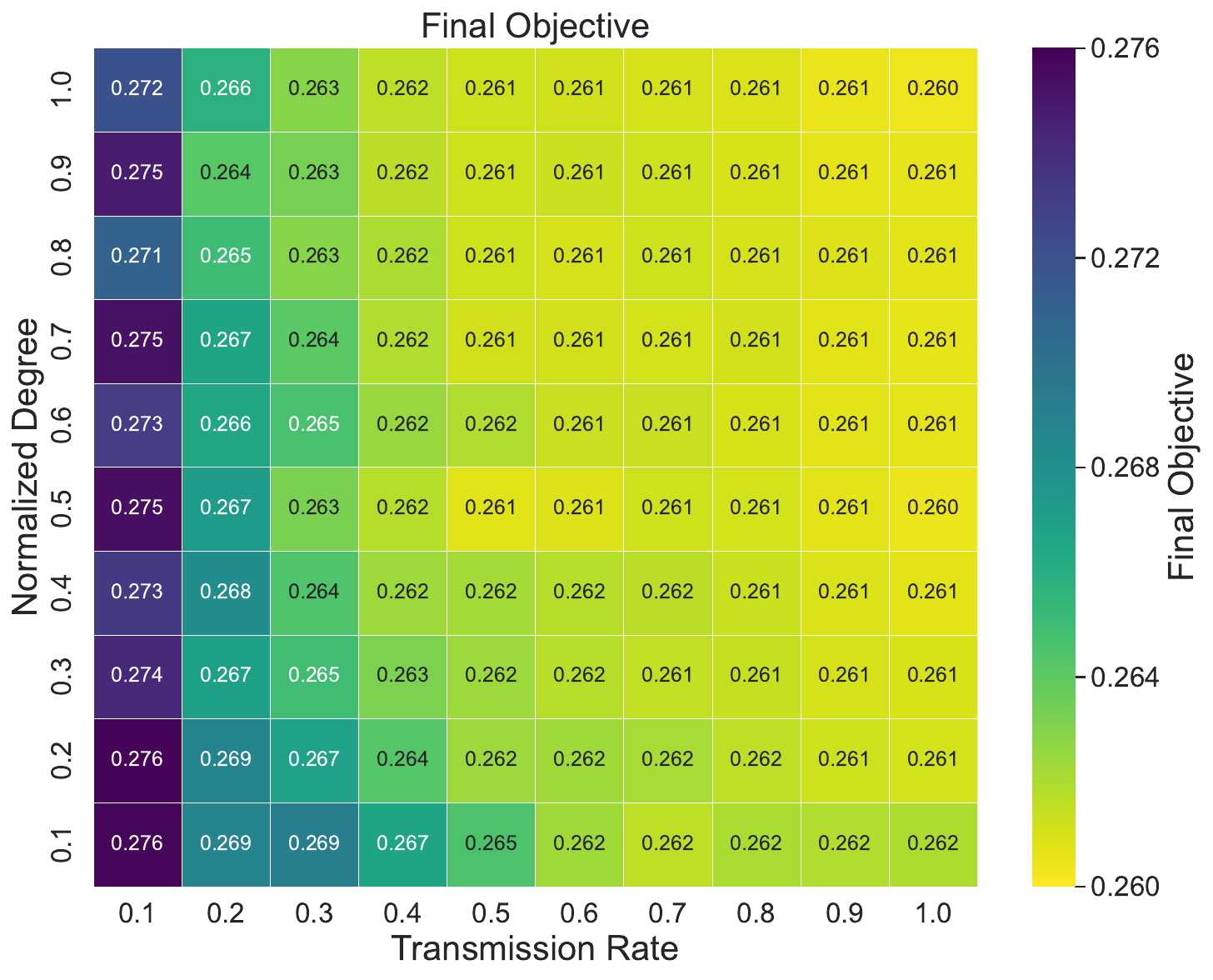}~
		\includegraphics[width=.325\textwidth]{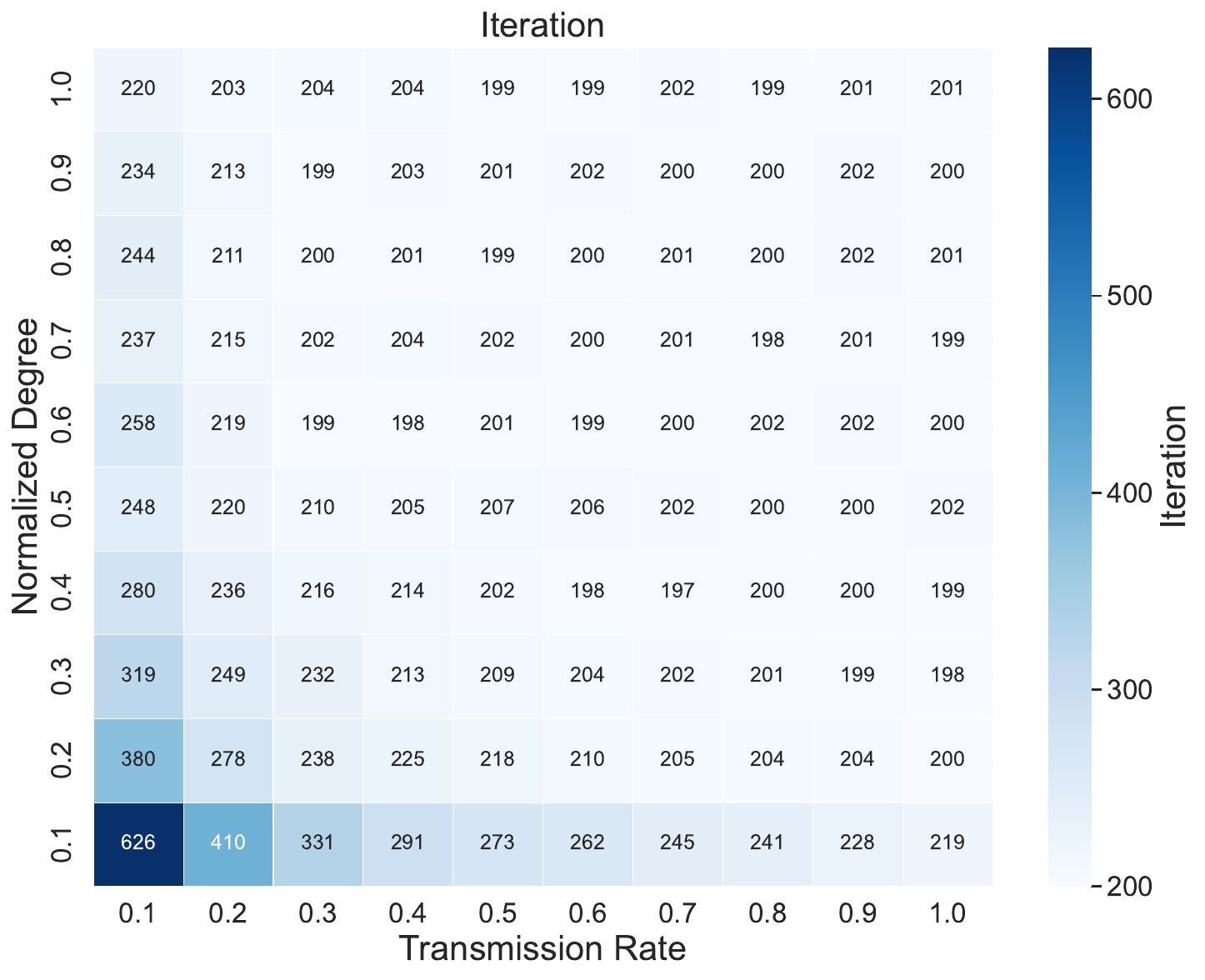}~
		\includegraphics[width=.325\textwidth]{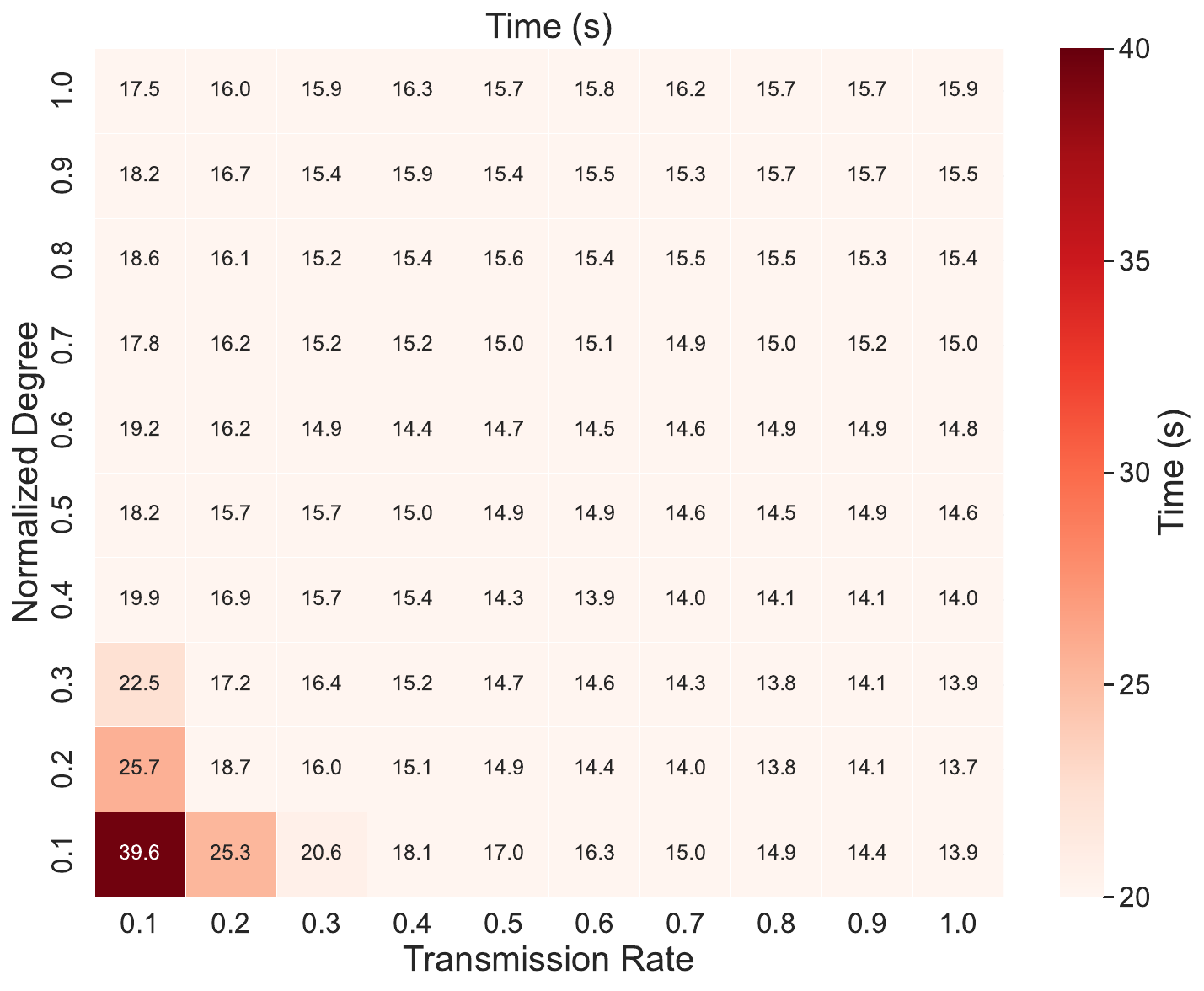}
		\caption{Heatmap of final objective, iteration, and runtime for PaME solving Example \ref{eg5.1}.}\vspace{-2mm}
		\label{ex1-degree_vs_s}
	\end{figure*} 

	\begin{table}[!t]
		\renewcommand{\arraystretch}{1.25}
		\addtolength{\tabcolsep}{-2pt}
		\caption{Effect of larger $m$ for PaME solving Example \ref{eg5.1}.
			\label{tab:pame-self-comparison-msv}}
		\vspace{-2mm}
		\begin{center}
			\begin{tabular}{lcc c cc c lcc c cc}
				\hline
				& \multicolumn{2}{c}{$m=1000$}
				&& \multicolumn{2}{c}{$m=2000$}
				&& 
				& \multicolumn{2}{c}{$m=1000$}
				&& \multicolumn{2}{c}{$m=2000$} \\
				\cline{2-3} \cline{5-6} \cline{9-10} \cline{12-13}
				$s$
				& MSE & Iter.
				&& MSE & Iter.
				&& $v$
				& MSE & Iter.
				&& MSE & Iter. \\
				\hline
				0.1 & 0.462 & 96 && 0.136 & 64
				&& 0.1 & 0.136 & 63 && 0.029 & 45 \\
				
				0.2 & 0.136 & 63 && 0.029 & 45
				&& 0.2 & 0.030 & 46 && 0.013 & 36 \\
				
				0.3 & 0.054 & 51 && 0.015 & 39
				&& 0.3 & 0.016 & 40 && 0.011 & 32 \\
				
				0.4 & 0.029 & 45 && 0.012 & 36
				&& 0.4 & 0.013 & 36 && 0.011 & 320 \\
				
				0.5 & 0.019 & 41 && 0.011 & 30
				&& 0.5 & 0.011 & 34 && 0.010 & 29 \\
				
				0.6 & 0.015 & 39 && 0.011 & 31
				&& 0.6 & 0.011 & 32 && 0.010 & 28 \\
				\hline
			\end{tabular}
		\end{center}
		\vspace{-3mm}
	\end{table}
	
	\subsection{Comparison with other DFL algorithms}
	To further evaluate the performance of PaME, we benchmark it against four state-of-the-art baseline methods: D-PSGD \cite{lian2017can} and DFedSAM \cite{shi2023improving},  BEER \cite{zhao2022beer}, and ANQ-NIDS \cite{michelusi2022finite}. The latter two algorithms incorporate compression techniques to enhance communication efficiency. To ensure fair comparison, the communication period and participation rate are standardized across all methods.
	
 {\it 6) Comparison of convergence speed:} Fig.~\ref{ex2-acc-vs-cr} presents the convergence curves for Example~\ref{eg5.2}, with the number of nodes varying across ${m \in \{32,64,128\}}$ and the model dimension fixed at ${n = 1000}$. The results indicate that increasing $m$ accelerates convergence. PaME consistently outperforms all competing algorithms in terms of convergence speed, requiring the fewest communication rounds to achieve the highest test accuracy. DFedSAM generally secures the second-best performance, whereas BEER and ANQ-NIDS exhibit slower convergence, likely due to information loss induced by compression.
	
	 {\it 7) Comparison of communication efficiency:}
	Fig.~\ref{ex2-cr-vs-m} and Fig.~\ref{ex2-dtv-vs-m} assess the communication efficiency for Example~\ref{eg5.2} when varying number of nodes ${m \in \{32,64,128\}}$ and model dimension ${n \in \{1000,5000,10000\}}$. Fig.~\ref{ex2-cr-vs-m} reports the number of communication rounds (CR) each algorithm needs to reach convergence, and PaME consistently requires the fewest CR across all configurations. Fig.~\ref{ex2-dtv-vs-m} further compares the total data transmission volume required by each algorithm to reach convergence. Across all examined settings, PaME attains the lowest communication cost, typically reducing the transmitted volume by at least $50\%$ compared with the other methods in most configurations. Together with the convergence results, these findings indicate that PaME achieves a more favorable trade-off between accuracy and communication efficiency.
	
		\begin{figure*}[t]
		\centering
		\includegraphics[width=.325\textwidth]{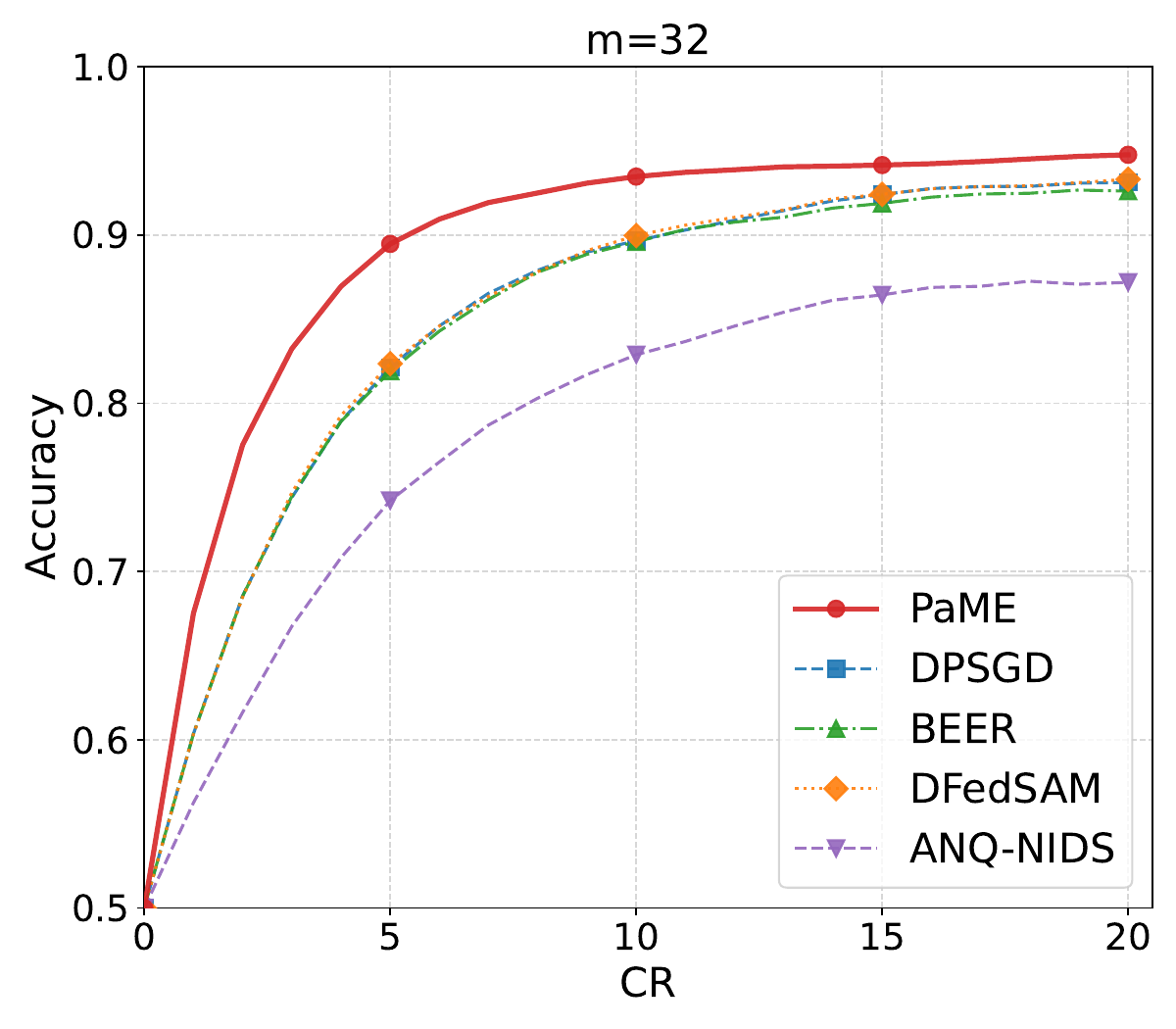}~
		\includegraphics[width=.325\textwidth]{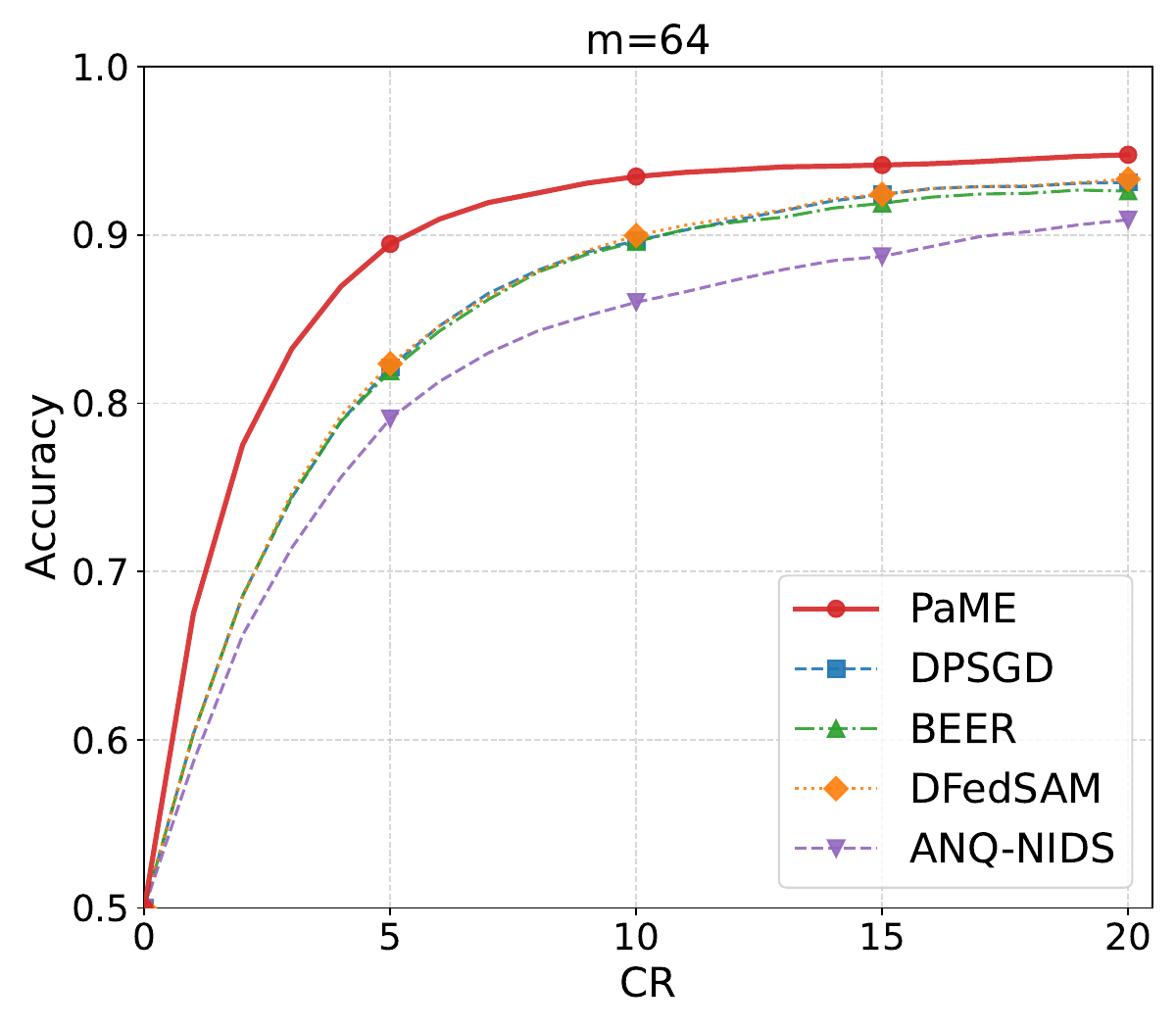}~
		\includegraphics[width=.325\textwidth]{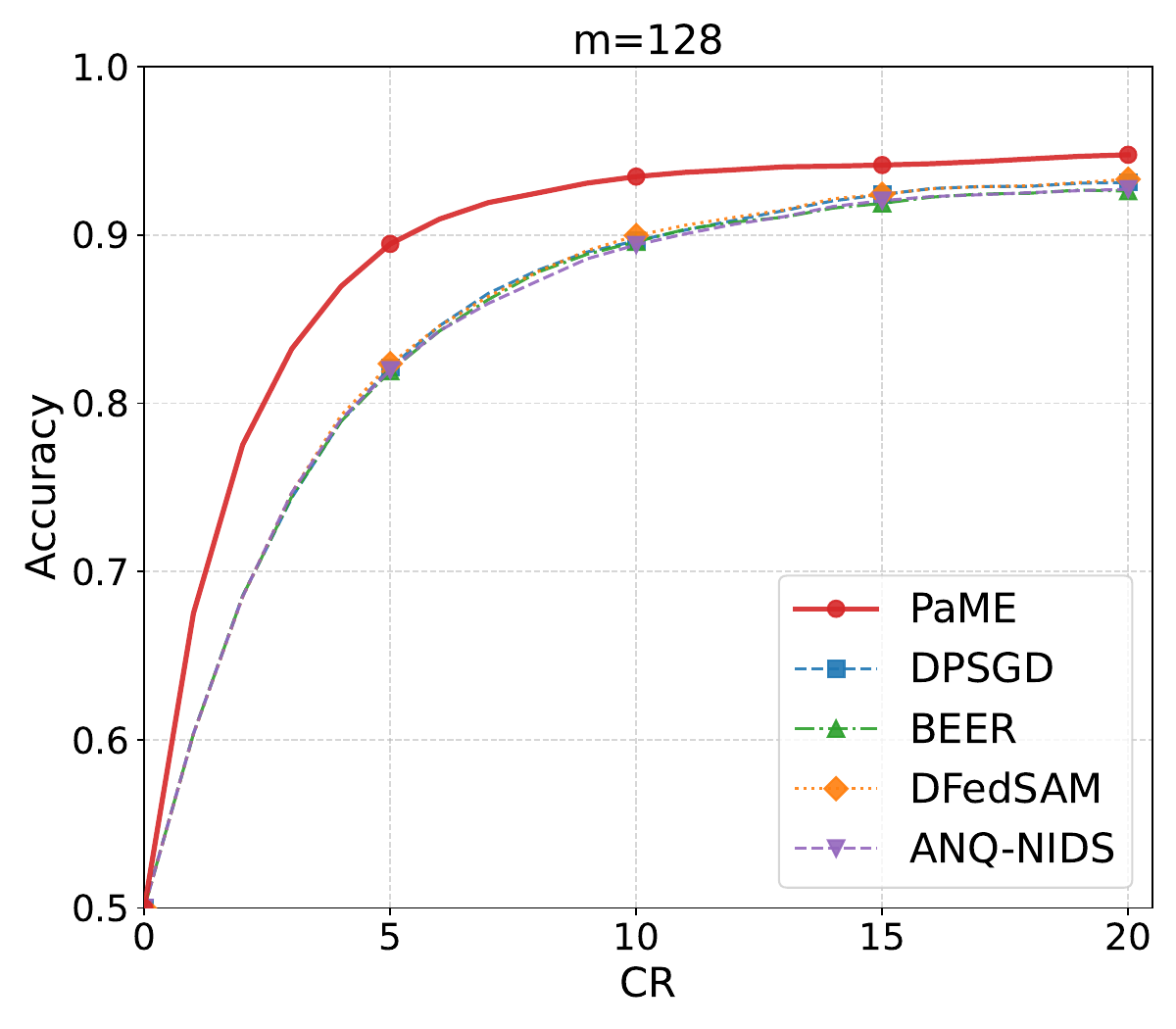}
		\caption{Accuracy v.s. CR for different algorithms solving Example \ref{eg5.2}.}\vspace{-1mm}
		\label{ex2-acc-vs-cr}
	\end{figure*} 
	
	\begin{figure*}[t]
		\centering
		\includegraphics[width=.325\textwidth]{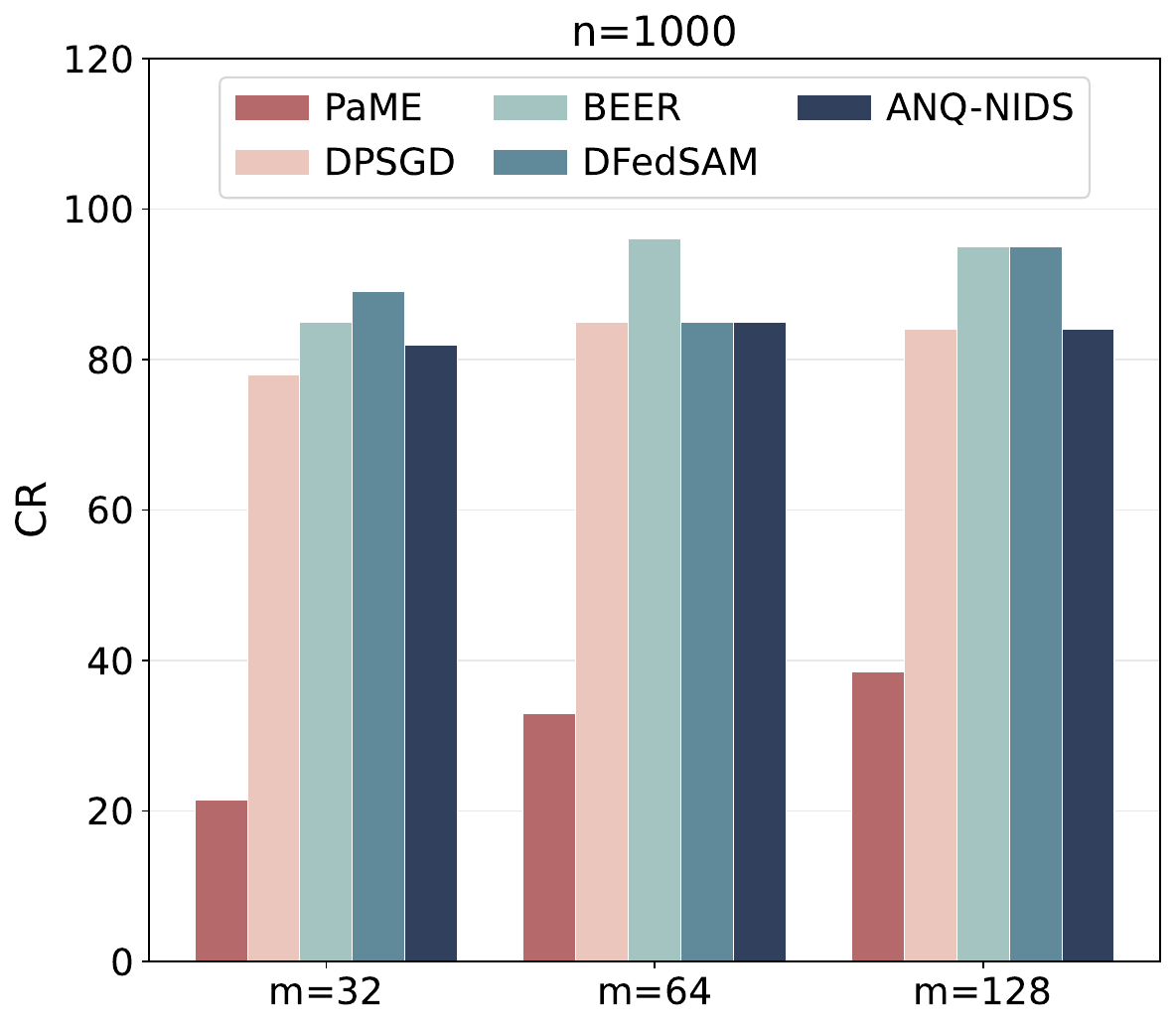}~
		\includegraphics[width=.325\textwidth]{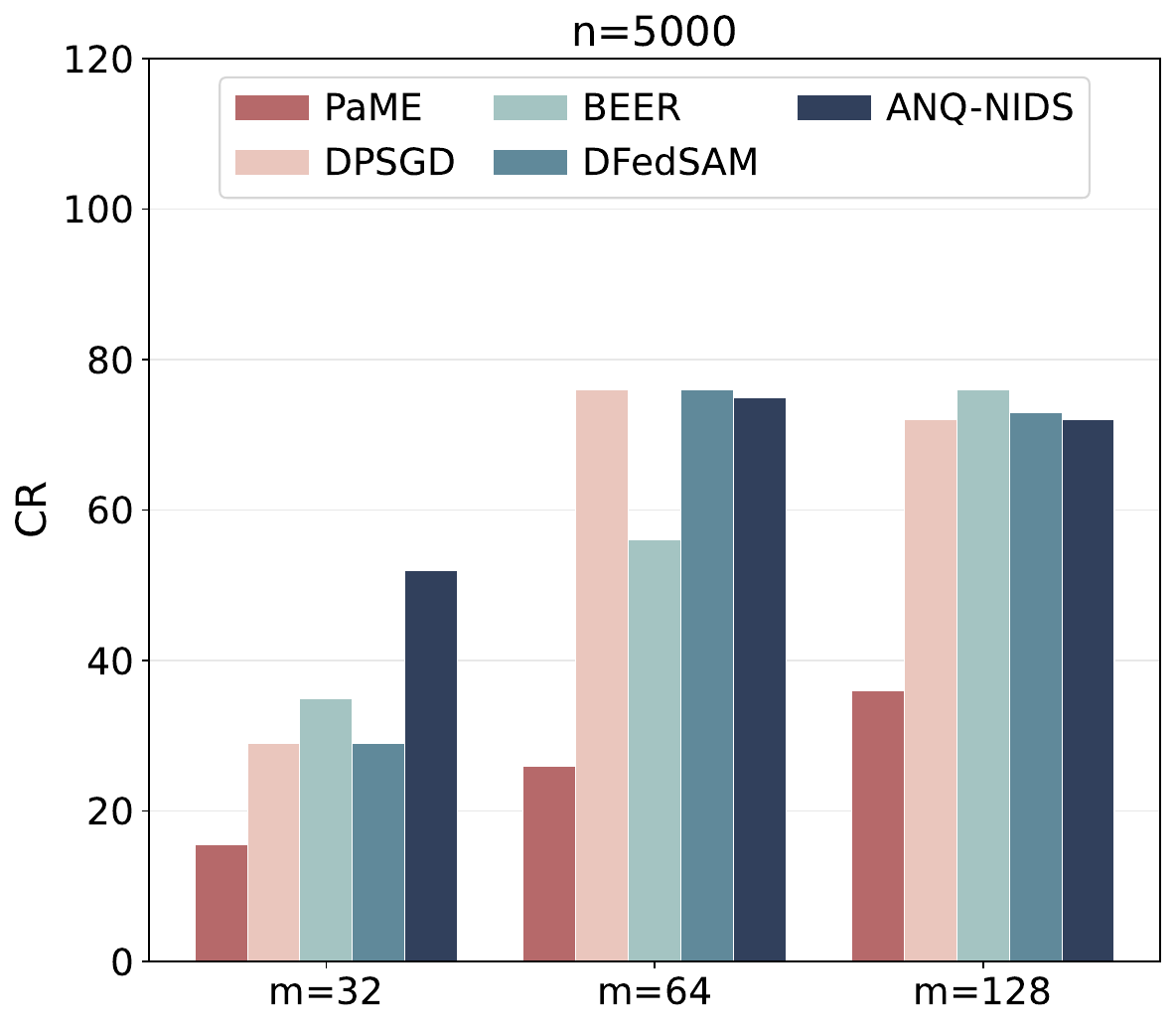}~
		\includegraphics[width=.325\textwidth]{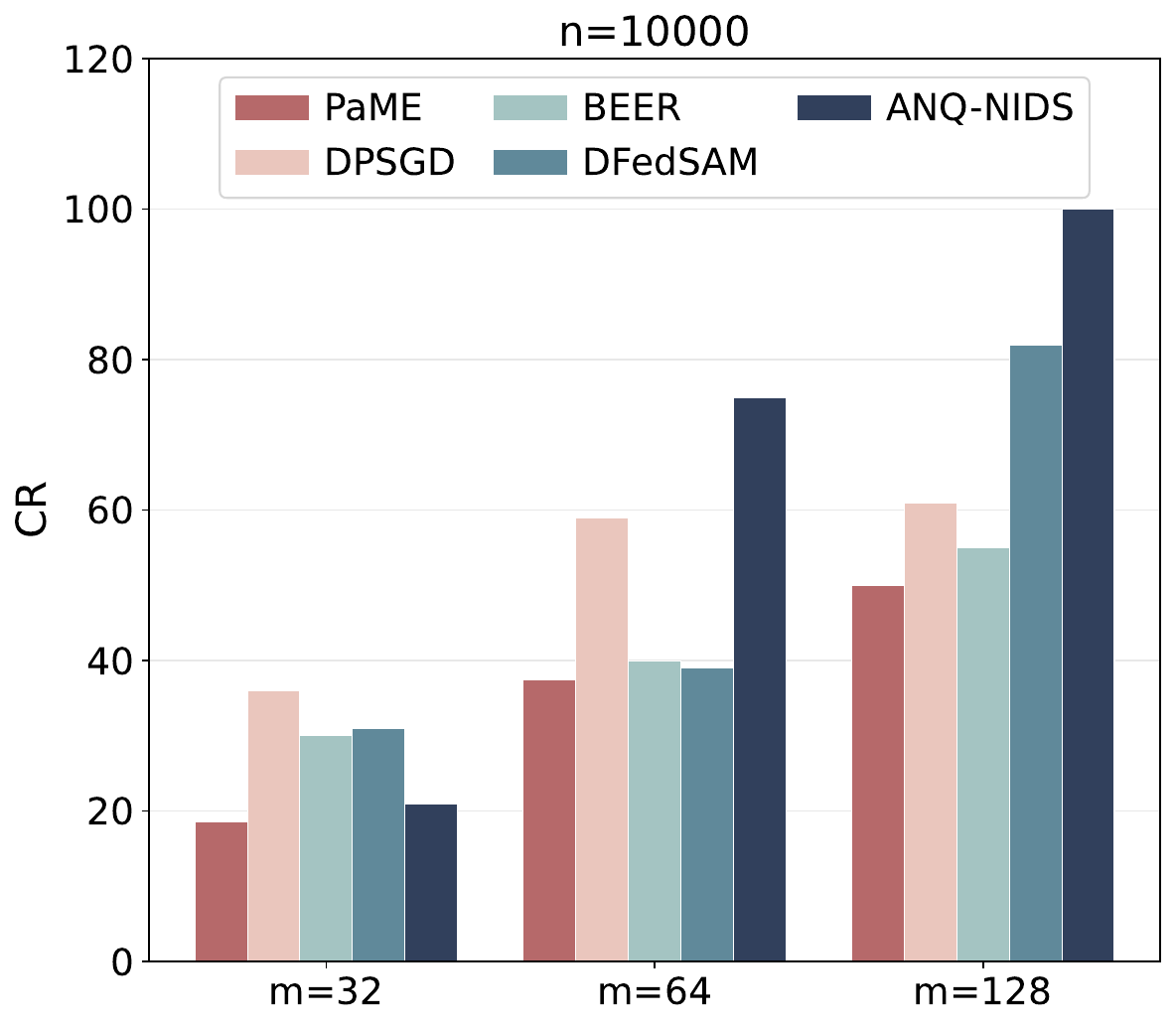}
		\caption{CR v.s. m for different algorithms solving Example \ref{eg5.2}.}\vspace{-2mm}
		\label{ex2-cr-vs-m} 
	\end{figure*} 
	
		\begin{figure*}[t]
		\centering
		\includegraphics[width=.325\textwidth]{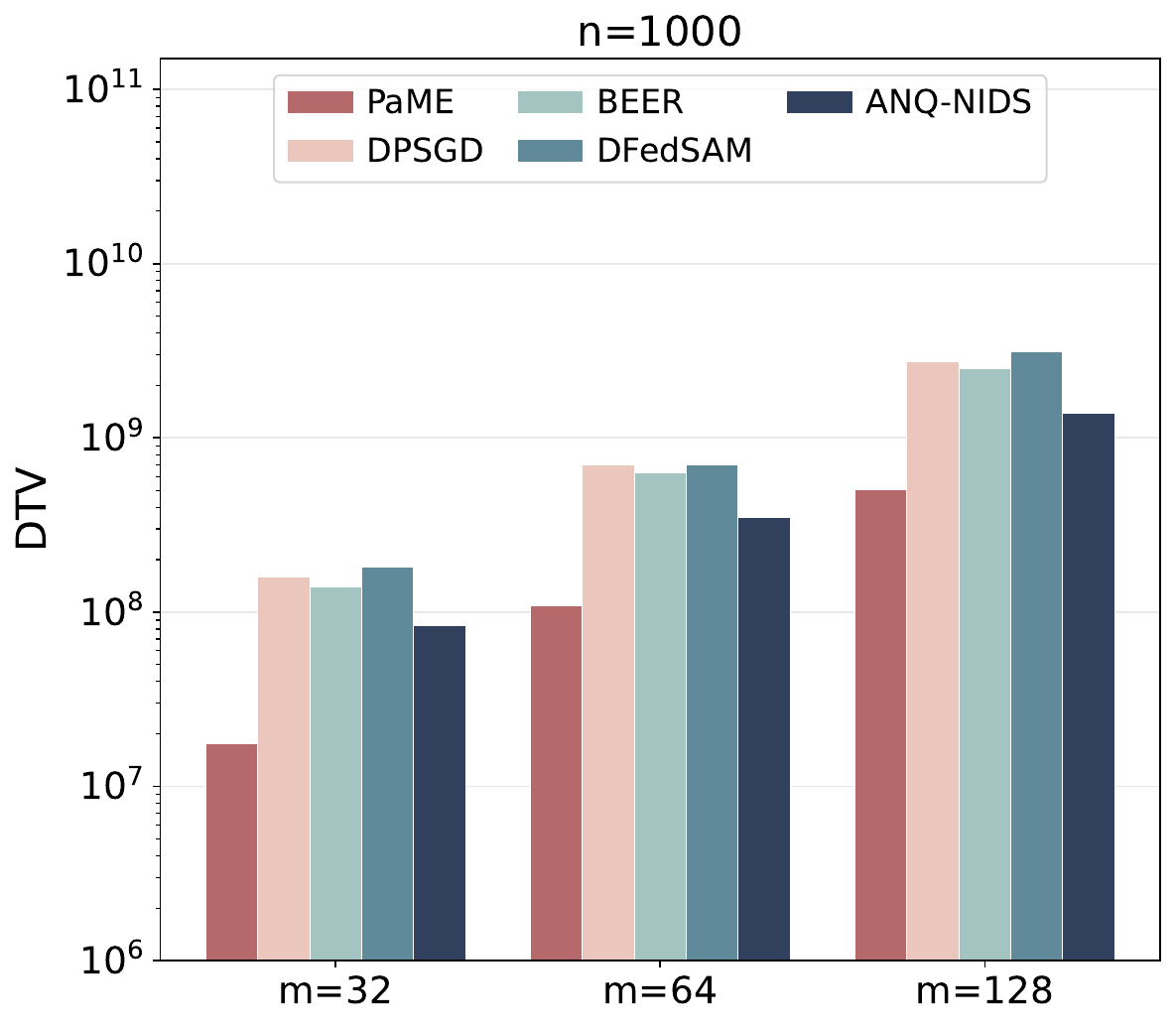}~
		\includegraphics[width=.325\textwidth]{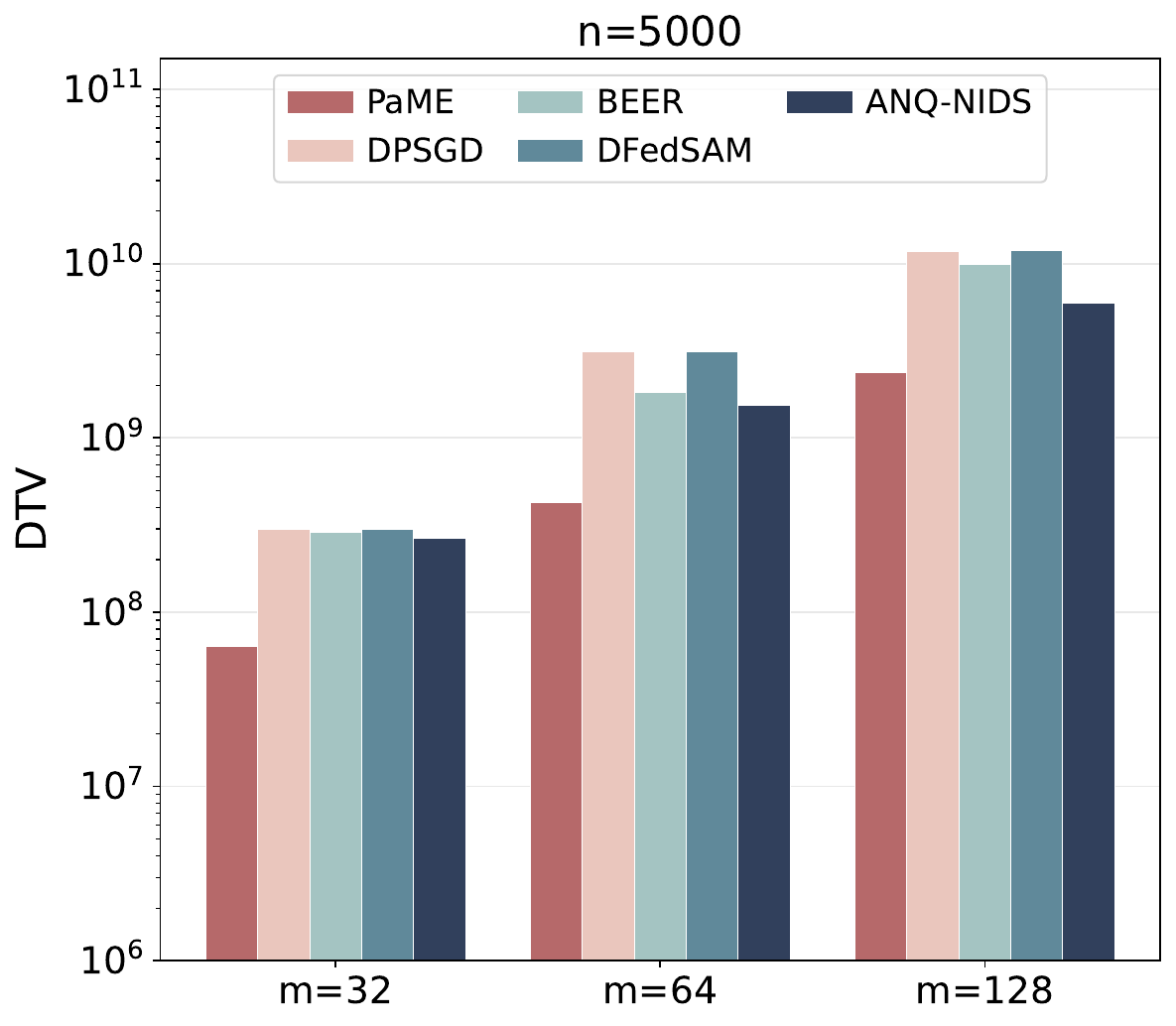}~
		\includegraphics[width=.325\textwidth]{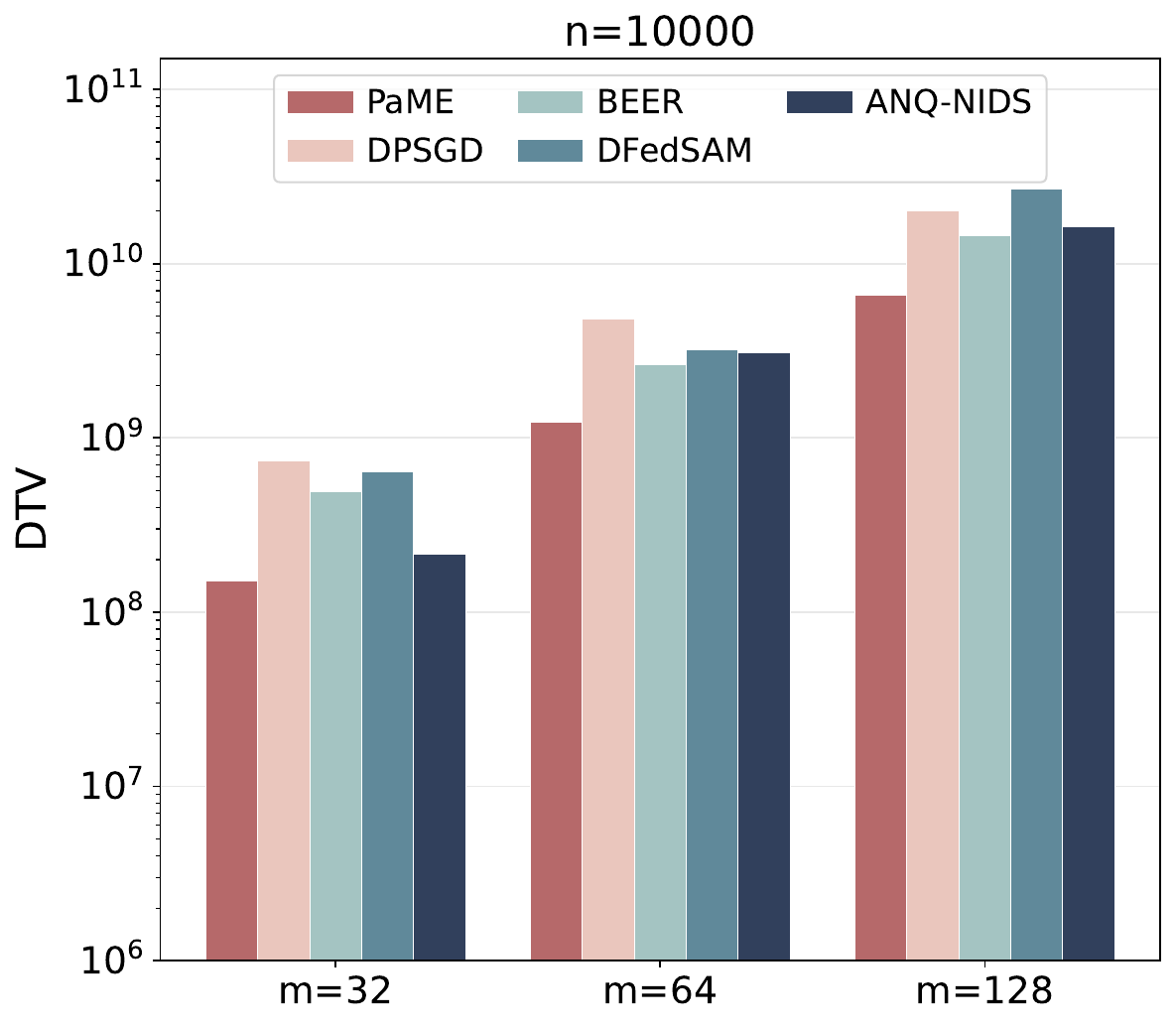}
		\caption{Data transmission volume (DTV) v.s. m for different algorithms solving Example \ref{eg5.2}.}
		\label{ex2-dtv-vs-m} \vspace{-2mm}
	\end{figure*} 
	
	 {\it 8) Comparison of data distribution:}
	Fig. \ref{ex3-acc} depicts the convergence trajectories for Example \ref{eg5.3} under varying degrees of data heterogeneity, simulated by assigning each node samples from ${C \in \{1, 7, 10\}}$ distinct categories, where a lower value of $C$ corresponds to a higher level of heterogeneity (non-IID). As expected, increased heterogeneity leads to performance degradation across all evaluated methods. Nevertheless, PaME consistently maintains the fastest convergence speed across all scenarios. Notably, even under the most extreme setting (i.e., ${C = 1}$), PaME outperforms competing algorithms and achieves superior final accuracy.
	
Fig. \ref{ex4-acc} extends this analysis to Example \ref{eg5.4}  on the CIFAR-10 dataset by comparing IID settings with Dirichlet partitions parameterized by $\beta \in \{0.3, 0.6\}$, where a lower $\beta$ signifies higher data heterogeneity. The results indicate that PaME consistently outperforms other decentralized baselines. Although higher levels of non-IID data inherently complicate model consensus, PaME exhibits robust convergence stability and achieves the highest accuracy among the compared methods.
	
	Fig.~\ref{ex5-acc} further evaluates the five algorithms on the more challenging Tiny-ImageNet setting under IID and Dirichlet non-IID partitions. The results show that PaME consistently achieves the highest accuracy across all three data settings. Although the non-IID partitions make the learning task more difficult and reduce the accuracy of all methods, PaME maintains stable convergence and a
	clear advantage over the compared decentralized baselines. This further confirms the robustness of PaME on more complex image classification tasks.

	\begin{figure*}[t]
		\centering
		\includegraphics[width=.325\textwidth]{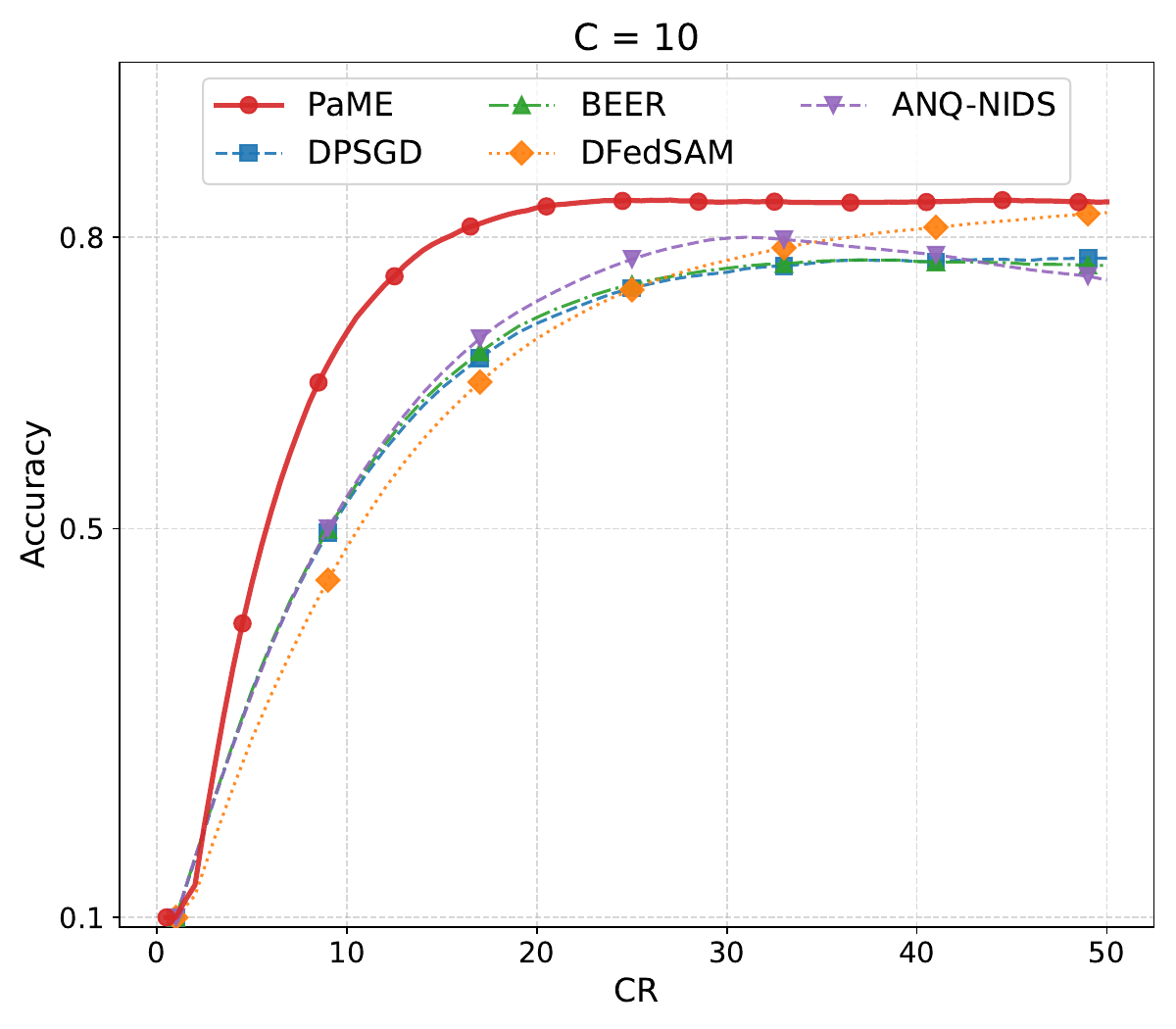}
		\includegraphics[width=.325\textwidth]{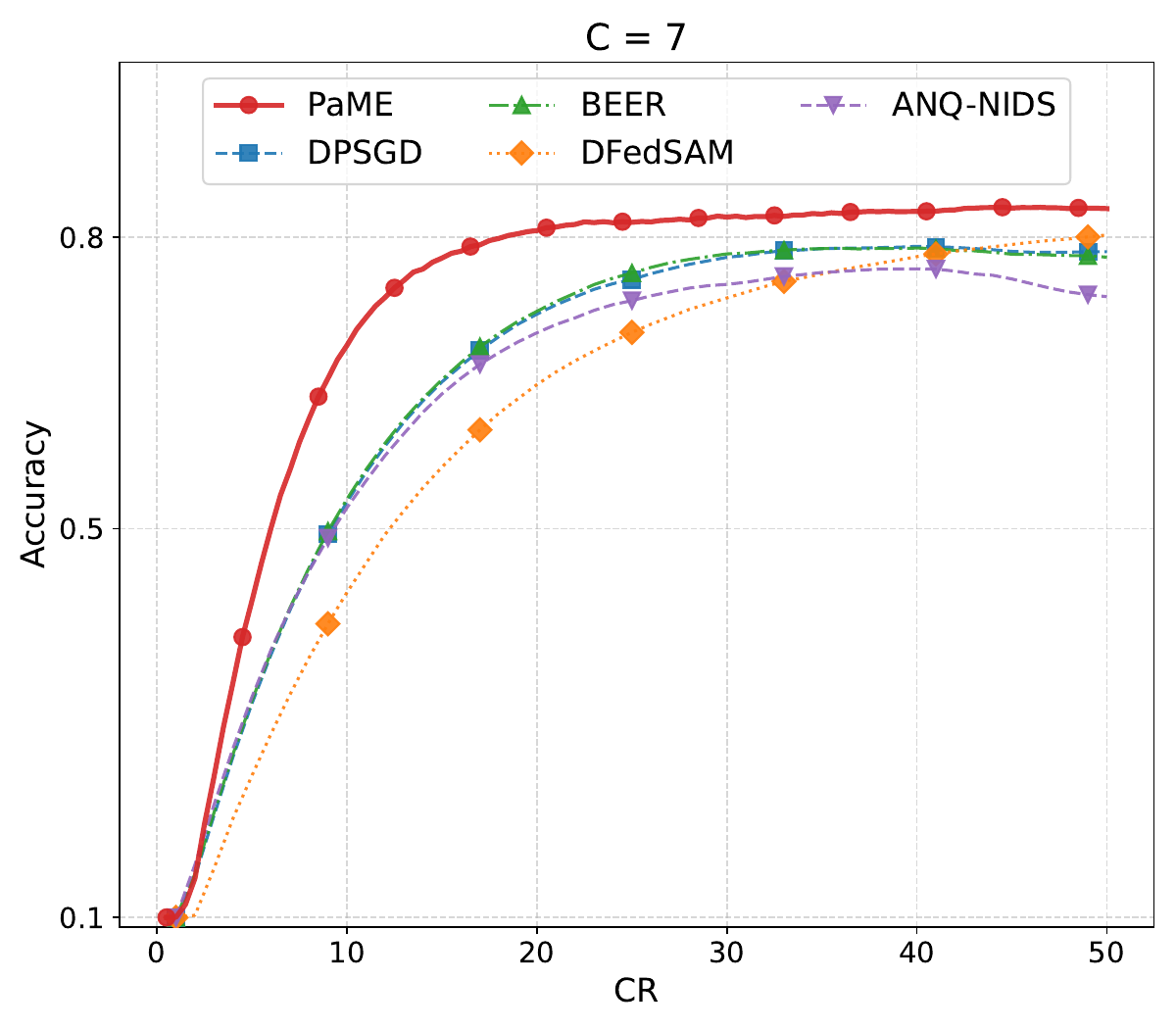}
		\includegraphics[width=.325\textwidth]{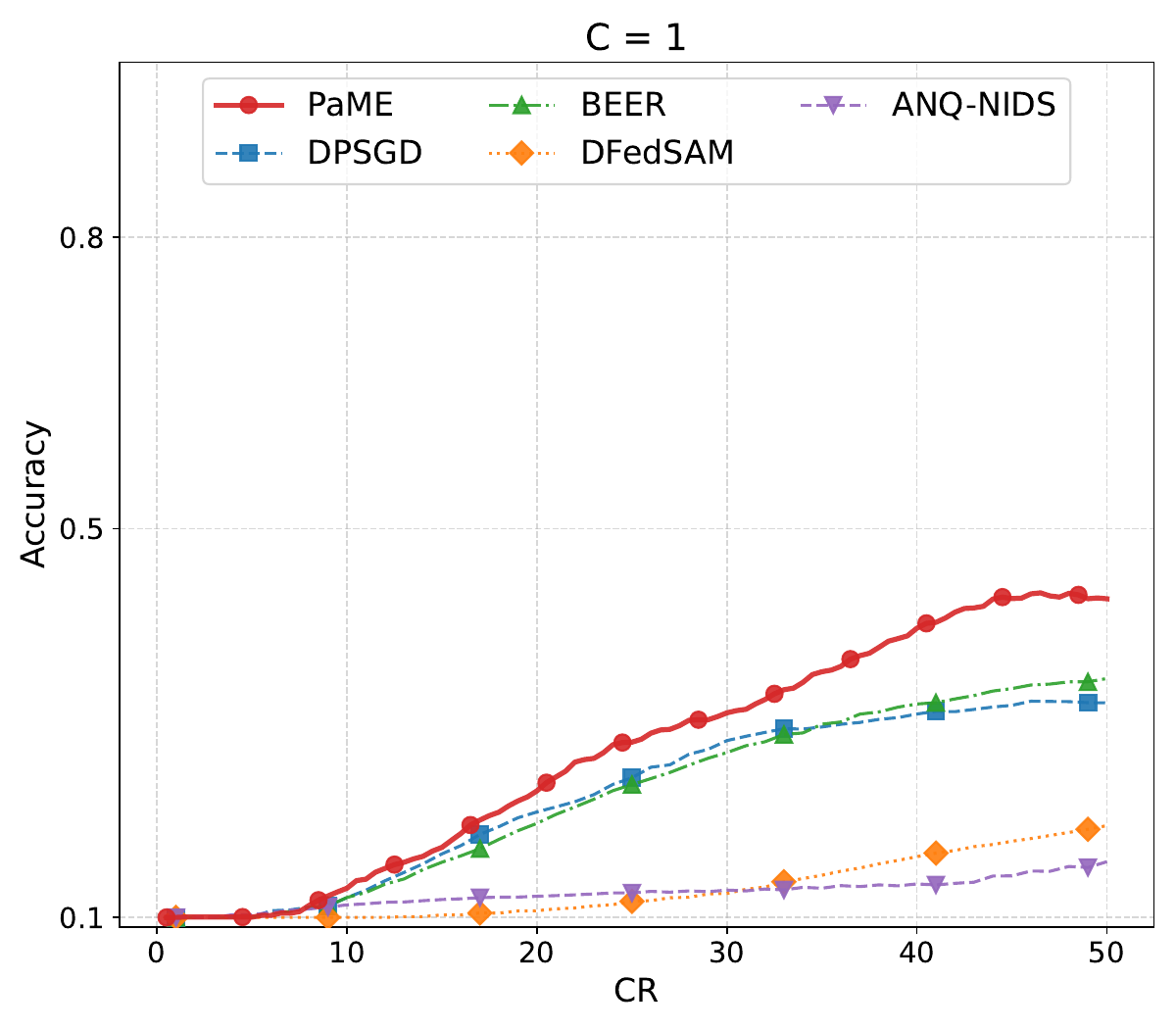}
		\caption{Accuracy v.s. CR for different algorithms solving Example \ref{eg5.3} with the Fashion-MNIST dataset.}\vspace{-2mm}
		\label{ex3-acc}
	\end{figure*} 
	
	\begin{figure*}[t]
		\centering
		\includegraphics[width=.325\textwidth]{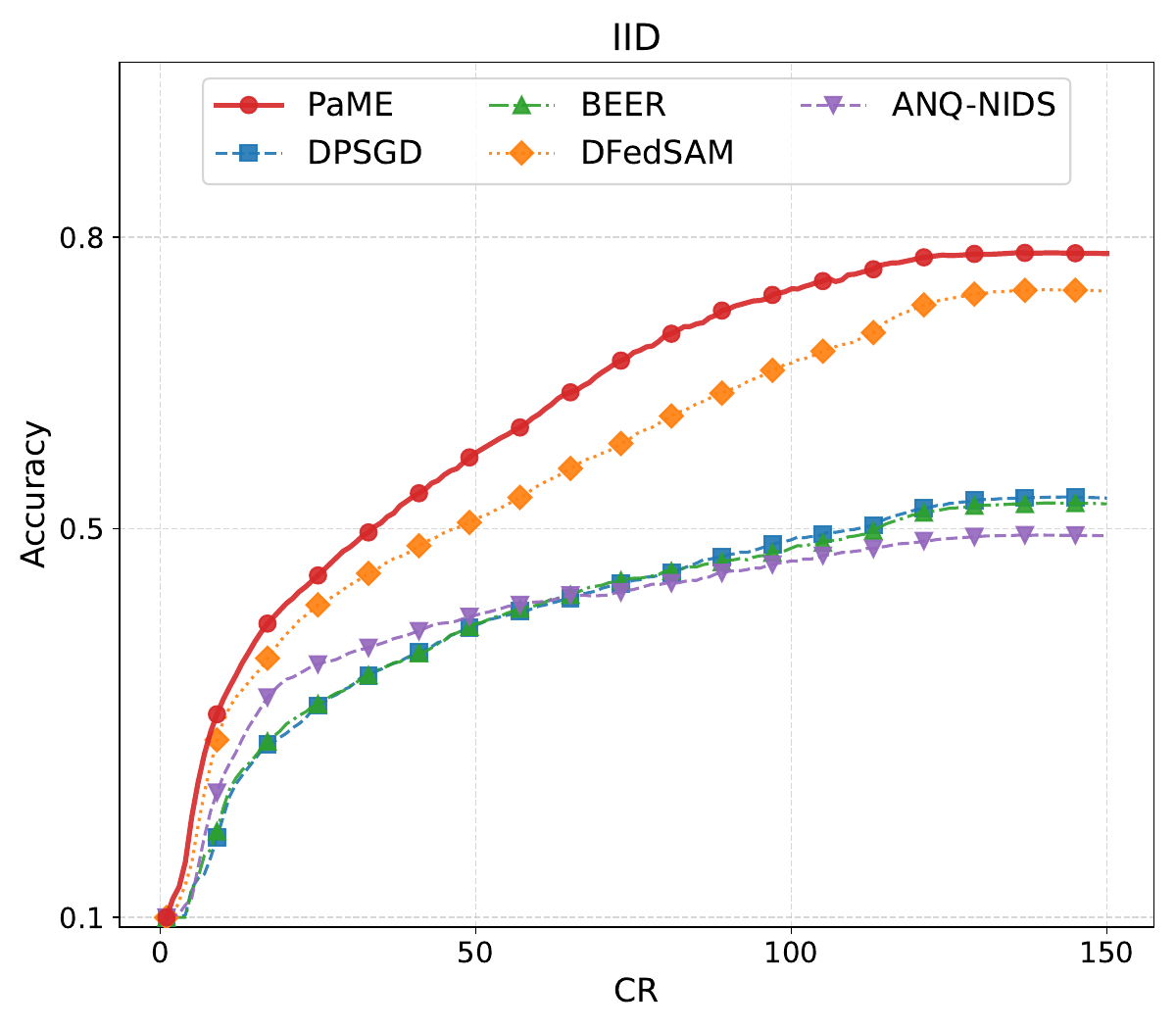}
		\includegraphics[width=.325\textwidth]{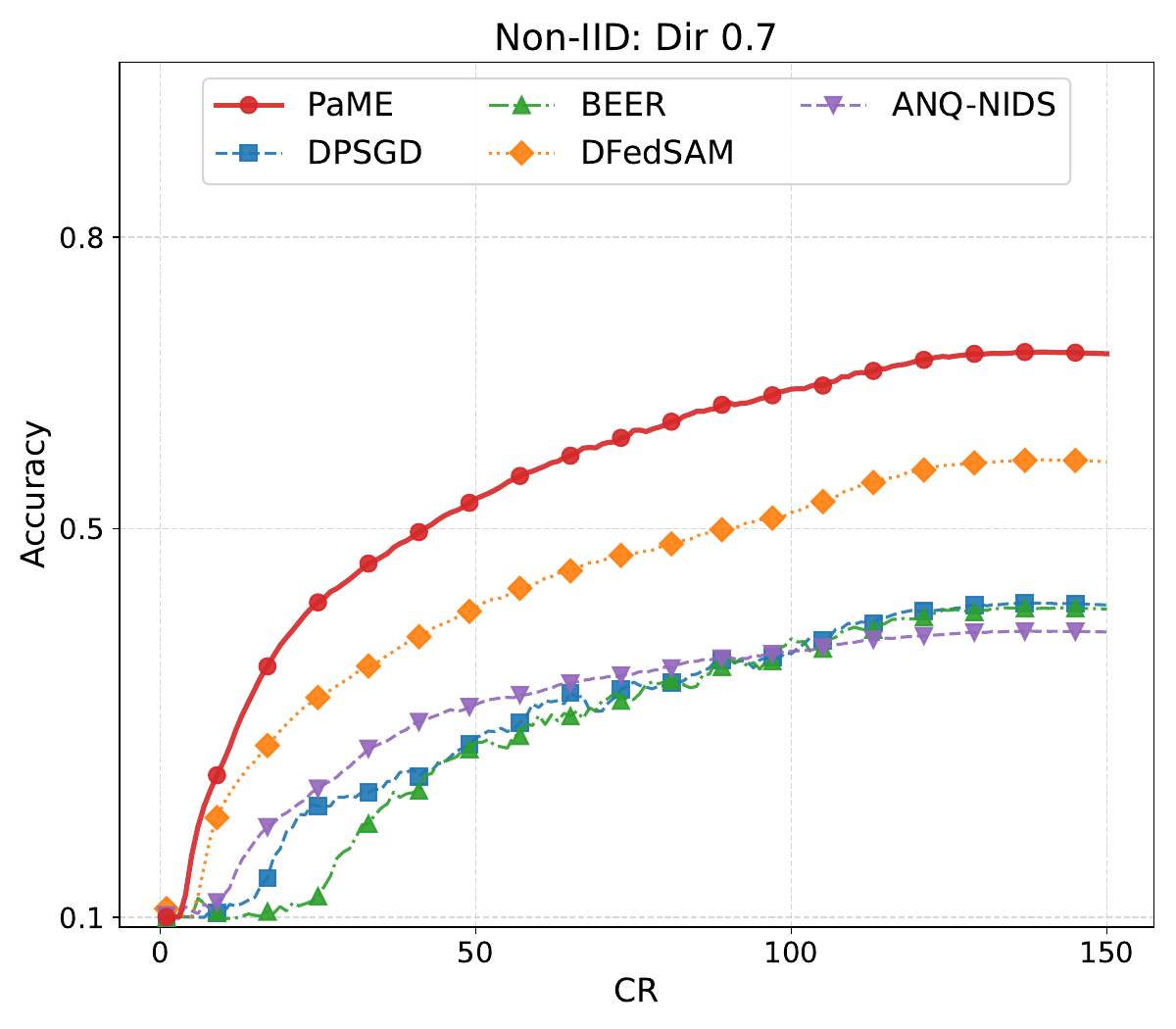}
		\includegraphics[width=.325\textwidth]{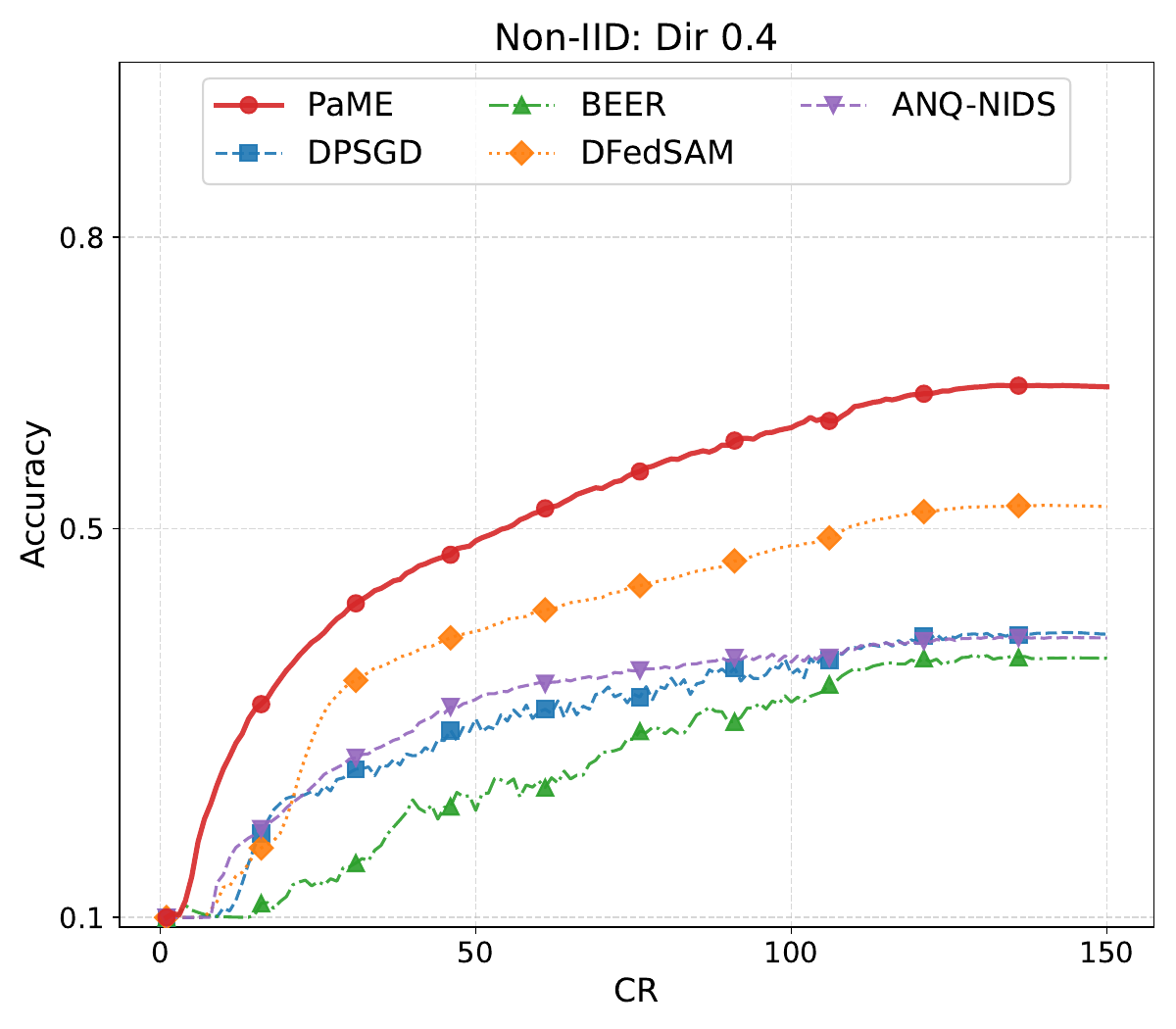}
		\caption{Accuracy v.s. CR for different algorithms solving Example \ref{eg5.4} with the CIFAR-10 dataset.}\vspace{-2mm}
		\label{ex4-acc}
	\end{figure*}

	{{\begin{figure*}[t]
		\centering
		\includegraphics[width=.325\textwidth]{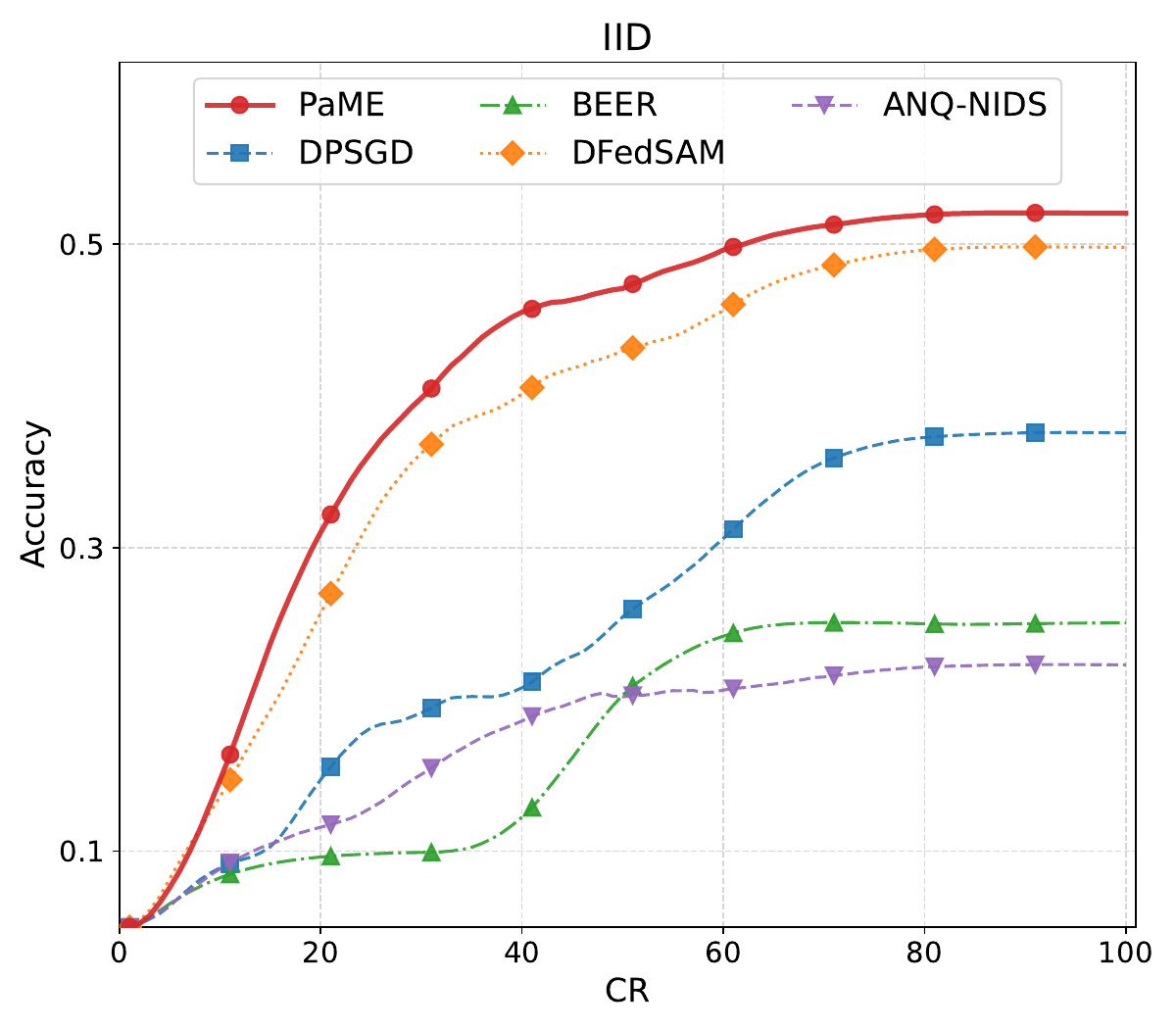}
		\includegraphics[width=.325\textwidth]{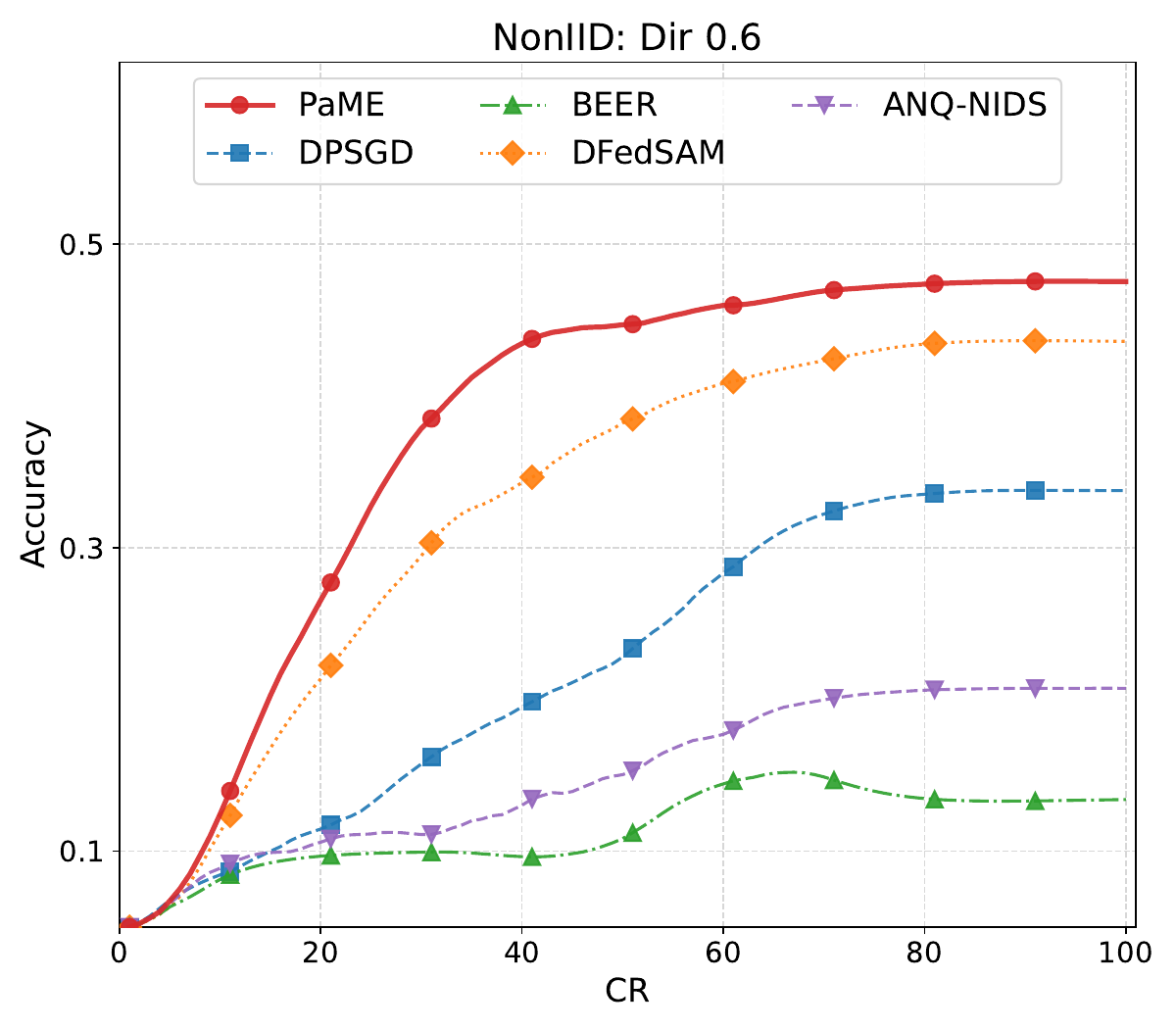}
		\includegraphics[width=.325\textwidth]{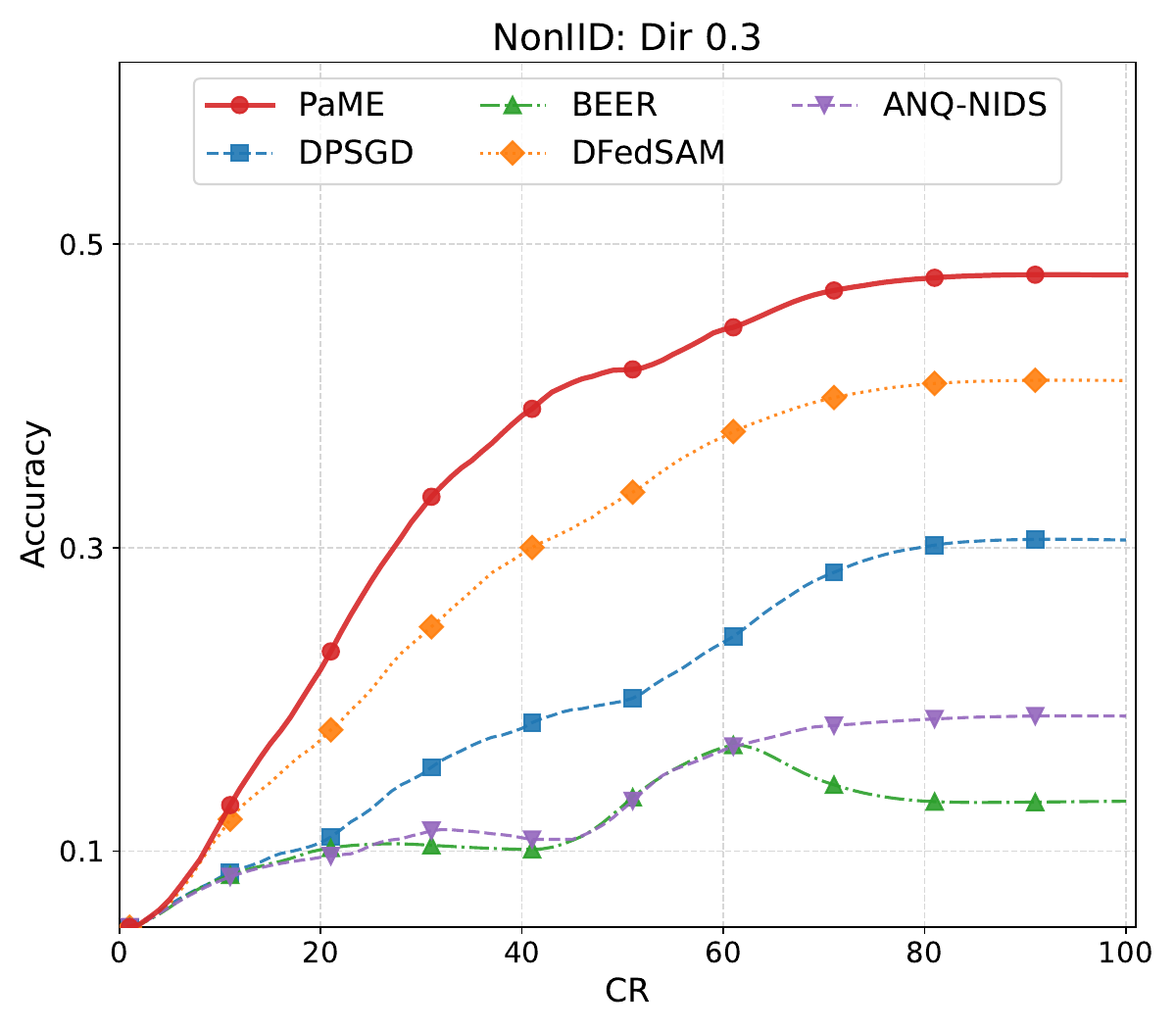}
		\caption{Accuracy v.s. CR for different algorithms solving Example \ref{eg5.4} with the Tiny-ImageNet dataset.}\vspace{-2mm}
		\label{ex5-acc}
	\end{figure*} }}
	\section{Conclusion}
	\label{Conclusion}
	This paper introduces PaME, a DFL algorithm to improve the trade-off among communication efficiency, privacy preservation, and model utility.   Its effectiveness stems from a novel partial message exchange mechanism, which is well suited to a variety of real-world scenarios, including unreliable wireless communications and edge computing environments. Moreover, rigorous theoretical guarantees are established under mild assumptions, which relax strict conditions significantly and thus enhance the robustness of DFL, particularly in the presence of heterogeneous data, highlighting  strong potential of PaME for practical applications.
	
	\bibliographystyle{IEEEtran}
	\bibliography{IEEEabrv,PaME-ref} 


 \vspace{-20pt}
 
 \begin{IEEEbiography}[{\includegraphics[width=1in,height=1.25in,clip,keepaspectratio]{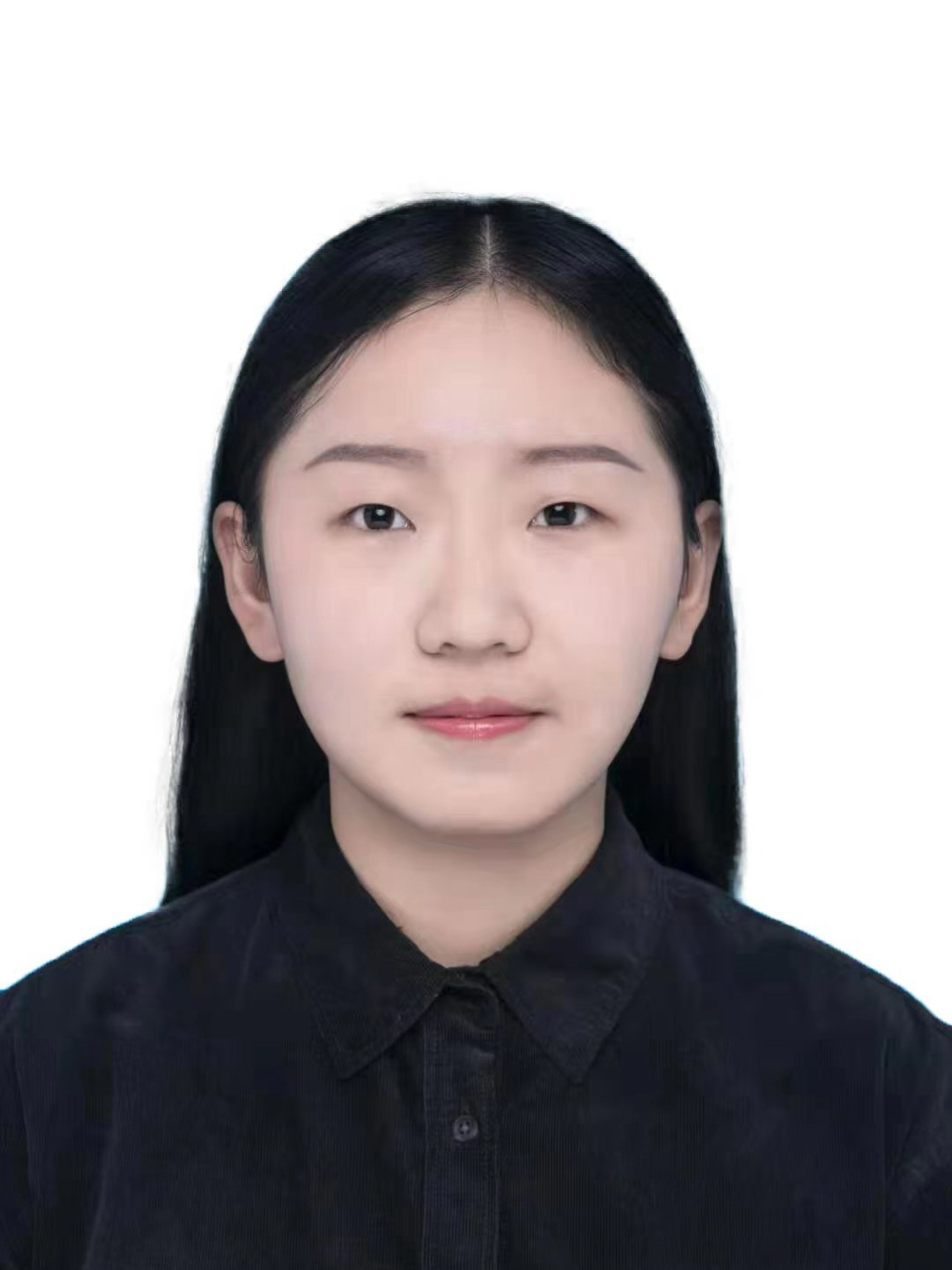}}]{Shan Sha}
received the B.S. degree from the School of Mathematics and Statistics, Beijing Jiaotong University, Beijing, China, in 2019, where she is currently working toward the PhD degree. From 2023 to 2024, she was a visiting Ph.D. student with the Intelligent Transmission and Processing Lab in Imperial College London. Her research interests include optimization theory and algorithms for federated learning.
\end{IEEEbiography}
 \vspace{-15pt}

\begin{IEEEbiography}  [{\includegraphics[width=1in,height=1.25in,clip,keepaspectratio]{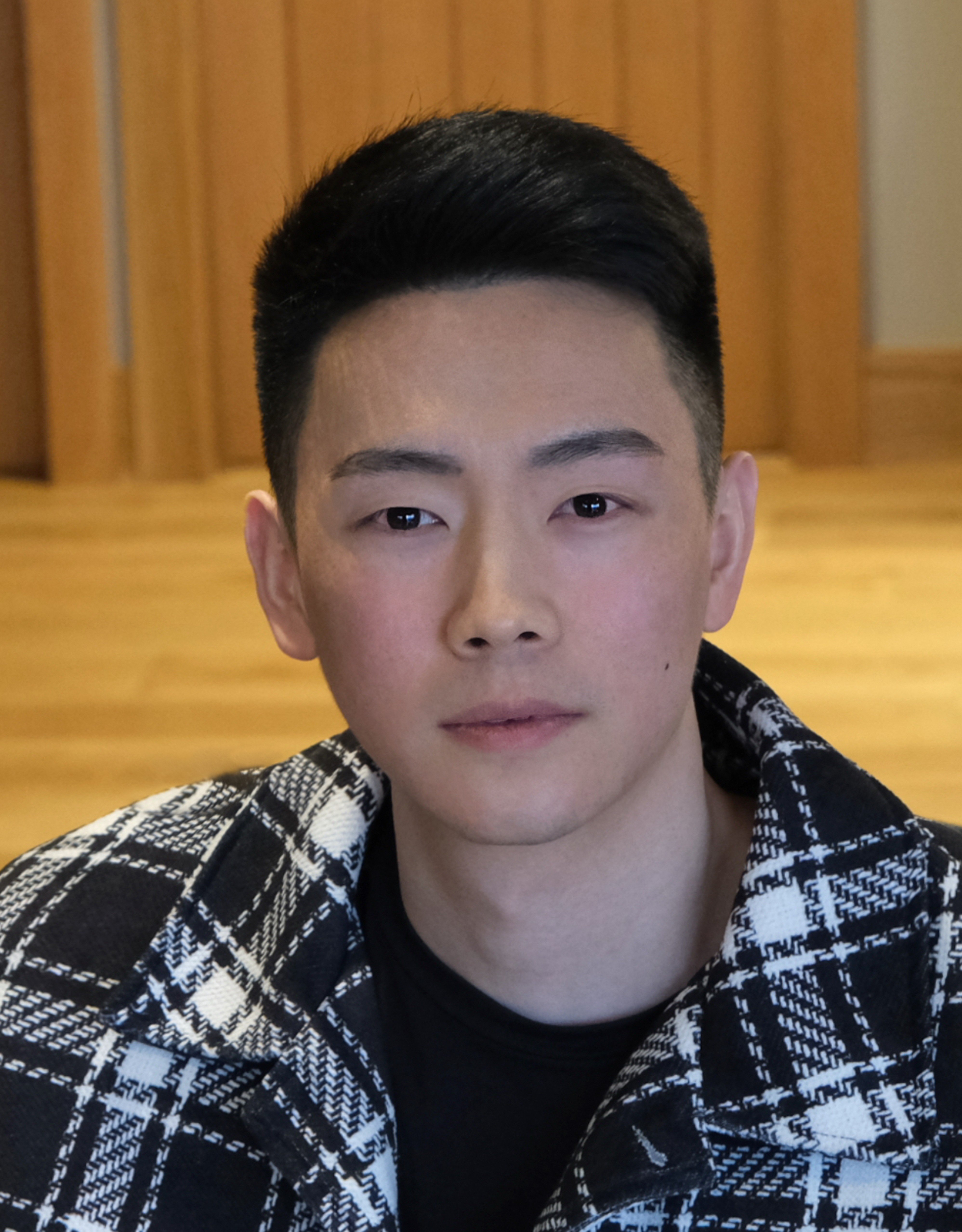}}]{Shenglong Zhou} received the Ph.D. degree from the University of Southampton, Southampton, U.K., in 2018, where he was a Research Fellow and a Teaching Fellow. From 2021 to 2023, he was a Research Fellow with Imperial College London, London, U.K. He is currently a Professor with Beijing Jiaotong University, Beijing, China.  His research interests include the theory and methods for optimization in the areas of sparse, low-rank matrix, 0/1 loss, and machine learning-related optimization.
\end{IEEEbiography}

 \vspace{-15pt}
\begin{IEEEbiography}[{\includegraphics[width=1in,height=1.25in,clip,keepaspectratio]{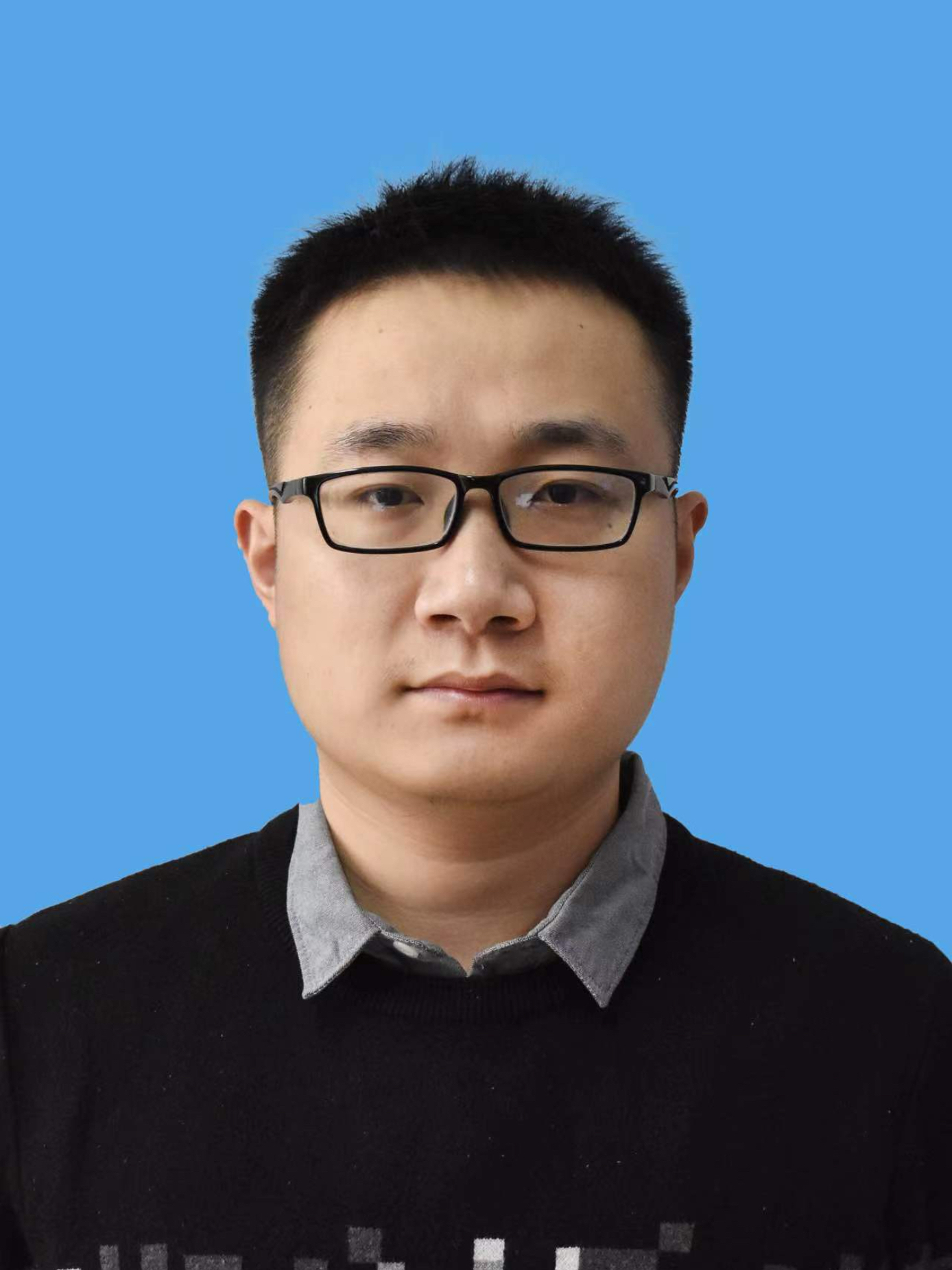}}]{Xin Wang}
 is a Lecturer at Beijing Jiaotong University, Beijing, China. He received his Ph.D. degree from the School of Mathematics and Statistics, Beijing Jiaotong University. His research interests include high-dimensional statistical analysis, optimization theory and algorithms.
\end{IEEEbiography}
 \vspace{-15pt}

\begin{IEEEbiography}[{\includegraphics[width=1in,height=1.25in,clip,keepaspectratio]{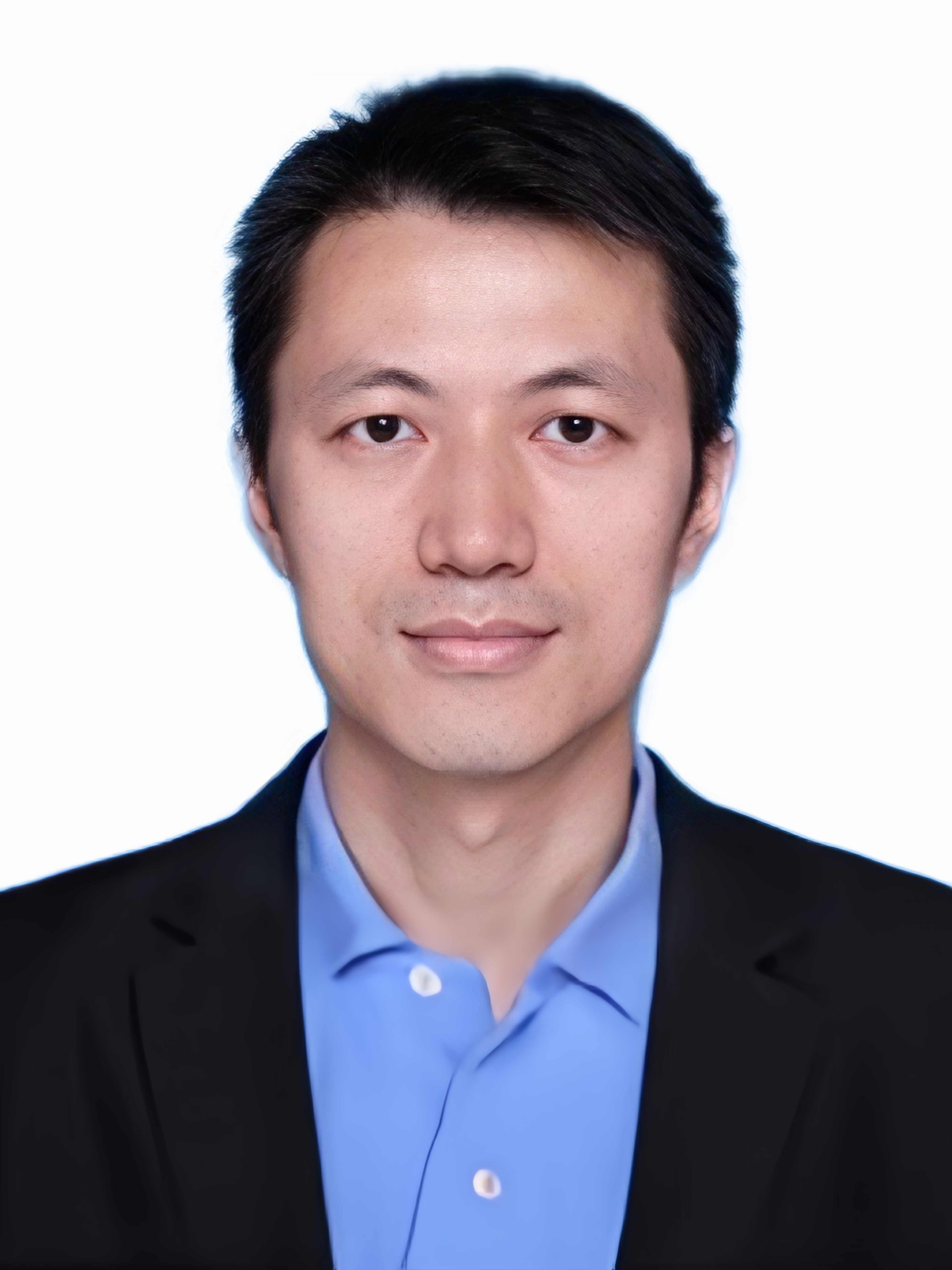}}]{Lingchen Kong}
received the Ph.D. degree from the School of Science, Beijing Jiaotong University, Beijing, China, in 2007. He is currently a professor of School of Mathematics and Statistics at Beijing Jiaotong University, Beijing, China. He was a Visiting Scholar with the Department of Statistics, University of Minnesota, Twin Cities, USA, from 2014 to 2015. His research interests include the large scale optimization, sparse optimization, linear regression, matrix regression, clustering, high-dimensional statistical analysis, etc.
\end{IEEEbiography}

\begin{IEEEbiography}  [{\includegraphics[width=1in,height=1.25in,clip,keepaspectratio]{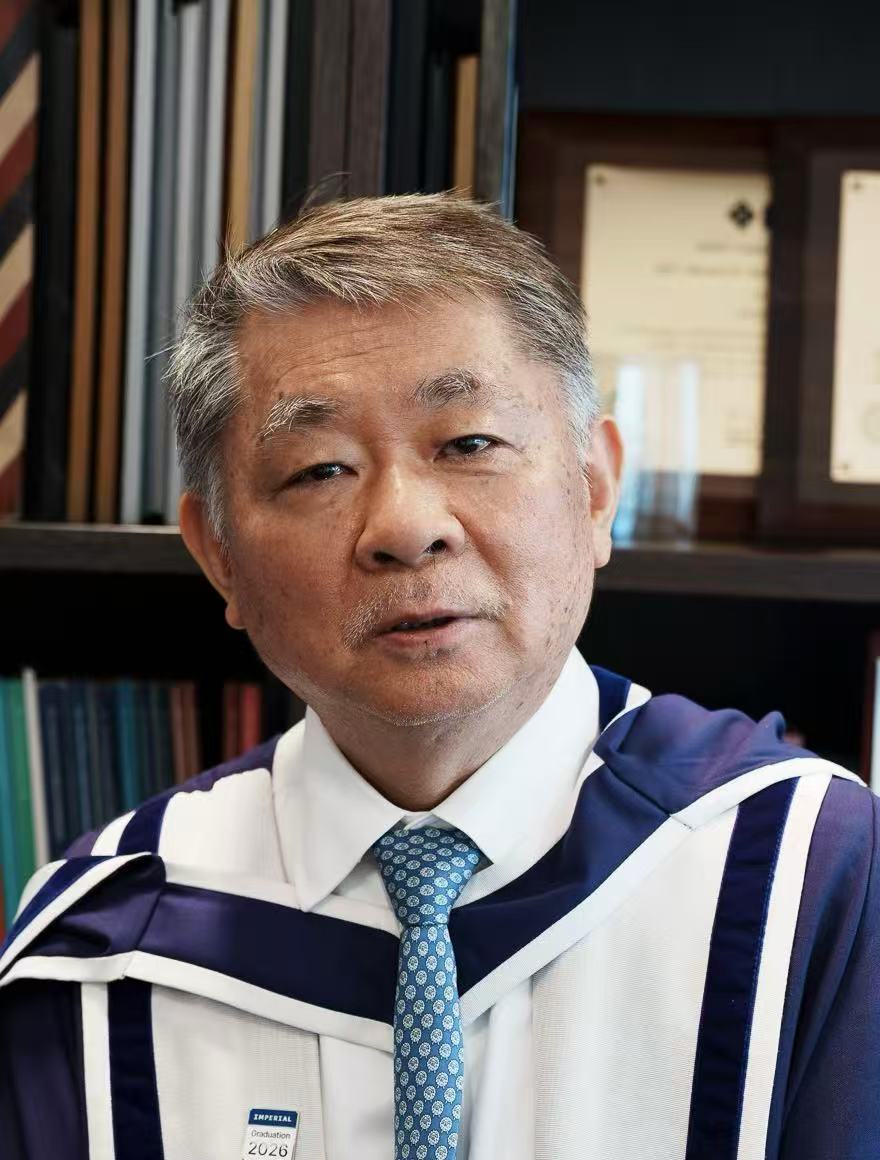}}]{Geoffrey Ye Li} is currently a Chair Professor at Imperial College London, UK. Before joining Imperial in 2020, he was a Professor at Georgia Institute of Technology for 20 years and a Principal Technical Staff Member with AT\&T Labs – Research (previous Bell Labs) for five years. He made fundamental contributions to orthogonal frequency division multiplexing (OFDM) for wireless communications, established a framework on resource cooperation in wireless networks, and introduced deep learning to communications. In these areas, he has published over 700 journal and conference papers in addition to over 40 granted patents. His publications have been cited over 80,000 times with an H-index over 130. He has been listed as a Highly Cited Researcher by Clarivate/Web of Science almost every year.

Dr. Geoffrey Ye Li was elected to Fellow of the Royal Academic of Engineering (FREng), IEEE Fellow, and IET Fellow for his contributions to signal processing for wireless communications. He received 2024 IEEE Eric E. Sumner Award, 2019 IEEE ComSoc Edwin Howard Armstrong Achievement Award, and several other awards from IEEE Signal Processing, Vehicular Technology, and Communications Societies.
\end{IEEEbiography}

\newpage

\setcounter{equation}{0}
\setcounter{theorem}{0}
\setcounter{section}{0}
\setcounter{algorithm}{0}
	\onecolumn
	\begin{center}
		
	\LARGE{Supplemental Material for\\
	``Decentralized Federated Learning by Partial Message Exchange"}	
\end{center}

		\section{\bf Proofs of Theorems in Section II}
		
	\subsection{\bf Proof of Theorem \ref{theorem1-unbiase}}
\begin{theorem}
Let $\overline{\w}$ be the average of $q$ vectors $\w_1$, $\w_2$, $\cdots$, ${\w_q \in \mathbb{R}^n}$. For each ${i\in[q]}$, independently construct a sparse vector ${\v_i\in\mathbb{R}^n}$  by uniformly selecting $s$ coordinates of $\w_i$ without replacement from $[n]$. Define indicator variables by
\begin{equation*}
\delta_{i\ell}=\begin{cases}
1,&\text{if $w_{i\ell}$ is selected},\\
0,& \text{otherwise},
\end{cases}\qquad\delta_\ell:=\sum_{i=1}^q\delta_{i\ell}.
\end{equation*}
and two averages $\overline{\v}$ and $\widetilde{\v}$ by
\begin{equation*}
\overline{v}_\ell=\begin{cases}
\dfrac{1}{\delta_\ell}\displaystyle{\sum_{i=1}^q} v_{i\ell},&\text{if}~\delta_\ell>0,\\
0,& \text{otherwise},
\end{cases}\qquad\widetilde{v}_\ell=\frac{1}{q}\sum_{i=1}^q v_{i\ell}
\end{equation*}
for any $\ell\in[n]$. 
Then
\begin{equation}\label{Ev-Ew}
\mathbb{E}\left(\overline{v}_\ell~|~\delta_\ell>0\right) = \overline{w}_\ell,\qquad
\mathbb{E}\left(\widetilde{v}_\ell~|~\delta_\ell>0\right)=\frac{\frac{s}{n}}{1-(1-\frac{s}{n})^q} \overline{w}_\ell.
\end{equation}
\end{theorem}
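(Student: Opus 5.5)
The plan is to fix a coordinate $\ell\in[n]$ and reduce everything to statements about the indicator variables $\delta_{1\ell},\dots,\delta_{q\ell}$, treating the vectors $\w_1,\dots,\w_q$ as deterministic.

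First, I would record the distribution of the indicators. Since each $\v_i$ is built by choosing $s$ of the $n$ coordinates uniformly without replacement, we have $\mathbb{P}(\delta_{i\ell}=1)=\binom{n-1}{s-1}/\binom{n}{s}=s/n=:p$. Because the $q$ selections are made independently, $\delta_{1\ell},\dots,\delta_{q\ell}$ are i.i.d.\ Bernoulli$(p)$. Consequently
\[
\mathbb{P}(\delta_\ell>0)=1-(1-p)^q>0 ,
\]
so conditioning on $\{\delta_\ell>0\}$ is legitimate. By construction $v_{i\ell}=\delta_{i\ell}w_{i\ell}$.

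For the second identity in \eqref{Ev-Ew}, linearity gives
\[
\mathbb{E}(\widetilde v_\ell\mid\delta_\ell>0)=\frac1q\sum_{i=1}^q w_{i\ell}\,\mathbb{E}(\delta_{i\ell}\mid\delta_\ell>0).
\]
Since $\{\delta_{i\ell}=1\}\subseteq\{\delta_\ell>0\}$, each conditional expectation equals $\mathbb{P}(\delta_{i\ell}=1)/\mathbb{P}(\delta_\ell>0)=p/(1-(1-p)^q)=c$. Substituting this yields $c\,\overline w_\ell$.

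For the first identity, on $\{\delta_\ell>0\}$ we have $\overline v_\ell=\sum_i w_{i\ell}\,\delta_{i\ell}/\delta_\ell$. Hence
\[
\mathbb{E}(\overline v_\ell\mid\delta_\ell>0)=\sum_{i=1}^q w_{i\ell}\,a_i,\qquad a_i:=\mathbb{E}(\delta_{i\ell}/\delta_\ell\mid\delta_\ell>0).
\]
The step I expect to need the most care is showing $a_i=1/q$. I plan to use exchangeability: the vector $(\delta_{1\ell},\dots,\delta_{q\ell})$ is i.i.d., so its law is invariant under permutations, and the event $\{\delta_\ell>0\}$ and the denominator $\delta_\ell$ are symmetric functions. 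Therefore all $a_i$ are equal. Summing over $i$ gives $\sum_i a_i=\mathbb{E}(\delta_\ell/\delta_\ell\mid\delta_\ell>0)=1$, so $a_i=1/q$. This gives $\mathbb{E}(\overline v_\ell\mid\delta_\ell>0)=\frac1q\sum_i w_{i\ell}=\overline w_\ell$.

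If the symmetry argument needs to be made explicit, I would instead condition on $\delta_\ell=r\ge1$. Given $\delta_\ell=r$, each index is selected with probability $r/q$, so $\mathbb{E}(\delta_{i\ell}/\delta_\ell\mid\delta_\ell=r)=1/q$ for every $r\ge 1$. Averaging over $r\ge1$ then gives $a_i=1/q$ again.

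The value of $\overline v_\ell$ in the case $\delta_\ell=0$ never enters either computation, because both expectations are conditioned on $\{\delta_\ell>0\}$.
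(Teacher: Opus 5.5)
Your proposal is correct and follows the paper's proof. The second identity is obtained exactly as in the paper, from $\{\delta_{i\ell}=1\}\subseteq\{\delta_\ell>0\}$ and $\mathbb{P}(\delta_{i\ell}=1)=s/n$, and your fallback for the first identity (conditioning on $\delta_\ell=r$, where the selected indices form a uniformly random $r$-subset of $[q]$) is precisely the paper's argument, with your exchangeability version using $\sum_i a_i=1$ being a compact repackaging of the same symmetry.
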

\begin{proof}
The definition of $\delta_{i\ell}$ and construction of $\v_i$ implies that when $\delta_\ell>0$ it follows
\begin{equation*}
\overline{v}_\ell
= \dfrac{1}{\delta_\ell}\displaystyle{\sum_{i=1}^q} v_{i\ell} = \frac{1}{\delta_\ell}\sum_{i=1}^q \delta_{i\ell} w_{i\ell}.
\end{equation*}
Given ${\delta_\ell=k>0}$, exactly $k$ indices among $[q]$ satisfy ${\delta_{i\ell}=1}$. Since the selections for different vectors are independent and symmetric, conditioning on $\delta_\ell=k$ implies that these $k$ indices form a uniformly random subset of $[q]$ of size $k$. Therefore,
\begin{equation*}
\mathbb{E}\!\left(\overline{v}_\ell \mid \delta_\ell=k\right)
= \frac{1}{k}\, \mathbb{E}\sum_{i:\,\delta_{i\ell}=1} w_{i\ell}
= \frac{1}{k} \frac{k}{q}\sum_{i=1}^q w_{i\ell}
= \overline{w}_\ell.
\end{equation*}
Since the above expectation does not depend on $k$, taking expectation over $\delta_\ell>0$ yields the first equation in \eqref{Ev-Ew}.  

On the other hand, by the definition of $\delta_{i\ell}$ and the construction of $\v_i$, we have
\[
\tilde{v}_\ell=\dfrac{1}{q}\dsum_{i=1}^q v_{i\ell}
=\dfrac{1}{q}\dsum_{i=1}^q \delta_{i\ell}w_{i\ell}.
\]
For any $i\in[q]$, since $s$ coordinates are selected uniformly without replacement from $[n]$ for each vector, we have
\[
\mathbb{P}(\delta_{i\ell}=1)=\dfrac{s}{n}.
\]
Moreover, since
\[
\delta_\ell=\dsum_{i=1}^q \delta_{i\ell},
\]
the event ${\delta_\ell>0}$ means that there exists at least one $i\in[q]$ such that $\delta_{i\ell}=1$. Therefore,
\[
\mathbb{P}(\delta_\ell>0)=1-\left(1-\dfrac{s}{n}\right)^q.
\]
Noting that the event ${\delta_{i\ell}=1}$ implies ${\delta_\ell>0}$, we obtain
\[
\mathbb{P}(\delta_{i\ell}=1\mid \delta_\ell>0)
=
\dfrac{\mathbb{P}(\delta_{i\ell}=1)}{\mathbb{P}(\delta_\ell>0)}
=
\dfrac{\frac{s}{n}}{1-\left(1-\frac{s}{n}\right)^q}=:c.
\]
This leads to
\[
\E(\delta_{i\ell}w_{i\ell}\mid \delta_\ell>0)
=
w_{i\ell}
\mathbb{P}(\delta_{i\ell}=1\mid \delta_\ell>0)
= c w_{i\ell},
\]
which further results in
\[
\E(\tilde v_\ell\mid \delta_t>0)
=
\dfrac{1}{q}\dsum_{i=1}^q \E(\delta_{i\ell}w_{i\ell}\mid \delta_\ell>0)
=
\dfrac{1}{q}\dsum_{i=1}^q
c w_{i\ell}
=
c\bar w_\ell.
\]
Hence, the second equality in \eqref{Ev-Ew} holds. This completes the proof.
\end{proof}

\subsection{\bf Proof of Theorem \ref{the:pame-reconstruction-risk}}

We first specify the threat model by identifying the information observable to a passive honest-but-curious adversary and isolating the data-dependent component
	in the communicated PaME messages. The adversary follows the PaME protocol but eavesdrops on the messages transmitted by node \(i\). In particular, at iteration \(k\), the adversary can observe the transmitted coordinate set
	 and the corresponding partial parameter vector. The adversary also knows public algorithmic information, such as the sampling rule, the coordinate-selection operator, and the protocol parameters, but does not have access to the private mini-batch \(\mathcal B_i^k\), the untransmitted coordinates, or the full stochastic gradient. This represents a relatively strong passive attack model, since the adversary is granted all transmitted coordinates and all public protocol information; hence the resulting reconstruction-risk bound is conservative for weaker eavesdropping adversaries.
	
    For node \(i\) at iteration \(k\), define the data-dependent local update increment by
	\begin{equation}\label{def-data-update-pame}
		\u_i^k(\mathcal B_i^k)
		:=
		\w_i^{k+1}-\overline{\v}_i^k
		=
		-\frac{1}{\sigma_i^k m_i^k}
		\nabla f_i(\overline{\v}_i^k;\mathcal B_i^k),
	\end{equation}
	where \(m_i^k=|N_i^k|\). Let \(\Omega_i^k\subseteq[n]\) denote the coordinate set of node \(i\) that is
	observable to the adversary, with \(|\Omega_i^k|=s_i^k\), and let
	\(\P_{\Omega_i^k}\in\mathbb R^{n\times n}\) be the corresponding
	coordinate-selection matrix, whose diagonal contains \(s\) ones on the
	selected coordinates and zeros elsewhere. If the
	adversary observes partial parameters \(\P_{\Omega_i^k}\w_i^{k+1}\) and knows
	\(\P_{\Omega_i^k}\overline{\v}_i^k\), then the data-dependent observable part is
	\(\P_{\Omega_i^k}\u_i^k\). If \(\overline{\v}_i^k\) is not known, the additional
	uncertainty can only make the reconstruction problem harder and is absorbed
	into the observation error below.
	
	For a finite observation window \(\{k,\ldots,k+T-1\}\), define
	\begin{equation}\label{def-window-data-update}
		\mathcal S_{i,T}
		:=
		(\mathcal B_i^k,\ldots,\mathcal B_i^{k+T-1}),
		\qquad
		\mathcal F_{i,T}(\mathcal S_{i,T})
		:=
		\big(\u_i^k(\mathcal B_i^k),\ldots,
		\u_i^{k+T-1}(\mathcal B_i^{k+T-1})\big).
	\end{equation}
	Thus, reconstructing the local batch window can be viewed as an inverse problem:
	the adversary attempts to infer the unknown data window \(\mathcal S_{i,T}\)
	from the partially observed data-dependent updates. Let
	\begin{equation}\label{def-observation-operator}
		\D_i^t
		:=
		\P_{\Omega_i^t}
	\end{equation}
	for direct partial observation, and define
	\begin{equation}\label{def-window-observation-operator}
		\D_{i,T}:=\operatorname{blkdiag}(\D_i^k,\ldots,\D_i^{k+T-1}).
	\end{equation}
	Then the adversary's observation over the finite window can be written as
	\begin{equation}\label{def-adversarial-observation}
		y_{i,T}
		=
		\D_{i,T}\mathcal F_{i,T}(\mathcal S_{i,T})
		+
		\boldsymbol{\xi}_{i,T},
	\end{equation}
	where \(\boldsymbol{\xi}_{i,T}\) denotes the aggregate observation uncertainty
	in the adversarial observation model. It collects the discrepancy between the
	idealized data-dependent update observation and the actual PaME observation,
	including stochastic mini-batch effects, local-update approximation error,
	neighbor-mixing deviation, nonlinear linearization residuals, and possible
	communication perturbations. We use \(\sigma_{\rm obs}\) to denote its effective
	scale. In this formulation,
	\(\D_{i,T}\mathcal F_{i,T}\) is the forward observation map available to the
	adversary, while reconstructing \(\mathcal S_{i,T}\) from \(y_{i,T}\) is the
	corresponding inverse problem.
	
	Consider a true local batch window \(\mathcal S_{i,T}^0\). After vectorizing the
	samples in \(\mathcal S_{i,T}\), suppose that \(\mathcal F_{i,T}\) is
	differentiable in a neighborhood of \(\mathcal S_{i,T}^0\) and admits the local
	expansion
	\begin{equation}\label{eq-local-linearization-pame}
		\mathcal F_{i,T}(\mathcal S_{i,T}^0+\mathbf h)
		=
		\mathcal F_{i,T}(\mathcal S_{i,T}^0)
		+
		J_{i,T}\mathbf h
		+
		\mathbf r_{i,T}(\mathbf h),
	\end{equation}
	where
	\begin{equation}\label{def-data-sensitive-jacobian}
		J_{i,T}
		:=
		\nabla_{\mathcal S}\mathcal F_{i,T}(\mathcal S)
		\big|_{\mathcal S=\mathcal S_{i,T}^0}
	\end{equation}
	is the data-sensitive Jacobian.

\begin{theorem}[Finite-window reconstruction risk under partial observation]
	\label{the:pame-reconstruction-risk}
	Suppose that \(\mathcal F_{i,T}\) is locally differentiable around the true
	batch window \(\mathcal S_{i,T}^0\). Let
	\[
	J_{i,T}
	:=
	\nabla_{\mathcal S}\mathcal F_{i,T}(\mathcal S)
	\big|_{\mathcal S=\mathcal S_{i,T}^0}
	\]
	be the data-sensitive Jacobian. Define the effective observable information
	ratio as
	\begin{equation}
		\label{eq:pame-effective-ratio}
		\rho_{i,T}
		:=
		\frac{\|\D_{i,T}J_{i,T}\|_F^2}{\|J_{i,T}\|_F^2}.
	\end{equation}
	Let \(d_{\mathcal B}\) be the dimension of the vectorized local batch window,
	\(M_{i,T}\) be the complete-observation information size, and
	\(\sigma_{\rm obs}\) be the scale of the observation uncertainty. 
	Following the inverse-problem formulation of data reconstruction attacks and
	the reconstruction-risk scaling established for noisy-gradient reconstruction
	in prior work~\cite{liu2025data}, we adopt the following complete-observation
	scaling as the baseline for the corresponding local inverse problem:
	\begin{equation}
	\label{eq-complete-observation-risk}
	R_L^{\rm full}
	\gtrsim
	\sigma_{\rm obs}
	\sqrt{\frac{d_{\mathcal B}}{M_{i,T}}},
	\qquad
	R_U^{\rm full}
	\lesssim
	b
	\sqrt{\frac{d_{\mathcal B}}{M_{i,T}}}.
		\end{equation}
	Then
	\begin{equation}
		\label{range-rho-iT}
		0\le \rho_{i,T}\le 1.
	\end{equation}
	Moreover, the reconstruction risk under the PaME
	partial-observation model satisfies the lower bound
	\begin{equation}
		\label{eq:pame-partial-risk-lower}
		R_L^{\rm PaME}
		\gtrsim
		\sigma_{\rm obs}
		\sqrt{
			\frac{d_{\mathcal B}}{\rho_{i,T}M_{i,T}}
		}.
	\end{equation}
	In addition, a masked reconstruction attack using only the visible
	coordinates has reconstruction error bounded by
	\begin{equation}
		\label{eq:pame-partial-risk-upper}
		R_U^{\rm PaME}
		\lesssim
		b
		\sqrt{
			\frac{d_{\mathcal B}}{\rho_{i,T}M_{i,T}}
		}
		+
		\sigma_{\rm obs},
	\end{equation}
	where \(b\) is the mini-batch size.
\end{theorem}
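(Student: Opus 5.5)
The plan is to reduce everything to the linearized observation model and to the complete-observation scaling \eqref{eq-complete-observation-risk}, which the statement takes as given. The effective information size then changes from $M_{i,T}$ to $\rho_{i,T}M_{i,T}$.

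\textbf{Step 1: the range of $\rho_{i,T}$.} Each block $\D_i^t=\P_{\Omega_i^t}$ is a diagonal $0/1$ matrix, i.e.\ an orthogonal projection. Hence $\D_{i,T}$ is an orthogonal projection and $\|\D_{i,T}\|\le 1$. Apply this column by column to $J_{i,T}$: each column $\mathbf{j}$ satisfies $\|\D_{i,T}\mathbf{j}\|^2\le\|\mathbf{j}\|^2$, since the projection only deletes coordinates. Summing over columns gives $0\le\|\D_{i,T}J_{i,T}\|_F^2\le\|J_{i,T}\|_F^2$, which is \eqref{range-rho-iT}. This assumes $J_{i,T}\neq 0$; I adopt the convention $\rho_{i,T}=0$, with infinite risk, otherwise.

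\textbf{Step 2: the lower bound.} Substitute the expansion \eqref{eq-local-linearization-pame} into \eqref{def-adversarial-observation}. To first order the adversary faces the linear Gaussian-type problem $y=\D_{i,T}J_{i,T}\mathbf h+\boldsymbol\xi$. The remainder $\mathbf r_{i,T}$ is absorbed into $\boldsymbol\xi$, as the paper does. The Fisher information for $\mathbf h$ is then $\sigma_{\rm obs}^{-2}(\D_{i,T}J_{i,T})^\top\D_{i,T}J_{i,T}$. In the full-observation case the baseline scaling identifies $M_{i,T}$ with the size $\|J_{i,T}\|_F^2$ normalized per unit of data dimension. So the partially observed problem is the same inverse problem with information size $\|\D_{i,T}J_{i,T}\|_F^2=\rho_{i,T}\|J_{i,T}\|_F^2$, that is, $M_{i,T}$ replaced by $\rho_{i,T}M_{i,T}$. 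I would make this rigorous through a Cram\'er--Rao or van Trees argument. The trace of the inverse Fisher information is at least $d_{\mathcal B}^2/\operatorname{tr}(\text{Fisher})$ by Cauchy--Schwarz on the eigenvalues. This gives $\E\|\widehat{\mathbf h}-\mathbf h\|^2\gtrsim \sigma_{\rm obs}^2 d_{\mathcal B}^2/(\rho_{i,T}\|J_{i,T}\|_F^2)$. Matching the normalization used in the baseline then yields \eqref{eq:pame-partial-risk-lower}.

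\textbf{Step 3: the upper bound.} Analyze the masked attack. It solves the least-squares, or the \cite{liu2025data}-type, reconstruction problem using only the visible rows $\D_{i,T}\mathcal F_{i,T}$. Apply the full-observation upper bound \eqref{eq-complete-observation-risk} to the reduced forward map, whose information size is $\rho_{i,T}M_{i,T}$; this gives the term $b\sqrt{d_{\mathcal B}/(\rho_{i,T}M_{i,T})}$. Then use the triangle inequality to split the error into the noiseless part and the noise-propagation part. The noise part contributes the additive $\sigma_{\rm obs}$ because the masked estimator is stable on the observed coordinates.

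\textbf{Main obstacle.} Steps 2 and 3 rest on an identification: that the baseline scaling depends on the forward map only through the information size, and that this size scales by exactly $\rho_{i,T}$ under masking. Frobenius-norm proportionality does not control the individual singular values. I would therefore state this transfer as a modeling step, treating the baseline as a function of $\|J\|_F^2$ in the sense of $\gtrsim$ and $\lesssim$. I would invoke it rather than derive it in full, consistent with the statement's explicit adoption of \eqref{eq-complete-observation-risk}.
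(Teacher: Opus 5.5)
Your Steps 1 and 3 coincide with the paper's argument. The paper also gets $0\le\rho_{i,T}\le 1$ from non-expansiveness of the coordinate-selection operator $\D_{i,T}$. For the upper bound it likewise applies the adopted complete-observation scaling to the ideal masked linearized map with $M_{i,T}$ replaced by $\rho_{i,T}M_{i,T}$. It then adds the noise term through a local Lipschitz (stability) constant $\kappa$ of the masked solution map and the triangle inequality, absorbing $\kappa$ into $\lesssim$.

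The genuine difference is Step 2. The paper does no estimation-theoretic computation there: it substitutes $\rho_{i,T}M_{i,T}$ into the adopted lower-bound scaling, justified only by the remark that a $\rho_{i,T}$-fraction of the Jacobian energy remains visible. Your Cram\'er--Rao/van Trees route, with $\operatorname{tr}(F^{-1})\ge d_{\mathcal B}^2/\operatorname{tr}(F)$, explains why the Frobenius energy $\|\D_{i,T}J_{i,T}\|_F^2$ is the right information measure for the lower bound. So your singular-value caveat only bites in Step 3, which is exactly where the paper also just posits the transfer.

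The price is extra modeling that you should state explicitly:
\begin{itemize}
\item You treat $\boldsymbol\xi_{i,T}$ as isotropic Gaussian of scale $\sigma_{\rm obs}$, independent of the data. In the paper it is a catch-all that includes linearization residuals and mini-batch effects.
\item You calibrate $M_{i,T}\asymp\|J_{i,T}\|_F^2/d_{\mathcal B}$. This is not part of the adopted baseline, although it is precisely what the paper's replacement $M_{i,T}\mapsto\rho_{i,T}M_{i,T}$ implicitly needs.
\item You need a prior whose own Fisher information is negligible in the van Trees bound.
\item You must pass from your second-moment bound to $R^{\rm PaME}=\E\|\cdot\|$, and Jensen goes the wrong way for a lower bound.
\end{itemize}
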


\begin{proof} Based on the definition of \(\P_{\Omega_i^k}\in\mathbb R^{n\times n}\) and $ \D_i^t:=\P_{\Omega_i^t}$, it follows from \eqref{def-window-observation-operator} that  \(\D_{i,T}\) is a coordinate-selection operator and thus it is non-expansive, i.e., \(\|\D_{i,T}\|\le 1\),
	therefore
	\begin{equation}\label{eq-rho-upper-bound}
		\rho_{i,T}
		=
		\frac{\|\D_{i,T}J_{i,T}\|_F^2}{\|J_{i,T}\|_F^2}
		\le
		\frac{\|\D_{i,T}\|^2\|J_{i,T}\|_F^2}{\|J_{i,T}\|_F^2}
		\le
		1.
	\end{equation}
	This quantity measures the fraction of data-sensitive Jacobian energy that
	remains visible after partial message exchange. Thus, PaME reduces the
	first-order observable information available to the adversary from
	\(J_{i,T}\) to \(\D_{i,T}J_{i,T}\).

	Based on the above threat characterization, we next quantify how partial
	message exchange reduces the effective information available to the adversary.
	The complete-observation case corresponds to \(\D_{i,T}=I\), in which the
	adversary observes the full data-dependent update map
	\(\mathcal F_{i,T}(\mathcal S_{i,T})\). Following the inverse-problem
	formulation of data reconstruction attacks and the reconstruction-error
	scalings established in~\cite{liu2025data}, we use the complete-observation
	risk scaling stated in \eqref{eq-complete-observation-risk} as the baseline
	for the corresponding local inverse problem. Here \(R_L^{\rm full}\) denotes
	the information-theoretic reconstruction-risk lower bound, while
	\(R_U^{\rm full}\) denotes the reconstruction error achieved by an explicit
	attack under complete observation.
	
	By \eqref{def-data-update-pame}, conditioned on the current public and
	historical information, the mini-batch \(\mathcal B_i^k\) affects the
	communicated model through the local update increment
	\(\u_i^k(\mathcal B_i^k)\). Hence, after subtracting the data-independent part
	\(\overline{\v}_i^k\) whenever it is available, the adversary only observes a
	masked version of the data-dependent update. Stacking these observations over
	the window \(\{k,\ldots,k+T-1\}\) gives the partial-observation model
	\eqref{def-adversarial-observation}.
	
	Around the true batch window \(\mathcal S_{i,T}^0\), the local expansion
	\eqref{eq-local-linearization-pame} yields
	\begin{equation*}
		\D_{i,T}\mathcal F_{i,T}(\mathcal S_{i,T}^0+\mathbf h)
		=
		\D_{i,T}\mathcal F_{i,T}(\mathcal S_{i,T}^0)
		+
		\D_{i,T}J_{i,T}\mathbf h
		+
		\D_{i,T}\mathbf r_{i,T}(\mathbf h).
	\end{equation*}

	Thus, in the local linearized inverse problem, PaME replaces the complete
	data-sensitive Jacobian \(J_{i,T}\) by the observable Jacobian
	\(\D_{i,T}J_{i,T}\). By the definition of the effective observable information
	ratio,
	\[
	\rho_{i,T}
	=
	\frac{\|\D_{i,T}J_{i,T}\|_F^2}{\|J_{i,T}\|_F^2},
	\]
	only a \(\rho_{i,T}\)-fraction of the data-sensitive Jacobian energy remains
	visible to the adversary. Therefore, under the same local inverse-problem
	scaling as in the complete-observation baseline, the effective information
	size is reduced from \(M_{i,T}\) to \(\rho_{i,T}M_{i,T}\). Substituting this
	reduced information size into the complete-observation lower-bound scaling
	gives
	\[
	R_L^{\rm PaME}
	\gtrsim
	\sigma_{\rm obs}
	\sqrt{\frac{d_{\mathcal B}}{\rho_{i,T}M_{i,T}}}.
	\]

	We next establish the upper bound. Consider the masked reconstruction attack
	based only on the visible coordinates:
	\begin{equation}
		\label{eq:privacy-proof-masked-attack}
		\widehat{\mathcal S}(y)
		\in
		\operatorname*{arg\,min}_{\mathcal S}
		\left\|
		\D_{i,T}\mathcal F_{i,T}(\mathcal S)
		-
		y
		\right\|^2
		+
		\mathcal R(\mathcal S),
	\end{equation}
	where \(\mathcal R(\mathcal S)\) denotes a regularization term encoding the
	prior information used by the reconstruction algorithm. Here we explicitly
	write \(\widehat{\mathcal S}(y)\) to emphasize that the reconstruction output
	depends on the observation \(y\).
	
	We first consider the ideal masked linearized inverse problem. Around the true
	batch window \(\mathcal S_{i,T}^0\), the local linearized masked observation
	map is
	\begin{equation}
		\label{eq:ideal-masked-linearized-map}
		\bar y_{i,T}(\mathbf h)
		:=
		\D_{i,T}\mathcal F_{i,T}(\mathcal S_{i,T}^0)
		+
		\D_{i,T}J_{i,T}\mathbf h .
	\end{equation}
	This is the same local inverse problem as in the complete-observation case,
	except that the complete data-sensitive Jacobian \(J_{i,T}\) is replaced by the
	observable Jacobian \(\D_{i,T}J_{i,T}\). Since
	\[
	\rho_{i,T}
	=
	\frac{\|\D_{i,T}J_{i,T}\|_F^2}{\|J_{i,T}\|_F^2},
	\]
	only a \(\rho_{i,T}\)-fraction of the data-sensitive Jacobian energy remains
	observable. Therefore, under the complete-observation reconstruction scaling
	with the effective information size \(M_{i,T}\) replaced by
	\(\rho_{i,T}M_{i,T}\), the ideal masked reconstruction error satisfies
	\begin{equation}
		\label{eq:ideal-masked-upper-bound}
		R_U^{\rm id}
		\lesssim
		b
		\sqrt{
			\frac{d_{\mathcal B}}{\rho_{i,T}M_{i,T}}
		}.
	\end{equation}
	
	We now compare the actual PaME observation with the ideal masked linearized
	observation. The actual PaME observation may differ from
	\(\bar y_{i,T}(\mathbf h)\) due to local-update approximation, neighbor mixing,
	nonlinear linearization residuals, stochastic observation effects, and possible
	communication perturbations. Instead of bounding these components separately,
	we collect them into the aggregate observation uncertainty
	\(\boldsymbol{\xi}_{i,T}\) and write
	\begin{equation}
		\label{eq:actual-pame-observation-upper}
		y_{i,T}^{\rm PaME}
		=
		\bar y_{i,T}(\mathbf h)
		+
		\boldsymbol{\xi}_{i,T}.
	\end{equation}
	The effective magnitude of this aggregate uncertainty is characterized by
	\(\sigma_{\rm obs}\), in the sense that
	\begin{equation}
		\label{eq:xi-effective-scale}
		\|\boldsymbol{\xi}_{i,T}\|_{\rm eff}
		\lesssim
		\sigma_{\rm obs}.
	\end{equation}
	
	For the upper bound, we consider a locally stable masked reconstruction
	procedure. Specifically, its solution map \(y\mapsto \widehat{\mathcal S}(y)\)
	satisfies, in the considered local neighborhood,
	\begin{equation}
		\label{eq:masked-reconstruction-stability}
		\big\|
		\widehat{\mathcal S}(y)
		-
		\widehat{\mathcal S}(y')
		\big\|
		\le
		\kappa
		\|y-y'\|,
	\end{equation}
	where \(\kappa\) is a local stability constant. Applying
	\eqref{eq:masked-reconstruction-stability} with
	\(y=y_{i,T}^{\rm PaME}\) and \(y'=\bar y_{i,T}(\mathbf h)\), and using
	\eqref{eq:actual-pame-observation-upper}, gives
	\begin{equation}
		\label{eq:observation-perturbation-transfer}
		\big\|
		\widehat{\mathcal S}(y_{i,T}^{\rm PaME})
		-
		\widehat{\mathcal S}(\bar y_{i,T}(\mathbf h))
		\big\|
		\le
		\kappa
		\|\boldsymbol{\xi}_{i,T}\|.
	\end{equation}
	Taking the same risk measure as in the definition of \(R_U\), and using
	\eqref{eq:xi-effective-scale}, we obtain
	\begin{equation}
		\label{eq:perturbation-risk-bound}
		\big\|
		\widehat{\mathcal S}(y_{i,T}^{\rm PaME})
		-
		\widehat{\mathcal S}(\bar y_{i,T}(\mathbf h))
		\big\|_{\rm risk}
		\lesssim
		\kappa\sigma_{\rm obs}.
	\end{equation}
	
	Finally, by the triangle inequality,
	\begin{align}
		R_U^{\rm PaME}
		&\le
		R_U^{\rm id}
		+
		\big\|
		\widehat{\mathcal S}(y_{i,T}^{\rm PaME})
		-
		\widehat{\mathcal S}(\bar y_{i,T}(\mathbf h))
		\big\|_{\rm risk} \lesssim
		b
		\sqrt{
			\frac{d_{\mathcal B}}{\rho_{i,T}M_{i,T}}
		}
		+
		\kappa\sigma_{\rm obs}.
		\label{eq:pame-partial-risk-upper-with-kappa}
	\end{align}
	Since \(\kappa\) is a local stability constant independent of the main
	information parameters \(d_{\mathcal B}\), \(M_{i,T}\), \(\rho_{i,T}\), and
	\(\sigma_{\rm obs}\), it can be absorbed into the hidden constant in
	\(\lesssim\). Therefore,
	\begin{equation}
		\label{eq:pame-partial-risk-upper-proof}
		R_U^{\rm PaME}
		\lesssim
		b
		\sqrt{
			\frac{d_{\mathcal B}}{\rho_{i,T}M_{i,T}}
		}
		+
		\sigma_{\rm obs}.
	\end{equation}
	This proves the desired algorithmic upper bound for a masked reconstruction attack.

\end{proof}

		\section{\bf Algorithm, Assumption, and Setup}
		
			\begin{algorithm}[!t] 
 \caption{DFL by Partial Message Exchange (PaME)  \label{algorithm-PaME}}
\begin{algorithmic}[1]

	\State	Initialize {$\mathbf{w}^0_i=0$}, two integers {$ \kappa_i>0$} and {$s_i>0$}, {$\sigma_i^0>0$}, {$\gamma_i>1$} for each {$i\in[m]$}.
		
		\For{iteration $k=0,1,2,3,\cdots $}			
		\For{node $i=1,2,\cdots,m $}
		\If{$k \in\mathcal{K}_i :=\{0,\kappa_i,2\kappa_i,3\kappa_i,\cdots \}$} 
		
			\State Random neighbor selection: ${N}_i^{k}\subseteq {N}_i$. 
					
\State Neighbor number update:	 ~$m_i^{k}=\left|{N}_i^k\right|$.

				\State Partial message exchange: ~$\vik=\texttt{PME}(\w_i^k,~\{\w_j^k: j\in{N}_i^{k}\})$.

							
		
		\Else

		
		\State Local parameter tracking:  ~~$\vik= \w_i^k$.
		
		\State Neighbor number tracking:	 $m_i^{k}=m_i^{k-1}$.							
		
		
    \EndIf
    
    \State Random sub-batch data sampling: {$\mathcal{B}_i^{k} \subseteq \mathcal{D}_i$}.
		
		\State Local parameter updating: 
		\begin{eqnarray}\label{SRDFL-localstep}
		\mathbf{w}_{i}^{k+1}=\vik - \frac{\nabla f_i(\vik;\mathcal{B}_i^{k})}{\sigma_i^k m_i^k}.
		\end{eqnarray} 
		
		\State Hyper-parameter increasing: $\sigma_i^{k+1}=\gamma_i \sigma_i^{k}$.	
     \EndFor
 \EndFor
\end{algorithmic}
				
	\end{algorithm}
		
	\begin{algorithm}[!t] 
 \caption{$\vik=\texttt{PME}(\w_i^k,~\{\w_j^k: j\in{N}_i^{k}\})$  \label{algorithm-PME}}
\begin{algorithmic}[1]
 \For{every node $j\in {N}_i^{k}$} 
\State Random subset selection: ${T_j^k\subseteq[n]}$ with ${|T_j^k|= s_j}$.   

 \State Sparse vector $\v_j^k$ generation:
 \begin{equation}\label{sparse-generator}
v_{j\ell}^k=\begin{cases}
 {w}_{j\ell}^k, &\text{if}~\ell\in T_j^k,\\
  0, &\text{if}~\ell\notin T_j^k.\\
 \end{cases}
 \end{equation}
 \State Partial message $\v_j^k$ exchange to node $i$.
 \EndFor
 
 \State  Node $i$ averages received messages by
  \begin{equation}\label{pame}
\overline{v}_{i\ell}^k=\begin{cases}
 \dfrac{\sum_{j\in {N}_i^{k}}v_{j\ell}^k}{\lambda_{i,\ell}^k}, &\text{if}~\lambda_{i,\ell}^k\neq0,\\[1ex]
  w_{i\ell}^k, &\text{if}~\lambda_{i,\ell}^k=0,\\
 \end{cases}
 \end{equation}
 for each $\ell\in[n]$, where 
   \begin{equation}\label{def-lambda-itk}
\lambda_{i,\ell}^k =\left|\left\{j\in {N}_i^{k}:~ v_{j\ell}^k \neq 0 \right\}\right|.
 \end{equation}
\end{algorithmic}
\end{algorithm}
	\begin{assumption} 
		For each ${i \in [m]}$,  $\nabla f_i$ is Lipschitz continuous with ${\alpha_i > 0}$ on $\mathbb{N}(2\delta)$ for a given $\delta\in(0,\infty)$, where 
		$$\mathbb{N}(2\delta):= \{\w \in \mathbb{R}^n : \|\w\|_\infty \le 2\delta\}$$
		and $\|\w\|_\infty$ denotes the infinity norm of $\w$.
		\label{assump-gradientlip}
	\end{assumption}
		\begin{assumption}  
		\label{assump-double-stochasitic}
		Suppose  network matrix $\B$ is a doubly stochastic matrix satisfying
	\begin{equation}\label{max-lambda}
	\zeta:=\max\{|\lambda_2(\B)|, |\lambda_m(\B)|\}<1.
\end{equation}
where $\lambda_i(\B)$	 is the $i$th largest eigenvalue of $\B$ and $\B$ is defined by 
	\begin{equation}
		B_{ji}:=
		\left\{
		\begin{array}{ll}
			\dfrac{1}{m_i}, & \text{if } j\in N_i,\\[1ex]
			0, & \text{otherwise},
		\end{array}
		\right.
	\end{equation}
	\end{assumption}
Note that doubly stochastic matrix satisfies 
\begin{equation}\label{JB-BJ-J}
\B  \mathbf{1}^\top =\mathbf{1}^\top \B = \mathbf{1}^\top,\qquad \mathbf{J}\B=\B \mathbf{J}=\mathbf{J},\qquad\text{with}~ \mathbf{J}:= \dfrac{\mathbf{1}\mathbf{1}^\top}{m} ,
\end{equation}
 where $\mathbf{1}$ be a vector with all entries being $1$. It is easy to see that $\B$ satisfies 
	\begin{equation}\label{bd-I-B}
	\|\B-\mathbf{J}\|_2^2 \le  \zeta^2,\qquad	\left\|\I-\B\right\|_{2}^2 
		\leq (\nl\I-\mathbf{J}\nr + \nl \B-\mathbf{J}\nr )^2\leq  (1+ \zeta  )^2. 
	\end{equation}
	
\begin{setup}\label{setup}
Parameters in Algorithm~\ref{algorithm-PaME} are chosen as follows.
	\begin{itemize}[leftmargin=14pt]
		\item[1)] Set ${\W^0 = \mathbf{0}}$ and ${\kappa_i = k_0}$ for all ${i\in [m]}$. This is adopted for analytical convenience without loss of generality. In fact, one can always let $k_0$ be the least common multiple of $\{\kappa_1,\kappa_2,\cdots,\kappa_m\}$, then the subsequent analysis remains similar to the case of ${\kappa_i = k_0}$.  
		\item[2)] Set ${m_i^k = t_i=\lfloor \nu_i |N_i|\rfloor }$ for all ${k \ge 0}$ and ${i \in [m]}$, where ${\nu_i \in (0, 1]}$ is the participation rate and $\lfloor a\rfloor$ is the floor of $a$. That is, at every iteration, node $i$ selects the same number of neighbor nodes to join in the training.  

		\item[3)] Set ${\sigma_i^0 = \sigma^0}$ and ${\gamma_i=\gamma}$, ${s_i=s}$ for all ${i\in [m]}$. Again, this is adopted for analytical convenience without loss of generality. In fact, for different ${\sigma_i^0}$ and ${\gamma_i}$,  we can conduct similar analysis by considering $$\sigma^0=\min\{\sigma_1^0,\cdots,\sigma_m^0\},\qquad \gamma=\min\{\gamma_1,\cdots,\gamma_m\},\qquad  s=\min\{s_1,\cdots,s_m\}.$$ 	Moreover, in Algorithm \ref{algorithm-PME}, each node ${j\in N_i^k}$ independently constructs a set ${T_j^k}$ by uniformly selecting $s$ entries of $[n]$ without replacement. 	
		\item[4)] In the sequel, given $\zeta\in(0,1)$ defined in (\ref{max-lambda}) and integers $k_0$ and $(t_1,\cdots,t_m)$,  initialize  
		\begin{equation}\label{initial-p-gamma-nu}
	p:=\frac{s}{n}\in(0,1),\qquad	\gamma\in\left(1, \zeta^{-2/k_0}\right),\qquad \nu_i \in (0, 1],~~\forall~i\in[m],
		\end{equation}	
		to satisfy
	\begin{equation}\label{cond-p-zeta-gamma}
		 (1-p)^{t_i} \left(1+ \zeta \right)^2+2p \sum_{j\in N_i}\nu_j
		<
		\left(\frac{1}{\gamma^{k_0/2} }- \zeta \right)^2,\qquad \forall~i\in[m].
	\end{equation}
	Moreover, define a useful constant by
				\begin{equation}\label{def-eps}
				\begin{aligned}			
					\varepsilon_i(t)&:=\sup \Big\{\left\|\nabla f_i\left(\w ; \mathcal{B}_i\right)-\nabla f_i\left(\w ; \mathcal{B}_i^{\prime}\right)\right\|_{\infty}:~ \mathcal{B}_i, \mathcal{B}_i^{\prime} \subseteq \mathcal{D}_i, \w \in \mathbb{N}(t)\Big\}, \quad \forall i \in[m].\\	
				\end{aligned}\end{equation}
			It is easy to see that such a constant $\varepsilon_i(t)$ is well defined for any given $t\in(0,\infty)$, namely,  
						\begin{equation*}\label{def-eps-bd}
					0<\varepsilon_i(t)<\infty \quad \text{for any given $t\in(0,\infty)$}.
				\end{equation*}
		Based on this constant, we always choose $ \sigma^0$ such that
		\begin{equation}\label{def-sigma}
					\sigma^0\geq\sigma := \max\left\{4\alpha_{\max},   \dfrac{\varepsilon(2\delta)\gamma}{(\gamma-1)\delta t_{\min} } \right\},
				\end{equation}	
		where   \begin{equation}\label{def-eps-t-r}
				\begin{aligned} 
				\varepsilon(2\delta) &:=\max\Big\{\varepsilon_1(2\delta),\varepsilon_2(2\delta),\cdots,\varepsilon_m(2\delta)\Big\},\\
				 	\alpha_{\max}&:=\max\Big\{\alpha_1,\alpha_2,\cdots,\alpha_m\Big\},\\
				 	 t_{\min}&:=\min\Big\{t_1,t_2,\cdots,t_m\Big\}.\end{aligned} 
				\end{equation}
	\end{itemize}  
	\end{setup}	
	
\noindent According to Setup \ref{setup},  ${\sigma_i^0 = \sigma^0}$ and ${\gamma_i=\gamma}$ for all ${i\in [m]}$ , one has 
						\begin{equation*} 
					 \sigma^{k+1}_i=\gamma_i\sigma_i^{k}= \gamma \sigma^{k}_i=\gamma^{k+1}\sigma_i^0=\gamma^{k+1}\sigma^0.
				\end{equation*}
	As a consequence, 
				\begin{equation}
					\label{SRDFL-CRstep}
		\sigma^{k}:=	\sigma^{k}_1=\cdots=\sigma_m^{k} =\gamma\sigma^{k}=\gamma^{k}\sigma^0,
				\end{equation}
				which results in
				\begin{equation}\label{stepbound}
					\sigma^{k+1} >  \sigma^0 \ge \sigma = \max\left\{4\alpha_{\max}, \dfrac{\varepsilon(2\delta)\gamma}{(\gamma-1)\delta t_{\min} }  \right\}.
				\end{equation}	
 Under Assumption \ref{assump-gradientlip} and \eqref{stepbound},    for $\forall i \in [m]$ and $\forall k \ge 0$,
			\begin{equation}\label{lip-inequal}
				\begin{aligned}
					f_i(\w) - f_i(\v) &\le \left\langle\nabla f_i(\u), \w-\v\right\rangle + \frac{\alpha_i}{2}\left\|\w-\v\right\|^2 \\
					&\overset{\eqref{stepbound}}{\le} \left\langle\nabla f_i(\u), \w-\v\right\rangle + \frac{\sigma^k}{8}\left\|\w-\v\right\|^2,
				\end{aligned}
			\end{equation}			
			where $\u = \w$ or $\u = \v$. 	 Finally, recall that	$f(\mathbf{w}) := \sum_{i =1}^m f_i(\mathbf{w})$ and each ${f_i: \mathbb{R}^n \to \mathbb{R}}$ is  bounded from below. 	Denote 
	\begin{equation}\label{def-f*}
	f^*:=\sum_{i=1}^m f_i^{\min},\qquad\text{with}~~ f_i^{\min}= {\min}_{\w} f_i(\mathbf{w}).
\end{equation}			
It is easy to see that $f^*\leq {\min}_{\w} \sum_{i=1}^m f_i(\mathbf{w})$.

	\section{\bf Proofs of Theorems in Section III}
		
	\subsection{\bf Some Facts}			
	\noindent	First, let $\mathbb{I}(\cdot)$ be an indicator function defined by 
	\begin{equation}\label{def-indicator}
	\mathbb{I}(j \in \N)=
	\begin{cases}
	1,& \text{if}~j \in \N,\\
	0,& \text{if}~j \notin \N.
	\end{cases}
\end{equation}		
 By Setup \ref{setup}, there is $\kappa_i = k_0, \forall i \in [m]$. We use this $k_0$ to define 
			\begin{equation}\label{def-K-0}
			 \mathcal{K}_0:= \{0, k_0, 2k_0, \cdots\}.
			\end{equation}
			 According to \eqref{SRDFL-localstep} in Algorithm \ref{algorithm-PaME}, for \(\forall i \in [m]\), 
			\begin{equation}\label{local-updates}
				\w_i^{k+1} 
				= \vik - \h_i^k,\qquad \vik = \begin{cases}
				\texttt{PME}(\w_i^k,~\{\w_j^k: j\in{N}_i^{k}\}), & \text{when}~k\in\mathcal{K}_0,\\[1ex]
				 \w_i^k , & \text{when}~k\notin\mathcal{K}_0,\\
				\end{cases}
			\end{equation}
			where
			\begin{equation}\label{def-hi}
				\h_i^{k}
				:=
				\frac{\g_i^{k}}{\sigma^{k}t_i}, \qquad \g_i^k = \nabla f(\vik;\mathcal{B}_i^k).  
			\end{equation}
			For compactness, define a diagonal stepsize matrix and a stacked gradient matrix by
			\begin{equation}
			\label{def-D-G}
			\D^{k}:=
			\left[\begin{array}{cccc}
			\dfrac{1}{\sigma^{k}t_1}&0&\cdots&0\\
			0&\dfrac{1}{\sigma^{k}t_2}&\cdots&0\\
			\vdots&\vdots&\ddots&\vdots\\
			0&0&\cdots&\dfrac{1}{\sigma^{k}t_m}\\
			\end{array}\right],
			\qquad
			\G^{k}:=\bigl(\g_1^{k}~,\g_2^{k}~,\dots,~\g_m^{k}\bigr), 
			\end{equation}
			so that
			\begin{equation}\label{H-DG-sum}
				\H^{k}:=(\h_1^{k},\dots,\h_m^{k})=\G^{k}\D^{k}.
			\end{equation}
Letting $m_i=|\N_i|$ and $t_i=|\N_i^k|$, we denote three average points by  
\begin{equation}
		  \label{def-barw-tildew}
		\boldsymbol{\varpi}^k:=\frac{1}{m}\sum_{i=1}^m \w_i^k,\qquad 
		\widehat{\w}_i^k := \frac{1}{m_i}\sum_{j \in \N_i} \w_j^k,
		\qquad
		\widetilde{\w}_i^k
		:=  \frac{1}{t_i}\sum_{j \in \N_i^k} \w_j^k,
		\end{equation}
		which enable us to denote several matrices used in the sequel,
\begin{equation}\label{def-all-matrices}
\begin{aligned}
\boldsymbol{\Pi}^k&:=(\boldsymbol{\varpi}^k,~\boldsymbol{\varpi}^k,~\cdots,~\boldsymbol{\varpi}^k)=\W^k \mathbf{J},\\[1ex]
\W^k&:=(\w_1^k,~\w_2^k,~\cdots,~\w_m^k),\\[1ex]
\V^k&:= (\overline{\v}_1^k,~~\overline{\v}_2^k,~~\cdots,~\overline{\v}_m^k),
\\[1ex]
\W^{k}\B&~=( \widehat{\w}_1^k,~ \widehat{\w}_2^k,~\cdots,~ \widehat{\w}_m^k),\\[1ex]
\P^{k}&:=\V^{k}-\W^{k}\B.\end{aligned}\end{equation} 
This first condition in \eqref{def-all-matrices}  
 leads to
\begin{equation}\label{WB-I=0}
\boldsymbol{\Pi}^k(\B-\I)=\W^k\mathbf{J}(\B-\I)\overset{\eqref{JB-BJ-J}}{=}0,\qquad\boldsymbol{\Pi}^k(\B-\mathbf{J})=\W^k\mathbf{J}(\B-\mathbf{J})\overset{\eqref{JB-BJ-J}}{=}0.
\end{equation}
Now,		equation \eqref{local-updates} implies the matrix recursion
			\begin{equation}\label{one-step-comm}
				\W^{k+1}=
				\V^{k}-\H^{k} = \W^{k}\B-\H^{k} + \P^{k}.
			\end{equation}	
Finally, for any $\w_i, i \in [m]$, and $p>0$ we have
				\begin{equation}\label{triangle-ineq}
					\begin{aligned}
						&2\langle \w_1,\w_2 \rangle \le p\|\w_1\|^2 + \frac{1}{p}\|\w_2\|^2, \quad
						&&\|\w_1+\w_2\|^2 \le (1+p)\|\w_1\|^2 + \left(1+\frac{1}{p}\right)\|\w_2\|^2, \\
						&\Bigg\|\sum_{i=1}^{m} \w_i\Bigg\|\le \sum_{i=1}^{m}\|\w_i\|, \quad
						&&\Bigg\|\sum_{i=1}^{m} \w_i\Bigg\|^2 \le m\sum_{i=1}^{m}\|\w_i\|^2.
					\end{aligned}
				\end{equation}

\subsection{\bf Key Lemmas}	
\begin{lemma}			
		  Given an integer $\ell\ge 1$ and $x_1,\dots,x_{\ell}\in\mathbb{R}$ be arbitrary scalars.
			Let $S$ be a uniformly random subset of $\{1, \dots, \ell\}$  without replacement and with size $|S| = \lambda$. Conditioned on the event $\lambda=r$ (where $1 \le r \le \ell$), we have
			\begin{equation}\label{eq:subset-mean-second-moment}
				\mathbb{E}\Bigg[\left(\frac{1}{r}\sum_{j\in S} x_j\right)^2 \Bigm| \lambda=r\Bigg]
				\;\le\;
				\frac{1}{\ell}\sum_{j=1}^{\ell} x_j^2.
			\end{equation}	
			Moreover, letting $\bar x:=\frac{1}{\ell}\sum_{j=1}^{\ell}x_j$ and $\hat x:=\frac{1}{r}\sum_{j\in S}x_j$, the variance formula for simple random sampling without replacement (SRSWOR) gives
			\begin{equation}\label{srswor}
				Var\!\left(\hat x \Bigm| \lambda=r\right)
				= 	\E\left[\left(\hat x-\bar x\right)^2 \Bigm| \lambda=r\right]
				= \frac{\ell-r}{r\,\ell\,(\ell-1)}\sum_{j=1}^{\ell}(x_j-\bar x)^2 .
			\end{equation}			
\end{lemma}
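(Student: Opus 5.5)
The plan is to prove the variance identity \eqref{srswor} first and then obtain \eqref{eq:subset-mean-second-moment} from it. Throughout we condition on $\lambda=r$, so that $S$ is a uniformly random $r$-subset of $\{1,\dots,\ell\}$.

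\emph{Step 1 (unbiasedness).} Introduce the indicators $\delta_j:=\mathbb{I}(j\in S)$. Each satisfies $\mathbb{P}(\delta_j=1)=r/\ell$, and for $j\neq j'$ we have $\mathbb{P}(\delta_j=\delta_{j'}=1)=\frac{r(r-1)}{\ell(\ell-1)}$. Writing $\hat x=\frac1r\sum_j \delta_j x_j$, linearity gives $\E[\hat x\mid\lambda=r]=\bar x$, which is the same symmetry argument as in the proof of Theorem~\ref{theorem1-unbiase}.

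\emph{Step 2 (variance).} Set $y_j:=x_j-\bar x$, so that $\sum_j y_j=0$ and $\hat x-\bar x=\frac1r\sum_j\delta_j y_j$. Expanding the square gives
\[
\E\bigl[(\hat x-\bar x)^2\mid\lambda=r\bigr]=\frac{1}{r^2}\Bigl(\frac{r}{\ell}\sum_j y_j^2+\frac{r(r-1)}{\ell(\ell-1)}\sum_{j\neq j'}y_jy_{j'}\Bigr).
\]
Because $\sum_j y_j=0$, the cross sum equals $\sum_{j\neq j'}y_jy_{j'}=-\sum_j y_j^2$. The bracket therefore collapses to
\[
\frac{r}{\ell}\Bigl(1-\frac{r-1}{\ell-1}\Bigr)\sum_j y_j^2=\frac{r(\ell-r)}{\ell(\ell-1)}\sum_j y_j^2,
\]
and dividing by $r^2$ yields \eqref{srswor}. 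The degenerate case $\ell=1$ forces $r=1$; there the variance is $0$, and we read the formula with the convention $0/0:=0$.

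\emph{Step 3 (second-moment bound).} By the bias–variance decomposition,
\[
\E[\hat x^2\mid\lambda=r]=\bar x^2+\frac{\ell-r}{r\ell(\ell-1)}\sum_j y_j^2.
\]
Since $r\ge1$, we have $\frac{\ell-r}{r(\ell-1)}\le 1$, so this is at most $\bar x^2+\frac1\ell\sum_j y_j^2$. The latter equals $\frac1\ell\sum_j x_j^2$ exactly, because $\frac1\ell\sum_j(x_j-\bar x)^2=\frac1\ell\sum_j x_j^2-\bar x^2$. This proves \eqref{eq:subset-mean-second-moment}.

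A simpler route to \eqref{eq:subset-mean-second-moment} alone is Jensen: $\hat x^2\le\frac1r\sum_{j\in S}x_j^2$, and each $j$ lies in $S$ with probability $r/\ell$, so the conditional expectation of the right-hand side is $\frac1\ell\sum_j x_j^2$.

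The only place that needs care is the cross-term computation in Step 2, and in particular the use of $\sum_j y_j=0$. Beyond that, the argument is routine.
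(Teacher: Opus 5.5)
Your proof is correct and follows essentially the paper's route. The paper also writes $\hat x$ through inclusion indicators with $\mathbb{E}[I_j\mid\lambda=r]=r/\ell$ and $\mathbb{E}[I_jI_k\mid\lambda=r]=\frac{r(r-1)}{\ell(\ell-1)}$ and expands the square. The only difference there is that it expands the uncentered second moment using $\sum_{j\neq k}x_jx_k=\ell^2\bar x^2-\sum_j x_j^2$, where you center first and use $\sum_j y_j=0$.

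For the second-moment bound, the paper uses exactly the Jensen argument you list as the simpler alternative. Your main route instead derives it from the exact variance formula together with $\frac{\ell-r}{r(\ell-1)}\le 1$, which is equally valid. Your $0/0$ convention for $\ell=1$ makes explicit a point the paper leaves implicit.
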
				
			\begin{proof}
				By Jensen's inequality,
				\[
				\Bigg(\frac{1}{r}\sum_{j\in S} x_j\Bigg)^2
				\le
				\frac{1}{r}\sum_{j\in S} x_j^2.
				\]
				Taking conditional expectation over the random subset $S$ yields: 
				\[
				\mathbb{E}\Bigg[\Bigg(\frac{1}{r}\sum_{j\in S} x_j\Bigg)^2 \Bigm| \lambda=r\Bigg]
				\le
				\frac{1}{r}\sum_{j=1}^{\ell}\mathbb{P}(j\in S\mid \lambda=r)x_j^2
				= \frac{1}{r} \sum_{j=1}^{\ell}\frac{r}{\ell} x_j^2 
				=
				\frac{1}{\ell}\sum_{j=1}^{\ell}x_j^2.
				\]
This shows \eqref{eq:subset-mean-second-moment}. 				
				Since $S$ is a simple random sample of size $r$ drawn uniformly without replacement from $\{1,\dots,\ell\}$.
				Let $$I_j:=
				\begin{cases}
				1,& \text{if $j$ is selected},\\
				0,& \text{otherwise}.
				\end{cases} $$ 
				Then $\hat x=\frac{1}{r}\sum_{j=1}^{\ell} I_j x_j$
				and $\E[I_j\mid \lambda=r]=\frac{r}{\ell}$, which indicates that for $j\neq k$,
				\[
				\E[I_j I_k\mid \lambda=r]=\frac{\binom{\ell-2}{r-2}}{\binom{\ell}{r}}
				=\frac{r(r-1)}{\ell(\ell-1)}.
				\]
				Expanding the second moment gives
				\begin{align*}			
				\E[\hat x^2\mid \lambda=r]
				&=\frac{1}{r^2}\Bigg(
				\sum_{j=1}^{\ell}\E[I_j\mid \lambda=r]x_j^2
				+\sum_{j\neq k}\E[I_j I_k\mid \lambda=r]x_j x_k
				\Bigg)\\
				&=\frac{1}{r^2}\Bigg(
				\frac{r}{\ell}\sum_{j=1}^{\ell}x_j^2
				+\frac{r(r-1)}{\ell(\ell-1)}\sum_{j\neq k}x_j x_k
				\Bigg)\\
				&= \frac{1}{r^2}\Bigg(
				\frac{r}{\ell}\sum_{j=1}^{\ell}x_j^2
				+\frac{r(r-1)}{\ell(\ell-1)}\left(\ell^2\bar x^2-\sum_{j=1}^{\ell}x_j^2\right)\Bigg) \\	
				&= \frac{1}{r^2}\Bigg(
				\frac{r(\ell-r)}{\ell(\ell-1)}\sum_{j=1}^{\ell}x_j^2
				+\frac{r\ell(r-1)}{\ell-1}\bar x^2 
				\Bigg)\\
					&=\bar x^2
				+
				\frac{\ell-r}{r\,\ell\,(\ell-1)}\Bigg(\sum_{j=1}^{\ell}x_j^2-\ell \bar x^2 \Bigg) \\
				&=
				\bar x^2
				+\frac{\ell-r}{r\,\ell\,(\ell-1)}\sum_{j=1}^{\ell}(x_j-\bar x)^2,			
				\end{align*}
where the third equation is from $$\sum_{j\neq k}x_j x_k=\Bigg(\sum_{j=1}^{\ell}x_j\Bigg)^2-\sum_{j=1}^{\ell}x_j^2
				=\ell^2\bar x^2-\sum_{j=1}^{\ell}x_j^2.$$  The above condition and $\E[\hat x\mid \lambda=r]=\bar x$ from \eqref{Ev-Ew} imply $Var(\hat x\mid \lambda=r)=\E[(\hat x-\bar x)^2\mid \lambda=r]$
				and yield  \eqref{srswor}.
			\end{proof}

	\begin{lemma}\label{consensus-error-p1}
		Let $\{(\W^{k}, {\mathbf{V}}^{k})\}$ be the sequence generated by Algorithm~\ref{algorithm-PaME} with Setup \ref{setup}. Then  for any $k\geq 1$, 
		\begin{equation}
			\label{lemma1}
			\begin{aligned}
				\sum_{i=1}^{m} 2  t_i \E\nl \w_i^k - \overline{\v}_i^k \nr^2 &\le 
				8m \E\nl \W^k - \boldsymbol{\Pi}^k\nr_F^2,\\
				  \E\nl \W^k - \V^k\nr_F^2 &\le 
				\dfrac{4m}{t_{\min}} \E\nl \W^k - \boldsymbol{\Pi}^k\nr_F^2.
			\end{aligned}
		\end{equation}
	\end{lemma}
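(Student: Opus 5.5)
For $k\notin\mathcal K_0$, the tracking step gives $\overline{\v}_i^k=\w_i^k$ for every $i$, so both left-hand sides vanish and there is nothing to prove. The work is at $k\in\mathcal K_0$. There I will condition on the history $\mathcal F_k$ (which fixes $\W^k$ and hence $\boldsymbol{\Pi}^k$) and on the neighbor set $N_i^k$, with $|N_i^k|=t_i$. I will then bound $\E[\|\w_i^k-\overline{\v}_i^k\|^2\mid\mathcal F_k,N_i^k]$ coordinate by coordinate, and finally remove the conditioning with the tower property.

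Fix a coordinate $\ell$ and let $S_\ell:=\{j\in N_i^k:\ell\in T_j^k\}$, so that $\lambda_{i,\ell}^k=|S_\ell|$; the `$\star$' convention guarantees this equality. By \eqref{pame}, $\overline{v}_{i\ell}^k-w_{i\ell}^k=0$ if $\lambda_{i,\ell}^k=0$, and otherwise
\[
\overline{v}_{i\ell}^k-w_{i\ell}^k=\frac{1}{r}\sum_{j\in S_\ell}\bigl(w_{j\ell}^k-w_{i\ell}^k\bigr)\quad\text{on }\{\lambda_{i,\ell}^k=r\}.
\]
The sets $T_j^k$ are drawn independently, uniformly, and identically across $j$. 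Hence, by the same symmetry argument as in the proof of Theorem~\ref{theorem1-unbiase}, conditional on $\lambda_{i,\ell}^k=r\ge1$ the set $S_\ell$ is a uniformly random $r$-subset of $N_i^k$. Applying \eqref{eq:subset-mean-second-moment} with $x_j:=w_{j\ell}^k-w_{i\ell}^k$ and population $N_i^k$ of size $t_i$ gives the bound $\frac1{t_i}\sum_{j\in N_i^k}(w_{j\ell}^k-w_{i\ell}^k)^2$. This bound does not depend on $r$, and it trivially also covers the event $r=0$, so it holds for the unconditional-in-$r$ expectation as well. Summing over $\ell$ and then averaging over $N_i^k\subseteq N_i$ yields
\[
t_i\,\E\|\w_i^k-\overline{\v}_i^k\|^2\le \E\sum_{j\in N_i}\|\w_j^k-\w_i^k\|^2 .
\]

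Next I compare each difference to the average $\boldsymbol{\varpi}^k$. By \eqref{triangle-ineq}, $\|\w_j^k-\w_i^k\|^2\le 2\|\w_j^k-\boldsymbol{\varpi}^k\|^2+2\|\w_i^k-\boldsymbol{\varpi}^k\|^2$. Summing over $i\in[m]$ and $j\in N_i$, each $\|\w_j^k-\boldsymbol{\varpi}^k\|^2$ is counted at most $m$ times, and the second term contributes $2\sum_i m_i\|\w_i^k-\boldsymbol{\varpi}^k\|^2\le 2m\|\W^k-\boldsymbol{\Pi}^k\|_F^2$. Altogether,
\[
\sum_i t_i\E\|\w_i^k-\overline{\v}_i^k\|^2\le 4m\,\E\|\W^k-\boldsymbol{\Pi}^k\|_F^2 .
\]
Multiplying by $2$ gives the first inequality of \eqref{lemma1}. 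For the second, use $t_{\min}\,\E\|\W^k-\V^k\|_F^2\le \sum_i t_i\E\|\w_i^k-\overline{\v}_i^k\|^2$ and divide by $t_{\min}$.

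The main obstacle is the conditioning step: justifying that, given $\lambda_{i,\ell}^k=r$, the contributing set $S_\ell$ is uniform over $r$-subsets of $N_i^k$, while $\W^k$ is held fixed. This requires the $T_j^k$ to be independent of $\mathcal F_k$ and exchangeable across $j$, which is exactly what Setup~\ref{setup}(3) provides.
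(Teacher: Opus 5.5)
Your proof is correct and follows essentially the paper's argument: the trivial case $k\notin\mathcal K_0$, coordinatewise conditioning on $\lambda_{i,\ell}^k=r$ (with the conditional set of contributing neighbors uniform over $r$-subsets), the subset-mean second-moment bound, a factor-$2$ Young inequality, and double counting over neighbor sets. The only difference is the order of steps. You center at $w_{i\ell}^k$ before applying the subset-mean bound and only afterwards split the pairwise distances around $\boldsymbol{\varpi}^k$. The paper instead splits $w_{i\ell}^k-\overline{v}_{i\ell}^k$ around $\varpi_\ell^k$ first. Both orderings give the same constants $8m$ and $4m/t_{\min}$.
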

	\begin{proof}		
	We proof the result by considering two cases.

\noindent \underline{Case 1:  $k \notin \mathcal{K}_0$.} Under such a case,  Algorithm~\ref{algorithm-PaME} states that $\overline{\v}_i^k=\w_i^k$ for any $i\in [m]$. Therefore, \eqref{lemma1} holds clearly.
		
\noindent \underline{Case 2:  $k \in \mathcal{K}_0$.}
		Recall \eqref{pame} that
		\begin{equation}\label{pame-1}
			\overline{v}_{i\ell}^k = \begin{cases}
				\dfrac{\sum_{j \in N_i^k} v_{j\ell}^k}{\lambda_{i,\ell}^k}, & \text{if } \lambda_{i,\ell}^k \neq 0, \\
				w_{i\ell}^k, & \text{if } \lambda_{i,\ell}^k = 0.
			\end{cases}
		\end{equation}
		Let  $\boldsymbol{\varpi}^k$ be defined in \eqref{def-barw-tildew}, and  $\varpi_\ell^k$ be its $\ell$th entry. By \eqref{def-lambda-itk}, it follows
		\begin{equation}\label{S-lambda-w}
	N_{i,\ell}^k := \{j \in N_i^k: v_{j\ell}^k \neq 0\},\qquad	|N_{i,\ell}^k| = \lambda_{i,\ell}^k,\qquad \varpi_\ell^k = \frac{1}{m}\sum_{i=1}^mw_{i\ell}^k,\qquad \forall~\ell\in[n].\end{equation}
Moreover, if $\lambda_{i,\ell}^k=r>0$ then
\begin{equation}\label{v-v-w-r}
\overline{v}_{i\ell}^k = 
				\dfrac{\sum_{j \in N_i^k} v_{j\ell}^k}{\lambda_{i,\ell}^k} = \dfrac{\sum_{j \in N_i^k} w_{j\ell}^k}{r}=\dfrac{\sum_{j \in N_{i,\ell}^k} w_{j\ell}^k}{r}. \end{equation}
Using the above facts, for any $\ell\in[n]$ and any $1\le r\le t_i=|N_i^k|$, we have		
		\begin{equation}
			\label{local-pame-coor}
			\begin{array}{rcl}			
				\E\left(w_{i\ell}^k - \overline{v}_{i\ell}^k\right)^2
				&=& \dsum_{r=1}^{t_i}\mathbb{P}(\lambda_{i,\ell}^k=r)\mathbb{E}\left[(w_{i\ell}^k-\overline{v}_{i\ell}^k)^2\Big|\lambda_{i,\ell}^k=r\right]+\mathbb{P}(\lambda_{i,\ell}^k=0)\mathbb{E}\left[(w_{i\ell}^k-\overline{v}_{i\ell}^k)^2\Big|\lambda_{i,\ell}^k=0\right]\\[1.5ex]
				&\overset{(\ref{pame-1},\ref{v-v-w-r})}{=}& \dsum_{r=1}^{t_i}\mathbb{P}(\lambda_{i,\ell}^k=r)\mathbb{E}\Bigg[\Bigg(w_{i\ell}^k -   \dfrac{1}{r}\dsum_{j \in N_{i,\ell}^k} w_{j\ell}^k  \Bigg)^2 \Big|\lambda_{i,\ell}^k=r\Bigg]+\mathbb{P}(\lambda_{i,\ell}^k=0)\mathbb{E}\left[(w_{i\ell}^k-w_{i\ell}^k)^2\Big|\lambda_{i,\ell}^k=0\right]\\
				&=&  \dsum_{r=1}^{t_i}\mathbb{P}(\lambda_{i,\ell}^k=r)\mathbb{E}\Bigg[\Bigg(w_{i\ell}^k - {\varpi}_\ell^k - \dfrac{1}{r}\dsum_{j \in N_{i,\ell}^k}(w_{j\ell}^k - {\varpi}_\ell^k)\Bigg)^2 \Big|\lambda_{i,\ell}^k=r\Bigg]\\
				&\overset{\eqref{triangle-ineq}}{\le}&\dsum_{r=1}^{t_i}\mathbb{P}(\lambda_{i,\ell}^k=r)\Bigg(2(w_{i\ell}^k-{\varpi}_\ell^k)^2 + 2\mathbb{E}\Bigg[\Bigg(\dfrac{1}{r}\dsum_{j \in N_{i,\ell}^k}(w_{j\ell}^k - {\varpi}_\ell^k)\Bigg)^2 \Big|\lambda_{i,\ell}^k=r\Bigg]\Bigg)
				\\[4ex]
				&\overset{\eqref{eq:subset-mean-second-moment}}{\le}& 2\left(w_{i\ell}^k - {\varpi}_\ell^k\right)^2\dsum_{r=1}^{t_i}\mathbb{P}(\lambda_{i,\ell}^k=r)
				+ \dfrac{2}{t_i}\dsum_{j \in \N_i^k}(w_{j\ell}^k - {\varpi}_\ell^k)^2\dsum_{r=1}^{t_i}\mathbb{P}(\lambda_{i,\ell}^k=r) \\[3.5ex]
				& \le& 2\left(w_{i\ell}^k - {\varpi}_\ell^k\right)^2 + \dfrac{2}{t_i}\dsum_{j \in \N_i^k}(w_{j\ell}^k - {\varpi}_\ell^k)^2.
			\end{array}
		\end{equation}		
Additionally, we have
		\begin{equation}
			\label{directed-graph}
			\begin{aligned}
				\sum_{i=1}^{m} \sum_{j \in \N_i^k}\nl \w_j^k-\boldsymbol{\varpi}^k\nr^2 
				&= \sum_{i=1}^{m}\sum_{j=1}^{m}  \mathbb{I}(j \in \N_i^k)\nl \w_j^k - \boldsymbol{\varpi}^k\nr^2 \\
				&=\sum_{j=1}^{m}\nl \w_j^k - \boldsymbol{\varpi}^k\nr^2 \sum_{i=1}^{m} \mathbb{I}(j \in \N_i^k) \\
				&\le m \sum_{i=1}^{m}\nl \w_i^k - \boldsymbol{\varpi}^k \nr^2.
			\end{aligned}
		\end{equation}	
		where $\mathbb{I}(j \in \N)$ is an indicator function defined by \eqref{def-indicator}.
Combine the above facts yields
		\begin{equation}
			\label{local-pame-final}
			\begin{array}{rcl}		
				\dsum_{i=1}^{m}2 t_i\E\nl \w_i^k-\overline{\v}_i^k\nr^2 
				&=& 2\dsum_{i=1}^{m}  t_i\dsum_{\ell=1}^{n}  \E\left[(w_{i\ell}^k - v_{i\ell}^k)^2\right] \\
				&\overset{\eqref{local-pame-coor}}{\le}& 2\dsum_{i=1}^{m}  t_i\dsum_{\ell=1}^{n} \Bigg(2\left(w_{i\ell}^k - {\varpi}_\ell^k\right)^2 + \dfrac{2}{t_i}\dsum_{j \in \N_i^k}(w_{j\ell}^k - {\varpi}_\ell^k)^2\Bigg) \\
				&=& 
				4\dsum_{i=1}^{m} t_i \nl \w_i^k-\boldsymbol{\varpi}^k\nr^2
				+4\dsum_{i=1}^{m} \dsum_{j \in \N_i^k}\nl \w_j^k-\boldsymbol{\varpi}^k\nr^2\\
				&\overset{\eqref{directed-graph}}{\le}&8m\nl \W^k - \boldsymbol{\Pi}^k\nr_F^2.
			\end{array}
		\end{equation}
where the last inequality uses  $t_i =|N_i^k|\leq|N_i|=m_i\le m$. The second condition in \eqref{lemma1} follows $t_i\geq t_{\min}$
\end{proof}

 \begin{lemma}\label{bound-of-wik-vik}
			Let $\{(\W^k, \V^k)\}$ be the sequence generated by Algorithm~\ref{algorithm-PaME} with Setup \ref{setup}.	 Then under Assumptions \ref{assump-gradientlip} and \ref{assump-double-stochasitic},  
			$$\w_i^{k}\in \mathbb{N}(2\delta),\qquad \overline{\v}_i^{k} \in \mathbb{N}(2\delta),\qquad 
			\forall~ i\in[m],~k\geq0. $$
		\end{lemma}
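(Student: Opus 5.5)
The plan is an induction on $k$ proving a sharper claim: for every $k\ge0$ and $i\in[m]$,
\[
\max\big\{\|\w_i^{k}\|_\infty,\ \|\overline{\v}_i^{k}\|_\infty\big\}\le r_k:=\sum_{t=0}^{k-1}\frac{\varepsilon(2\delta)}{\sigma^t t_{\min}}.
\]
We then show $r_k\le\delta$ for all $k$, which gives membership in $\mathbb{N}(\delta)\subseteq\mathbb{N}(2\delta)$. The argument is deterministic, path by path, so it holds for every realization of $N_i^k$, $T_j^k$ and $\mathcal{B}_i^k$. Neither the expectation nor Assumption \ref{assump-double-stochasitic} is really needed here. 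Only Assumption \ref{assump-gradientlip} (through the finiteness of $\varepsilon_i(2\delta)$) and the choice \eqref{def-sigma} of $\sigma^0$ enter.

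\emph{Step 1 (averaging is $\infty$-nonexpansive).} Consider an iteration $k\in\mathcal{K}_0$ and a coordinate $\ell$. By \eqref{pame} together with \eqref{v-v-w-r}, either $\overline{v}_{i\ell}^k=w_{i\ell}^k$, or $\overline{v}_{i\ell}^k=\frac1r\sum_{j\in N_{i,\ell}^k}w_{j\ell}^k$ with $r=\lambda_{i,\ell}^k\ge1$, which is an arithmetic mean of actual neighbor coordinates. (Here the `$\star$' convention ensures that a transmitted true zero is still counted in $\lambda_{i,\ell}^k$.) Hence $|\overline{v}_{i\ell}^k|\le\max_{j\in[m]}|w_{j\ell}^k|$. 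If $k\notin\mathcal{K}_0$, then $\overline{\v}_i^k=\w_i^k$. In both cases
\[
\|\overline{\v}_i^k\|_\infty\le\max_j\|\w_j^k\|_\infty .
\]

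\emph{Step 2 (gradient step bound).} Suppose all $\w_j^k$ lie in $\mathbb{N}(r_k)$ with $r_k\le\delta$. By Step 1, $\overline{\v}_i^k\in\mathbb{N}(\delta)\subseteq\mathbb{N}(2\delta)$, so we may use $\varepsilon_i(2\delta)$ from \eqref{def-eps} to bound $\|\g_i^k\|_\infty=\|\nabla f_i(\overline{\v}_i^k;\mathcal{B}_i^k)\|_\infty\le\varepsilon(2\delta)$. This is obtained by comparing $\mathcal{B}_i^k$ with the empty sub-batch, for which the stochastic gradient is zero. Then \eqref{local-updates} and \eqref{def-hi} give
\[
\|\w_i^{k+1}\|_\infty\le\|\overline{\v}_i^k\|_\infty+\frac{\|\g_i^k\|_\infty}{\sigma^k t_i}\le r_k+\frac{\varepsilon(2\delta)}{\sigma^k t_{\min}}=r_{k+1}.
\]
The base case is $\W^0=\mathbf 0$ (Setup \ref{setup}), so $r_0=0$ and $\overline{\v}_i^0$ is an average of zeros. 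This closes the induction, provided $r_k\le\delta$ for all $k$.

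\emph{Step 3 (geometric sum).} By \eqref{SRDFL-CRstep}, $\sigma^t=\gamma^t\sigma^0$. Therefore
\[
r_k\le\frac{\varepsilon(2\delta)}{\sigma^0t_{\min}}\sum_{t\ge0}\gamma^{-t}=\frac{\varepsilon(2\delta)\gamma}{\sigma^0t_{\min}(\gamma-1)}\le\delta,
\]
where the last inequality is exactly the second term in the definition \eqref{def-sigma} of $\sigma$. This yields the claim, with $\delta$ of slack relative to $2\delta$.

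\emph{Main obstacle.} I expect the delicate point to be Step 2: turning $\varepsilon_i(2\delta)$, which is defined as a supremum of \emph{differences} of sub-batch gradients, into an absolute bound on $\|\g_i^k\|_\infty$. I would first try the empty-batch comparison described above. If that convention is not admissible, the fallback is to write $\g_i^k=\nabla f_i(\overline{\v}_i^k;\mathcal{D}_i)+(\g_i^k-\nabla f_i(\overline{\v}_i^k;\mathcal{D}_i))$. The second term is bounded by $\varepsilon(2\delta)$. The first is bounded by $\|\nabla f_i(\mathbf 0)\|_\infty+\alpha_i\sqrt n\,2\delta$, using Lipschitz continuity on the convex set $\mathbb{N}(2\delta)$. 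That constant would then be absorbed into the choice of $\sigma^0$, with the same geometric-sum argument; this changes only the explicit threshold in \eqref{def-sigma}, not the structure of the proof.
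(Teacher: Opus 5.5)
Your proposal is correct and is essentially the paper's proof. It uses the same three ingredients: the $\infty$-norm nonexpansiveness of the PME average (the paper's claim \eqref{inclusion}), the per-step bound $\|\h_i^k\|_\infty\le\varepsilon_i(2\delta)/(\sigma^k t_i)$, and the geometric-sum consequence of \eqref{def-sigma}. The only difference is that you start the radius at $0$ and land in $\mathbb{N}(\delta)$, whereas the paper starts from the loose bound $\|\w_i^0\|_\infty\le\delta$ and lands in $\mathbb{N}(2\delta)$.

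The step you flag as the main obstacle is in fact glossed over in the paper, which simply reads $\|\g_i^k\|_\infty\le\varepsilon_i(2\delta)$ off \eqref{def-eps}. Your empty-batch reading reproduces that step. Your fallback is sound, but it changes the threshold $\sigma$ in Setup~\ref{setup}, so it proves the lemma for a slightly different parameter choice.
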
 
		\begin{proof} We first claim that for any $k\geq 0$, 
		\begin{equation}\label{inclusion}
		 \| \w_i^{k}\|_{\infty} \le \phi,~~\forall~i\in[m] ~~\Longrightarrow~~\|\overline{\v}_i^{k}\|_{\infty} \le \phi,~~\forall~i\in[m].
		\end{equation}
In fact, when $k \notin \mathcal{K}_0$, $$\| \overline{\v}_i^{k}\|_{\infty} = \| \w_i^{k}\|_{\infty} \le \phi.$$ When $k \in \mathcal{K}_0$, we consider two cases:
		\begin{itemize}[leftmargin=11pt]
		\item If $\lambda_{i,\ell}^k=0$, we have 
		$$\left|\overline{v}_{i\ell}^{k}\right|\overset{\eqref{pame}}{=} \left| w_{i\ell}^{k}\right|\leq \phi.$$
		\item If $\lambda_{i,\ell}^k\neq0$, 
		\begin{equation*}
\left|\overline{v}_{i\ell}^{k}\right|\overset{\eqref{pame}}{\le}
 \dfrac{\sum_{j\in {N}_i^{k}}\left|v_{j\ell}^k\right|}{\lambda_{i,\ell}^k}\overset{\eqref{v-v-w-r}}{=}\dfrac{\sum_{j \in N_{i,\ell}^k} \left|w_{j\ell}^{k}\right|}{|N_{i,\ell}^k|}\leq \dfrac{\sum_{j \in N_{i,\ell}^k} \phi}{|N_{i,\ell}^k|} = \phi.
		\end{equation*}
		\end{itemize}
Both cases yield $\|\overline{\v}_i^{k}\|_{\infty} \le \phi,$ showing \eqref{inclusion}.  Recall \eqref{stepbound} that 
$$\sigma^0\geq \sigma \geq \dfrac{\varepsilon(2\delta)\gamma}{(\gamma-1)\delta t_{\min} }.$$
This condition indicates 
\begin{equation}\label{sum-eps-sig-t}
\sum_{j=0}^k \dfrac{\varepsilon_i(2\delta)}{\sigma^{j}t_i}\le \sum_{j=0}^\infty \dfrac{\varepsilon_i (2\delta)}{\sigma^{j}t_i} = \dfrac{\varepsilon_i(2\delta)}{\sigma^{0}t_i} \sum_{j=0}^\infty \dfrac{1}{\gamma^{j}} \le  \dfrac{\varepsilon_i(2\delta)}{\sigma^{0}t_i}  \dfrac{\gamma}{\gamma-1} \overset{\eqref{def-eps-t-r}}{\leq}  \dfrac{\varepsilon(2\delta)}{\sigma t_{\min} }  \dfrac{\gamma}{\gamma-1} \le \delta. 
\end{equation}
\begin{itemize}[leftmargin=11pt]
\item When $j=0$, the algorithm has initializations $\w_i^{0}=0$ for all $i\in[m]$, namely, for any $i\in[m]$,
\begin{equation}\label{w0-v0-bd}
\|\w_i^{0}\|_{\infty} \le \delta \leq 2\delta,\qquad \|\overline{\v}_i^{0}\|_{\infty} \overset{\eqref{inclusion}}{\leq} \delta \leq 2\delta. 
\end{equation}
 \item When $j=1$, according to \eqref{local-updates}, for any $i\in[m]$,
\begin{equation}\label{w1-v1-bd}
\begin{array}{rcl} 
				\|\w_i^{1}\|_{\infty} 
				&\overset{\eqref{local-updates}}{\leq}&  
				 \|\overline{\v}_i^{0}\|_{\infty} +\| \h_i^0\|_{\infty}  \\[2ex]
				&\overset{\eqref{w0-v0-bd}}{\leq}& \delta + \| \h_i^0\|_{\infty} \\[2ex]
				 &\overset{\eqref{def-hi}}{=}&  \delta + \dfrac{1}{\sigma^{0}t_i} \|\g_i^{0}\|_{\infty}\\[3ex]
				&\overset{(\ref{def-eps-t-r},\ref{w0-v0-bd})}{\leq}& \delta  + \dfrac{\varepsilon_i(2\delta)}{\sigma^{0}t_i} \overset{\eqref{sum-eps-sig-t}}{\leq} 2\delta,\\[3ex]
				\|\overline{\v}_i^{1}\|_{\infty} &\overset{\eqref{inclusion}}{\leq}& \delta  + \dfrac{\varepsilon_i(2\delta)}{\sigma^{0}t_i} \leq 2\delta.
		\end{array}	\end{equation}
  \item When $j=2$, using the similar reasoning to show \eqref{w1-v1-bd} enables the following conditions,
  \begin{equation}\label{w2-v2-bd}
\begin{array}{rcl} 
				\|\w_i^{2}\|_{\infty} 
				  & \leq& \delta  + \dfrac{\varepsilon_i(2\delta)}{\sigma^{0}t_i}+  \dfrac{\varepsilon_i(2\delta)}{\sigma^{1}t_i} \overset{\eqref{sum-eps-sig-t}}{\leq} 2\delta,\\[3ex]
				\|\overline{\v}_i^{2}\|_{\infty} &\overset{\eqref{inclusion}}{\leq}& \delta  + \dfrac{\varepsilon_i(2\delta)}{\sigma^{0}t_i}+  \dfrac{\varepsilon_i(2\delta)}{\sigma^{1}t_i} \leq 2\delta.
		\end{array}	\end{equation}
		\item When $j=k$, using the similar reasoning to show \eqref{w2-v2-bd} enables the following conditions,
  \begin{equation*}\label{wj-vj-bd}
\begin{array}{rcl} 
				\|\w_i^{k}\|_{\infty} 
				  & \leq& \delta  + \dsum_{j=0}^{k-1} \dfrac{\varepsilon_i(2\delta)}{\sigma^{j}t_i} \overset{\eqref{sum-eps-sig-t}}{\leq} 2\delta,\\[3ex]
				\|\overline{\v}_i^{k}\|_{\infty} &\overset{\eqref{inclusion}}{\leq}& \delta  + \dsum_{j=0}^{k-1} \dfrac{\varepsilon_i(2\delta)}{\sigma^{j}t_i} \leq 2\delta.
		\end{array}	\end{equation*}
\end{itemize}
This shows the desired result.
		\end{proof}

\begin{lemma}\label{C_1-guarantee}
Let $\{(\W^k, \V^k)\}$ be the sequence generated by Algorithm~\ref{algorithm-PaME} with Setup \ref{setup}. For every $i\in[m]$, let 
	\begin{equation}\label{def-nu-xi-C1}
		\begin{aligned}	
		\varrho_i&:=(1-p)^{t_i},\\[1ex]
			c_i&:=\dsum_{r=1}^{t_i} \binom{t_i}{r} p^r (1-p)^{t_i-r}\dfrac{t_i-r}{r(t_i-1)},\\[1ex]
			\xi_i &:= \frac{(1-\varrho_i)(m_i-t_i)}{m_i t_i (m_i-1)},\\[1ex]
			C_1 &:=
				\max_i\Bigg\{   \varrho_i \big(1+ {\zeta}\big)^2
				+ \dsum_{j \in \N_i}\Big(\dfrac{c_j}{m_j}
				+  \xi_j\Big)\Bigg\}.
		\end{aligned}
	\end{equation} 
Here, we denote $0/0=0$ and thus $c_i=0$ when $t_i=1$. Then 
	\begin{equation}\label{cond-contraction}
		 {\zeta}+\sqrt{C_1}<\frac{1}{\gamma^{k_0/2} }.
	\end{equation}
\end{lemma}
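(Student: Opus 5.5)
The plan is to reduce \eqref{cond-contraction} to the standing condition \eqref{cond-p-zeta-gamma} of Setup~\ref{setup}. Since $\gamma\in(1,\zeta^{-2/k_0})$ by \eqref{initial-p-gamma-nu}, we have $\gamma^{k_0/2}<\zeta^{-1}$, that is, $\gamma^{-k_0/2}-\zeta>0$. Taking square roots is therefore legitimate, and \eqref{cond-contraction} is equivalent to
\[
C_1<\Big(\frac{1}{\gamma^{k_0/2}}-\zeta\Big)^2 .
\]
The maximum in the definition of $C_1$ is over finitely many $i$. So it suffices to show, for each $i\in[m]$,
\[
\varrho_i(1+\zeta)^2+\sum_{j\in\N_i}\Big(\frac{c_j}{m_j}+\xi_j\Big)\;\le\;(1-p)^{t_i}(1+\zeta)^2+2p\sum_{j\in\N_i}\nu_j .
\]
The right-hand side is strictly below $(\gamma^{-k_0/2}-\zeta)^2$ by \eqref{cond-p-zeta-gamma}. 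Because $\varrho_i=(1-p)^{t_i}$, the first terms coincide. Everything therefore reduces to the termwise estimate
\[
\frac{c_j}{m_j}\le p\nu_j \qquad\text{and}\qquad \xi_j\le p\nu_j \qquad\text{for every node } j .
\]

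For $c_j$, observe that for $1\le r\le t_j$ with $t_j\ge2$ we have $0\le \frac{t_j-r}{r(t_j-1)}\le 1$; for $t_j=1$ we use the convention $c_j=0$. Hence $c_j$ is at most the binomial probability of the event $\{r\ge1\}$:
\[
c_j\le \sum_{r=1}^{t_j}\binom{t_j}{r}p^r(1-p)^{t_j-r}=1-(1-p)^{t_j}\le p\,t_j ,
\]
where the last step is Bernoulli's inequality. Since $t_j=\lfloor \nu_j m_j\rfloor\le \nu_j m_j$, this gives $c_j/m_j\le p\,t_j/m_j\le p\nu_j$.

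For $\xi_j$, note that $t_j\ge1$ implies $m_j-t_j\le m_j-1$. Therefore $\frac{m_j-t_j}{t_j(m_j-1)}\le 1$ whenever $m_j\ge2$. When $m_j=1$ we have $t_j=1$, the numerator vanishes, and the $0/0=0$ convention gives $\xi_j=0$. In either case
\[
\xi_j\le \frac{1-\varrho_j}{m_j}=\frac{1-(1-p)^{t_j}}{m_j}\le \frac{p\,t_j}{m_j}\le p\nu_j ,
\]
again by Bernoulli's inequality and $t_j\le \nu_j m_j$. Summing the two estimates over $j\in\N_i$ yields the reduced inequality, and the strict inequality of \eqref{cond-p-zeta-gamma} transfers to \eqref{cond-contraction}.

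The only delicate point I expect is the degenerate cases: $t_j=1$ in $c_j$ and $m_j=1$ in $\xi_j$. These are handled exactly by the stated $0/0$ convention. The remaining content is bookkeeping to confirm that the crude bound $1-(1-p)^{t}\le pt$ is all that the factor $2p\sum_{j}\nu_j$ in \eqref{cond-p-zeta-gamma} requires.
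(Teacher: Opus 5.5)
Your proof is correct and follows essentially the paper's route: bound $c_j\le 1-\varrho_j\le p\,t_j$ via the binomial sum and Bernoulli's inequality, use $t_j\le \nu_j m_j$, and reduce to \eqref{cond-p-zeta-gamma}, taking square roots because $\gamma^{-k_0/2}-\zeta>0$. The only difference is that you bound $c_j/m_j$ and $\xi_j$ separately by $p\nu_j$, whereas the paper first combines them into a single fraction. Your version is slightly cleaner and also covers the degenerate case $m_j=1$, where the paper's intermediate expression divides by $m_j-1=0$.
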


\begin{proof}
	For $t_i=1$, $c_i=0\le 1-(1-p)=1-\varrho_i$. For $t_i\ge2$,  
	\[
	c_i
	 \le
	\sum_{r=1}^{t_i}\binom{t_i}{r}p^r(1-p)^{t_i-r}
	=
	1-(1-p)^{t_i}
	=1-\varrho_i.
	\]
Using this fact and the definition of $\xi_j$, we obtain
	\[
	\frac{c_j}{m_j}+\xi_j
	\le  \frac{(1-\varrho_j)(t_j m_j+m_j-2t_j)}{ t_j m_j (m_j-1)} .
	\]
	By Bernoulli's inequality, $1-\varrho_j=1-(1-p)^{t_j}\le t_j p$. Using this condition and $t_j= \lfloor\nu_j m_j\rfloor$ yields
\begin{equation*}
\begin{array}{rcl}
	\dsum_{j\in  N_i}\left(\frac{\nu_j}{m_j}+\xi_j\right)
	&\le&
	p\dsum_{j\in  N_i} \dfrac{   t_j m_j+m_j-2t_j }{  m_j (m_j-1)}\\[3ex]
	&\le&
	p\dsum_{j\in  N_i} \dfrac{   \lfloor\nu_j m_j\rfloor (m_j-2)+m_j }{  m_j (m_j-1)}\\[3ex]
	&\le&
	p\dsum_{j\in  N_i} \dfrac{    \nu_j  (m_j-2)+ 1}{  m_j-1}\\[3ex]
	&\le& 2p\dsum_{j\in  N_i} \nu_j,
\end{array}\end{equation*}
which results in
\begin{equation*} 
	 \varrho_i \big(1+ {\zeta}\big)^2
				+ \dsum_{j \in \N_i}\Big(\dfrac{c_j}{m_j}
				+  \xi_j\Big) \leq  \varrho_i \Big(1+ {\zeta}\Big)^2 +2p \sum_{j\in N_i}\nu_j  
		\overset{\eqref{cond-p-zeta-gamma}}{<}
		\left(\frac{1}{\gamma^{k_0/2} }- {\zeta}\right)^2,
	\end{equation*}
for any $i\in[m]$,	 and hence $\sqrt{C_1}< {\gamma^{-k_0/2} }- {\zeta}, $ showing the desired result.
\end{proof}

	\begin{lemma}\label{projecterror}
Let $\{(\W^k, \V^k)\}$ be the sequence generated by Algorithm~\ref{algorithm-PaME} with Setup \ref{setup} and $C_1$ be defined by (\ref{def-nu-xi-C1}).	 Then under Assumptions \ref{assump-gradientlip} and \ref{assump-double-stochasitic}, it holds
	\begin{equation}\label{eq:P-alpha-bound}
		\E\|\P^{k}\|_F^2
		\le C_1\E\|\W^k-\boldsymbol{\Pi}^k\|_F^2,\qquad \forall~k \in \mathcal{K}_0.
	\end{equation}
	\end{lemma}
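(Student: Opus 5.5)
Fix $k\in\mathcal K_0$ and work column by column and coordinate by coordinate. The $i$th column of $\P^k$ is $\overline{\v}_i^k-\widehat{\w}_i^k$, so
\[
\E\|\P^k\|_F^2=\sum_{i=1}^m\sum_{\ell=1}^n \E\big(\overline v_{i\ell}^k-\widehat w_{i\ell}^k\big)^2 .
\]
Throughout, the randomness has two layers: the neighbor subset $N_i^k$, drawn uniformly of size $t_i$ from $N_i$, and the coordinate masks $T_j^k$. By independence, each $\delta_{j\ell}$ is Bernoulli$(p)$, so $\lambda_{i,\ell}^k\sim\mathrm{Bin}(t_i,p)$. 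The strategy is to condition on $\lambda_{i,\ell}^k$ and split into two events.

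\emph{Event $\lambda_{i,\ell}^k=0$} (probability $\varrho_i$). Here $\overline v_{i\ell}^k=w_{i\ell}^k$, so the error is $(w_{i\ell}^k-\widehat w_{i\ell}^k)^2$. Summed over $i,\ell$ this is the $(i,i)$-diagonal part of $\|\W^k(\I-\B)\|_F^2$. Since $\boldsymbol{\Pi}^k(\I-\B)=0$ by \eqref{WB-I=0}, we can write $\W^k(\I-\B)=(\W^k-\boldsymbol{\Pi}^k)(\I-\B)$ and apply \eqref{bd-I-B}. This yields the term $\varrho_i(1+\zeta)^2\|\W^k-\boldsymbol{\Pi}^k\|_F^2$.

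\emph{Event $\lambda_{i,\ell}^k=r\ge1$.} Conditionally, the observed set $N_{i,\ell}^k$ is a uniform $r$-subset of $N_i^k$. Composing with the uniform choice of $N_i^k\subseteq N_i$, it is a uniform $r$-subset of $N_i$. Hence $\overline v_{i\ell}^k$ is a SRSWOR mean from the population $\{w_{j\ell}^k\}_{j\in N_i}$, whose mean is $\widehat w_{i\ell}^k$. By \eqref{srswor}, the conditional error equals $\frac{m_i-r}{r m_i(m_i-1)}\sum_{j\in N_i}(w_{j\ell}^k-\widehat w_{i\ell}^k)^2$. Replacing the center $\widehat w_{i\ell}^k$ by $\varpi_\ell^k$ only increases the sum of squares. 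It is cleaner, though, to decompose the variance in two stages, which is what produces the constants in \eqref{def-nu-xi-C1}: an inner SRSWOR of $r$ from $t_i$ gives $\frac{t_i-r}{r t_i(t_i-1)}$, and an outer SRSWOR of $t_i$ from $m_i$ gives $\frac{m_i-t_i}{t_i m_i(m_i-1)}$. Averaging the inner factor over $r\sim\mathrm{Bin}(t_i,p)$ gives $c_i$, and weighting the outer factor by $\mathbb P(\lambda>0)=1-\varrho_i$ gives $\xi_i$. I would bound both population sums by $\sum_{j\in N_i}(w_{j\ell}^k-\varpi_\ell^k)^2$, with a $1/m_i$ factor absorbing the $t_i$ normalisation via $t_i\le m_i$. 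Summing over $\ell$ and $i$, each node $j$ appears once for every $i$ with $j\in N_i$. Exchanging the order of summation as in \eqref{directed-graph}, and using the symmetry of the graph (so $j\in N_i\iff i\in N_j$), turns this into $\sum_j\|\w_j^k-\boldsymbol{\varpi}^k\|^2$ times $\sum_{i\in N_j}(c_i/m_i+\xi_i)$. That is bounded by the max in $C_1$ after relabeling.

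Adding the two events and taking the max over $i$ gives \eqref{eq:P-alpha-bound}. \textbf{Main obstacle:} controlling the cross terms. The two events are handled separately via the law of total expectation, so no cross terms arise there. However, the coefficient of the first term is the diagonal entry weight $\varrho_i$, while the bound $\|(\W^k-\boldsymbol{\Pi}^k)(\I-\B)\|_F^2$ is a global one. I would therefore bound $\sum_i\varrho_i\|\cdot\text{col}_i\|^2\le(\max_i\varrho_i)\|\cdot\|_F^2$ and accept that $C_1$ is defined via a max per node. The other delicate point is keeping the nested-sampling variance decomposition exact enough to match $c_i$ and $\xi_i$ without extra factors of $2$.
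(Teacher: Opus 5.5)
Your decomposition is the paper's. You condition on $\lambda_{i,\ell}^k\sim\mathrm{Bin}(t_i,p)$ and keep $w_{i\ell}^k$ on $\{\lambda_{i,\ell}^k=0\}$, bounding that piece via $\|(\W^k-\boldsymbol{\Pi}^k)(\I-\B)\|_F^2\le(1+\zeta)^2\|\W^k-\boldsymbol{\Pi}^k\|_F^2$. On $\{\lambda_{i,\ell}^k\ge1\}$ you use the two-stage SRSWOR variance split, where the cross term vanishes by conditional unbiasedness. You then recentre at $\boldsymbol{\varpi}^k$ and swap the double sum using symmetry of the graph. One small correction: the coefficient $c_i/m_i$ comes from averaging over the uniform draw of $N_i^k$, since each $j\in N_i$ is included with probability $t_i/m_i$. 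Using only $N_i^k\subseteq N_i$ and $t_i\le m_i$ would leave the larger coefficient $c_i/t_i$.

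The genuine gap is the last step, which you noticed but did not close. You bound the $\lambda=0$ part by $(\max_i\varrho_i)(1+\zeta)^2\|\W^k-\boldsymbol{\Pi}^k\|_F^2$ and the rest by $\max_i\sum_{j\in N_i}(c_j/m_j+\xi_j)\,\|\W^k-\boldsymbol{\Pi}^k\|_F^2$. That gives
\[
\E\|\P^k\|_F^2\le\Big[(1+\zeta)^2\max_i\varrho_i+\max_i\sum_{j\in N_i}\Big(\frac{c_j}{m_j}+\xi_j\Big)\Big]\E\|\W^k-\boldsymbol{\Pi}^k\|_F^2 .
\]
This is a sum of two maxima. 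It is always at least $C_1$, a maximum of sums, and strictly larger unless a single node attains both maxima.

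Saying that $C_1$ is `a max per node' does not help. To get the per-node constant you would need $\varrho_i\|\w_i^k-\widehat{\w}_i^k\|^2$ to be controlled by $\varrho_i(1+\zeta)^2\|\w_i^k-\boldsymbol{\varpi}^k\|^2$ column by column. That fails: take $\w_i^k=\boldsymbol{\varpi}^k$ with its neighbours away from the mean. The paper's proof has exactly the same hole. It bounds only the unweighted sum $\sum_i\|\w_i^k-\widehat{\w}_i^k\|^2$ and then simply combines the three estimates into $C_1$.

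The two constants agree whenever $\varrho_i=(1-p)^{t_i}$ does not depend on $i$. Double stochasticity of $\B$ forces the graph to be regular: at a minimum-degree node $j$, the row sum $\sum_{i\in N_j}1/m_i=1$ forces every neighbour to have minimum degree, and connectivity propagates this. So equal participation rates $\nu_i$ suffice. Otherwise, state the lemma with the sum-of-maxima constant and strengthen \eqref{cond-p-zeta-gamma} to $(1-p)^{t_{\min}}(1+\zeta)^2+2p\max_i\sum_{j\in N_i}\nu_j<(\gamma^{-k_0/2}-\zeta)^2$. The estimates in the proof of Lemma~\ref{C_1-guarantee} then give $\zeta+\sqrt{C_1}<\gamma^{-k_0/2}$ for the new constant as well.
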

	
	\begin{proof}
		Let $\widehat{\w}_i^k$ and $\widetilde{\w}_i^k$ be defined by  \eqref{def-barw-tildew} and  $\widehat{w}_{i\ell}^k$ and $\widetilde{w}_{i\ell}^k$ be their $\ell$th entry. Following the notation in Lemma~\ref{consensus-error-p1}, let $N_{i,\ell}^k\subseteq \N_i^k$ be defined in \eqref{S-lambda-w} and $\lambda_{i,\ell}^k=|N_{i,\ell}^k|$. To bound $$\P^{k} = \V^{k} - \W^{k}\B=\left(\overline{\v}_1^k-\widehat{\w}_1^k,\overline{\v}_2^k-\widehat{\w}_2^k,\cdots,\overline{\v}_m^k-\widehat{\w}_m^k\right),$$ 
		as shown in \eqref{def-all-matrices}, it  suffices to bound $\|\overline{\v}_i^k-\widehat{\w}_i^k\|$ or $\overline{v}_{i\ell}^k - \widehat{w}_{i\ell}^k$. Therefore,  condition on the random neighbor set $\N_i^k$, for any $1\le r\le t_i=|\N_i^k|$, we have 
		\begin{equation}\label{coo-exp-pre}
			\begin{array}{rcl}
				\mathbb{E}\Bigg[\Big(\overline{v}_{i\ell}^k - \widehat{w}_{i\ell}^k\Big)^2 \Bigm| \lambda_{i,\ell}^k = r\Bigg]
				&\overset{\eqref{v-v-w-r}}{=}& \mathbb{E}\Bigg[\Bigg(\dfrac{1}{r}\dsum_{j \in N_{i,\ell}^k} w_{j\ell}^k
				- \widetilde{w}_{i\ell}^k + \widetilde{w}_{i\ell}^k - \widehat{w}_{i\ell}^k\Bigg)^2
				\Bigm| \lambda_{i,\ell}^k = r\Bigg] \\ 
				&=& \mathbb{E}\Bigg[\Bigg(\dfrac{1}{r}\dsum_{j \in N_{i,\ell}^k}w_{j\ell}^k
				- \widetilde{w}_{i\ell}^k \Bigg)^2 \Bigm| \lambda_{i,\ell}^k = r\Bigg]
				+ \mathbb{E}\left[\left( \widetilde{w}_{i\ell}^k - \widehat{w}_{i\ell}^k\right)^2
				\Bigm| \lambda_{i,\ell}^k = r\right] \\[5ex]
				&\overset{\eqref{srswor}}{=}& \dfrac{t_i-r}{r t_i (t_i-1)}\dsum_{j\in\N_i^k}\left(w_{j\ell}^k-\widetilde{w}_{i\ell}^k\right)^2
				+ \left( \widetilde{w}_{i\ell}^k - \widehat{w}_{i\ell}^k\right)^2,
			\end{array}
		\end{equation}	
		where the second equation used the followsing fact,					\begin{equation*} 
			\begin{array}{rcl}
				&& \mathbb{E}\Bigg[2\Bigg(\dfrac{1}{r}\dsum_{j \in N_{i,\ell}^k}w_{j\ell}^k
				- \widetilde{w}_{i\ell}^k \Bigg) \left( \widetilde{w}_{i\ell}^k - \widehat{w}_{i\ell}^k\right) \Bigm| \lambda_{i,\ell}^k = r\Bigg]	\\		
				&=&\mathbb{E}\Bigg[2\Bigg(\dfrac{1}{r}\dsum_{j \in N_{i,\ell}^k}w_{j\ell}^k
				- \widetilde{w}_{i\ell}^k \Bigg)  \Bigm| \lambda_{i,\ell}^k = r\Bigg]\left( \widetilde{w}_{i\ell}^k - \widehat{w}_{i\ell}^k\right)\\	
				&=&2\Bigg(\mathbb{E}\left[\dfrac{1}{r}\dsum_{j \in N_{i,\ell}^k}w_{j\ell}^k
				\Bigm| \lambda_{i,\ell}^k = r\right] - \widetilde{w}_{i\ell}^k\Bigg)  \left( \widetilde{w}_{i\ell}^k - \widehat{w}_{i\ell}^k\right)\\	
				&=&2\Bigg( \dfrac{1}{t_i}\dsum_{j \in N_{i}^k}w_{j\ell}^k
				- \widetilde{w}_{i\ell}^k\Bigg)  \left( \widetilde{w}_{i\ell}^k - \widehat{w}_{i\ell}^k\right)=0.			
			\end{array}
		\end{equation*}	
		We point out that if $t_i=1$, then 
		\begin{equation}
			\label{w-w=0-ti=1}
			\dsum_{j\in\N_i^k}\left(w_{j\ell}^k-\widetilde{w}_{i\ell}^k\right)^2=\left(w_{j\ell}^k- {w}_{j\ell}^k\right)^2=0,
		\end{equation}
		which means \eqref{coo-exp-pre} still holds. In this scenario, we let $0/0=0$. 
		According to Setup \ref{setup},   each node ${j\in N_i^k}$ independently constructs a set ${T_j^k\subseteq[n]}$ by uniformly selecting $s$ entries of $[n]$ without replacement. This indicates that each entry of $\w_j^k$ will be selected in a probability $p=s/n$. Consequently,
		\begin{equation}\label{prob-lambda=r}
			\mathbb{P}\left(\lambda_{i,\ell}^k = 0\right) = (1-p)^{t_i}=\varrho_i,\qquad  \mathbb{P}\left(\lambda_{i,\ell}^k = r\right) = \binom{t_i}{r} p^r (1-p)^{t_i-r},\qquad \sum_{r=1}^{t_i}\mathbb{P}\left(\lambda_{i,\ell}^k = r\right) = 1-\varrho_i.
		\end{equation}
where $\varrho_i$ is defined by (\ref{def-nu-xi-C1}).		Using these fact enable us to derive that
		\begin{equation}\label{coo-exp}
			\begin{array}{rcl}		
				&&\mathbb{E}\big[(\overline{v}_{i\ell}^k-\widehat{w}_{i\ell}^k)^2 \big| \lambda_{i,\ell}^k = r\big]\\[2ex]
				&=& \mathbb{P}\left(\lambda_{i,\ell}^k = 0\right)\mathbb{E}\left[(\overline{v}_{i\ell}^k - \widehat{w}_{i\ell}^k)^2\Bigm| \lambda_{i,\ell}^k = 0\right] 
				+ \dsum_{r=1}^{t_i}\mathbb{P}\left(\lambda_{i,\ell}^k = r\right)
				\mathbb{E}\left[(\overline{v}_{i\ell}^k-\widehat{w}_{i\ell}^k)^2 \Bigm| \lambda_{i,\ell}^k = r\right]\\
				&\overset{\eqref{pame-1}}{=}&\mathbb{P}\left(\lambda_{i,\ell}^k = 0\right)\mathbb{E}\left[(w_{i\ell}^k - \widehat{w}_{i\ell}^k)^2\Bigm| \lambda_{i,\ell}^k =0\right] 
				+ \dsum_{r=1}^{t_i}\mathbb{P}\left(\lambda_{i,\ell}^k = r\right)
				\mathbb{E}\left[(\overline{v}_{i\ell}^k-\widehat{w}_{i\ell}^k)^2 \Bigm| \lambda_{i,\ell}^k = r\right]\\
				&\overset{\eqref{coo-exp-pre}}{=}& \mathbb{P}\left(\lambda_{i,\ell}^k = 0\right) (w_{i\ell}^k - \widehat{w}_{i\ell}^k)^2
				+ \dsum_{r=1}^{t_i}\mathbb{P}\left(\lambda_{i,\ell}^k = r\right)\Bigg(
				\dfrac{t_i-r}{r(t_i-1)}\frac{1}{t_i}\sum_{j \in \N_i^k}(w_{j\ell}^k - \widetilde{w}_{i\ell}^k)^2 +  (\widetilde{w}_{i\ell}^k - \widehat{w}_{i\ell}^k)^2 \Bigg)\\			
				&\overset{\eqref{prob-lambda=r}}{=}& 
\varrho_i (w_{i\ell}^k - \widehat{w}_{i\ell}^k)^2
				+ \dfrac{c_i}{t_i}\dsum_{j \in \N_i^k}(w_{j\ell}^k - \widetilde{w}_{i\ell}^k)^2
				+ \big(1-\varrho_i\big)(\widetilde{w}_{i\ell}^k - \widehat{w}_{i\ell}^k)^2,
			\end{array}
		\end{equation}
		where $c_i$ is defined by \eqref{def-nu-xi-C1}. Here we let $c_i=0$ due to \eqref{w-w=0-ti=1}.  Now we bound the three terms in \eqref{coo-exp} separately.
		\begin{itemize}[leftmargin=12pt]
			\item  {For the first term,} by $\W^{k}\B=(\widehat{\w}_1^k,\cdots,\widehat{\w}_m^k)$, we have 
			\begin{equation}\label{transerr-1}
				\begin{aligned}
					\sum_{i=1}^m \|\w_i^k -  {\widehat{\w}}_i^k\|^2
					= \|\W^k(\I-\B)\|_F^2 \overset{\eqref{WB-I=0}}{=}  \|\W^k(\I - \B) - \boldsymbol{\Pi}^k(\I - \B)\|_F^2 
					\overset{\eqref{bd-I-B}}{\le}
					 (1+ {\zeta} )^2 \|\W^k-\boldsymbol{\Pi}^k\|_F^2.
				\end{aligned}
			\end{equation}		
			\item  {For the second term,} taking expectation with respect to $\N_i^k$ gives
			\begin{equation}\label{transerr-2}
				\begin{aligned}
					\E\Bigg[\sum_{i=1}^m \frac{c_i}{t_i}\sum_{j \in \N_i^k}\big\|\w_j^k -  {\widetilde{\w}}_i^k\big\|^2\Bigg]
					&\le \E\Bigg[\sum_{i=1}^m \frac{c_i}{t_i}\sum_{j \in \N_i^k}\|\w_j^k -  \boldsymbol{\varpi}^k\|^2\Bigg] \\
					&= \sum_{i=1}^m\frac{c_i}{m_i}\sum_{j \in \N_i}\|\w_j^k- \boldsymbol{\varpi}^k\|^2 \\
					&= \sum_{i=1}^m\sum_{j \in \N_i}\frac{\nu_j}{m_j}\|\w_i^k- \boldsymbol{\varpi}^k\|^2,
				\end{aligned}
			\end{equation}
			where the first inequality is from   \eqref{def-barw-tildew}, that is,  $\boldsymbol{\varpi}^k$ and $ {\widetilde{\w}}_i^k$ are the mean of ${\{\w_j^k:j\in[m]\}}$ and ${\{\w_j^k:j\in N_i^k\}}$  and the last equality is derived by exchanging the summation indices $i$ and $j$ using the undirected-graph property.
			
			\item  {For the third term,} we apply the variance formula for simple random sampling with respect to $\N_i^k$,
			\begin{equation}\label{transerr-3}
				\begin{array}{rcl}	
					\dsum_{i=1}^{m}\big(1-\varrho_i\big)\E  \| {\widetilde{\w}}_i^k -  {\widehat{\w}}_i^k\|^2 
					&\overset{\eqref{srswor}}{=}& \dsum_{i=1}^{m} \frac{1-\varrho_i}{m_i}\frac{m_i-t_i}{t_i(m_i-1)} \sum_{j \in \N_i}\|\w_j^k -  {\widehat{\w}}_i^k\|^2 \\
					&\overset{\eqref{def-nu-xi-C1}}{=}& \dsum_{i=1}^{m}\xi_j   \sum_{j \in \N_i}\|\w_j^k -  {\widehat{\w}}_i^k\|^2 \\
					&\le&   \dsum_{i=1}^{m} \sum_{j \in \N_i}\xi_j \|\w_i^k -  \boldsymbol{\varpi}^k\|^2,
				\end{array}
			\end{equation}
			where  the first inequality is  from \eqref{def-barw-tildew}, that is,  $\boldsymbol{\varpi}^k$ and $ {\widehat{\w}}_i^k$ are the mean of ${\{\w_j^k:j\in[m]\}}$ and ${\{\w_j^k:j\in N_i\}}$.  
		\end{itemize}
		Finally, combining \eqref{coo-exp} with \eqref{transerr-1}--\eqref{transerr-3} and applying the law of total expectation,
		we obtain
		\begin{equation}\label{bound-on-P-intermediate}
			\begin{aligned}
				\E\|\P^k\|_F^2
				= \sum_{i=1}^{m}\E\|\overline{\v}_i^k -  {\widehat{\w}}_i^k\|^2  
				\le C_1
				\|\W^k-\boldsymbol{\Pi}^k\|_F^2.
			\end{aligned}
		\end{equation}
		where $C_1$  is defined by \eqref{def-nu-xi-C1}. Therefore,  we arrive at \eqref{eq:P-alpha-bound}.
		Here, the expectation in \eqref{bound-on-P-intermediate} is taken over the randomness at round $k$
		(conditioned on $\W^k$), while \eqref{eq:P-alpha-bound} further averages over all randomness accumulated up to round $k$.
		\end{proof}

	\begin{lemma}\label{consensus-error-p2} 
		Let $\{(\W^k, \V^k)\}$ be the sequence generated by Algorithm~\ref{algorithm-PaME} with Setup \ref{setup}.	 Then under Assumptions \ref{assump-gradientlip} and \ref{assump-double-stochasitic}, there exist a constant $C_2>0$ and $\beta>\gamma$ such that  
		\begin{equation}
			\label{lemma3}
			\E\nl\W^k - \boldsymbol{\Pi}^k\nr_F^2 \le \frac{C_2}{\beta^k}.
		\end{equation}
	\end{lemma}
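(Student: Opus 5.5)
The plan is to derive a block recursion for the consensus error $\mathbf{E}^k:=\W^k-\boldsymbol{\Pi}^k=\W^k(\I-\mathbf{J})$ over one communication period of length $k_0$. We will show it contracts at a rate strictly faster than $\gamma^{-k_0/2}$, which is the margin furnished by Lemma~\ref{C_1-guarantee}.

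\textbf{Step 1: one-step recursions.} We right-multiply the recursion \eqref{one-step-comm} by $\I-\mathbf{J}$ and use \eqref{JB-BJ-J} together with \eqref{WB-I=0}. For $k\in\mathcal{K}_0$ this gives
\[
\mathbf{E}^{k+1}=\mathbf{E}^k(\B-\mathbf{J})+\P^k(\I-\mathbf{J})-\H^k(\I-\mathbf{J}).
\]
For $k\notin\mathcal{K}_0$ we have $\V^k=\W^k$, so $\mathbf{E}^{k+1}=\mathbf{E}^k-\H^k(\I-\mathbf{J})$.

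\textbf{Step 2: $L^2$ norms.} Let $a_k:=\sqrt{\E\|\mathbf{E}^k\|_F^2}$. We apply Minkowski's inequality in $L^2$, together with $\|\I-\mathbf{J}\|\le1$ and $\|\B-\mathbf{J}\|\le\zeta$ from \eqref{bd-I-B}. Lemma~\ref{projecterror} gives $\sqrt{\E\|\P^k\|_F^2}\le\sqrt{C_1}\,a_k$. For the gradient term, Lemma~\ref{bound-of-wik-vik} places every $\overline{\v}_i^k$ in the compact set $\mathbb{N}(2\delta)$. Since $\mathcal{D}_i$ is finite, the stochastic gradients there are bounded by a constant $G$. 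Hence $\|\H^k\|_F\le c_0\gamma^{-k}$ deterministically, with $c_0:=\sqrt{m}\,G/(\sigma^0t_{\min})$.

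\textbf{Step 3: block inequality.} Chaining the one-step recursions over a block starting at $k=nk_0$ gives
\[
a_{(n+1)k_0}\le q\,a_{nk_0}+c_1\gamma^{-nk_0},\qquad q:=\zeta+\sqrt{C_1},
\]
where $c_1:=c_0\sum_{j<k_0}\gamma^{-j}$. Within the block, for $0\le j<k_0$, we similarly get $a_{nk_0+j}\le(1+\zeta+\sqrt{C_1})\,a_{nk_0}+c_1\gamma^{-nk_0}$.

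\textbf{Step 4: unrolling.} Since $\W^0=0$, we have $a_0=0$. Unrolling then yields
\[
a_{nk_0}\le c_1\sum_{l=0}^{n-1}q^{\,n-1-l}\gamma^{-lk_0}.
\]
Set $r:=\max\{q,\gamma^{-k_0}\}$. This bound is at most $c_1\,n\,r^{n-1}$. By Lemma~\ref{C_1-guarantee}, $q<\gamma^{-k_0/2}$, and trivially $\gamma^{-k_0}<\gamma^{-k_0/2}$. So we can pick $\bar r$ with $r<\bar r<\gamma^{-k_0/2}$ and absorb the polynomial factor: $n\,r^{n-1}\le c_2\bar r^{\,n}$.

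\textbf{Step 5: conclusion.} Define $\beta:=\bar r^{-2/k_0}$. Then $\beta>\gamma$, and combining Steps 3 and 4 gives $a_k^2\le C_2\beta^{-k}$ for all $k$. The intermediate indices $k=nk_0+j$ cost only a constant factor of $\beta^{k_0}$. This proves \eqref{lemma3}.

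\textbf{Main obstacle.} The delicate point is the use of Lemma~\ref{projecterror} inside the Minkowski step. That bound holds after averaging over the round-$k$ randomness conditioned on $\W^k$, which is exactly what the $L^2$ triangle inequality requires. I would make this conditioning explicit, via the tower property, so that no cross-term independence between $\P^k$ and $\mathbf{E}^k$ is needed. A secondary point is justifying the uniform gradient bound $G$. It follows from continuity of $\nabla f_i(\cdot;\mathcal{B})$ on $\mathbb{N}(2\delta)$ over the finitely many sub-batches, in the same way that $\varepsilon_i(2\delta)<\infty$ is justified in Setup~\ref{setup}.
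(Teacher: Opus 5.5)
Your proof is correct, but it combines the three terms of the recursion differently from the paper.

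\textbf{The paper's route.} It works with the squared quantity $\mathcal{W}^k=\E\|\W^k-\boldsymbol{\Pi}^k\|_F^2$ and uses Young's inequality twice. A weighted split with weights $1+\sqrt{C_1}/\zeta$ and $1+\zeta/\sqrt{C_1}$ gives $(\zeta+\sqrt{C_1})^2$ for the mixing and $\P^k$ terms. A second split with an auxiliary $\iota>0$ peels off the gradient term. The price is a factor $\rho_1=1+\iota$ at every step, including the $k_0-1$ local steps. So $\iota$ must satisfy $(1+\iota)^{k_0}(\zeta+\sqrt{C_1})^2<\gamma^{-k_0}$, which Lemma~\ref{C_1-guarantee} permits. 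The paper then sums the resulting geometric series, perturbing $\iota$ so that $M\neq\gamma^{-2}$, and takes $\beta=\min\{1/M,\gamma^{2}\}$.

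\textbf{Your route.} Minkowski's inequality in $L^2$ loses nothing at the local steps and gives the per-block factor exactly $\zeta+\sqrt{C_1}$. It never divides by $\zeta$ or $\sqrt{C_1}$. A possible coincidence of the two rates is handled by absorbing the factor $n$ into $\bar r>r$, rather than by tuning $\iota$.

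\textbf{Trade-off.} The paper's version produces explicit constants ($\Gamma$, $P$, $Q$) on the squared scale. Yours is shorter and slightly sharper. The ingredients are otherwise the same: the error recursion, \eqref{bd-I-B}, Lemma~\ref{projecterror}, and the deterministic $O(\gamma^{-k})$ bound on $\|\H^k\|_F$ from Lemma~\ref{bound-of-wik-vik}. The paper obtains that last bound via $\|\g_i^k\|^2\le n\varepsilon^2(2\delta)$, which plays the role of your $G$.

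\textbf{Your ``main obstacle''.} It is not actually an obstacle. Lemma~\ref{projecterror} is already stated in total-expectation form, $\E\|\P^k\|_F^2\le C_1\E\|\W^k-\boldsymbol{\Pi}^k\|_F^2$, and that is all the $L^2$ triangle inequality needs. No conditioning or tower-property argument is required at this stage.
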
		
	
	\begin{proof} 
	For compactness, we define the expected consensus error and the gradient heterogeneity term at step $k$ as
	\begin{equation}\label{def-calW-calG}
		\mathcal{W}^k := \E\nl\W^k - \boldsymbol{\Pi}^k\nr_F^2, \qquad \mathcal{G}^k := \E\nl\H^k(\I - \mathbf{J})\nr_F^2.
	\end{equation}
	By   Lemma \ref{bound-of-wik-vik}, we have  $\overline{\v}_i^{k}\in \mathbb N(2\delta)$ and thus 
\begin{equation}\label{bd-g-v-k}
\|\g_i^{k}\big\|^2 \overset{\eqref{def-hi}}{=}\|\nabla f(\vik;\mathcal{B}_i^k)\|^2 \leq n\|\nabla f(\vik;\mathcal{B}_i^k) \big\|^2_\infty \overset{(\ref{def-eps})}{\leq}  n\varepsilon_i^2(2\delta) \overset{(\ref{def-eps-t-r})}{\leq}  n\varepsilon^2(2\delta).
\end{equation}
As $t_{\min}=\min\{t_1,t_2,\cdots,t_m\}$, $\sigma^{k}=\sigma^{0}\gamma^{k}\geq \sigma \gamma^{k}$, and $\nl\I - \mathbf{J}\nr = 1 $,  
			we have 
			\begin{equation}\label{eq:Gk-bound}
			\begin{array}{rcl}			 
			\mathcal{G}^k &\overset{\eqref{def-calW-calG}}{\le}& \E\nl\H^k\nr_F^2	  \overset{\eqref{H-DG-sum}}{=} \E \nl  \G^{k}\D^{k}\nr_F^2
				 \le  
				\E \|\D^{k}\|^2\|\G^{k}\|_F^2 \\[3ex]
				&
				\overset{\eqref{def-D-G}}{=} &  \dfrac{ \E \|\G^{k}\|_F^2}{\sigma^{2k} t^2_{\min}}\leq \dfrac{ 1}{\sigma^{2k} t^2_{\min}} \dsum_{i=1}^m \E \big\|\g_i^{k}\big\|^2  \leq     
				  \dfrac{\Gamma }{\gamma^{2k}},\qquad \text{with}~ \Gamma:= \dfrac{ mn  \varepsilon^2(2\delta)}{ \sigma^2  t^2_{\min}}.
			\end{array}	
			\end{equation}
		Let $\iota > 0$ satisfy 
	\begin{equation}\label{iota}
	\iota \in \Bigg(0, \frac{1}{\gamma\left(  {\zeta} + \sqrt{C_1}\right)^{2/k_0}}-1\Bigg).
	\end{equation}
	The above range is well defined due to \eqref{cond-contraction}. Then define 
	\begin{equation}\label{def-rho-0-1}
	 \rho_1 := 1+\iota, \qquad \rho_2 := 1+\frac{1}{\iota},\qquad
			\rho := \rho_1\left( {\zeta} + \sqrt{C_1}\right)^2.
			 \end{equation}
It is easy to check that
	\begin{equation}\label{rho-0-1-gamma}
	 \rho\rho_1^{k_0-1} =( 1+\iota)^{k_0}\left( {\zeta} + \sqrt{C_1}\right)^2 \overset{\eqref{iota}}{<} \dfrac{1}{\gamma^{k_0}}.
			 \end{equation}
	\noindent \underline{Case 1: $k \notin \mathcal{K}_0$.} Algorithm~\ref{algorithm-PaME} states that $\overline{\v}_i^k=\w_i^k$. Recalling the update rule \eqref{one-step-comm} and, the consensus error evolves as
	\begin{equation}
		\W^{k+1} - \boldsymbol{\Pi}^{k+1} \overset{\eqref{def-all-matrices}}{=}
		\W^k - \H^k - (\W^k - \H^k)\mathbf{J}
		=(\W^k - \boldsymbol{\Pi}^k) - \H^k(\I - \mathbf{J}).
	\end{equation}
	Taking the Frobenius norm, taking expectation, and using Young's inequality with constant $\iota$, we obtain
	\begin{equation}\label{eq:W-bound-local}
		\mathcal{W}^{k+1} \overset{\eqref{triangle-ineq}}{\le} \rho_1 \mathcal{W}^k + \rho_2\mathcal{G}^k \overset{\eqref{eq:Gk-bound}}{\le} \rho_1 \mathcal{W}^k +  \frac{\rho_2\Gamma }{\gamma^{2k}}.
	\end{equation}	
	\noindent \underline{Case 2: $k \in \mathcal{K}_0$.} By the definition of $\P^k = \V^k - \W^k\B$ and $\mathbf{J}\B=\B\mathbf{J}=\mathbf{J}$ from \eqref{JB-BJ-J},
	\begin{equation}\label{eq:W-evolve-comm}
		\begin{array}{rcl}
			\W^{k+1} - \boldsymbol{\Pi}^{k+1} & \overset{\eqref{WB-I=0}}{=}&
			\W^{k+1}(\mathbf{I}-\mathbf{J}) \\[1ex]
			&\overset{\eqref{one-step-comm}}{=}&(\W^k\B-\H^k+\P^k)(\mathbf{I}-\mathbf{J}) 
			\\[1ex]
			& \overset{\eqref{WB-I=0}}{=}&\W^k(\B - \mathbf{J}) - \boldsymbol{\Pi}^k(\B - \mathbf{J})  + \P^k(\I - \mathbf{J}) - \H^k(\I - \mathbf{J})\\[1ex]
			&=&
			(\W^k - \boldsymbol{\Pi}^k)(\B - \mathbf{J}) + \P^k(\I - \mathbf{J}) - \H^k(\I - \mathbf{J}).
		\end{array}
	\end{equation}
	Applying Young's inequality twice with constants $\iota > 0$, we have
	\begin{equation}\label{eq:W-bound-comm}
		\begin{array}{rcl}
			\mathcal{W}^{k+1} 
			&\le& \rho_1 \E\nl(\W^k - \boldsymbol{\Pi}^k)(\B - \mathbf{J}) + \P^k(\I - \mathbf{J})\nr_F^2 +\rho_2\mathcal{G}^k \\[2ex]
		&\le& \rho_1\left[  \left(1+\dfrac{\sqrt{C_1}}{\zeta}\right)\E  \|(\W^k - \boldsymbol{\Pi}^k)(\B - \mathbf{J}) \|_F^2 + \left(1+\dfrac{\zeta}{\sqrt{C_1}}\right)  \E\nl\P^k(\I - \mathbf{J})\nr_F^2 \right] + \rho_2\mathcal{G}^k \\[2ex]
			& \overset{\eqref{bd-I-B}}{\leq}& \rho_1\left[  \Big(\zeta^2+\zeta\sqrt{C_1} \Big)  \mathcal{W}^k + \left(1+\dfrac{\zeta}{\sqrt{C_1}}\right)  \E\nl\P^k\nr_F^2 \right] + \rho_2\mathcal{G}^k \\[2ex]
			& \overset{\eqref{eq:P-alpha-bound}}{\leq}&\rho_1\Big(  \zeta^2+\zeta\sqrt{C_1}     +  C_1+\zeta\sqrt{C_1} \Big)\mathcal{W}^k + \rho_2\mathcal{G}^k\\[2ex]
			& \overset{(\ref{def-rho-0-1},\ref{eq:Gk-bound})}{\le}& \rho  \mathcal{W}^k +  \dfrac{\rho_2\Gamma }{\gamma^{2k}},
		\end{array}
	\end{equation} 
where the second inequality  also used  $\nl\I - \mathbf{J}\nr_2 = 1$.
		 At the communication step $k = a k_0$, according to \eqref{eq:W-bound-comm},
	\begin{equation}\label{eq:W-comm-step}
		\mathcal{W}^{ak_0+1} \le \rho \mathcal{W}^{ak_0} + \dfrac{\rho_2 \Gamma }{\gamma^{2ak_0}} 
	\end{equation}
	Denote 
		\begin{equation}\label{def-MD}
		M :=(\rho\rho_1^{k_0-1})^{{1}/{k_0}}\overset{\eqref{rho-0-1-gamma}}{<} \frac{1}{\gamma},\qquad P:=P(k_0),\qquad\text{with}~
		P(t):=\rho_2 \Gamma  \Bigg( \rho_1^{t-1}  +   \dsum_{j=1}^{t-1} \dfrac{ \rho_1^{t-1-j}}{\gamma^{2j} } \Bigg).
	\end{equation}
	One can verify that
	$$P \leq \dfrac{\rho_2 \rho_1^{k_0} \gamma^{2} \Gamma }{ \rho_1 \gamma^{2}-1}.$$
	For  any $2 \le t \le k_0$, applying \eqref{eq:W-bound-local} recursively yields,
	\begin{equation*} 
		\begin{array}{rcl}
			\mathcal{W}^{ak_0+t} 
			 &\le&  \rho_1^{t-1} \mathcal{W}^{ak_0+1} + \rho_2 \Gamma^2 \dsum_{j=1}^{t-1}   \dfrac{\rho_1^{t-1-j}}{\gamma^{2(ak_0+j)}} \\
			 &\overset{\eqref{eq:W-comm-step}}{\le}&  \rho \rho_1^{t-1}\mathcal{W}^{ak_0} +    \dfrac{P(t)}{\gamma^{2ak_0}}\\[2ex]
			 & \le& M^{k_0} \mathcal{W}^{ak_0} +    \dfrac{P}{\gamma^{2ak_0}}.
		\end{array}
	\end{equation*}	
	Since $\rho<	M^{k_0}$ and $\rho_2\Gamma\leq P$, by \eqref{eq:W-comm-step}, the above inequality also holds for $t=1$. Overall,
		\begin{equation}\label{eq:W-local-unroll-t}
		\begin{array}{rcl}
			\mathcal{W}^{ak_0+t} \leq  M^{k_0} \mathcal{W}^{ak_0} +    \dfrac{P}{\gamma^{2ak_0}},\qquad\forall t=1,2,\cdots,k_0.
		\end{array}
	\end{equation}
	In particular, when $t=k_0$,  
		\begin{equation}\label{eq:x-rec-final}
		\mathcal{W}^{(a+1)k_0}\le M^{k_0} \mathcal{W}^{ak_0} + \dfrac{P}{\gamma^{2ak_0}} ,\qquad~ \forall a=0,1,2\cdots.
	\end{equation}	
According to the initialization of Algorithm \ref{algorithm-PaME}, $\W^0=0$, we have $ \mathcal{W}^0=0$.	Iterating \eqref{eq:x-rec-final} yields, for any $a\ge 1$,
	\begin{equation*}\label{eq:x-unroll-final}
		\mathcal{W}^{ak_0}
		\le
		M^{ak_0} \mathcal{W}^0
		+
		P\dsum_{\tau=0}^{a-1} \dfrac{M^{(a-1-\tau)k_0}}{\gamma^{2\tau k_0}} =\dfrac{P(M^{ak_0}-\gamma^{-2ak_0})}{M-\gamma^{-2}}.
	\end{equation*}
 Note that  $M$ depends continuously on $\iota$. Thus, we can always select an $\iota$ within the range defined in \eqref{iota} to strictly ensure $M \neq \gamma^{-2}$ without violating \eqref{rho-0-1-gamma}. Therefore, the above condition leads to	
	\begin{equation}\label{eq:x-explicit}
		\begin{array}{rcl}
			\mathcal{W}^{ak_0} 
			\le  Q \varphi^{ak_0},\qquad\text{with}~\varphi := \max\left\{M, \gamma^{-2} \right\} \in (0, 
	1),~~Q:=\dfrac{P}{|M-\gamma^{-2}|}.
		\end{array}
	\end{equation}
 Now for any $t=1,2,\cdots,k_0-1$,
	\begin{equation*} 
		\begin{array}{rcl}
			\mathcal{W}^{ak_0+t} 
			&\overset{\eqref{eq:W-local-unroll-t}}{\le}&  M^{k_0} \mathcal{W}^{ak_0} +    \dfrac{P}{\gamma^{2ak_0}}\\[2ex]
			&\overset{\eqref{rho-0-1-gamma}}{\le}&   \dfrac{\mathcal{W}^{ak_0}}{\gamma}  +\dfrac{P}{\gamma^{2ak_0}}    \\[2ex]
			& \overset{\eqref{eq:x-explicit}}{\le}&      (Q +P ) \varphi^{ak_0}.
		\end{array}
	\end{equation*}	
	This together with \eqref{eq:x-explicit} indicates that for any $t=1,2,\cdots,k_0$,
		\begin{equation}\label{eq:W-local-unroll-t11}
		\begin{array}{rcl}
			\mathcal{W}^{ak_0+t}  \le        (Q +P ) \varphi^{ak_0}=    \dfrac{Q +P}{\varphi^{t} } \varphi^{ak_0+t}
			 \leq   \dfrac{Q +P}{\varphi^{k_0} } \varphi^{ak_0+t}.
		\end{array}
	\end{equation}	
Now let
	\begin{align*}
	\beta:=\dfrac{1}{\varphi}= \min \Bigg\{\dfrac{1}{M}, \gamma^{2} \Bigg\} = \min \Bigg\{ \dfrac{1}{(\rho\rho_1^{k_0-1})^{1/k_0}},{\gamma^{2}}\Bigg\}  \overset{\eqref{rho-0-1-gamma} }{>} \min \Big\{ \gamma , \gamma^{2} \Big\} =  \gamma, 	\end{align*}
which together with \eqref{eq:W-local-unroll-t11} leads to
		\begin{equation*} 
		\begin{array}{rcl}
			\mathcal{W}^{k} \leq  \dfrac{Q +P}{\varphi^{k_0} }  \dfrac{1}{ \beta^k},  
		\end{array}
	\end{equation*}
	showing the desired result.
	\end{proof}

\subsection{\bf Main Results}

\noindent Recall that 
		\begin{eqnarray}\label{def-tilde-H}
		\begin{aligned}
		H^{k}  :=H(\W^k,\V^k,\boldsymbol{\sigma}^k) := \E \sum_{i=1}^{m}\left(
		f_i(\w_i^k) + \frac{\sigma^k t_i}{2}\nl\w_i^k - \overline{\v}_i^k\nr^2\right),\qquad
		\widetilde{H}^{{k}} : = H^{k}+ \frac{C}{\gamma^k}  +  \dfrac{D}{\eta^k}.
		\end{aligned}
		\end{eqnarray}
	Here, the involved constants are defined as 
			\begin{eqnarray}\label{def-C3-C4}
		\begin{aligned}
	\eta:=\dfrac{\beta}{\gamma}>1,\qquad	C:= \dfrac{\gamma}{\gamma-1} \dfrac{ 4mn \varepsilon^2(2\delta)}{ \sigma t_{\min}},\qquad D=    \dfrac{\eta}{\eta-1} \dfrac{4(1+\gamma)m\sigma^0 C_2}{\beta}  , 
		\end{aligned}
		\end{eqnarray}
		where $\beta >\gamma$ and $C_2$ are defined in Lemma \ref{consensus-error-p2}.

	\begin{lemma}\label{bound-of-para}
			Let $\{(\W^k, \V^k)\}$ be the sequence generated by Algorithm~\ref{algorithm-PaME} with Setup \ref{setup}.	 Then under Assumptions \ref{assump-gradientlip} and \ref{assump-double-stochasitic},   
			\begin{equation}\label{lemma4-inequality}
				\begin{aligned}
					\widetilde{H}^{k}  \ge  H^{k}
					> -\infty .
				\end{aligned}
			\end{equation}
		\end{lemma}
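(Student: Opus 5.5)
The plan is to check both inequalities in \eqref{lemma4-inequality} straight from the definitions in \eqref{def-tilde-H}; no recursion or descent argument is needed.

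\emph{First inequality.} By \eqref{def-tilde-H},
\[
\widetilde{H}^{k}-H^{k}=\frac{C}{\gamma^{k}}+\frac{D}{\eta^{k}}.
\]
Both terms on the right are nonnegative for every $k\ge 0$. Indeed, by \eqref{def-C3-C4} the constant $C$ is a product of positive quantities, since $\gamma>1$, $\sigma>0$, $t_{\min}\ge 1$ and $\varepsilon(2\delta)\ge 0$. Likewise $D>0$, because $\eta=\beta/\gamma>1$ (as $\beta>\gamma$ by Lemma \ref{consensus-error-p2}), $\sigma^0>0$, $C_2>0$ and $\beta>0$. Finally $\gamma^k>0$ and $\eta^k>0$. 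This gives $\widetilde H^k\ge H^k$.

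\emph{Second inequality.} Inside the expectation, the penalty term $\frac{\sigma^k t_i}{2}\|\w_i^k-\overline{\v}_i^k\|^2$ is nonnegative, because $\sigma^k=\gamma^k\sigma^0>0$ by \eqref{SRDFL-CRstep} and $t_i\ge 1$. Dropping it and using that each $f_i$ is bounded from below with $f_i^{\min}$ as in \eqref{def-f*}, every realization satisfies
\[
\sum_{i=1}^m\Big(f_i(\w_i^k)+\frac{\sigma^k t_i}{2}\|\w_i^k-\overline{\v}_i^k\|^2\Big)\ \ge\ \sum_{i=1}^m f_i^{\min}=f^*>-\infty .
\]
Taking expectations then yields $H^k\ge f^*>-\infty$.

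\emph{Main obstacle: the expectation must be well defined.} The only delicate point is to ensure $H^k$ is finite rather than $+\infty$ or undefined; nonnegativity of the penalty alone does not rule out an infinite expectation. For this I will use Lemma \ref{bound-of-wik-vik}: $\w_i^k,\overline{\v}_i^k\in\mathbb{N}(2\delta)$ deterministically for every realization. Since $f_i$ is continuous, it is bounded on this compact box. Moreover, $\|\w_i^k-\overline{\v}_i^k\|^2\le 16n\delta^2$ on the box. Hence the integrand is a bounded random variable for each fixed $k$, so its expectation exists and is finite, and the lower bound $f^*$ passes to the expectation.

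For a strict inequality one may note $f^*$ is finite, so $H^k\ge f^*>-\infty$. The lemma is then immediate once boundedness is invoked.
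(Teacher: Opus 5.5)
Your proof is correct, and it takes a more direct route than the paper's.

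\textbf{The paper's route.} It first uses Lemma \ref{bound-of-wik-vik} and the descent inequality \eqref{lip-inequal} to compare $f_i(\w_i^k)$ with $f_i(\overline{\v}_i^k)$. That inequality requires the local Lipschitz assumption on $\mathbb{N}(2\delta)$ and $\sigma^k\ge 4\alpha_{\max}$. It absorbs the cross term by Young's inequality into half of the penalty and bounds $\|\nabla f_i(\w_i^k)\|^2\le n\varepsilon^2(2\delta)$. Only then does it use lower-boundedness of $f_i$, at the point $\overline{\v}_i^k$. The result is the weaker bound $H^k\ge f^*-2mn\varepsilon^2(2\delta)/(\sigma t_{\min})$. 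Its intermediate bound keeps a quarter of the penalty, but this is discarded in the last step and not used later.

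\textbf{Your route.} You drop the nonnegative penalty and apply lower-boundedness directly at $\w_i^k$. This gives the sharper bound $H^k\ge f^*$. It does not use the Lipschitz assumption, the choice of $\sigma^0$, or the gradient bound $\varepsilon(2\delta)$. Nothing is lost for the stated claim, and the first inequality $\widetilde H^k\ge H^k$ follows exactly as you say from $C\ge 0$ and $D>0$.

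\textbf{On finiteness.} Your remark about finiteness is a sound addition that the paper omits. However, it is not needed for ``$>-\infty$'' itself. The integrand is pointwise bounded below by the constant $f^*$, so its expectation is automatically well defined in $(-\infty,+\infty]$ and at least $f^*$. The boundedness from Lemma \ref{bound-of-wik-vik} is needed only to guarantee $H^k<\infty$. That matters later, so that the differences $\widetilde H^k-\widetilde H^{k+1}$ in the descent theorem are meaningful.
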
 
		
		\begin{proof}
From Lemma \ref{bound-of-wik-vik}, we have  $\w_i^{k}\in \mathbb N(2\delta)$, which contribute to
			\begin{equation*}\label{fi-lower-bound}
				\begin{array}{rcl}
					f_i(\overline{\v}_i^{k})
					&\overset{\eqref{lip-inequal}}{\le}&
					f_i(\w_i^{k})
					+ \big\langle\nabla f_i(\w_i^{k}),\,\overline{\v}_i^{k}-\w_i^{k}\big\rangle
					+ \dfrac{\sigma^{k}}{8}\big\|\w_i^{k}-\overline{\v}_i^{k}\big\|^2\\[2ex]
					&\overset{\eqref{triangle-ineq}}{\le}&
					f_i(\w_i^{k})
					+ \dfrac{2}{ \sigma^{k} t_{\min}} \big\|\nabla f_i(\w_i^{k})\big\|^2
					+ \dfrac{(t_{\min}+1)\sigma^{k} }{8}\big\|\w_i^{k}-\overline{\v}_i^{k}\big\|^2\\[2.5ex]
					& \le& 
					f_i(\w_i^{k})
					+  \dfrac{2n}{ \sigma^{k} t_{\min}} \big\|\nabla f_i(\w_i^{k})\big\|^2_{\infty}
					+ \dfrac{ \sigma^{k} t_{\min} }{4}\big\|\w_i^{k}-\overline{\v}_i^{k}\big\|^2\\[3ex]
					&\overset{\eqref{def-eps}}{\le}&
					f_i(\w_i^{k})
					+   \dfrac{2n\varepsilon^2(2\delta)}{\sigma^{k} t_{\min}} 
					+ \dfrac{ \sigma^{k} t_{\min} }{4}\big\|\w_i^{k}-\overline{\v}_i^{k}\big\|^2,
			\end{array}
			\end{equation*}
where the last two inequalities are from $t_{\min}\geq 1$ and $\varepsilon_i(2\delta)\leq  \varepsilon(2\delta)$. 
			Therefore, we obtain
			\begin{equation*}\label{H-alpha-lb}
				\begin{array}{rcl}
					\widetilde{H}^{k}~\ge~ H^{k}
					&\geq &
					\E\dsum_{i=1}^{m}
					\left(f_i(\w_i^{k})
					+ \frac{\sigma^{k} t_{\min} }{2}\big\| \w_i^{k} - \overline{\v}_i^{k} \big\|^2\right) \\[3ex]
					&\overset{\eqref{fi-lower-bound}}{\ge}&
					\E\dsum_{i=1}^{m}
					\left(f_i(\overline{\v}_i^{k})
					- \dfrac{2n\varepsilon^2(2\delta)}{\sigma^{k} t_{\min}} 
					+ \frac{\sigma^{k}t_{\min} }{4}\big\|\w_i^{k}-\overline{\v}_i^{k}\big\|^2\right)\\[3ex]
					&\overset{\eqref{def-f*}}{\ge}& f^*
			-\dfrac{2mn\varepsilon^2(2\delta)}{\sigma^{k} t_{\min}}  \\[3ex]
			&\overset{\eqref{stepbound}}{\ge}&  f^*
			- \dfrac{2mn\varepsilon^2(2\delta)}{\sigma t_{\min} }>-\infty. 
				\end{array}
			\end{equation*}					 
			This completes the proof.
		\end{proof}

	\subsection{\bf Proof of Theorem \ref{the:descent-property}}
	\addtocounter{theorem}{0}
	 \begin{theorem}\label{the:descent-property}
	Let $\{(\W^k, \V^k)\}$ be the sequence generated by Algorithm~\ref{algorithm-PaME} with Setup \ref{setup}.	 
	Then the following statements hold under Assumptions \ref{assump-gradientlip} and \ref{assump-double-stochasitic}.
	\begin{itemize}[leftmargin=14pt]
		\item[1)] 
		For any ${{k} \ge 0}$ and ${i\in[m]}$,  $\w_i^{k}\in \mathbb{N}(2\delta)$ and $\overline{\v}_i^{k} \in \mathbb{N}(2\delta)$.
		\item[2)] 
		For any ${k} \ge 0$,
		\begin{eqnarray} \label{descent-H-H}
		\widetilde{H}^{{k}} - \widetilde{H}^{{k}+1} \ge \sum_{i=1}^{m}\frac{\sigma_i^k t_i}{8}\Big(\|\Delta \w_i^{{k}+1} \|^2 
		+ \|\Delta \overline{\v}_i^{{k}+1} \|^2 \Big), 
		\end{eqnarray}
		where $\widetilde{H}^{{k}}$ is defined by (\ref{def-tilde-H}) and  the involved constants are defined by (\ref{def-C3-C4}).
		\end{itemize}	 
\end{theorem}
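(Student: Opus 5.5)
Part 1) is exactly Lemma~\ref{bound-of-wik-vik}, so only part 2) needs an argument. The plan is a sufficient-decrease estimate for $H^k$ that leaves residual error terms, which are then absorbed by the two geometric sequences $C\gamma^{-k}$ and $D\eta^{-k}$.

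\textbf{Step 1: split the decrease.} For fixed $k$, I would write
\[
H^k-H^{k+1}=\E\sum_i\big(f_i(\w_i^k)-f_i(\w_i^{k+1})\big)+\E\sum_i\frac{t_i}{2}\big(\sigma^k\|\w_i^k-\vik\|^2-\sigma^{k+1}\|\w_i^{k+1}-\overline{\v}_i^{k+1}\|^2\big).
\]
\begin{itemize}
\item \emph{Function term.} Lemma~\ref{bound-of-wik-vik} places all points in $\mathbb N(2\delta)$, so \eqref{lip-inequal} applies with $\u=\vik$. Using the update $\w_i^{k+1}-\vik=-\g_i^k/(\sigma^kt_i)$, we have $\langle\nabla f_i(\vik),\w_i^{k+1}-\vik\rangle=-\sigma^kt_i\|\w_i^{k+1}-\vik\|^2$ plus a stochastic mismatch $\langle\nabla f_i(\vik)-\g_i^k,\cdot\rangle$. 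This mismatch is controlled by $\varepsilon(2\delta)$ through \eqref{def-eps}, by Young's inequality, at a cost of $O(n\varepsilon^2(2\delta)/(\sigma^kt_{\min}))$ per node.
\item \emph{Comparing $f_i(\w_i^k)$ with $f_i(\vik)$.} I would again use \eqref{lip-inequal}, as in the proof of Lemma~\ref{bound-of-wik-vik}. This produces $\sigma^kt_i\|\w_i^k-\vik\|^2$-type terms plus another $O(\varepsilon^2/\sigma^k)$ term.
\end{itemize}
Collecting everything, the gradient-mismatch contributions sum to at most $\frac{4mn\varepsilon^2(2\delta)}{\sigma t_{\min}}\gamma^{-k}$. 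This is exactly $C(\gamma^{-k}-\gamma^{-(k+1)})$ with $C$ as in \eqref{def-C3-C4}, which is where that constant comes from.

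\textbf{Step 2: handle the consensus terms $\sigma^kt_i\|\w_i^k-\vik\|^2$.} These carry the growing factor $\sigma^k$ and are the main obstacle.
\begin{itemize}
\item By Lemma~\ref{consensus-error-p1}, $\sum_i t_i\E\|\w_i^k-\vik\|^2\le 4m\E\|\W^k-\boldsymbol\Pi^k\|_F^2$.
\item By Lemma~\ref{consensus-error-p2}, this is at most $4mC_2\beta^{-k}$.
\item Multiplying by $\sigma^{k+1}=\sigma^0\gamma^{k+1}$ gives a bound of order $4(1+\gamma)m\sigma^0C_2\gamma^k\beta^{-k}/\beta\propto\eta^{-k}$, with $\eta=\beta/\gamma>1$.
\end{itemize}
Since $D(\eta^{-k}-\eta^{-(k+1)})=D\frac{\eta-1}{\eta}\eta^{-k}$, the choice of $D$ in \eqref{def-C3-C4} exactly absorbs these terms, both at index $k$ and at index $k+1$. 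I expect matching these constants precisely, in particular the factor $(1+\gamma)$ coming from the $\sigma^k$ and $\sigma^{k+1}$ contributions, to require some bookkeeping.

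\textbf{Step 3: produce the two squared gaps on the right-hand side.}
\begin{itemize}
\item From Step 1 the retained good term is $\frac{\sigma^kt_i}{2}\|\w_i^{k+1}-\vik\|^2$ minus the $\frac{\sigma^k}{8}$ Lipschitz loss. Writing $\Delta\w_i^{k+1}=(\w_i^{k+1}-\vik)+(\vik-\w_i^k)$ and using \eqref{triangle-ineq}, I obtain $\frac{\sigma^kt_i}{8}\|\Delta\w_i^{k+1}\|^2$. The leftover $\|\vik-\w_i^k\|^2$ is again a consensus term and goes into the $D$ budget.
\item For $\Delta\overline{\v}_i^{k+1}=(\overline{\v}_i^{k+1}-\w_i^{k+1})+\Delta\w_i^{k+1}+(\w_i^k-\vik)$, the same splitting works. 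Its first and last pieces are consensus terms absorbed by $D\eta^{-k}$, and its middle piece is paid for by spending part of the $\|\w_i^{k+1}-\vik\|^2$ coefficient.
\end{itemize}
The condition $\sigma^0\ge4\alpha_{\max}$ keeps every coefficient nonnegative, leaving $\frac{\sigma^kt_i}{8}$ on each squared gap.

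Summing over $i$ and combining with the telescoping identities for $C\gamma^{-k}$ and $D\eta^{-k}$ then yields \eqref{descent-H-H}.
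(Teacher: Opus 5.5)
Your route is genuinely different from the paper's and can be completed, but the consensus bookkeeping in Steps~2--3 fails as planned. You absorb the index-$k$ terms $\sigma^k t_i\|\w_i^k-\vik\|^2$ into $D(\eta^{-k}-\eta^{-(k+1)})=\frac{4(1+\gamma)m\sigma^0C_2}{\beta}\eta^{-k}$.

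However, \eqref{lemma1} and \eqref{lemma3} only give $\sum_i\sigma^kt_i\E\|\w_i^k-\vik\|^2\le 4m\sigma^0C_2\eta^{-k}$, with no factor $1/\beta$. That factor appears only for the index-$(k+1)$ term, where $\sigma^k$ multiplies $\E\|\W^{k+1}-\boldsymbol{\Pi}^{k+1}\|_F^2\le C_2\beta^{-(k+1)}$; this is exactly \eqref{sig-t-w-v}. Adding your two contributions gives $(1+\gamma/\beta)\,4m\sigma^0C_2\eta^{-k}=\frac{\beta+\gamma}{\beta}\,4m\sigma^0C_2\eta^{-k}$, not $\frac{1+\gamma}{\beta}\,4m\sigma^0C_2\eta^{-k}$. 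Since $\beta$ in Lemma~\ref{consensus-error-p2} can be as large as $\gamma^2$, the prescribed $D$ cannot cover a fixed multiple of the index-$k$ term in general.

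In the paper, $(1+\gamma)$ has a different origin. The split $\Delta H_\sigma^k+\Delta H_V^k+\Delta H_W^k$ centres the Lipschitz bound at $\w_i^k$ and treats the gradient step only as an $O(1/\sigma^k)$ perturbation via \eqref{bound-h}. It then obtains $-\frac{\sigma^kt_i}{8}\|\Delta\w_i^{k+1}\|^2$ and $-\frac{\sigma^kt_i}{8}\|\Delta\overline{\v}_i^{k+1}\|^2$ from two exact three-point identities. So the only consensus term ever produced is $\|\w_i^{k+1}-\overline{\v}_i^{k+1}\|^2$, with coefficient $\big(\frac{\gamma-1}{2}+\frac23\big)\sigma^kt_i\le\frac{1+\gamma}{2}\sigma^kt_i$.

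The repair is already visible in your Step~1. $H^k-H^{k+1}$ contains $+\frac{\sigma^kt_i}{2}\|\w_i^k-\vik\|^2$, and this term, not $D$, must pay for every index-$k$ cost. Proceed as follows.

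\textbf{The gradient step.} Handling the mismatch by Young with weight $\sigma^kt_i/2$ gives $f_i(\vik)-f_i(\w_i^{k+1})\ge\frac58\sigma^kt_i\|\w_i^{k+1}-\vik\|^2-\frac{n\varepsilon^2(2\delta)}{\sigma^kt_i}$.

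\textbf{The comparison of $f_i(\w_i^k)$ with $f_i(\vik)$.} Young with weight $\sigma^kt_i/4$ gives $f_i(\vik)-f_i(\w_i^k)\le\frac{\sigma^kt_i}{4}\|\vik-\w_i^k\|^2+\frac{2n\varepsilon^2(2\delta)}{\sigma^kt_i}$.

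\textbf{The gaps.} Split each gap into two pieces: $\|\Delta\w_i^{k+1}\|^2\le2\|\w_i^{k+1}-\vik\|^2+2\|\vik-\w_i^k\|^2$ and $\|\Delta\overline{\v}_i^{k+1}\|^2\le2\|\w_i^{k+1}-\vik\|^2+2\|\overline{\v}_i^{k+1}-\w_i^{k+1}\|^2$. Your three-piece split of $\Delta\overline{\v}_i^{k+1}$ re-separates $\w_i^{k+1}-\vik$ into $\Delta\w_i^{k+1}+(\w_i^k-\vik)$. That charges $\|\vik-\w_i^k\|^2$ a second time and does not fit in these budgets.

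\textbf{Checking the budgets.} The demand on $\|\w_i^{k+1}-\vik\|^2$ is $\frac12\sigma^kt_i\le\frac58\sigma^kt_i$. The demand on $\|\vik-\w_i^k\|^2$ is $\frac14\sigma^kt_i+\frac14\sigma^kt_i$, exactly what $H^k$ supplies. What remains is $\frac{2\gamma+1}{4}\sigma^kt_i\|\w_i^{k+1}-\overline{\v}_i^{k+1}\|^2$. Summed over $i$, its expectation is at most $\frac{(2\gamma+1)m\sigma^0C_2}{\beta}\eta^{-k}\le D(\eta^{-k}-\eta^{-(k+1)})$ by \eqref{sig-t-w-v}. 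The errors total at most $\frac{3mn\varepsilon^2(2\delta)}{\sigma t_{\min}}\gamma^{-k}\le C(\gamma^{-k}-\gamma^{-(k+1)})$.

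Compared with the paper, this version actually uses the descent of the gradient step and leaves a spare $\frac18\sigma^kt_i\|\w_i^{k+1}-\vik\|^2$. The paper never uses that descent; it pays for the step as a perturbation, which keeps its bookkeeping confined to a single index-$(k+1)$ consensus term.
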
 
\begin{proof} Since 1) has been shown in Lemma \ref{bound-of-wik-vik}, we only prove 2).  By $\eta= {\beta}/{\gamma}>1$ in \eqref{def-C3-C4} and $ \gamma\in(1,\zeta^{-1/k_0})$ in \eqref{initial-p-gamma-nu}, 
			\begin{equation}\label{gamma-eta-dec}
		 \dfrac{1}{\gamma^{k}}=\dfrac{\gamma}{\gamma-1}\left(\dfrac{1}{\gamma^{k}}-\dfrac{1}{\gamma^{k+1}}\right),\qquad \dfrac{1}{\eta^{k}}=\dfrac{\eta}{\eta-1}\left(\dfrac{1}{\eta^{k}}-\dfrac{1}{\eta^{k+1}}\right).
	\end{equation}
For any $k\geq 0$,
\begin{equation}\label{sig-t-w-v}
			\begin{array}{rcl}
			 \dsum_{i=1}^{m} 2\sigma^{k} t_i \E\nl \w_i^{k+1} - \overline{\v}_i^{k+1} \nr^2 
				~\overset{\eqref{lemma1}}{\le}~ 8 m \sigma^{k}\E\nl \W^{k+1} - \boldsymbol{\Pi}^{k+1}\nr_F^2~\overset{\eqref{lemma3}}{\le}~ \dfrac{8m\sigma^0 C_2}{\beta} \dfrac{\gamma^{k}}{\beta^{k}} =  \dfrac{8m\sigma^0 C_2}{\beta}  \dfrac{1}{\eta^{k}}.
		\end{array}
		\end{equation}
From Lemma \ref{bound-of-wik-vik}, we have  $\w_i^{k}\in \mathbb N(2\delta)$ and $\overline{\v}_i^{k}\in \mathbb N(2\delta)$, which   yields 
		\begin{equation}\label{bound-h}
		\nl\nabla f_i(\w_i^k)\nr^2\leq n\nl\nabla f_i(\w_i^k)\nr^2_\infty \overset{\eqref{def-eps}}{\le}  n \varepsilon_i^2(2\delta),\qquad	\nl \h_i^k \nr^2 =   \frac{\|\g_i^{k}\|^2}{(\sigma^k t_i)^2} \overset{\eqref{bd-g-v-k}}{\le} \frac{n \varepsilon_i^2(2\delta)}{(\sigma^k t_i)^2}.
		\end{equation}		
		Recall the descent sequence,
	\begin{equation}\label{def-H}
		H(\W^k,\V^k,\boldsymbol{\sigma}^k) 
		:= \E \sum_{i=1}^{m}\left(
		f_i(\w_i^k) + \frac{\sigma^k t_i}{2}\nl\w_i^k - \overline{\v}_i^k\nr^2\right).
	\end{equation}
	We decompose its change between two consecutive communication rounds as
	\begin{equation}\label{eq:H-decomp}
		\begin{aligned}
			&H(\W^{k+1},\V^{k+1},\boldsymbol{\sigma}^{k+1})
			- H(\W^{k},\V^{k},\boldsymbol{\sigma}^{k}) = \Delta H_\sigma^k + \Delta H_V^k + \Delta H_W^k,
		\end{aligned}
	\end{equation}
	where
	\begin{equation}\label{def-delta-H-blocks}
		\begin{aligned}
			\Delta H_\sigma^k
			&:= H(\W^{k+1},\V^{k+1},\boldsymbol{\sigma}^{k+1})
			- H(\W^{k+1},\V^{k+1},\boldsymbol{\sigma}^{k}), \\[1ex]
			\Delta H_V^k
			&:= H(\W^{k+1},\V^{k+1},\boldsymbol{\sigma}^{k})
			- H(\W^{k+1},\V^{k},\boldsymbol{\sigma}^{k}), \\[1ex]
			\Delta H_W^k
			&:= H(\W^{k+1},\V^{k},\boldsymbol{\sigma}^{k})
			- H(\W^{k},\V^{k},\boldsymbol{\sigma}^{k}).
		\end{aligned}
	\end{equation}
$\bullet$ For $\Delta H_\sigma^k$, it follows  
		\begin{equation}
			\label{H-sigma}
			\begin{array}{rcl} 
				\Delta H_\sigma^{k} 
				&\overset{\eqref{def-delta-H-blocks}}{=}& \E \dsum_{i=1}^m \frac{\bigl(\sigma^{k+1} - \sigma^{k}\bigr) t_i}{2} 
				\nl \w_i^{k+1} - \overline{\v}_i^{k+1} \nr^2  \\[3ex]
				&\overset{\eqref{SRDFL-CRstep}}{=}& 
				\E \dsum_{i=1}^m \frac{(\gamma-1)\sigma^{k} t_i}{2} 
				\nl \w_i^{k+1} - \overline{\v}_i^{k+1} \nr^2.
			\end{array}
		\end{equation}
$\bullet$  For $\Delta H_V^{k}$, we have
		\begin{equation}
			\label{H-V}
			\begin{array}{rcl}
				\Delta H_V^{k} 
				&\overset{\eqref{def-delta-H-blocks}}{=}& \E \dsum_{i=1}^m \frac{\sigma^{k} t_i}{2} 
				\left(\nl \w_i^{k+1}- \overline{\v}_i^{k+1} \nr^2 
				- \nl \w_i^{k+1} - \overline{\v}_i^{k} \nr^2 \right)  \\[3ex]
				&=& \E \dsum_{i=1}^m \frac{\sigma^{k} t_i}{2}  \left( 
				2 \left\langle \w_i^{k+1} - \overline{\v}_i^{k+1}, 
				\overline{\v}_i^{k} - \overline{\v}_i^{k+1} \right\rangle 
				-  \nl \Delta \overline{\v}_i^{k+1} \nr^2\right) \\[3ex]
				&\overset{\eqref{triangle-ineq}}{\le}& 
				\E \dsum_{i=1}^m 
				\left(\frac{2\sigma^{k} t_i }{3}
				\nl \w_i^{k+1} - \overline{\v}_i^{k+1} \nr^2 
				-  \frac{\sigma^{k} t_i}{8}
				\nl \Delta \overline{\v}_i^{k+1} \nr^2\right) .
			\end{array}
		\end{equation}
		$\bullet$  For $\Delta H_W^{k}$,   it follows from		
		$t_i\geq 1$ that
		\begin{equation}\label{fi-diff-base}
			\begin{array}{rcl} 
				f_i(\w_i^{k+1}) - f_i(\w_i^{k}) 
				&\overset{\eqref{lip-inequal}}{\le}  &
				\left\langle \nabla f_i(\w_i^{k}), \Delta \w_i^{k+1} \right\rangle 
				+ \dfrac{\sigma^{k}}{8} \nl\Delta \w_i^{k+1}\nr^2 \\[3ex]
				&\overset{\eqref{triangle-ineq}}{\le} &
				\dfrac{2}{\sigma^{k} t_i} \nl\nabla f_i(\w_i^{k})\nr^2
				+ \dfrac{\sigma^{k} t_i}{4}
				\nl\Delta \w_i^{k+1} \nr^2.
			\end{array}
		\end{equation}
		Moreover,
		\begin{equation}\label{quad-diff-base}
			\begin{array}{rcl} 
			 \nl \w_i^{k+1}- \overline{\v}_i^{k} \nr^2 
				- \nl \w_i^{k} - \overline{\v}_i^{k} \nr^2 
				 &=  &
				2\left\langle \w_i^{k+1} - \overline{\v}_i^{k}, \Delta \w_i^{k+1} \right\rangle 
				-  \nl\Delta \w_i^{k+1} \nr^2\\[2ex] 
				&\overset{ \eqref{local-updates}}{=}  &
				2\left\langle -\h_i^{k}, \Delta \w_i^{k+1} \right\rangle 
				-  \nl\Delta \w_i^{k+1} \nr^2\\  [2ex] 
				  &\overset{\eqref{triangle-ineq} }{\le} &	
				4  
				\nl\h_i^{k}\nr^2 
				- \dfrac{3}{4}
				\nl\Delta \w_i^{k+1} \nr^2.	\end{array}
		\end{equation}	
		Combining \eqref{fi-diff-base} and \eqref{quad-diff-base}, we obtain
		\begin{equation}
			\label{H-W}
			\begin{array}{rcl} 
				\Delta H_W^{k} 
				& \overset{\eqref{def-delta-H-blocks}}{=}& \E\dsum_{i=1}^m 
				\left(
				f_i(\w_i^{k+1}) - f_i(\w_i^{k}) 
				+ \frac{\sigma^{k} t_i}{2} 
				\left(\nl \w_i^{k+1} - \overline{\v}_i^{k} \nr^2 
				- \nl \w_i^{k} - \overline{\v}_i^{k} \nr^2\right)
				\right) \\[3ex]
				&\overset{(\ref{fi-diff-base},\ref{quad-diff-base})}{\le}& \E\dsum_{i=1}^m 
				\left(
				\frac{2}{\sigma^{k}t_i} \nl\nabla f_i(\w_i^{k})\nr^2
				-\frac{\sigma^{k} t_i}{8}
				\nl\Delta \w_i^{k+1} \nr^2  
				+ {2\sigma^{k} t_i} 
				\nl\h_i^{k}\nr^2 
				\right)\\[3ex]
				&\overset{\eqref{bound-h}}{\le}& 
				\E \dsum_{i=1}^m \left(
				\frac{ 4n \varepsilon_i^2(2\delta)}{ \sigma^{k} t_i} 
				-\frac{\sigma^{k}t_i}{8}
				\nl\Delta \w_i^{k+1} \nr^2  \right)\\[3ex]
				&\overset{\eqref{def-eps-t-r}}{\le}& 
				\dfrac{ 4mn \varepsilon^2(2\delta)}{ \sigma^{k} t_{\min}}  - \E \dsum_{i=1}^m  \frac{\sigma^{k}t_i}{8}
				\nl\Delta \w_i^{k+1} \nr^2.
			\end{array}			
		\end{equation}
Overall, using \eqref{H-sigma}, \eqref{H-V}, and \eqref{H-W}, we derive that
		\begin{equation}
			\begin{array}{rcl} 
			&&	H(\W^{k+1},\V^{k+1},\boldsymbol{\sigma}^{k+1}) - H(\W^{k},\V^{k},\boldsymbol{\sigma}^{k}) 
				\overset{\eqref{eq:H-decomp}}{=}\Delta H_\sigma^{k} + \Delta H_V^{k} + \Delta H_W^{k} 	\\[3ex]
				&\le& - \E\dsum_{i=1}^{m}\left(
				\frac{\sigma^{k} t_i}{8}\nl\Delta \w_i^{k+1} \nr^2 
				+ \frac{\sigma^{k} t_i}{8}\nl\Delta \overline{\v}_i^{k+1} \nr^2 \right) + \dfrac{ 4mn \varepsilon^2(2\delta)}{ \sigma^{k} t_{\min}}  + \E\dsum_{i=1}^{m} \frac{(1+\gamma)\sigma^{k} t_i}{2} \nl \w_i^{k+1} - \overline{\v}_i^{k+1} \nr^2  \\[3ex]
				& \overset{\eqref{sig-t-w-v}}{\le}&   - \E \dsum_{i=1}^{m}
				\left(
				 \frac{\sigma^{k} t_i}{8}\|\Delta \w_i^{k+1} \|^2 
				+ \frac{\sigma^{k} t_i}{8}\|\Delta \overline{\v}_i^{k+1} \|^2 
				\right) + \dfrac{ 4mn \varepsilon^2(2\delta)}{ \sigma t_{\min}}\dfrac{1}{ \gamma^{k}}   
				+ \dfrac{4(1+\gamma)m\sigma^0 C_2}{\beta}  \dfrac{1}{\eta^{k}}\\[3ex]
				& \overset{\eqref{gamma-eta-dec}}{=}& -\E \dsum_{i=1}^{m}
				\left(
				\frac{\sigma^{k} t_i}{8}\|\Delta \w_i^{k+1} \|^2 
				+ \frac{\sigma^{k} t_i}{8}\|\Delta \overline{\v}_i^{k+1} \|^2 
				\right) +  C \left( \dfrac{1}{\gamma^{k}} - \dfrac{1}{\gamma^{k+1}}\right)
				+ D \left( \dfrac{1}{\eta^{k}} - \dfrac{1}{\eta^{k+1}}\right),
			\end{array}
		\end{equation}	
		where   $C$ and $D$ are defined by (\ref{def-C3-C4}). \end{proof}

\begin{definition}[\bf $L^2(\Omega)$ space]
Let $(\Omega,\mathcal{F},\mathbb{P})$ be a probability space. 
Define
\[
L^2(\Omega) 
:= \left\{ X : \Omega \to \mathbb{R}^n \text{ measurable} 
\;\middle|\; 
\mathbb{E}\|X\|^2 < \infty \right\}.
\]
Two random variables $X$ and $Y$ are identified if 
$\mathbb{P}(X=Y)=1$. 
For simplicity, write $L^2:=L^2(\Omega)$. 
\end{definition}

\begin{definition}[\bf $L^2$-convergence]
Let $(\Omega,\mathcal{F},\mathbb{P})$ be a probability space. 
Let $\{X_k\}_{k\ge 1}$ and $X\in\mathbb{R}^n$ be random vectors such that
\[
\mathbb{E}\|X_k\|^2 < \infty 
\quad \text{and} \quad 
\mathbb{E}\|X\|^2 < \infty.
\]
We say that $X_k$ converges to $X$ in $L^2$ (or in mean square) if
\[
\lim_{k\to \infty} \mathbb{E}\|X_k - X\|^2 =0.
\]
\end{definition}

\begin{lemma}
\label{lemma:mean_convergence}
Let \(\{\mathbf{w}^k\}\subseteq \mathbb{N}(2\delta)\) be a sequence of random vectors.
Suppose there exist constants \(c>0\) and \(\gamma>1\) such that
\[
\mathbb{E}\|\mathbf{w}^{k+1}-\mathbf{w}^k\|^2 \le \frac{c^2}{\gamma^k}, \quad \forall k \ge 0,
\]
and let \(f:\mathbb{R}^d \to \mathbb{R}\) be continuously differentiable. Then the following statements are valid.
\begin{itemize}[leftmargin=12pt]
    \item[1)]  Sequence \(\{\mathbf{w}^k\}\) converges (to $\mathbf{w}^\infty$) in $L^2$ space and sequence  \(\{\mathbb{E}\mathbf{w}^k\}\) converges, i.e.,
 \begin{equation}\label{l2-convergence}
 \lim_{k\to\infty} \mathbb{E}\|\mathbf{w}^k-\mathbf{w}^\infty\|^2=0,\qquad \lim_{k\to\infty} \mathbb{E}\mathbf{w}^k=\E\mathbf{w}^\infty.
\end{equation}   
    \item[2)]  The nonlinear expectation converges to the function evaluated at the limit:
    \[
    \lim_{k\to\infty} \mathbb{E}|f(\mathbf{w}^k) -   f(\mathbf{w}^\infty)|=0.
    \]
    \item[3)] The sequence satisfies
       \[
    \mathbb{E} \|\mathbf{w}^k-\mathbf{w}^\infty\|^2=O(\gamma^{-k}),\qquad \E| f(\mathbf{w}^k)- f(\mathbf{w}^\infty)|  =O(\gamma^{-k/2}).
    \] 
\end{itemize}
\end{lemma}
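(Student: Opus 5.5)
The plan is to show that $\{\mathbf{w}^k\}$ is a Cauchy sequence in $L^2$, using completeness of $L^2$ to obtain $\mathbf{w}^\infty$ together with a geometric tail bound. Write $a_k:=(\mathbb{E}\|\mathbf{w}^{k+1}-\mathbf{w}^k\|^2)^{1/2}\le c\gamma^{-k/2}$. The map $X\mapsto(\mathbb{E}\|X\|^2)^{1/2}$ is a norm on $L^2$, and each $\mathbf{w}^k\in\mathbb{N}(2\delta)$ is bounded, hence lies in $L^2$. By the triangle inequality, for $l>k$,
\[
\big(\mathbb{E}\|\mathbf{w}^l-\mathbf{w}^k\|^2\big)^{1/2}\le \sum_{j=k}^{l-1}a_j\le \frac{c\,\gamma^{-k/2}}{1-\gamma^{-1/2}}.
\]
So the sequence is Cauchy in $L^2$. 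By completeness (Riesz--Fischer) there is $\mathbf{w}^\infty\in L^2$ with $\mathbb{E}\|\mathbf{w}^k-\mathbf{w}^\infty\|^2\to0$. Letting $l\to\infty$ in the display, using continuity of the norm, gives
\[
\mathbb{E}\|\mathbf{w}^k-\mathbf{w}^\infty\|^2\le \frac{c^2}{(1-\gamma^{-1/2})^2}\gamma^{-k}.
\]
This is the first claim in 3). Convergence of the means follows from Jensen's inequality: $\|\mathbb{E}\mathbf{w}^k-\mathbb{E}\mathbf{w}^\infty\|\le \mathbb{E}\|\mathbf{w}^k-\mathbf{w}^\infty\|\le(\mathbb{E}\|\mathbf{w}^k-\mathbf{w}^\infty\|^2)^{1/2}\to0$. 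This completes 1).

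Next I would localize $\mathbf{w}^\infty$. $L^2$ convergence implies a subsequence converging almost surely, and $\mathbb{N}(2\delta)$ is closed. Hence $\mathbf{w}^\infty\in\mathbb{N}(2\delta)$ almost surely, and after modification on a null set (allowed by the identification in the definition of $L^2$) we may take it to hold everywhere.

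For 2) and the second claim of 3), I would use that $\nabla f$ is continuous on the compact convex set $\mathbb{N}(2\delta)$. Hence $L_f:=\max_{\mathbf{u}\in\mathbb{N}(2\delta)}\|\nabla f(\mathbf{u})\|<\infty$. The mean value theorem along the segment between $\mathbf{w}^k$ and $\mathbf{w}^\infty$ stays inside $\mathbb{N}(2\delta)$ by convexity, and gives $|f(\mathbf{w}^k)-f(\mathbf{w}^\infty)|\le L_f\|\mathbf{w}^k-\mathbf{w}^\infty\|$ pointwise. Taking expectations and applying Cauchy--Schwarz yields
\[
\mathbb{E}|f(\mathbf{w}^k)-f(\mathbf{w}^\infty)|\le L_f\big(\mathbb{E}\|\mathbf{w}^k-\mathbf{w}^\infty\|^2\big)^{1/2}=O(\gamma^{-k/2}).
\]
This proves both 2) and the rate in 3).

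The only delicate step is justifying that $\mathbf{w}^\infty$ stays in $\mathbb{N}(2\delta)$. Without this, the local Lipschitz bound for $f$ is unavailable, since $f$ is only $C^1$ and not globally Lipschitz. The almost-surely convergent subsequence argument above handles it. One also reads $\mathbb{R}^d$ in the statement as $\mathbb{R}^n$.
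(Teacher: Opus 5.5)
Your proof is correct and follows essentially the same route as the paper. Like the paper, you use Minkowski's inequality to show $\{\mathbf{w}^k\}$ is Cauchy in $L^2$, completeness to produce $\mathbf{w}^\infty$, passage to the limit in the tail sum for the $O(\gamma^{-k})$ rate, Jensen for the means, and a local Lipschitz bound on $f$ with Cauchy--Schwarz for 2) and the $O(\gamma^{-k/2})$ rate.

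A few small differences from the paper's version:
\begin{itemize}
\item In the limit passage you invoke continuity of the $L^2$ norm directly, where the paper works through its $\Delta_N$ estimate.
\item Your almost-sure-subsequence argument placing $\mathbf{w}^\infty$ in the closed set $\mathbb{N}(2\delta)$ justifies a step the paper only asserts. The paper simply invokes ``a bounded region containing $\mathbb{N}(2\delta)$ and $\mathbf{w}^\infty$.''
\item Your tail constant correctly keeps the factor $c$, which the paper's displayed bound drops.
\end{itemize}
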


\begin{proof} 1) By the triangle inequality and Cauchy-Schwarz:
\[
\|\mathbb{E} \mathbf{w}^{k+1}  - \mathbb{E} \mathbf{w}^k \| \le \mathbb{E}\|\mathbf{w}^{k+1}-\mathbf{w}^k\| \le \sqrt{\mathbb{E}\|\mathbf{w}^{k+1}-\mathbf{w}^k\|^2} \le \frac{c}{\gamma^{k/2}}.
\]
Since \(\sum_{k=0}^\infty \gamma^{-k/2} < \infty\), the sequence of means \(\{\mathbb{E}\mathbf{w}^k\}\) has summable increments and is thus Cauchy, thus it converges.  
 From the assumption, by Minkowski's inequality in  $L^2$,  for \(k>\ell\),
\[
\sqrt{\mathbb{E}\|\mathbf{w}^k - \mathbf{w}^\ell\|^2} \le \sum_{j=\ell}^{k-1} \sqrt{\mathbb{E}\|\mathbf{w}^{j+1}-\mathbf{w}^j\|^2}  \leq \sum_{j=\ell}^{k-1} \frac{c}{\gamma^{j/2}} \leq \frac{\sqrt{\gamma}}{\sqrt{\gamma}-1}\frac{1}{\gamma^{\ell/2}}\to0\qquad\text{as}~\ell\to\infty.
\]
Therefore, sequence \(\{\mathbf{w}^k\}\) is Cauchy in the \(L^2\) space. By the completeness of \(L^2\), there is \(\mathbf{w}^\infty\) satisfying the first condition in \eqref{l2-convergence}, 
which immediately yields
\begin{equation}\label{Ef-Ef-infty-0}
 \lim_{k\to\infty} \|\mathbb{E}\mathbf{w}^k-\E\mathbf{w}^\infty\| \leq \lim_{k\to\infty} \mathbb{E} \|\mathbf{w}^k- \mathbf{w}^\infty\| \leq \lim_{k\to\infty} \sqrt{\mathbb{E}\|\mathbf{w}^k-\mathbf{w}^\infty\|^2} =0 .
\end{equation}
2) It is noted that \(\{\mathbf{w}^k\}\) is bounded in $\mathbb{N}(2\delta)$ and \(f\) is continuously differentiable, so  \(f\) is Lipschitz continuous on  a bounded region containing $\mathbb{N}(2\delta)$ and $\mathbf{w}^\infty$. This implies that there is a constant $c>0$ such that
$$|f(\mathbf{w}^k)-f(\mathbf{w}^\infty)|\leq c\|\mathbf{w}^k-\mathbf{w}^\infty\|.$$
Taking the expectation leads to
\begin{equation}\label{Ef-Ef-infty}
\begin{array}{rcl}
 \lim\limits_{k\to\infty}\E | f(\mathbf{w}^k)- f(\mathbf{w}^\infty)| 
 \leq   \lim\limits_{k\to\infty} c\E\|\mathbf{w}^k-\mathbf{w}^\infty\| \overset{\eqref{Ef-Ef-infty-0}}{=}  0.
\end{array}
\end{equation}
3) Let $N>k$ and consider the difference,
\[
\Delta_N := \left| \mathbb{E}\|\mathbf{w}^N - \mathbf{w}^k\|^2 - \mathbb{E}\|\mathbf{w}^\infty - \mathbf{w}^k\|^2 \right|.
\]
We first show $\lim_{N\to\infty} \Delta_N = 0$. By linearity of expectation and the triangle inequality,
\begin{equation*}
\begin{array}{rcl}
\Delta_N &\le& \mathbb{E}\left| \|\mathbf{w}^N - \mathbf{w}^k\|^2 - \|\mathbf{w}^\infty - \mathbf{w}^k\|^2 \right|\\[2ex]
&\le& \mathbb{E}\Big(\left| \|\mathbf{w}^N - \mathbf{w}^k\| - \|\mathbf{w}^\infty - \mathbf{w}^k\| \right| ( \|\mathbf{w}^N - \mathbf{w}^k\| + \|\mathbf{w}^\infty - \mathbf{w}^k\|)\Big)\\[2ex]
&\le& \mathbb{E} \Big(\|\mathbf{w}^N - \mathbf{w}^\infty\|  ( \|\mathbf{w}^N - \mathbf{w}^k\| + \|\mathbf{w}^\infty - \mathbf{w}^k\|)\Big).
\end{array}
\end{equation*}
Taking expectations of the both sides of the above inequality leads to
\begin{equation}\label{bd-gap-N}
\begin{array}{rcl}
\Delta_N 
&\le& \mathbb{E}\Big( \|\mathbf{w}^N - \mathbf{w}^\infty\| \cdot \left( \|\mathbf{w}^N - \mathbf{w}^k\| + \|\mathbf{w}^\infty - \mathbf{w}^k\| \right) \Big) \\[2ex]
&\le& \sqrt{\mathbb{E}\|\mathbf{w}^N - \mathbf{w}^\infty\|^2} \cdot \sqrt{\mathbb{E} \left( \|\mathbf{w}^N - \mathbf{w}^k\| + \|\mathbf{w}^\infty - \mathbf{w}^k\| \right)^2  }\\[2ex]
&\le& \sqrt{\mathbb{E}\|\mathbf{w}^N - \mathbf{w}^\infty\|^2} \cdot \left(\sqrt{\mathbb{E}\|\mathbf{w}^N - \mathbf{w}^k\|^2} + \sqrt{\mathbb{E}\|\mathbf{w}^\infty - \mathbf{w}^k\|^2}\right)\\[2ex]
&\le& d \sqrt{\mathbb{E}\|\mathbf{w}^N - \mathbf{w}^\infty\|^2},
\end{array}
\end{equation}
where the second and third inequalities are from Cauchy-Schwarz inequality and  Minkowski's inequality in $L^2$, the last inequality holds because the second factor can be bounded by a constant $d$ independent of $N$ due to $\{\mathbf{w}^k\}\subseteq \mathbb{N}(2\delta)$. Then
\[
\lim_{N\to\infty} \Delta_N ~\overset{\eqref{bd-gap-N}}{\le}~  \lim_{N\to\infty} d   \sqrt{\mathbb{E}\|\mathbf{w}^N - \mathbf{w}^\infty\|^2} ~\overset{\eqref{l2-convergence}}{=}~ 0,
\]
that is
\begin{equation}\label{EwN-k=Ewinfty-k}
\lim_{N\to\infty} \mathbb{E}\|\mathbf{w}^N - \mathbf{w}^k\|^2 = \mathbb{E}\|\mathbf{w}^\infty - \mathbf{w}^k\|^2.
\end{equation}
From the telescoping representation
$$\mathbf{w}^N-\mathbf{w}^k = \sum_{j=k}^{N-1} (\mathbf{w}^{j+1}-\mathbf{w}^j),$$
apply Minkowski inequality in $L^2$,
\begin{equation*}
\begin{array}{rcl}
\sqrt{ \mathbb{E}\|\mathbf{w}^\infty - \mathbf{w}^k\|^2} &\overset{\eqref{EwN-k=Ewinfty-k}}{=}&  \sqrt{  \lim\limits_{N\to\infty} \E\|\mathbf{w}^k-\mathbf{w}^{N} \|^2}  \\[3ex]
& =& \lim\limits_{N\to\infty} \sqrt{ \E\|\mathbf{w}^k-\mathbf{w}^{N} \|^2}  \\[2ex]
&\leq& \lim\limits_{N\to\infty} \dsum_{j=k}^{N-1} \sqrt{\E\|\mathbf{w}^{j+1}-\mathbf{w}^j \|^2}  \\[2ex]
&\leq& \lim\limits_{N\to\infty} \dsum_{j=k}^{N-1} \frac{c}{\gamma^{j/2}} = \dfrac{\sqrt{\gamma}}{\sqrt{\gamma}-1}\frac{1}{\gamma^{k/2}}.
\end{array}
\end{equation*}
This immediately shows that
\begin{equation*}
 \E\|\mathbf{w}^k-\mathbf{w}^\infty \|^2 =O(\gamma^{-k}).
\end{equation*}
Using the above condition and \eqref{Ef-Ef-infty}, we derive that
\begin{equation*} 
 \E| f(\mathbf{w}^k)- f(\mathbf{w}^\infty)|  \leq    c\sqrt{\E\|\mathbf{w}^k-\mathbf{w}^\infty\|^2} =O(\gamma^{-k/2}).
\end{equation*}
The proof is finished.
\end{proof}

\subsection{\bf Proof of Theorem \ref{the:convergence}}

\begin{theorem}\label{the:convergence}
			Let $\{(\W^k, \V^k)\}$ be the sequence generated by Algorithm~\ref{algorithm-PaME} with Setup \ref{setup}.	Then the following statements hold  under Assumptions \ref{assump-gradientlip} and \ref{assump-double-stochasitic}.
	\begin{itemize}[leftmargin=14pt]
			\item[1)] 
			For any $i \in [m]$,
			\begin{equation}\label{gap-vanishing}
			 \lim\limits_{{k} \rightarrow \infty}\E\nl \Delta \w_i^{k}\nr 
			= \lim\limits_{{k} \rightarrow \infty}\E\nl \Delta \overline{\v}_i^{k}\nr 
			= \lim\limits_{{k} \rightarrow \infty}\E\|\w_i^{k} - \overline{\v}_i^{k}\|=0.
			\end{equation}
			\item[2)] Sequence $\{\boldsymbol{\varpi}^{{k}}\}$ converges to $\boldsymbol{\varpi}^{\infty}$ in the sense of $L^2$ convergence and expectation, namely,
			 \begin{equation} \label{E-w-k-L2}
 \lim_{k\to\infty} \mathbb{E}\|\boldsymbol{\varpi}^{{k}}-\boldsymbol{\varpi}^\infty\|^2=0,\qquad  \lim_{k\to\infty} \E\boldsymbol{\varpi}^{{k}}=\E\boldsymbol{\varpi}^{\infty}.
\end{equation}   Moreover, sequence  $\{(\E\W^{k}, \E{\mathbf{V}}^{k})\}$ converges and satisfies
				\begin{equation*}
			 \lim\limits_{{k} \rightarrow \infty} \E\W^{k} 
			= \lim\limits_{{k} \rightarrow \infty} \E\V^{k}
			= (\E\boldsymbol{\varpi}^{\infty},\cdots,\E\boldsymbol{\varpi}^{\infty})=:\W^\infty.
			\end{equation*}	 
			\item[3)] 
			Three sequences $\{H^{k}\}$, $\{\widetilde{H}^{k}\}$, and $\{\E f(\boldsymbol{\varpi}^{{k}})\}$ converge to the same value, namely,
			\begin{equation*}
			  \lim\limits_{{k} \rightarrow \infty} \widetilde{H}^{k} =\lim\limits_{{k} \rightarrow \infty} H^{k}  = \lim\limits_{{k} \rightarrow \infty} \E f(\boldsymbol{\varpi}^{{k}})=  \E f(\boldsymbol{\varpi}^{\infty}).
			\end{equation*} 			 
		\end{itemize}
	\end{theorem}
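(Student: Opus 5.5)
Part 1) comes from Theorem \ref{the:descent-property} by telescoping. Summing \eqref{descent-H-H} over $k=0,\dots,K$ gives
\[
\sum_{k=0}^{K}\E\sum_{i=1}^m\frac{\sigma^k t_i}{8}\big(\|\Delta\w_i^{k+1}\|^2+\|\Delta\overline{\v}_i^{k+1}\|^2\big)\le \widetilde H^0-\widetilde H^{K+1}.
\]
The right-hand side is bounded above uniformly in $K$ by Lemma \ref{bound-of-para}, since $\widetilde H^{K+1}\ge H^{K+1}$ is bounded below. Hence the series converges, and each term tends to zero. Because $\sigma^k t_i\ge\sigma^0\ge\sigma>0$, this yields $\E\|\Delta\w_i^{k}\|^2\to0$ and $\E\|\Delta\overline{\v}_i^{k}\|^2\to0$. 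Jensen then gives the statements for $\E\|\cdot\|$.

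The gap $\E\|\w_i^k-\overline{\v}_i^k\|$ needs a sharper bound, which Lemma \ref{consensus-error-p1} combined with Lemma \ref{consensus-error-p2} supplies:
\[
\E\|\W^k-\V^k\|_F^2\le \frac{4m}{t_{\min}}\cdot\frac{C_2}{\beta^k}.
\]
This tends to zero geometrically, which settles 1).

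For 2), I will use Lemma \ref{lemma:mean_convergence}. Averaging \eqref{one-step-comm} over nodes (right-multiplying by $\mathbf{1}/m$) gives
\[
\boldsymbol{\varpi}^{k+1}-\boldsymbol{\varpi}^k=\frac1m(\V^k-\W^k)\mathbf{1}-\frac1m\H^k\mathbf{1}.
\]
I bound the two terms as follows.
\begin{itemize}
\item By \eqref{eq:Gk-bound}, $\E\|\H^k\|_F^2\le\Gamma\gamma^{-2k}$.
\item By the bound on $\E\|\W^k-\V^k\|_F^2$ above, together with $\beta>\gamma$, the first term is $O(\beta^{-k})$.
\end{itemize}
Hence $\E\|\boldsymbol{\varpi}^{k+1}-\boldsymbol{\varpi}^k\|^2\le c^2\gamma^{-k}$ for some $c$. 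Also $\boldsymbol{\varpi}^k\in\mathbb{N}(2\delta)$ by Lemma \ref{bound-of-wik-vik} and convexity of the box. Lemma \ref{lemma:mean_convergence} then gives $L^2$ convergence of $\boldsymbol{\varpi}^k$ to some $\boldsymbol{\varpi}^\infty$, and convergence of the means.

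For the columns, write $\w_i^k=\boldsymbol{\varpi}^k+(\w_i^k-\boldsymbol{\varpi}^k)$. The second piece satisfies $\E\|\w_i^k-\boldsymbol{\varpi}^k\|\le\sqrt{C_2\beta^{-k}}\to0$ by Lemma \ref{consensus-error-p2}, so $\E\w_i^k\to\E\boldsymbol{\varpi}^\infty$. The same holds for $\overline{\v}_i^k$ using 1).

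For 3), first note that $\widetilde H^k$ is nonincreasing by Theorem \ref{the:descent-property} and bounded below by Lemma \ref{bound-of-para}, so it converges. Since $C\gamma^{-k}+D\eta^{-k}\to0$, $H^k$ has the same limit. I then show $H^k-\E f(\boldsymbol{\varpi}^k)\to0$ by splitting it into two parts.
\begin{itemize}
\item The penalty part is $\sum_i\frac{\sigma^k t_i}{2}\E\|\w_i^k-\overline{\v}_i^k\|^2$. By \eqref{sig-t-w-v} (shifted by one index) it is $O((\gamma/\beta)^k)=O(\eta^{-k})\to0$. This is the one step where the exponential growth of $\sigma^k$ must be beaten, and it works precisely because $\beta>\gamma$ in Lemma \ref{consensus-error-p2}.
\item The other part is $\sum_i\E|f_i(\w_i^k)-f_i(\boldsymbol{\varpi}^k)|$. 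Each $f_i$ is $C^1$, hence Lipschitz on $\mathbb{N}(2\delta)$ with a constant bounded by $\sqrt n\,\varepsilon(2\delta)$-type quantities. So this part is at most $L\sum_i\E\|\w_i^k-\boldsymbol{\varpi}^k\|\to0$.
\end{itemize}
Finally, Lemma \ref{lemma:mean_convergence} 2) applied to $f$ gives $\E f(\boldsymbol{\varpi}^k)\to\E f(\boldsymbol{\varpi}^\infty)$, which identifies all the limits.

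The main obstacle is the penalty term: part 1) alone only yields vanishing of $\E\|\w_i^k-\overline{\v}_i^k\|$, not of $\sigma^k\E\|\w_i^k-\overline{\v}_i^k\|^2$. The argument therefore must route through the geometric consensus rate $\beta>\gamma$.
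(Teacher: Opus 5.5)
Your proof is correct. Parts 1) and 3) essentially follow the paper. The paper also gets the vanishing increments from the descent inequality, bounds $\E\|\w_i^k-\overline{\v}_i^k\|$ through Lemma~\ref{consensus-error-p1} and Lemma~\ref{consensus-error-p2}, and kills the penalty part of $H^k$ via \eqref{sig-t-w-v} and $\beta>\gamma$. In 3) the paper shows $\w_i^k\to\boldsymbol{\varpi}^\infty$ in $L^2$ and applies Lemma~\ref{lemma:mean_convergence}~2) to each $f_i$; this is equivalent to your Lipschitz comparison of $f_i(\w_i^k)$ with $f_i(\boldsymbol{\varpi}^k)$.

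The genuine difference is in 2).
The paper obtains the hypothesis of Lemma~\ref{lemma:mean_convergence} from the descent inequality. It turns $\sigma^k\E\|\Delta\w_i^{k+1}\|^2\to0$ into $\E\|\Delta\w_i^{k+1}\|^2\le c_i^2\gamma^{-k}$ with a non-explicit $c_i$ (see \eqref{limit-sigmak-gap-1}) and then averages over the nodes. You instead average the recursion $\W^{k+1}=\V^k-\H^k$ and bound the two pieces by the consensus rate and \eqref{eq:Gk-bound}. This yields an explicit constant and the strictly faster rate $O(\beta^{-k}+\gamma^{-2k})$. Applied column-wise, $\Delta\w_i^{k+1}=(\overline{\v}_i^k-\w_i^k)-\h_i^k$, the same computation also recovers \eqref{limit-sigmak-gap-1}, which the paper reuses in Theorem~\ref{the:complexity}.

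A by-product of your route is that the merit-function descent is not actually needed for this theorem. The increments in 1) vanish by the same decomposition; for the $\overline{\v}$-gap, write $\Delta\overline{\v}_i^{k+1}=(\overline{\v}_i^{k+1}-\w_i^{k+1})+\Delta\w_i^{k+1}+(\w_i^k-\overline{\v}_i^k)$. In 3) you prove directly that $H^k-\E f(\boldsymbol{\varpi}^k)\to0$ and $\E f(\boldsymbol{\varpi}^k)\to\E f(\boldsymbol{\varpi}^\infty)$. Hence $\lim H^k=\lim\widetilde H^k$ exists without invoking the monotonicity of $\widetilde H^k$. Your argument then rests only on Lemma~\ref{consensus-error-p2} and the uniform stochastic-gradient bound on $\mathbb{N}(2\delta)$, whereas the paper's keeps the analysis anchored to the Lyapunov function $\widetilde H^k$.
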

\begin{proof} 1) From \eqref{descent-H-H} and \eqref{lemma4-inequality}, sequence $\{\widetilde{H}^{{k}}\}$ is non-increasing and bounded from below. Therefore, it converges. Taking the limit of the both sides of  \eqref{descent-H-H} yield
\begin{equation*}
\lim_{k\to\infty}\E \dsum_{i=1}^{m}
				\left(
				\frac{\sigma^{k} t_i}{8}\|\Delta \w_i^{k+1} \|^2 
				+\frac{\sigma^{k} t_i}{8}\|\Delta \overline{\v}_i^{k+1} \|^2 
				\right)=0,
\end{equation*}
namely, for each $i\in[m]$,
\begin{equation}\label{limit-sigmak-gap}
\lim_{k\to\infty}\sigma^{k} \E    \|\Delta \w_i^{k+1} \|^2 =0,\qquad 
				  \lim_{k\to\infty}\sigma^{k}\E   \|\Delta \overline{\v}_i^{k+1} \|^2  =0,
\end{equation}
This immediately leads to the first two equations in \eqref{gap-vanishing}.
			The third equation in \eqref{gap-vanishing} holds due to
			\begin{equation}\label{sig-t-w-v-1}
			\begin{array}{rcl}
			 \dsum_{i=1}^{m} 2 t_i \E\nl \w_i^{k} - \overline{\v}_i^{k} \nr^2 
				~\overset{\eqref{lemma1}}{\le}~ 8 m \E\nl \W^{k} - \boldsymbol{\Pi}^{k}\nr_F^2~\overset{\eqref{lemma3}}{\le}~   \dfrac{8m C_2}{\beta^{k}}.
		\end{array}
		\end{equation}
2) By \eqref{limit-sigmak-gap}, we conclude that 
\begin{equation*} 
  \E    \|\Delta \w_i^{k+1} \|^2 = o(\sigma^{-k})=o\left(\gamma^{-k}\right),\qquad \forall i\in[m],
\end{equation*}
which indicates that there exists a constant $c_i>0$ such that
\begin{equation}\label{limit-sigmak-gap-1}
  \E    \|\Delta \w_i^{k+1} \|^2  \leq  \dfrac{c_i^2}{\gamma^{k}},\qquad \forall i\in[m].
\end{equation}
Recall \eqref{def-barw-tildew} that  $\boldsymbol{\varpi}^{k} =\frac{1}{m}\sum_{i=1}^m \w_i^{k}$  and Lemma \ref{bound-of-wik-vik} that $\w_i^{k}\in \mathbb{N}(2\delta)$ for all $ i\in[m]$ and $k\geq0$.  It is easy to check $\boldsymbol{\varpi}^{k}\in \mathbb{N}(2\delta)$ for all  $k\geq0$.  Moreover,
\begin{equation}\label{D-varpi-gamma}
\|\Delta \boldsymbol{\varpi}^{k+1}\|^2 = \left\| \frac{1}{m}\sum_{i=1}^m \Delta \w_i^{k+1}\right\|^2 \overset{\eqref{triangle-ineq}}{\leq} \frac{1}{m}\sum_{i=1}^m \left\|  \Delta \w_i^{k+1}\right\|^2 \overset{\eqref{limit-sigmak-gap-1}}{\leq} \dfrac{\sum_{i=1}^mc_i^2}{m}\dfrac{1}{\gamma^k}. 
\end{equation}
Therefore, by Lemma \ref{lemma:mean_convergence} 1),  \(\{ \boldsymbol{\varpi}^{k}\}\) converges to $\boldsymbol{\varpi}^{\infty}$ in the sense of $L^2$ convergence and \(\{ \E\boldsymbol{\varpi}^{k}\}\) converges, and they satisfy \eqref{E-w-k-L2}. By Lemma \ref{lemma:mean_convergence} 1) and condition \eqref{limit-sigmak-gap-1},  sequence  \(\{\mathbb{E}\mathbf{w}_i^k\}\) converges. 
It follows from \eqref{sig-t-w-v-1} that
	 \begin{equation*} 
			  \lim_{k\to\infty} \E\nl \w_i^{k} - \overline{\v}_i^{k} \nr^2 =0,
		\end{equation*}
which by  Jensen's inequality results in
	 \begin{equation*} 
		\lim_{k\to\infty} \nl \E\w_i^{k} - \E\overline{\v}_i^{k} \nr \leq   \lim_{k\to\infty} \E\nl \w_i^{k} - \overline{\v}_i^{k} \nr \leq 	  \lim_{k\to\infty} \sqrt{\E\nl \w_i^{k} - \overline{\v}_i^{k} \nr^2} =0.
		\end{equation*}
		Therefore,   $\lim_{k\to\infty}   \E\overline{\v}_i^{k} = \lim_{k\to\infty} \E\w_i^k$ for any $i\in[m]$, thereby
		$$\lim\limits_{{k} \rightarrow \infty} \E\W^{k} 
			= \lim\limits_{{k} \rightarrow \infty} \E\V^{k}.$$
Again by Jensen's inequality,  
	\begin{equation*}
	\begin{array}{rcl}
	  \lim\limits_{k\to\infty}   \|\E\w_i^k-\E\boldsymbol{\varpi}^{k}\| &\leq&     \lim\limits_{k\to\infty}  \sqrt{\mathbb{E}\|\w_i^k-\boldsymbol{\varpi}^{k}\|^2}\\[1ex]
	&\overset{\eqref{def-all-matrices}}{\leq}&   \lim\limits_{k\to\infty}  \sqrt{\mathbb{E}\|\W^k-\boldsymbol{\Pi}^k\|_F^2} \overset{\eqref{lemma3}}{=} 0.
\end{array}\end{equation*}
The above two facts suffice to
\begin{equation}\label{Ewi=Ew}
\lim_{k\to\infty} \E\w_i^k = \lim_{k\to\infty}\E\boldsymbol{\varpi}^{k}   =\E \boldsymbol{\varpi}^{\infty},\qquad \forall~i\in[m].
\end{equation}
 3) The convergence of sequence $\{\widetilde{H}^{{k}}\}$ and \eqref{def-tilde-H} can lead to the convergence of sequence $\{{H}^{{k}}\}$ and 
 \begin{equation}\label{H-H-f}
 \begin{array}{rcl}
 \lim\limits_{{k} \rightarrow \infty} \widetilde{H}^{k} =\lim\limits_{{k} \rightarrow \infty} H^{k}&\overset{\eqref{def-tilde-H}}{=}&\lim\limits_{{k} \rightarrow \infty} \E \dsum_{i=1}^{m}\left(
		f_i(\w_i^k) + \frac{\sigma^k t_i}{2}\nl\w_i^k - \overline{\v}_i^k\nr^2\right)\\
		&\overset{\eqref{sig-t-w-v}}{=}& \lim\limits_{{k} \rightarrow \infty} \E \dsum_{i=1}^{m} 
		f_i(\w_i^k).
 \end{array}
 \end{equation} 
 From 2), we have shown that \(\{ \boldsymbol{\varpi}^{k}\}\) converges (to $\boldsymbol{\varpi}^{\infty}$) in the sense of $L^2$ convergence, namely,
 $$ \lim_{k\to\infty}  \E \|\boldsymbol{\varpi}^{k} -\boldsymbol{\varpi}^{\infty}\|^2=0,$$
 which by Lemma \ref{lemma:mean_convergence} 2) contributes to
 $$ \lim\limits_{{k} \rightarrow \infty} \E   
		f(\boldsymbol{\varpi}^k)  = \E f(\boldsymbol{\varpi}^{\infty}),  $$
and
 \begin{equation}\label{E-wi-w-infty}
	\begin{array}{rcl}
	  \lim\limits_{k\to\infty}  \E \|\w_i^k-\boldsymbol{\varpi}^{\infty} \|^2 &\leq&     \lim\limits_{k\to\infty}  2\E \|\w_i^k-\boldsymbol{\varpi}^k \|^2 + \lim\limits_{k\to\infty}  2\E \|\boldsymbol{\varpi}^k -\boldsymbol{\varpi}^{\infty} \|^2 \\[2ex]
	   &=&     \lim\limits_{k\to\infty}  2\E \|\w_i^k-\boldsymbol{\varpi}^k \|^2\\[1ex]
	&\overset{\eqref{def-all-matrices}}{\leq}&   \lim\limits_{k\to\infty}  2\mathbb{E}\|\W^k-\boldsymbol{\Pi}^k\|_F^2 \overset{\eqref{lemma3}}{=} 0.
\end{array}\end{equation}
 This means sequence \(\{\mathbf{w}_i^k\}\) converges  to $\boldsymbol{\varpi}^{\infty}$ in the sense of $L^2$ convergence. Therefore, by Lemma \ref{lemma:mean_convergence} 2), we obtain
 \begin{equation*} 
 \lim\limits_{{k} \rightarrow \infty} \E 
		f_i(\w_i^k) = \E f_i(\boldsymbol{\varpi}^{\infty}),\qquad\forall~i\in[m].
 \end{equation*}
 This together with \eqref{H-H-f} gives rise to 
 \begin{equation*}  
 \lim\limits_{{k} \rightarrow \infty} \widetilde{H}^{k} =\lim\limits_{{k} \rightarrow \infty} H^{k} =  \lim\limits_{{k} \rightarrow \infty} \E \dsum_{i=1}^{m} 
		f_i(\w_i^k) = \E \dsum_{i=1}^{m} 
		f_i(\boldsymbol{\varpi}^{\infty}) = \E f(\boldsymbol{\varpi}^{\infty}). 
 \end{equation*} 
 This finishes the proof. \end{proof}

\subsection{\bf Proof of Theorem \ref{the:complexity}}

\begin{theorem}\label{the:complexity}
			Let $\{(\W^k, \V^k)\}$ be the sequence generated by Algorithm~\ref{algorithm-PaME} with Setup \ref{setup}.	Then  under Assumptions \ref{assump-gradientlip} and \ref{assump-double-stochasitic},
		\begin{equation*}
			 \E\|\W^{k} - \W^\infty\|^2_F= O(\gamma^{-{k}}) ,\qquad \E\|\V^{k} - \W^\infty\|^2_F = O(\gamma^{-{k}}),\qquad \E| f(\boldsymbol{\varpi}^{k}) -  f(\boldsymbol{\varpi}^\infty) | =O(\gamma^{-{k/2}}).
		\end{equation*}		
	\end{theorem}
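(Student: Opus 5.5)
The plan is to reduce everything to Lemma \ref{lemma:mean_convergence} 3), applied to the average sequence $\{\boldsymbol{\varpi}^k\}$, and then to transfer the rate to $\W^k$ and $\V^k$ using the consensus estimate of Lemma \ref{consensus-error-p2} and the gap estimate of Lemma \ref{consensus-error-p1}.

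\textbf{Step 1: the rate for the average.} From \eqref{D-varpi-gamma} in the proof of Theorem \ref{the:convergence} we already have $\E\|\Delta\boldsymbol{\varpi}^{k+1}\|^2\le c^2\gamma^{-k}$ with $c^2=\sum_i c_i^2/m$, and $\boldsymbol{\varpi}^k\in\mathbb{N}(2\delta)$. Lemma \ref{lemma:mean_convergence} 3) then gives $\E\|\boldsymbol{\varpi}^k-\boldsymbol{\varpi}^\infty\|^2=O(\gamma^{-k})$. Applying part 2)–3) of the same lemma with $f$ yields $\E|f(\boldsymbol{\varpi}^k)-f(\boldsymbol{\varpi}^\infty)|=O(\gamma^{-k/2})$; $f$ is $C^1$, so it is Lipschitz on a bounded set containing the iterates. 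This is exactly the third claim.

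\textbf{Step 2: the rate for $\W^k$.} I would split
\[
\E\|\W^k-\W^\infty\|_F^2\le 3\E\|\W^k-\boldsymbol{\Pi}^k\|_F^2+3m\E\|\boldsymbol{\varpi}^k-\boldsymbol{\varpi}^\infty\|^2+3m\E\|\boldsymbol{\varpi}^\infty-\E\boldsymbol{\varpi}^\infty\|^2 .
\]
The first term is at most $3C_2\beta^{-k}=O(\gamma^{-k})$ by Lemma \ref{consensus-error-p2}, since $\beta>\gamma$. The second term is $O(\gamma^{-k})$ by Step 1.

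\textbf{Main obstacle.} The third term is the difficulty. $\W^\infty$ is defined through $\E\boldsymbol{\varpi}^\infty$, so a variance term $\mathrm{Var}(\boldsymbol{\varpi}^\infty)$ appears, and it does not decay in $k$. The most natural fix is to read $\W^\infty$ in the statement as the random consensus matrix $(\boldsymbol{\varpi}^\infty,\dots,\boldsymbol{\varpi}^\infty)$, consistent with the $L^2$ limits in Theorem \ref{the:convergence}. Under that reading the third term disappears. Alternatively, the estimate can be stated for $\|\E\W^k-\W^\infty\|_F^2$: Jensen's inequality gives $\|\E\W^k-\W^\infty\|_F^2\le\E\|\W^k-(\boldsymbol{\varpi}^\infty,\dots,\boldsymbol{\varpi}^\infty)\|_F^2$, which has the same rate. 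I would adopt the random-limit interpretation and remark on the deterministic version.

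\textbf{Step 3: the rate for $\V^k$.} I would write
\[
\E\|\V^k-\W^\infty\|_F^2\le 2\E\|\V^k-\W^k\|_F^2+2\E\|\W^k-\W^\infty\|_F^2 .
\]
By the second inequality of Lemma \ref{consensus-error-p1}, the first term is at most $\frac{8m}{t_{\min}}C_2\beta^{-k}=O(\gamma^{-k})$. The second term is $O(\gamma^{-k})$ by Step 2. This concludes all three claims.
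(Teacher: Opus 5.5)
Your proof is correct and is essentially the paper's. The bound for $\V^k$ (splitting off $\E\|\V^k-\W^k\|_F^2$ and using Lemmas~\ref{consensus-error-p1} and~\ref{consensus-error-p2} with $\beta>\gamma$) and the function-value rate (via \eqref{D-varpi-gamma} and Lemma~\ref{lemma:mean_convergence} 3)) are exactly what the paper does. For $\W^k$ the only difference is minor: the paper applies Lemma~\ref{lemma:mean_convergence} 3) node by node, using the per-node increment bound \eqref{limit-sigmak-gap-1} and the identification \eqref{E-wi-w-infty} of the $L^2$ limit of $\w_i^k$ with $\boldsymbol{\varpi}^\infty$, whereas you go through $\boldsymbol{\varpi}^k$ plus the consensus rate $C_2\beta^{-k}$.

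Your ``main obstacle'' is genuine, and the paper's proof shares it. Like yours, it only controls the distance to the random matrix $(\boldsymbol{\varpi}^\infty,\dots,\boldsymbol{\varpi}^\infty)$. With the deterministic $\W^\infty=(\E\boldsymbol{\varpi}^\infty,\dots,\E\boldsymbol{\varpi}^\infty)$ one has $\E\|\W^k-\W^\infty\|_F^2\to m\,\E\|\boldsymbol{\varpi}^\infty-\E\boldsymbol{\varpi}^\infty\|^2$. This limit is generally nonzero, because $\boldsymbol{\varpi}^\infty$ inherits the randomness of the mini-batches and coordinate selections. So your random-limit reading, or your Jensen bound on $\|\E\W^k-\W^\infty\|_F^2$, is the right way to make the first two claims true.
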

\begin{proof}
	 The first result follows from \eqref{limit-sigmak-gap-1}, \eqref{E-wi-w-infty}, and Lemma \ref{lemma:mean_convergence} 3). The second result follows from 
	 \begin{equation*} 
	 \begin{array}{rcl}	 
\E\|\V^{k} - \W^\infty\|_F^2 &\leq &2	\E\|\W^{k} - \W^\infty\|^2_F + 2	\E\|\V^{k} - \W^k\|^2_F\\[2ex]
	&\overset{\eqref{lemma1}}{\leq}& 2	\E\|\W^{k} - \W^\infty\|^2_F +   \dfrac{8m}{t_{\min}}	\E\|\W^{k} - \boldsymbol{\Pi}^{k}\|^2_F\\ [2ex]
&\overset{\eqref{lemma3}}{\le}&	 2	\E\|\W^{k} - \W^\infty\|^2_F +   \dfrac{8mC_2}{t_{\min}}	  \dfrac{ 1 }{\beta^{k}} \\ [2ex]
&\le&	 2	\E\|\W^{k} - \W^\infty\|^2_F +   \dfrac{8mC_2}{t_{\min}}	  \dfrac{ 1 }{\gamma^{k}} \\ [2ex]
&=&
O(\gamma^{-{k}}).
		\end{array}\end{equation*}
The last result holds because of \eqref{D-varpi-gamma} and Lemma \ref{lemma:mean_convergence} 3).
\end{proof}

\subsection{\bf Proof of Theorem \ref{the:pame-gradient-stationarity}}

\begin{theorem}\label{the:pame-gradient-stationarity}
Let $\{(\W^{k}, {\mathbf{V}}^{k})\}$ be the sequence generated by Algorithm~\ref{algorithm-PaME} with Setup \ref{setup} and $T>0$ be the total number of iterations. Choose $\gamma$ to satisfy  \begin{equation}\label{condition-gamma-K}
\lim_{T\to\infty} \frac{\gamma-1}{1-\gamma^T} = 0.
\end{equation} Then under  Assumptions \ref{assump-double-stochasitic} and \ref{assump-gradientlip}, it holds
	\begin{equation*}\begin{aligned}
			&\E\nl\nabla f(\boldsymbol{\varpi}^{\infty})\nr^2= 0, \quad
			 \mathbb E
			\nl
			\nabla f(\boldsymbol{\varpi}^{T})
			\nr^2
			=
			O(\gamma^{-T}).
		\end{aligned}
	\end{equation*}		
\end{theorem}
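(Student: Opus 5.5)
The plan is to read the stationarity off the local update \eqref{SRDFL-localstep} and to handle the growing penalty $\sigma^k=\gamma^k\sigma^0$ with a $\gamma^{-k}$-weighted average over $k<T$; this is where condition \eqref{condition-gamma-K} enters. Then I pass to $\boldsymbol{\varpi}^\infty$ using the $L^2$ convergence from Theorem~\ref{the:convergence} and the rates from Theorem~\ref{the:complexity}.

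\emph{Step 1 (gradient expressed through iterate gaps).} By \eqref{local-updates}--\eqref{def-hi}, $\g_i^k=\sigma^k t_i(\overline{\v}_i^k-\w_i^{k+1})=-\sigma^k t_i\big(\Delta\w_i^{k+1}+(\w_i^k-\overline{\v}_i^k)\big)$. Write $\nabla f(\boldsymbol{\varpi}^k)=\sum_i\nabla f_i(\boldsymbol{\varpi}^k)$ and split
\[
\nabla f_i(\boldsymbol{\varpi}^k)=\big(\nabla f_i(\boldsymbol{\varpi}^k)-\nabla f_i(\overline{\v}_i^k)\big)+\big(\nabla f_i(\overline{\v}_i^k)-\g_i^k\big)+\g_i^k .
\]
For the first term, Lemma~\ref{bound-of-wik-vik} puts everything in $\mathbb{N}(2\delta)$, so Assumption~\ref{assump-gradientlip} bounds it by $\alpha_i\|\boldsymbol{\varpi}^k-\overline{\v}_i^k\|$. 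Its second moment is $O(\beta^{-k})$ by \eqref{lemma1} and \eqref{lemma3}.

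\emph{Step 2 (weighted sum bounded by the descent inequality).} Squaring and using \eqref{triangle-ineq} gives
\[
\E\|\g_i^k\|^2\le 2(\sigma^k t_i)^2\big(\E\|\Delta\w_i^{k+1}\|^2+\E\|\w_i^k-\overline{\v}_i^k\|^2\big).
\]
Multiply by $\gamma^{-k}=\sigma^0/\sigma^k$. The $\Delta\w$ part becomes $2\sigma^0t_i^2\,\sigma^k\E\|\Delta\w_i^{k+1}\|^2$. Summing \eqref{descent-H-H} over $k=0,\dots,T-1$ and using Lemma~\ref{bound-of-para} gives $\sum_k\sigma^k t_i\E\|\Delta\w_i^{k+1}\|^2\le 8(\widetilde H^0-\inf_k\widetilde H^k)<\infty$. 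The consensus part is $\sigma^0\sigma^k t_i^2\E\|\w_i^k-\overline{\v}_i^k\|^2$, which is summable by \eqref{sig-t-w-v} since $\gamma^k/\beta^k=\eta^{-k}$ with $\eta>1$. Hence $\sum_{k<T}\gamma^{-k}\E\|\nabla f(\boldsymbol{\varpi}^k)\|^2\le K$ for a constant $K$ independent of $T$. Dividing by $\sum_{k<T}\gamma^{-k}=(1-\gamma^{-T})/(1-\gamma^{-1})$ shows that the weighted average of $\E\|\nabla f(\boldsymbol{\varpi}^k)\|^2$ is at most $K\gamma(\gamma-1)^{-1}\cdots$; more precisely it is $\lesssim K(\gamma-1)/(1-\gamma^{-T})$, which tends to $0$ by \eqref{condition-gamma-K}.

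\emph{Step 3 (pass to the limit and obtain the rate).} $\nabla f$ is $\sum_i\alpha_i$-Lipschitz on $\mathbb{N}(2\delta)$. Theorem~\ref{the:complexity} (via Lemma~\ref{lemma:mean_convergence} 3) and \eqref{D-varpi-gamma}) gives $\E\|\boldsymbol{\varpi}^k-\boldsymbol{\varpi}^\infty\|^2=O(\gamma^{-k})$. Therefore
\[
\E\|\nabla f(\boldsymbol{\varpi}^\infty)\|^2\le 2\E\|\nabla f(\boldsymbol{\varpi}^k)\|^2+O(\gamma^{-k}).
\]
Averaging this with weights $\gamma^{-k}$ over $k<T$, the first term vanishes by Step~2. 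The second term's weighted average is $\sum\gamma^{-2k}/\sum\gamma^{-k}\le(1-\gamma^{-1})/(1-\gamma^{-T})\cdot O(1)\to0$. Since the left side does not depend on $T$, $\E\|\nabla f(\boldsymbol{\varpi}^\infty)\|^2=0$. Finally,
\[
\E\|\nabla f(\boldsymbol{\varpi}^T)\|^2\le(\textstyle\sum_i\alpha_i)^2\E\|\boldsymbol{\varpi}^T-\boldsymbol{\varpi}^\infty\|^2=O(\gamma^{-T}),
\]
because $\nabla f(\boldsymbol{\varpi}^\infty)=0$ almost surely.

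\emph{Main obstacle.} The middle term $\nabla f_i(\overline{\v}_i^k)-\g_i^k$ is the mini-batch discrepancy. The setup only bounds it by $\varepsilon_i(2\delta)$ via \eqref{def-eps}, and it is not obviously vanishing. My first attempt is to exploit that $\g_i^k$ is exactly $\sigma^k t_i$ times a summable gap. This forces $\E\|\g_i^k\|^2\gamma^{-k}$ to be summable, so $\g_i^k$ itself (not $\nabla f_i$) is small along the weighted average. I would then argue that $\nabla f_i(\overline{\v}_i^k)$ is the conditional expectation of $\g_i^k$ given the past (unbiased sampling of $\mathcal{B}_i^k$), so by Jensen $\E\|\nabla f_i(\overline{\v}_i^k)\|^2\le\E\|\g_i^k\|^2$. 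If unbiasedness is not available, I would restrict the statement to full-batch gradients, $\mathcal{B}_i^k=\mathcal{D}_i$, where this term is identically zero.
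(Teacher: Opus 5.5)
Your Steps 1--2 follow the paper's argument in different notation; the paper weights by $1/\sigma^k=\gamma^{-k}/\sigma^0$. The unbiasedness you flag as the main obstacle is exactly what the paper uses without comment when it writes $\nabla f_i(\overline{\v}_i^k)=\E[\sigma^k m_i^k(\overline{\v}_i^k-\w_i^{k+1})\mid\mathcal F_k]$ in \eqref{eq:grad-stat-full-gradient-residual}.

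The genuine gap is elsewhere, and the paper's proof has it too: the passage from $\sum_{k<T}\gamma^{-k}\E\|\nabla f(\boldsymbol{\varpi}^k)\|^2\le K$ to vanishing gradients.

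\begin{itemize}
\item Since $\sum_k\gamma^{-k}<\infty$ and $\boldsymbol{\varpi}^k\in\mathbb N(2\delta)$, that bound follows from boundedness alone. It carries no information about large $k$. For fixed $\gamma$ the weighted average converges to $(1-\gamma^{-1})\sum_{k\ge0}\gamma^{-k}\E\|\nabla f(\boldsymbol{\varpi}^k)\|^2$, which is positive unless every term is zero.
\item Condition \eqref{condition-gamma-K} as displayed (with $1-\gamma^T$) holds for every fixed $\gamma>1$. So it cannot make $K(\gamma-1)/(1-\gamma^{-T})\to K(\gamma-1)$ small.
\item Reading it as \eqref{condition-gamma-K-0} (with $1-\gamma^{-T}$) forces $\gamma=\gamma_T\downarrow1$. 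Then each $T$ is a different run, so $\boldsymbol{\varpi}^\infty$ does depend on $T$, contrary to your Step 3. Also, $K$ carries the factor $\sigma^0$, and \eqref{def-sigma} gives $\sigma^0(\gamma-1)\ge\varepsilon(2\delta)\gamma/(\delta t_{\min})$. So nothing drives the bound to zero.
\item The paper's bound $C\sigma^0(\gamma-1)/(1-\gamma^{-T})$, with $C_{\rm total}$ declared independent of $\gamma$, fails for the same reason.
\item Step 3 adds an arithmetic error: $\sum_{k<T}\gamma^{-2k}/\sum_{k<T}\gamma^{-k}=(1+\gamma^{-T})/(1+\gamma^{-1})\ge\frac12$, which does not tend to $0$. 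Weights $\gamma^{-k}$ sit on the early iterates and say nothing about $\boldsymbol{\varpi}^\infty$. The paper at least extracts a subsequence $k_j\to\infty$, but that step needs the same missing smallness.
\end{itemize}

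The obstruction is structural, not a bookkeeping slip. Geometric growth of $\sigma^k$ makes the stepsizes $1/(\sigma^kt_i)$ summable. That is precisely how Lemma~\ref{bound-of-wik-vik} confines the iterates to $\mathbb N(2\delta)$: the method only travels a bounded distance.

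For a concrete failure, take identical nodes whose sample losses all equal $\frac12\|\w-\mathbf a\|^2$ with $\|\mathbf a\|_\infty>2\delta$. The consensus terms vanish identically, so every estimate your argument invokes holds, including $\boldsymbol{\varpi}^k\in\mathbb N(2\delta)$. Yet $\|\nabla f(\boldsymbol{\varpi}^k)\|\ge m\|\boldsymbol{\varpi}^k-\mathbf a\|_\infty\ge m(\|\mathbf a\|_\infty-2\delta)>0$ for every $k$, and likewise at $\boldsymbol{\varpi}^\infty$. So neither conclusion follows from these ingredients. A stationarity result would have to give up summable stepsizes, for example by letting $\sigma^k$ grow subgeometrically.

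The statement itself survives only vacuously, because Setup~\ref{setup} can never be met. The right-hand side of \eqref{cond-p-zeta-gamma} is below $1$. Summing it over $i$ and using $\sum_i\sum_{j\in N_i}\nu_j=\sum_j|N_j|\nu_j\ge\sum_jt_j$ gives $\sum_i[(1-p)^{t_i}(1+\zeta)^2+2pt_i]<m$. But each bracket is at least $1$: by Bernoulli's inequality if $pt_i\le1$, and trivially otherwise.
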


\begin{proof}
	By the local update rule of Algorithm~\ref{algorithm-PaME}, for each
	$i\in[m]$, we have
	\begin{equation}\label{eq:grad-stat-local-update}
		\w_i^{k+1}
		=
		\overline{\v}_i^k
		-
		\frac{1}{\sigma^k m_i^k}\g_i^k,
		\qquad
		\g_i^k
		=
		\nabla f_i(\overline{\v}_i^k;\mathcal B_i^k).
	\end{equation}
	Hence,
	\begin{equation}\label{eq:grad-stat-residual}
		\g_i^k
		=
		\sigma^k m_i^k
		\left(
		\overline{\v}_i^k-\w_i^{k+1}
		\right).
	\end{equation}
	Taking conditional expectation with respect to the filtration
	$\mathcal F_k$ generated by the history before the stochastic gradient
	computation at iteration $k$, we obtain
	\begin{equation}\label{eq:grad-stat-full-gradient-residual}
		\nabla f_i(\overline{\v}_i^k)
		=
		\E\left[
		\sigma^k m_i^k
		\left(
		\overline{\v}_i^k-\w_i^{k+1}
		\right)
		\mid \mathcal F_k
		\right].
	\end{equation}
	Moreover,
	\begin{equation}\label{eq:grad-stat-residual-decomp}
		\overline{\v}_i^k-\w_i^{k+1}
		=
		\left(
		\overline{\v}_i^k-\w_i^k
		\right)
		+
		\left(
		\w_i^k-\w_i^{k+1}
		\right).
	\end{equation}
	Substituting \eqref{eq:grad-stat-residual-decomp} into
	\eqref{eq:grad-stat-full-gradient-residual} gives
	\begin{equation}\label{eq:grad-stat-local-gradient-decomp}
		\nabla f_i(\overline{\v}_i^k)
		=
		\E\left[
		\sigma^k m_i^k
		\left(
		\overline{\v}_i^k-\w_i^k
		\right)
		+
		\sigma^k m_i^k
		\left(
		\w_i^k-\w_i^{k+1}
		\right)
		\mid \mathcal F_k
		\right].
	\end{equation}
		Summing \eqref{eq:grad-stat-local-gradient-decomp} over $i=1,\ldots,m$
	and using Jensen's inequality, we have
	\begin{equation}\label{eq:grad-stat-local-sum-1}
			\E\nl
			\dsum_{i=1}^{m}
			\nabla f_i(\overline{\v}_i^k)
			\nr^2
			\leq
			2\E\nl
			\sigma^k
			\dsum_{i=1}^{m}
			m_i^k
			\left(
			\overline{\v}_i^k-\w_i^k
			\right)
			\nr^2
			+
			2\E\nl
			\sigma^k
			\dsum_{i=1}^{m}
			m_i^k
			\left(
			\w_i^k-\w_i^{k+1}
			\right)
			\nr^2.
	\end{equation}
It follows from $m_i^k\leq m_i\leq m$ that  
	\begin{equation}\label{eq:grad-stat-sum-vw}
		\nl
		\dsum_{i=1}^{m}
		m_i^k
		\left(
		\overline{\v}_i^k-\w_i^k
		\right)
		\nr^2
		\leq
		m^3
		\dsum_{i=1}^{m}
		\nl
		\overline{\v}_i^k-\w_i^k
		\nr^2,
	\end{equation}
	and
	\begin{equation}\label{eq:grad-stat-sum-dw}
		\nl
		\dsum_{i=1}^{m}
		m_i^k
		\left(
		\w_i^k-\w_i^{k+1}
		\right)
		\nr^2
		\leq
		m^3
		\dsum_{i=1}^{m}
		\nl
		\w_i^k-\w_i^{k+1}
		\nr^2.
	\end{equation}
	Combining \eqref{eq:grad-stat-local-sum-1},
	\eqref{eq:grad-stat-sum-vw}, and \eqref{eq:grad-stat-sum-dw}, we get
	\begin{equation}\label{eq:grad-stat-local-sum-2}
			\E\nl
			\dsum_{i=1}^{m}
			\nabla f_i(\overline{\v}_i^k)
			\nr^2
			\leq
			2m^3(\sigma^k)^2
			\dsum_{i=1}^{m}
			\E\nl
			\overline{\v}_i^k-\w_i^k
			\nr^2
			+
			2m^3(\sigma^k)^2
			\dsum_{i=1}^{m}
			\E\nl
			\w_i^k-\w_i^{k+1}
			\nr^2.
	\end{equation}
Next, since
	\begin{equation}\label{eq:grad-stat-global-gradient}
		\nabla f(\boldsymbol{\varpi}^k)
		=
		\dsum_{i=1}^{m}
		\nabla f_i(\boldsymbol{\varpi}^k),
	\end{equation}
	we have
	\begin{equation}\label{eq:grad-stat-global-decomp}
		\nabla f(\boldsymbol{\varpi}^k)
		=
		\dsum_{i=1}^{m}
		\nabla f_i(\overline{\v}_i^k)
		+
		\dsum_{i=1}^{m}
		\left(
		\nabla f_i(\boldsymbol{\varpi}^k)
		-
		\nabla f_i(\overline{\v}_i^k)
		\right).
	\end{equation}
	Let $\alpha_{\max}:=\max_{i\in[m]}\alpha_i$. By Assumption
	\ref{assump-gradientlip} and Jensen's inequality,
	\begin{equation}\label{eq:grad-stat-global-bound-1}
			\E\nl
			\nabla f(\boldsymbol{\varpi}^k)
			\nr^2
			\leq
			2\E\nl
			\dsum_{i=1}^{m}
			\nabla f_i(\overline{\v}_i^k)
			\nr^2
			+
			2m\alpha_{\max}^2
			\dsum_{i=1}^{m}
			\E\nl
			\boldsymbol{\varpi}^k-\overline{\v}_i^k
			\nr^2.
	\end{equation}
	Combining \eqref{eq:grad-stat-local-sum-2} and
	\eqref{eq:grad-stat-global-bound-1} yields
	\begin{equation}\label{eq:grad-stat-weighted-step}
			\dfrac{1}{\sigma^k}
			\E\nl
			\nabla f(\boldsymbol{\varpi}^k)
			\nr^2
			\leq
			4m^3\sigma^k
			\dsum_{i=1}^{m}
			\E\nl
			\overline{\v}_i^k-\w_i^k
			\nr^2
			+
			4m^3\sigma^k
			\dsum_{i=1}^{m}
			\E\nl
			\w_i^k-\w_i^{k+1}
			\nr^2
			+
			\dfrac{2m\alpha_{\max}^2}{\sigma^k}
			\dsum_{i=1}^{m}
			\E\nl
			\boldsymbol{\varpi}^k-\overline{\v}_i^k
			\nr^2.
	\end{equation}
	By Theorem~\ref{the:descent-property} and
	Lemma~\ref{bound-of-para},
	the sequence $\{\widetilde H^k\}$ is lower bounded and
	\begin{equation}\label{eq:grad-stat-delta-constant}
		\dsum_{k=1}^{\infty}
		\sigma^k
		\dsum_{i=1}^{m}
		\E\nl
		\w_i^{k+1}-\w_i^k
		\nr^2
		\leq
		\dfrac{8}{t_{\min}}
		\left(
		\widetilde H^1-\inf_k\widetilde H^k
		\right)
		=:
		C_{\Delta}
		<\infty .
	\end{equation}
	In addition, by \eqref{sig-t-w-v-1}, there exist
	constants $ \beta>\gamma>1, C_{\mathbf v}>0,$ and
	$C_{\boldsymbol{\varpi}}>0$ such that
	\begin{equation}\label{eq:grad-stat-vw-rate}
		\dsum_{i=1}^{m}
		\E\nl
		\overline{\v}_i^k-\w_i^k
		\nr^2
		\leq
		C_{\mathbf v}\beta^{-k},
	\end{equation}
	which further indicates that there exists 
	$C_{\boldsymbol{\varpi}}>0$ such that
	\begin{equation}\label{eq:grad-stat-varpi-v-rate}
		\dsum_{i=1}^{m}
		\E\nl
		\boldsymbol{\varpi}^k-\overline{\v}_i^k
		\nr^2 \leq  \frac{1}{m}\sum_{i=1}^m \E\nl
		  \w_i^{k}-\overline{\v}_i^k
		\nr^2
		\leq
		C_{\boldsymbol{\varpi}}\beta^{-k}.
	\end{equation}
	Since $\sigma^k=\sigma^0\gamma^k$ and $\beta>\gamma$, we have
	\begin{equation}\label{eq:grad-stat-vw-summable}
			\dsum_{k=1}^{\infty}
			\sigma^k
			\dsum_{i=1}^{m}
			\E\nl
			\overline{\v}_i^k-\w_i^k
			\nr^2
			\leq
			\sigma^0 C_{\mathbf v}
			\dsum_{k=1}^{\infty}
			\left(
			\dfrac{\gamma}{\beta}
			\right)^k
			=
			\sigma^0 C_{\mathbf v}
			\dfrac{\gamma/\beta}{1-\gamma/\beta}
			<\infty .
	\end{equation}
	Moreover,
	\begin{equation}\label{eq:grad-stat-varpi-v-summable}
			\dsum_{k=1}^{\infty}
			\dfrac{1}{\sigma^k}
			\dsum_{i=1}^{m}
			\E\nl
			\boldsymbol{\varpi}^k-\overline{\v}_i^k
			\nr^2
			\le
			\dfrac{C_{\boldsymbol{\varpi}}}{\sigma^0}
			\dsum_{k=1}^{\infty}
			\left(
			\dfrac{1}{\gamma\beta}
			\right)^k
			=
			\dfrac{C_{\boldsymbol{\varpi}}}{\sigma^0}
			\dfrac{1/(\gamma\beta)}{1-1/(\gamma\beta)}
			<\infty .
	\end{equation}
	Summing \eqref{eq:grad-stat-weighted-step} over $k$ and using
	\eqref{eq:grad-stat-delta-constant}, \eqref{eq:grad-stat-vw-summable},
	and \eqref{eq:grad-stat-varpi-v-summable}, we obtain
	\begin{equation}\label{eq:grad-stat-weighted-summable}
		\dsum_{k=1}^{\infty}
		\dfrac{1}{\sigma^k}
		\E\nl
		\nabla f(\boldsymbol{\varpi}^k)
		\nr^2
		\leq
		C_{\rm total}
		<\infty,
	\end{equation}
	where one may take
	\begin{equation}\label{eq:grad-stat-C-total}
			C_{\rm total}
			:=
			4m^3 C_{\Delta}
			+
			4m^3\sigma^0 C_{\mathbf v}
			\dfrac{\gamma/\beta}{1-\gamma/\beta}
			+
			\dfrac{2m\alpha_{\max}^2 C_{\boldsymbol{\varpi}}}{\sigma^0}
			\dfrac{1/(\gamma\beta)}{1-1/(\gamma\beta)} .
	\end{equation}
Since $\beta>\gamma>1$, there exists a constant $t>1$ such that
	$\beta\geq t\gamma$. Therefore,
	\begin{equation}\label{eq:grad-stat-ratio-bound}
		\dfrac{\gamma/\beta}{1-\gamma/\beta}
		\leq
		\dfrac{1}{t-1},
		\qquad
		\dfrac{1/(\gamma\beta)}{1-1/(\gamma\beta)}
		=
		\dfrac{1}{\gamma\beta-1}
		\leq
		\dfrac{1}{t-1}.
	\end{equation}
	Thus, $C_{\rm total}$ can be bounded by a constant independent of
	$\gamma$, namely,
	\begin{equation}\label{eq:grad-stat-C-uniform}
		C_{\rm total}
		\leq
		4m^3 C_{\Delta}
		+
		\dfrac{4m^3\sigma^0 C_{\mathbf v}}{t-1}
		+
		\dfrac{2m\alpha_{\max}^2 C_{\boldsymbol{\varpi}}}
		{\sigma^0(t-1)}
		=:
		C .
	\end{equation}
	Let
	\begin{equation}\label{eq:grad-stat-SK}
		S_T
		:=
		\dsum_{k=1}^{T}
		\dfrac{1}{\sigma^k}.
	\end{equation}
	Since $\sigma^k=\sigma^0\gamma^k$, we have
	\begin{equation}\label{eq:grad-stat-SK-form}
		S_T
		=
		\dsum_{k=1}^{T}
		\dfrac{1}{\sigma^0\gamma^k}
		=
		\dfrac{1-\gamma^{-T}}{\sigma^0(\gamma-1)}.
	\end{equation}
	Then, from \eqref{eq:grad-stat-weighted-summable}, we obtain
	\begin{equation}\label{eq:grad-stat-weighted-average}
		\dfrac{
			\dsum_{k=1}^{T}
			\dfrac{1}{\sigma^k}
			\E\nl
			\nabla f(\boldsymbol{\varpi}^k)
			\nr^2
		}{
			S_T
		}
		\leq
		\dfrac{C}{S_T}
		=		
		\dfrac{C\sigma^0(\gamma-1)}{1-\gamma^{-T}} .
	\end{equation}
	Equivalently, let $R_T$ be sampled from $\{1,2,\ldots,T\}$ according to
	\begin{equation}\label{eq:grad-stat-random-index}
		\mathbb P(R_T=k)
		=
		\dfrac{1/\sigma^k}{S_T},
		\qquad
		k=1,2,\ldots,T,
	\end{equation}
	then by \cite{Ghadimi13}, 
	\begin{equation}\label{eq:grad-stat-random-bound}
		\E\nl
		\nabla f(\boldsymbol{\varpi}^{R_T})
		\nr^2
		\leq
		\dfrac{C\sigma^0(\gamma-1)}{1-\gamma^{-T}} .
	\end{equation}
	We next show that the whole sequence converges to a stationary point with a
	linear rate in terms of the gradient norm. From
	\eqref{eq:grad-stat-random-bound} and condition \eqref{condition-gamma-K}, we have
	\begin{equation}\label{eq:grad-stat-random-zero}
		\E\nl
		\nabla f(\boldsymbol{\varpi}^{R_T})
		\nr^2
		\rightarrow 0,
		\qquad T\rightarrow\infty .
	\end{equation}
	This implies that there exists a subsequence
	$\{\boldsymbol{\varpi}^{k_j}\}_{j\geq1}$ with $k_j\rightarrow\infty$ such
	that
	\begin{equation}\label{eq:grad-stat-subsequence-zero}
		\E\nl
		\nabla f(\boldsymbol{\varpi}^{k_j})
		\nr^2
		\rightarrow 0.
	\end{equation}
	On the other hand, by the linear convergence result in
	Theorem~\ref{the:complexity}, there exist a constant 
	$C_{\infty}>0$ such that
	\begin{equation}\label{eq:grad-stat-varpi-linear-limit}
		\E\nl
		\boldsymbol{\varpi}^{k}
		-
		\boldsymbol{\varpi}^{\infty}
		\nr^2
		\leq \frac{1}{m}\sum_{i=1}^m \E\nl
		 \w_i^{k}
		-
		\boldsymbol{\varpi}^{\infty}
		\nr^2
		\leq
		C_{\infty}\gamma^{-k},
		\qquad k\geq1 .
	\end{equation}
	Let $\alpha$ denote the Lipschitz constant of $\nabla f$. By the
	Lipschitz continuity of $\nabla f$, we have
	\begin{equation}\label{eq:grad-stat-limit-bound}
		\begin{array}{rcl}
			\E\nl
			\nabla f(\boldsymbol{\varpi}^{\infty})
			\nr^2
			&\leq&
			2\E\nl
			\nabla f(\boldsymbol{\varpi}^{k_j})
			\nr^2
			+
			2\E\nl
			\nabla f(\boldsymbol{\varpi}^{k_j})
			-
			\nabla f(\boldsymbol{\varpi}^{\infty})
			\nr^2
			\\[2ex]
			&\leq&
			2\E\nl
			\nabla f(\boldsymbol{\varpi}^{k_j})
			\nr^2
			+
			2\alpha^2
			\E\nl
			\boldsymbol{\varpi}^{k_j}
			-
			\boldsymbol{\varpi}^{\infty}
			\nr^2 .
		\end{array}
	\end{equation}
	Letting $j\to\infty$ and using
	\eqref{eq:grad-stat-subsequence-zero} and
	\eqref{eq:grad-stat-varpi-linear-limit}, we obtain
	\begin{equation}\label{eq:grad-stat-limit-stationary}
		\E\nl
		\nabla f(\boldsymbol{\varpi}^{\infty})
		\nr^2
		=
		0.
	\end{equation}
	Therefore, the limit point $\boldsymbol{\varpi}^{\infty}$ is stationary in
	the $L^2$ sense. Finally, for any given $T\geq1$, by \eqref{eq:grad-stat-limit-stationary} and
	the Lipschitz continuity of $\nabla f$, we have
	\begin{equation}\label{eq:grad-stat-full-sequence-rate-proof}
		\begin{array}{rcl}
			\E\nl
			\nabla f(\boldsymbol{\varpi}^{T})
			\nr^2
			&=&
			\E\nl
			\nabla f(\boldsymbol{\varpi}^{T})
			-
			\nabla f(\boldsymbol{\varpi}^{\infty})
			\nr^2
			\\[2ex]
			&\leq&
			\alpha^2
			\E\nl
			\boldsymbol{\varpi}^{T}
			-
			\boldsymbol{\varpi}^{\infty}
			\nr^2
			\\[2ex]
			&\leq&
			\alpha^2 C_{\infty}\gamma^{-T}.
		\end{array}
	\end{equation}
	Hence, $\E\nl
		\nabla f(\boldsymbol{\varpi}^{T})
		\nr^2
		=
		O(\gamma^{-T}).$ 
	This completes the proof.
\end{proof}

%

\end{document}